\theoremstyle{plain}
\newtheorem{theorem}{Theorem}[section]
\theoremstyle{definition}
\newtheorem{definition}[theorem]{Definition}
\theoremstyle{remark}
\pgfplotsset{compat=1.16}
\tikzset{
	in place/.style={
		auto=false,
		fill=white,
		inner sep=2pt,
	},
	cascaded/.style = {%
		general shadow = {%
			shadow scale = 1,
			shadow xshift = -1ex,
			shadow yshift = 1ex,
			draw,
			fill = white},
		general shadow = {%
			shadow scale = 1,
			shadow xshift = -.5ex,
			shadow yshift = .5ex,
			draw,
			fill = white},
		fill = white, 
		draw,
		minimum width = 1.5cm,
		minimum height = 2cm
	},
}
\newcommand{\alglinelabel}{%
	\addtocounter{ALC@line}{-1}
	\refstepcounter{ALC@line}
	\label
}
\theoremstyle{remark}
\newtheorem{example}{Example}
\DeclareMathOperator{\codeif}{\mathtt{:-} }
\newcommand{\arbitraryset}{\mathcal{X}}
\newcommand{\mdp}{\mathbb{M}} 
\newcommand{\mdpstates}{\mathcal{S}} 
\newcommand{\mdpactions}{\mathcal{A}} 
\newcommand{\mdpprob}{p}
\newcommand{\mdprewfunc}{r}
\newcommand{\mdpdiscount}{\gamma}
\newcommand{\mdptermfunc}{\tau}
\newcommand{\mdpstate}{s}
\newcommand{\mdpaction}{a}
\newcommand{\history}{h}
\newcommand{\obstuple}{\bm\mdpstate}
\newcommand{\obs}{\mdpstate}
\newcommand{\obsterm}{\mdpstate^T}
\newcommand{\obsgoal}{\mdpstate^G}
\newcommand{\propset}{\mathcal{P}}
\newcommand{\lfunc}{l}
\newcommand{\trace}{\lambda}
\newcommand{\traceset}{\Lambda}
\newcommand{\proplabel}{\mathcal{L}} 
\newcommand{\proplabelset}{\mathbb{L}}
\newcommand{\rmname}{M}
\newcommand{\rmtuple}{
	\langle\rmstates,\propset,\rmtransition, \rmreward, \rminitstate,\rmstatesacc,\rmstatesrej \rangle}
\newcommand{\rmstates}{\mathcal{U}}
\newcommand{\rmstate}{u}
\newcommand{\rminitstate}{u^0}
\newcommand{\rmtransition}{\varphi}
\newcommand{\rmreward}{r}
\newcommand{\rmstatesacc}{\rmstates^A}
\newcommand{\rmstatesrej}{\rmstates^R}
\newcommand{\rmheight}{h}
\newcommand{\dnf}{\textnormal{DNF}}
\newcommand{\dnfprop}{\dnf_\propset}
\newcommand{\machineset}{\mathcal{M}}
\newcommand{\leaf}{\rmname_\top}
\newcommand{\hrm}{H}
\newcommand{\hrmtrans}{\delta_\hrm}
\newcommand{\hrmroot}{\rmname_r}
\newcommand{\hrmtuple}{\hrm=\langle\machineset,\hrmroot,\propset \rangle}
\newcommand{\hrmstate}{\bm{u}}
\newcommand{\hrmtuplep}{\hrm'=\langle\machineset',\hrmroot',\propset \rangle}
\newcommand{\stack}{\Gamma}
\newcommand{\excond}{\xi}
\newcommand{\context}{\phi}
\newcommand{\Context}{\Phi}
\newcommand{\ContextAlt}{\Psi}  
\newcommand{\opt}{\omega}
\newcommand{\optinit}{\mathcal{I}_\opt}
\newcommand{\opth}{\Omega_{\hrm}}
\newcommand{\optterm}{\Omega_{\beta}}
\newcommand{\cw}{\textsc{CraftWorld}\xspace}
\newcommand{\ww}{\textsc{WaterWorld}\xspace}
\newcommand{\learningmethod}{LHRM\xspace}
\newcommand{\rootheight}{\rmheight_r}
\newcommand{\numrmslevel}{N}
\newcommand{\numstateslevel}{U}
\newcommand{\numedgeslevel}{E}
\newcommand{\numedgestotal}{\bar{E}} 
\newcommand{\qfunc}{q}
\newcommand{\nnparams}{\bm\theta}
\newcommand{\ftreenodes}{\mathcal{F}}
\newcommand{\ftreeroot}{F_r}
\newcommand{\formtolits}{\nu}
\newcommand{\numtasks}{T}
\newcommand{\numinstances}{I}
\newcommand{\avgreturn}{R}
\newcommand{\taskinstancescore}{c}
\newcommand{\maxdisjuncts}{\kappa}
\newcommand{\numgoaltracesexp}{\rho}
\newcommand{\numshortestgoaltracesexp}{\numgoaltracesexp_s}
\newcommand{\machinesetcall}{\machineset_\mathcal{C}}
\newcommand{\asprepr}{\mathbb{A}}
\newcommand{\generalrules}{\mathcal{R}}
\newcommand{\aspprogram}{P}
\newcommand{\answerset}{AS}
\newcommand{\hrmlearningtask}{T_\hrm}
\newcommand{\ilasplearningtask}{T}
\newcommand{\ilaspbk}{\mathcal{B}}
\newcommand{\ilasphypspace}{\mathcal{S}_M}
\newcommand{\ilaspexamples}{\mathcal{E}}
\newcommand{\ilasphypothesis}{\mathcal{H}}
\definecolor{imperial}{HTML}{003E74} 
\newcommand{\Chicken}[1][]{%
	\tikz \fill [scale=1ex/300,yscale=-1,#1] svg "
	M469.333,182.857c-20.197,0-36.571-16.374-36.571-36.571c0-20.197,16.374-36.571,36.571-36.571
	c20.197,0,36.571-16.374,36.571-36.571c0-26.93-21.832-48.762-48.762-48.762c-20.197,0-36.571,16.374-36.571,36.571
	c0,20.197-16.374,36.571-36.571,36.571c-20.197,0-36.571-16.374-36.571-36.571C347.429,27.29,320.139,0,286.476,0
	s-60.952,27.29-60.952,60.952c0,13.723,4.537,26.384,12.189,36.571h36.573c94.106,0,170.667,76.561,170.667,170.667v20.269
	c5.543,2.633,11.741,4.112,18.286,4.112c30.476,0,42.667-24.381,42.667-73.143C505.905,199.231,489.531,182.857,469.333,182.857z%
	M274.286,134.095h-85.333v231.619c0,80.791,65.495,146.286,146.286,146.286h73.143V268.19
	C408.381,194.132,348.344,134.095,274.286,134.095z M298.667,268.19h-0.002c-13.442,0-24.379-10.937-24.379-24.381
	c0.005-13.446,10.942-24.381,24.381-24.381c13.444,0,24.381,10.937,24.381,24.381C323.048,257.253,312.11,268.19,298.667,268.19z%
	M91.429,304.762c-47.128,0-85.333,38.205-85.333,85.333c0,47.128,38.205,85.333,85.333,85.333
	c23.884,0,45.463-9.824,60.952-25.637v-145.03H91.429z%
	M6.095,134.095 152.381,280.381 152.381,134.095z%
	";%
}
\newcommand{\Cow}[1][]{%
	\tikz \fill [scale=1ex/300,yscale=-1,#1] svg "
	M172.904,16.987c-5.879,12.056-13.311,31.868-20.034,53.523c1.074-0.036,40.652-0.081,40.652-0.081
	C186.563,48.791,178.913,29.003,172.904,16.987z%
	M330.927,332.671H181.073c-30.986,0-56.195,25.209-56.195,56.195v49.951c0,30.986,25.209,56.195,56.195,56.195h149.854
	c30.986,0,56.195-25.209,56.195-56.195v-49.951C387.122,357.881,361.913,332.671,330.927,332.671z M199.805,438.818h-37.463
	v-49.951h37.463V438.818z M349.659,438.818h-37.463v-49.951h37.463V438.818z%
	M355.902,107.891H156.098c-24.42,0-45.241,15.661-52.973,37.463h40.485c44.758,0,81.171,36.413,81.171,81.171v68.683
	h62.439v-68.683c0-44.758,36.413-81.171,81.171-81.171h40.485C401.143,123.552,380.322,107.891,355.902,107.891z%
	M143.61,182.818H99.902v106.146c0,13.07,5.784,25.142,15.395,33.307c16.926-16.72,40.162-27.064,65.776-27.064h6.244
	v-68.683C187.317,202.425,167.71,182.818,143.61,182.818z M137.366,257.745h-0.003c-13.77,0-24.973-11.204-24.973-24.976
	c0.005-13.774,11.209-24.976,24.976-24.976c13.772,0,24.976,11.204,24.976,24.976S151.137,257.745,137.366,257.745z%
	M18.732,82.915H0v18.732c0,24.1,19.607,43.707,43.707,43.707h18.732v-18.732C62.439,102.522,42.832,82.915,18.732,82.915z%
	M338.361,16.987c-5.872,12.041-13.291,31.818-20.008,53.44c0,0,39.619,0.045,40.652,0.079
	C352.04,48.84,344.377,29.018,338.361,16.987z%
	M493.268,82.915c-24.1,0-43.707,19.607-43.707,43.707v18.732h18.732c24.1,0,43.707-19.607,43.707-43.707V82.915H493.268z%
	M368.39,182.818c-24.1,0-43.707,19.607-43.707,43.707v68.683h6.244c25.614,0,48.852,10.345,65.777,27.065
	c9.609-8.164,15.394-20.238,15.394-33.307V182.818H368.39z M374.634,257.745h-0.003c-13.77,0-24.973-11.204-24.973-24.976
	c0.005-13.774,11.209-24.976,24.976-24.976c13.772,0,24.976,11.204,24.976,24.976S388.406,257.745,374.634,257.745z
	";%
}
\newcommand{\Rabbit}[1][]{%
	\tikz \fill [scale=1ex/300,yscale=-1,#1] svg "
	M405.074,287.777c0-82.332-66.743-149.074-149.074-149.074s-149.074,66.743-149.074,149.074
	c0,15.289,2.303,30.04,6.581,43.926c-4.216,9.342-6.581,19.697-6.581,30.611c0,41.166,33.371,74.537,74.537,74.537
	c38.002,0,69.34-28.447,73.934-65.203c-13.322-0.817-27.958-19.223-33.471-35.955c-2.585-8.677,4.097-15.858,14.907-15.972
	c12.778,0,25.556,0,38.333,0c10.811,0.114,17.493,7.294,14.907,15.972c-5.514,16.731-20.15,35.138-33.471,35.955
	c4.596,36.758,35.932,65.203,73.935,65.203c41.166,0,74.537-33.371,74.537-74.537c0-10.914-2.365-21.271-6.581-30.611
	C402.771,317.817,405.074,303.066,405.074,287.777z M197.435,325.046h-0.002c-11.742,0-21.294-9.554-21.294-21.296
	c0.004-11.745,9.558-21.296,21.296-21.296c11.743,0,21.296,9.554,21.296,21.296C218.731,315.492,209.178,325.046,197.435,325.046z
	M314.565,325.046h-0.002c-11.742,0-21.294-9.554-21.294-21.296c0.004-11.745,9.558-21.296,21.296-21.296
	c11.743,0,21.296,9.554,21.296,21.296C335.861,315.492,326.308,325.046,314.565,325.046z%
	M127.724,99.656C91.517,63.449,46.791,40.961,0,32.168c8.793,46.791,31.282,91.517,67.489,127.724
	c13.534,13.534,25.4,23.71,35.849,31.101c14.231-22.281,33.203-41.253,55.484-55.487
	C151.431,125.057,141.26,113.192,127.724,99.656z%
	M384.276,99.657c-13.534,13.534-23.71,25.4-31.101,35.848c22.281,14.232,41.253,33.205,55.485,55.484
	c10.449-7.391,22.314-17.562,35.85-31.098C480.719,123.685,503.207,78.958,512,32.166C465.208,40.96,420.483,63.45,384.276,99.657
	z%
	M256,438.678c-10.919,10.714-24.133,19.096-38.823,24.343c9.713,10.347,23.511,16.813,38.823,16.813
	s29.111-6.467,38.822-16.815C280.132,457.774,266.919,449.392,256,438.678z
	";%
}
\newcommand{\RabbitE}{\Rabbit[scale=0.65]}
\newcommand{\Squid}[1][]{%
	\tikz \fill [scale=1ex/300,yscale=-1,#1] svg "
	M365.466,40.285c11.563,17.673,18.969,38.298,20.808,60.481h0.45V283.83c9.616-5.565,20.771-8.766,32.681-8.766
	c30.082,0,54.468-24.386,54.468-54.468C473.871,140.174,430.292,74.804,365.466,40.285z%
	M38.128,220.596c0,30.082,24.386,54.468,54.468,54.468c11.91,0,23.065,3.202,32.681,8.766V100.766h0.45
	c1.839-22.183,9.245-42.809,20.808-60.481C81.708,74.804,38.128,140.174,38.128,220.596z%
	M256,13.617c-54.148,0-98.043,43.896-98.043,98.043v228.766c0,18.049,14.632,32.681,32.681,32.681h130.723
	c18.049,0,32.681-14.632,32.681-32.681V111.66C354.043,57.513,310.148,13.617,256,13.617z M217.872,334.979h-0.002
	c-12.012,0-21.785-9.774-21.785-21.787c0.004-12.016,9.778-21.787,21.787-21.787c12.013,0,21.787,9.774,21.787,21.787
	C239.66,325.205,229.886,334.979,217.872,334.979z M294.128,334.979h-0.002c-12.012,0-21.785-9.774-21.785-21.787
	c0.004-12.016,9.778-21.787,21.787-21.787c12.013,0,21.787,9.774,21.787,21.787C315.915,325.205,306.141,334.979,294.128,334.979z%
	M190.638,405.787c-7.593,0-14.881-1.315-21.666-3.706c-11.718,30.786-41.526,52.727-76.376,52.727
	c-33.037,0-59.915-26.878-59.915-59.915c0-21.024,17.104-38.128,38.128-38.128v-32.681C31.765,324.085,0,355.85,0,394.894
	c0,51.057,41.538,92.596,92.596,92.596c51.719,0,95.522-34.506,109.618-81.702H190.638z%
	M441.191,324.085v32.681c21.024,0,38.128,17.104,38.128,38.128c0,33.037-26.878,59.915-59.915,59.915
	c-34.85,0-64.658-21.941-76.376-52.727c-6.786,2.391-14.073,3.706-21.666,3.706h-11.576c14.096,47.197,57.9,81.702,109.618,81.702
	c51.057,0,92.596-41.538,92.596-92.596C512,355.85,480.235,324.085,441.191,324.085z%
	M284.746,405.787h-57.491c-4.993,35.415-21.548,59.915-36.616,59.915v32.681c20.749,0,39.5-13.183,52.799-37.121
	c5.329-9.593,9.545-20.573,12.563-32.428c3.018,11.857,7.233,22.836,12.563,32.428c13.298,23.938,32.049,37.121,52.798,37.121
	v-32.681C306.294,465.702,289.739,441.202,284.746,405.787z
	";%
}
\newcommand{\Wheat}[1][]{%
	\tikz \fill [scale=1ex/300,yscale=-1,#1] svg "
	M90.313,406.025c42.4,42.4,110.8,42.4,152.8,0C201.113,363.725,132.712,363.725,90.313,406.025z%
	M84.112,246.725c-42.4,42.4-42.4,110.8,0,152.8C126.012,357.125,126.012,288.625,84.112,246.725z%
	M157.912,338.425c42,42.4,110.4,42.4,152.8,0C268.313,296.025,199.912,296.025,157.912,338.425z%
	M151.313,179.025c-42.4,42.4-42.4,110.8,0,152.8C193.712,289.825,193.712,221.425,151.313,179.025z%
	M282.713,207.025c59.9,0,108.1-48.6,108.1-108.1C330.913,98.925,282.713,147.525,282.713,207.025z%
	M225.212,271.125c42.4,42.4,110.8,42.4,152.8,0C336.013,228.825,267.513,228.825,225.212,271.125z%
	M219.012,111.825c-42.4,42.4-42.4,110.8,0,152.8C260.913,222.225,260.913,153.725,219.012,111.825z%
	M486.813,2.925c-3.9-3.9-9.7-3.9-13.6,0l-59.1,59.5c-3.9,3.9-3.9,9.7,0,13.6c1.9,1.9,7.9,5,13.2,0l59.5-59.5
	C490.713,12.625,490.713,6.825,486.813,2.925z%
	M458.013,157.625l-71.5,71.5c-3.9,3.9-3.9,9.7,0,13.6c1.9,1.9,7.7,5.2,13.2,0l71.5-71.5c3.9-3.9,3.9-9.7,0-13.6
	C467.713,153.725,461.513,153.725,458.013,157.625z%
	M260.513,103.225l71.5-71.5c3.9-3.9,3.9-9.7,0-13.6s-9.7-3.9-13.6,0l-71.1,71.9c-3.9,3.9-3.9,9.7,0,13.6
	C249.312,105.225,254.912,108.125,260.513,103.225z%
	M78.212,436.425c-2.3-0.4-7.8-3.1-14.4-9.7s-8.9-12.1-9.7-14.4c1.9-3.5,1.6-8.6-1.6-11.7c-3.9-3.9-9.7-3.9-13.6,0
	c-2.7,2.7-7,9.3-1.9,21c1.6,3.5,3.9,7.4,6.6,10.9l-40.4,41.2c-3.9,3.9-3.9,9.7,0,13.6c1.9,1.9,4.3,2.7,6.6,2.7s4.7-0.8,6.6-2.7
	l41.2-41.2c3.5,2.7,7.4,5.1,10.9,6.6c11.6,5.4,19.4,0,21-1.9c3.9-3.9,3.9-9.7,0-13.6
	C86.813,434.425,81.712,434.025,78.212,436.425z%
	";%
}
\newcommand{\Table}[1][]{%
	\tikz \fill [scale=1ex/300,yscale=-1,#1] svg "
	M512 160L256 32L0 160V208L16 216V376C16 389.255 26.7452 400 40 400H56C69.2548 400 80 389.255 80 376V248L224 320V472C224 485.255 234.745 496 248 496H264C277.255 496 288 485.255 288 472V320L432 248V376C432 389.255 442.745 400 456 400H472C485.255 400 496 389.255 496 376V216L512 208V160Z
	";%
}
\newcommand{\TableE}{\Table[scale=0.65]}
\newcommand{\Workbench}[1][]{%
	\tikz \fill [scale=1ex/300,yscale=-1,#1] svg "
	M216,316l-120,-60l0,120c0,22.091 -17.909,40 -40,40c-22.091,0 -40,-17.909 -40,-40l0,-160l-16,-8l0,-48l256,-128l256,128l0,48l-16,8l0,160c0,22.091 -17.909,40 -40,40c-22.091,0 -40,-17.909 -40,-40l0,-120l-120,60l0,156c0,22.091 -17.909,40 -40,40c-22.091,0 -40,-17.909 -40,-40l0,-156Zm48,16l-8,4l-8,-4l0,140c0,4.418 3.582,8 8,8c4.418,0 8,-3.582 8,-8l0,-140Zm184,-92l0,136c0,4.418 3.582,8 8,8c4.418,0 8,-3.582 8,-8l0,-144l-16,8Zm-400,-8l0,144c0,4.418 3.582,8 8,8c4.418,0 8,-3.582 8,-8l0,-136l-16,-8Zm208,-164.223l-194.446,97.223l192.02,91.072c0.803,-0.048 1.611,-0.072 2.426,-0.072c0.815,0 1.623,0.024 2.426,0.072l192.02,-91.072l-194.446,-97.223Z
	";
}
\newcommand{\WorkbenchE}{\Workbench[scale=0.65]}
\newcommand{\Iron}[1][]{%
	\tikz \fill [scale=1ex/300,yscale=-1,#1] svg "
	M420.416,91.584L335.104,6.272c-0.021-0.043-0.064-0.043-0.085-0.085c-1.792-1.749-3.861-3.051-6.037-4.053
	c-0.661-0.32-1.365-0.491-2.048-0.747c-1.813-0.619-3.669-1.003-5.589-1.109C320.875,0.235,320.469,0,320,0H192
	c-0.448,0-0.832,0.235-1.28,0.256c-1.963,0.128-3.84,0.512-5.696,1.152c-0.661,0.235-1.344,0.405-1.984,0.704
	c-2.197,1.024-4.267,2.325-6.059,4.075c-0.021,0.043-0.064,0.043-0.085,0.085L91.584,91.584
	c-4.011,3.989-6.251,9.429-6.251,15.083v384c0,0.512,0.213,0.981,0.235,1.493c0.107,1.472,0.299,2.901,0.704,4.309
	c0.235,0.811,0.533,1.536,0.853,2.304c0.277,0.64,0.384,1.323,0.725,1.963c0.299,0.533,0.725,0.939,1.067,1.451
	c0.448,0.704,0.917,1.365,1.451,2.005c0.853,1.045,1.792,1.92,2.816,2.773c0.597,0.491,1.152,1.003,1.792,1.429
	c1.387,0.917,2.901,1.621,4.48,2.176c0.363,0.149,0.64,0.405,1.024,0.512c0.064,0.021,0.149,0.021,0.213,0.021
	c1.92,0.576,3.925,0.896,5.973,0.896h298.667c2.048,0,4.053-0.32,5.973-0.896c0.064-0.021,0.149,0,0.213-0.021
	c0.384-0.107,0.661-0.363,1.024-0.512c1.579-0.555,3.072-1.259,4.48-2.176c0.64-0.427,1.195-0.939,1.792-1.429
	c1.024-0.853,1.963-1.749,2.816-2.773c0.533-0.64,1.003-1.301,1.451-2.005c0.341-0.512,0.768-0.917,1.067-1.451
	c0.32-0.64,0.448-1.323,0.725-1.963c0.32-0.768,0.619-1.493,0.853-2.304c0.405-1.429,0.597-2.837,0.704-4.309
	c0.021-0.512,0.235-0.981,0.235-1.493v-384C426.667,101.013,424.427,95.573,420.416,91.584z M128,115.499l42.667-42.667v283.371
	l-42.667,64V115.499z M146.517,469.333L203.413,384h105.173l56.896,85.333H146.517z M384,420.203l-42.667-64V72.832L384,115.499
	V420.203z
	";%
}
\newcommand{\Sugarcane}[1][]{%
	\tikz \fill [scale=1ex/300,yscale=-1,#1] svg "
	M511.094,264.722c-1.136-3.307-28.511-81.137-89.171-95.166c-23.855-5.517-48.712-0.478-74.264,14.977v62.921
	c33.771,3.584,91.421,13.025,140.126,37.597c2.383,1.202,4.957,1.79,7.516,1.79c3.953,0,7.87-1.404,10.977-4.113
	C511.396,278.264,513.3,271.144,511.094,264.722z%
	M144.012,30.084C83.06,17.35,25.545,76.5,23.128,79.024c-4.699,4.903-5.959,12.164-3.188,18.365
	c2.771,6.2,9.02,10.103,15.811,9.873c55.339-1.884,112.503,14.251,144.4,25.19V47.257h-0.001
	C169.18,38.572,157.126,32.824,144.012,30.084z%
	M1.419,319.816c-2.739,6.214-1.443,13.469,3.281,18.349c3.208,3.314,7.561,5.083,11.999,5.083
	c2.096,0,4.212-0.395,6.233-1.209c51.663-20.806,111.29-25.11,144.9-25.692c-24.308-12.68-78.121-37.979-133.868-45.914
	C13.29,293.33,2.135,318.191,1.419,319.816z%
	M136.601,205.752c-8.572-2.976-17.327-4.231-25.986-4.231c-44.94,0-87.415,33.739-89.533,35.445
	c-1.771,1.427-2.492,3.756-1.838,5.936c0.654,2.178,2.541,3.724,4.805,3.938c65.199,6.168,129.604,36.853,156.101,50.83v-54.356
	C168.724,224.429,154.173,211.856,136.601,205.752z%
	M208.92,375.079c-2.31,0-4.51-0.47-6.51-1.318v117.115c0,5.763,2.919,10.844,7.361,13.844
	c0.888,0.6,1.838,1.116,2.836,1.539c1.998,0.845,4.194,1.312,6.499,1.312h89.595c1.729,0,3.396-0.263,4.965-0.75
	c2.091-0.65,4.006-1.701,5.655-3.062c3.711-3.062,6.076-7.697,6.076-12.883V373.928c-1.881,0.733-3.92,1.151-6.059,1.151H208.92z%
	M319.321,8.24c-1.65-1.361-3.564-2.412-5.655-3.062c-1.568-0.487-3.236-0.75-4.965-0.75h-89.595
	c-2.305,0-4.501,0.467-6.499,1.312c-0.998,0.423-1.948,0.939-2.836,1.539c-4.44,3.001-7.359,8.082-7.359,13.844v112.783
	c2-0.848,4.201-1.318,6.51-1.318h110.418c2.139,0,4.178,0.417,6.058,1.151V21.124C325.398,15.937,323.032,11.304,319.321,8.24z%
	M319.338,341.688c2.139,0,4.178,0.417,6.058,1.151V164.831c-1.88,0.733-3.919,1.151-6.058,1.151H208.92
	c-2.31,0-4.51-0.47-6.51-1.318v178.341c2-0.848,4.201-1.318,6.51-1.318H319.338z
	";%
}
\newcommand{\SugarcaneE}{\Sugarcane[scale=0.65]}
\newcommand{\Lava}[1][]{%
	\tikz \fill [scale=1ex/300,yscale=-1,#1] svg "
	M398.239,235.891l-0.278-1.111C357.041,106.465,235.114,15.368,229.929,11.572L214.191,0l1.019,19.441
	c0,0.834,2.869,76.193-70.176,138.868c-79.34,68.138-78.878,124.334-78.692,154.515l0.094,4.166
	c0,82.302,60.399,143.839,146.811,153.984c-4.704-0.739-9.509-1.698-14.331-3.172c-27.867-8.61-44.16-28.144-47.308-56.289
	l-0.74-5.554c-2.5-21.756-5.74-48.697,41.845-99.801c17.775-19.071,33.236-39.624,46.012-61.01l6.388-10.738l8.424,9.257
	c25.182,27.495,44.438,55.269,57.399,82.302c11.942,23.886,12.684,61.659,11.757,69.713
	c-4.536,41.105-34.254,71.749-74.063,76.192c-2.595,0.296-6.479,0.583-11.234,0.689c0.418,0.003,0.816,0.051,1.236,0.051
	c94.06,0,167.66-75.637,167.66-172.197C406.293,277.274,398.516,237.557,398.239,235.891z
	";%
}
\newcommand{\Redstone}[1][]{%
	\tikz {
		\fill [scale=1ex/300,yscale=-1,#1] svg "
		M71.312,336.543c0,0-104.503-200.378,188.304-226.92s223.659,190.949,115.622,223.791
		s-76.012,92.573-173.411,91.852C104.427,424.543,71.312,336.543,71.312,336.543z
		";%
		
		\fill [scale=1ex/300,yscale=-1,#1] svg "
		M430.592,137.365c41.36,60.132-10.906,152.517-78.32,173.01
		c-108.038,32.841-76.012,92.574-173.411,91.852c-37.531-0.278-65.509-13.519-85.733-29.687
		c19.674,24.408,54.091,52.32,108.771,52.726c97.399,0.722,65.374-59.01,173.411-91.852
		C452.145,310.057,509.308,193.306,430.592,137.365z
		";%
		
		\fill [color=white,scale=1ex/300,yscale=-1,#1] svg "
		M362.527,173.462c-5.743-0.497-28.246-3.816-48.433-7.01c-4.258-0.674-8.272,2.237-8.948,6.502
		c-0.675,4.266,2.237,8.273,6.503,8.948c11.85,1.875,21.038,3.292,28.213,4.353c-17.344,12.524-28.514,26.932-29.231,27.869
		c-2.622,3.429-1.968,8.331,1.458,10.956c1.418,1.087,3.09,1.612,4.75,1.612c2.349,0,4.672-1.053,6.214-3.06
		c0.212-0.275,18.828-24.222,43.901-35.076c0.075-0.032,0.151-0.066,0.227-0.099c3.567-1.53,7.262-2.796,11.053-3.688
		c35.272-8.299,64.88-34.074,66.123-35.168c0.073-0.064,0.137-0.135,0.206-0.2c-3.519-3.657-7.466-7.145-11.859-10.436
		c-5.885,4.843-30.558,24.107-58.054,30.576C370.476,170.527,366.429,171.871,362.527,173.462z
		";%
		
		\fill [color=white,scale=1ex/300,yscale=-1,#1] svg "
		M176.255,382.055c28.858-5.458,33.872-33.187,34.071-34.364c0.057-0.339,0.073-0.294,0.08-0.536
		c0.341-1.56,4.971-18.379,45.275-23.167c4.289-0.51,7.354-4.4,6.844-8.69c-0.508-4.289-4.398-7.355-8.689-6.844
		c-22.597,2.684-39.254,9.268-49.506,19.566c-7.611,7.646-9.147,14.831-9.453,17.179c-0.403,1.967-4.242,18.214-21.53,21.485
		c-25.517,4.827-42.049,33.179-44.631,37.879c4.289,2.699,8.858,5.228,13.709,7.534
		C146.507,404.469,159.987,385.134,176.255,382.055z
		";%
		
		\fill [color=white,scale=1ex/300,yscale=-1,#1] svg "
		M227.661,391.482c4.145-1.216,6.52-5.562,5.304-9.706c-1.216-4.145-5.562-6.522-9.706-5.304
		c-30.512,8.947-49.571,42.199-50.368,43.61c-0.369,0.654-0.628,1.339-0.793,2.036c5.225,1.108,10.686,1.949,16.394,2.483
		C193.561,416.87,208.044,397.234,227.661,391.482z
		";%
		
		\fill [color=white,scale=1ex/300,yscale=-1,#1] svg "
		M63.771,218.302c0.084-0.016,9.463-0.442,23.649,13.744c2.735,2.737,5.11,5.257,7.205,7.481
		c11.255,11.95,17.831,17.97,37.012,17.97c2.605,0,5.444-0.111,8.557-0.334c4.309-0.308,7.553-4.051,7.245-8.359
		c-0.308-4.308-4.055-7.553-8.359-7.245c-19.727,1.408-22.023-1.032-33.067-12.758c-2.17-2.304-4.629-4.915-7.532-7.817
		c-14.4-14.401-26.343-17.612-32.58-18.179c-2.485,4.971-4.545,9.992-6.229,15.022C60.931,218.332,62.334,218.524,63.771,218.302z
		";%
		
		\fill [color=white,scale=1ex/300,yscale=-1,#1] svg "
		M111.909,162.709c0.211-0.154,21.165-15.322,35.198-4.245c1.188,0.937,1.647,1.648,1.679,1.971
		c0.23,2.291-4.615,8.022-7.51,11.447c-5.697,6.738-11.588,13.707-10.836,22.137c0.4,4.475,2.62,8.41,6.597,11.698
		c18.19,15.033,31.953,2.239,45.26-10.135c4.107-3.819,8.355-7.767,13.283-11.713c3.946-3.16,7.38-4.577,8.958-3.693
		c3.836,2.143,7.988,15.268,5.21,37.74c-0.53,4.287,2.515,8.192,6.803,8.723c0.326,0.041,0.65,0.06,0.97,0.06
		c3.891,0,7.263-2.902,7.753-6.863c0.775-6.27,1.099-12.154,1.002-17.583c1.338-3.137,6.256-12.707,16.602-13.135
		c4.316-0.178,7.67-3.822,7.492-8.138c-0.178-4.315-3.817-7.687-8.138-7.492c-8.262,0.341-14.564,3.522-19.272,7.496
		c-2.388-6.821-6.021-11.799-10.793-14.465c-5.36-2.994-14.284-4.532-26.363,5.137c-5.38,4.306-10.044,8.642-14.158,12.469
		c-15.116,14.054-17.88,15.124-24.643,9.534c-0.843-0.696-0.978-1.045-0.981-1.035c-0.164-1.934,4.626-7.598,7.2-10.643
		c5.636-6.666,12.024-14.222,11.129-23.113c-0.488-4.845-3.029-9.113-7.55-12.684c-6.391-5.045-13.257-7.083-19.886-7.383
		c-14.196,6.608-26.142,13.993-36.149,21.926c0.07,0.105,0.132,0.213,0.209,0.315C103.535,164.522,108.43,165.268,111.909,162.709z
		";%
		
		\fill [color=white,scale=1ex/300,yscale=-1,#1] svg "
		M466.124,211.243c-1.784,0.398-26.811,6.072-52.801,16.156c-0.084-5.832-1.648-10.921-3.853-15.123
		c24.036-4.103,43.301-14.408,54.956-29.366c-1.658-5.411-3.871-10.692-6.696-15.781c-2.037,1.024-3.214,2.768-3.718,3.523
		c-9.172,13.767-26.773,23.219-49.559,26.612c-3.829,0.57-7.456,1.283-10.92,2.089c-0.126,0.024-0.25,0.056-0.376,0.087
		c-38.74,9.144-54.797,31.53-55.519,32.565c-0.051,0.072-0.1,0.145-0.148,0.219c-7.177,11.059-30.613,38.833-52.584,34.635
		c-4.243-0.808-8.34,1.972-9.15,6.214c-0.811,4.242,1.971,8.34,6.214,9.151c2.572,0.491,5.106,0.72,7.611,0.72
		c32.504-0.005,58.669-38.615,60.918-42.029c1.102-1.498,13.38-17.407,42.444-25.301c2.793,3.009,6.696,9.279,3.648,18.923
		c-13.177,6.207-25.317,13.553-33.296,21.92c-2.981,3.127-2.864,8.077,0.263,11.059c1.514,1.444,3.456,2.161,5.397,2.161
		c2.064,0,4.125-0.812,5.662-2.424c20.681-21.689,78.47-36.756,92.039-40.059c0.935-5.376,1.472-10.765,1.576-16.121
		C467.538,211.036,466.833,211.085,466.124,211.243z
		";%
		
		\fill [color=white,scale=1ex/300,yscale=-1,#1] svg "
		M197.877,288.397c-4.024-1.57-8.56,0.422-10.128,4.447c-4.861,12.469-18.213,19.795-18.414,19.903
		c-3.801,2.027-5.248,6.751-3.231,10.56c1.404,2.651,4.114,4.163,6.919,4.163c1.234,0,2.486-0.293,3.653-0.91
		c0.757-0.4,18.627-10.026,25.648-28.035C203.892,294.501,201.903,289.967,197.877,288.397z
		";%
		
		\fill [color=white,scale=1ex/300,yscale=-1,#1] svg "
		M30.959,508.121c-12.632-22.692,9.888-36.435,12.519-37.939c1.729-0.977,2.937-2.516,3.537-4.259
		c-2.996-1.172-5.593-2.043-7.386-2.551c-3.105-0.88-6.89-1.718-10.916-1.79c-3,2.493-6.495,5.881-9.532,10.109
		c-8.253,11.488-9.884,24.667-4.854,37.702c5.094,1.774,11.177,2.614,17.616,2.524C31.944,510.63,31.629,509.325,30.959,508.121z
		";%
	}
}
\newcommand{\Agent}[1][]{%
	\tikz \fill [scale=1ex/300,yscale=-1,#1] svg "
	M245,456.701 490,33.299 0,33.299z
	";%
}
\newenvironment{customlegend}[1][]{%
	\begingroup
	\pgfplots@init@cleared@structures
	\pgfplotsset{#1}%
}{%
	\pgfplots@createlegend
	\endgroup
}%
\def\addlegendimage{\pgfplots@addlegendimage}
\definecolor{batter}{HTML}{d1bbd7} 
\definecolor{bucket}{HTML}{ae76a3} 
\definecolor{compass}{HTML}{882e72} 
\definecolor{leather}{HTML}{1965b0} 
\definecolor{paper}{HTML}{5289c7} 
\definecolor{quill}{HTML}{7bafde}
\definecolor{sugar}{HTML}{4eb265} 
\definecolor{book}{HTML}{90c987} 
\definecolor{map}{HTML}{cae0ab} 
\definecolor{milkbucket}{HTML}{f7f056} 
\definecolor{bookquill}{HTML}{f4a736} 
\definecolor{milkbucketandsugar}{HTML}{e8601c}
\definecolor{cake}{HTML}{dc050c}
\definecolor{rg}{HTML}{882e72} 
\definecolor{bc}{HTML}{1965b0} 
\definecolor{my}{HTML}{7bafde} 
\definecolor{rgbc}{HTML}{4eb265} 
\definecolor{bcmy}{HTML}{cae0ab} 
\definecolor{rgmy}{HTML}{f7f056}
\definecolor{rgb}{HTML}{ee8026} 
\definecolor{cmy}{HTML}{dc050c} 
\definecolor{rgbcmy}{HTML}{72190e} 
\definecolor{flatcrm}{HTML}{ffa600} 
\definecolor{flathrl}{HTML}{bc5090} 
\definecolor{nonflathrl}{HTML}{003f5c}
\icmltitlerunning{Hierarchies of Reward Machines}
\begin{document}

\twocolumn[
\icmltitle{Hierarchies of Reward Machines}



\icmlsetsymbol{equal}{*}

\begin{icmlauthorlist}
	\icmlauthor{Daniel Furelos-Blanco}{imperial}
	\icmlauthor{Mark Law}{imperial,ilasp}
	\icmlauthor{Anders Jonsson}{upf}
	\icmlauthor{Krysia Broda}{imperial}
	\icmlauthor{Alessandra Russo}{imperial}
\end{icmlauthorlist}

\icmlaffiliation{imperial}{Imperial College London, UK}
\icmlaffiliation{ilasp}{ILASP Limited, UK}
\icmlaffiliation{upf}{Universitat Pompeu Fabra, Spain}

\icmlcorrespondingauthor{Daniel Furelos-Blanco}{d.furelos-blanco18@imperial.ac.uk}

\icmlkeywords{Reinforcement Learning, Reward Machines, Hierarchical Reinforcement Learning}

\vskip 0.3in
]



\printAffiliationsAndNotice{}  

\begin{abstract}
	Reward machines~(RMs) are a recent formalism for representing the reward function of a reinforcement learning task through a finite-state machine whose edges encode subgoals of the task using high-level events. The structure of RMs enables the decomposition of a task into simpler and independently solvable subtasks that help tackle long-horizon and/or sparse reward tasks. We propose a \emph{formalism} for further abstracting the subtask structure by endowing an RM with the ability to call other RMs, thus composing a hierarchy of RMs (HRM). We \emph{exploit} HRMs by treating each call to an RM as an independently solvable subtask using the options framework, and describe a curriculum-based method to \emph{learn} HRMs from traces observed by the agent. Our experiments reveal that exploiting a handcrafted HRM leads to faster convergence than with a flat HRM, and that learning an HRM is feasible in cases where its equivalent flat representation is not.
\end{abstract}

\section{Introduction}
More than a decade ago, \citet{DietterichDGMT08} argued for the need to ``learn at \emph{multiple time scales} simultaneously, and with a rich \emph{structure} of events and durations''. Finite-state machines (FSMs) are a simple yet powerful formalism for abstractly representing temporal tasks in a structured manner. One of the most prominent recent types of FSMs used in reinforcement learning~\citep[RL;][]{SuttonB18} are reward machines \citep[RMs;][]{IcarteKVM18, IcarteKVM22}, which compactly represent state-action histories in terms of high-level events; specifically, each edge is labeled with (i)~a formula over a set of high-level events that capture a task's subgoal, and (ii)~a reward for satisfying the formula. Hence, RMs fulfill the need for structuring events and durations, and keep track of the achieved and pending subgoals.

Hierarchical reinforcement learning \citep[HRL;][]{BartoM03a} frameworks, such as options~\citep{SuttonPS99}, have been used to \emph{exploit} RMs by learning policies at two levels of abstraction: (i)~select a formula (i.e., subgoal) from a given RM state, and (ii)~select an action to (eventually) satisfy the chosen formula \citep{IcarteKVM18,FurelosBlancoLJBR21}. The subtask decomposition powered by HRL enables learning at multiple scales simultaneously, and eases the handling of long-horizon and sparse reward tasks. In addition, several works have considered the problem of \emph{learning} the RMs themselves from interaction \citep[e.g.,][]{IcarteWKVCM19,XuGAMNTW20,FurelosBlancoLJBR21,HasanbeigJAMK21}. A common problem among methods learning minimal RMs is that they scale poorly as the number of states grows.

In this work, we make the following \emph{contributions}:
\begin{enumerate}
	\item Enhance the abstraction power of RMs by defining \emph{hierarchies of RMs (HRMs)}, where constituent RMs can call other RMs (Section~\ref{sec:formalism}). We prove that any HRM can be transformed into an \emph{equivalent} flat HRM that behaves exactly like the original RMs. We show that under certain conditions, the equivalent flat HRM can have exponentially more states and edges.
	\label{contrib:formalism}
	
	\item Propose an HRL algorithm to \emph{exploit} HRMs by treating each call as a subtask (Section~\ref{sec:policy_learning}). Learning policies in HRMs further fulfills the desiderata posed by \citeauthor{DietterichDGMT08} since (i)~there is an arbitrary number of time scales to learn across (not only two), and (ii)~there is a richer range of increasingly abstract events and durations. Besides, hierarchies enable \emph{modularity} and, hence, the \emph{reusability} of the RMs and policies. Empirically, we show that leveraging a handcrafted HRM enables faster convergence than an equivalent flat HRM.
	
	\item Introduce a curriculum-based method for \emph{learning} HRMs from traces given a set of composable tasks (Section~\ref{sec:hierarchy_learning}). In line with the theory (Contribution~\ref{contrib:formalism}), our experiments reveal that decomposing an RM into several is \emph{crucial} to make its learning feasible (i.e., the flat HRM cannot be efficiently learned from scratch) since (i)~the constituent RMs are simpler (i.e., they have fewer states and edges), and (ii)~previously learned RMs can be used to efficiently \emph{explore} the environment in the search for traces in more complex tasks. 
\end{enumerate}

\begin{figure*}
	\hspace{1.4em}
	\begin{subfigure}[b]{0.24\linewidth}
		\centering
		\resizebox{\linewidth}{!}{
			\tikzset{digit/.style = { minimum height = 5mm, minimum width=5mm, anchor=center }}
			\newcommand{\setcell}[3]{\edef\x{#2 - 0.5}\edef\y{6.5 - #1}\node[digit,name={#1-#2}] at (\x, \y) {#3};}
			\centering
			\begin{tikzpicture}[scale=1]
				\draw[gray] (0, 0) grid (7, 7);
				
				\draw[black,fill=black] (0, 0) rectangle (1, 7);
				\draw[black,fill=black] (0, 0) rectangle (7, 1);
				\draw[black,fill=black] (6, 0) rectangle (7, 7);
				\draw[black,fill=black] (0, 6) rectangle (7, 7);
				
				\setcell{1}{2}{\Chicken[scale=2]}
				\setcell{2}{2}{\Redstone[scale=2]}
				\setcell{5}{2}{\Iron[scale=2]}
				
				\setcell{2}{3}{\Table[scale=2]}
				\setcell{5}{3}{\Workbench[scale=2]}
				
				\setcell{1}{4}{\Cow[scale=2]}
				\setcell{4}{4}{\Agent[scale=-2]}
				
				\setcell{1}{5}{\Squid[scale=2]}
				
				\setcell{3}{6}{\Wheat[scale=2]}
				\setcell{4}{6}{\Sugarcane[scale=2]}
				\setcell{5}{6}{\Rabbit[scale=2]}		
		\end{tikzpicture}}
		\caption{}
		\label{fig:craftworld_grid}
	\end{subfigure}
	\begin{subfigure}[b]{0.24\linewidth}
		\centering
		\resizebox{0.67\linewidth}{!}{
			\begin{tikzpicture}[shorten >=1pt,node distance=2.25cm,on grid,auto, every initial by arrow/.style ={-Latex} ]
				\node[state,initial,initial text=] (u_0)   {$u^0$};
				
				\node[state] (u_1) [below left =6em of u_0]  {$u^1$};
				\node[state] (u_2) [below =5em of u_1]  {$u^2$};
				\node[state] (u_3) [below =4.2426em of u_0]  {$u^3$};
				\node[state] (u_4) [below right =6em of u_0]  {$u^4$};
				\node[state] (u_5) [below =5em of u_4]  {$u^5$};
				\node[state] (u_6) [below =5em of u_3]  {$u^6$};
				\node[state,accepting] (u_acc) [below =5em of u_6]  {$u^{A}$};
				
				\path[-Latex] (u_0) edge node[pos=0.4] [in place] {$ \Sugarcane \land \neg \Rabbit$} (u_1);
				\path[-Latex] (u_1) edge node[pos=0.45] [in place] {\Workbench} (u_2);
				\path[-Latex] (u_2) edge node[pos=0.5] [in place] {\Rabbit} (u_3);
				\path[-Latex] (u_3) edge node[pos=0.45] [in place] {\Workbench} (u_6);
				\path[-Latex] (u_0) edge node[pos=0.5] [in place] {\Rabbit} (u_4);
				\path[-Latex] (u_4) edge node[pos=0.45] [in place] {\Workbench} (u_5);
				\path[-Latex] (u_5) edge node[pos=0.5] [in place] {\Sugarcane} (u_3);
				\path[-Latex] (u_6) edge node[pos=0.45] [in place] {\Table} (u_acc);
		\end{tikzpicture}}
		\caption{}
		\label{fig:book_hierarchy_flat}
	\end{subfigure}
	\begin{subfigure}[b]{0.5\linewidth}
		\centering
		\resizebox{0.85\linewidth}{!}{
			\begin{minipage}[b]{0.5\columnwidth}
				\begin{center}
					$\rmname_0$ (root)
				\end{center}
				
				\begin{tikzpicture}[shorten >=1pt,node distance=2.25cm,on grid,auto, every initial by arrow/.style ={-Latex} ] 
					\node[state,initial,initial text=] (u_0)   {$u^0_0$};
					\node[state] (u_1) [below left = 7.01em of u_0]  {$u^1_0$};
					\node[state] (u_2) [below right = 7.01em of u_0]  {$u^2_0$};
					\node[state] (u_3) [below right = 7.01em of u_1]  {$u^3_0$};
					\node[state,accepting] (u_acc) [right =7.2em of u_3]  {$u^{A}_0$};
					
					\path[-Latex] (u_0) edge node[pos=0.5] [in place] {$\rmname_1\mid \neg \Rabbit$} (u_1);
					\path[-Latex] (u_0) edge node[pos=0.5] [in place] {$\rmname_2\mid \top$} (u_2);
					\path[-Latex] (u_1) edge node[pos=0.5] [in place] {$\rmname_2\mid \top$} (u_3);
					\path[-Latex] (u_2) edge node[pos=0.5] [in place] {$\rmname_1\mid \top$} (u_3);
					\path[-Latex] (u_3) edge node[pos=0.45] [in place] {$\leaf\mid\Table$} (u_acc);
				\end{tikzpicture}
			\end{minipage}
			
			\hspace{3.5em}
			
			\begin{minipage}[b]{0.25\columnwidth}
				\begin{center}
					$\rmname_1$
				\end{center}
				
				\begin{tikzpicture}[shorten >=1pt,node distance=2cm,on grid,auto, every initial by arrow/.style ={-Latex} ]
					\node[state,initial,initial text=] (u_0)   {$u^0_1$};
					\node[state] (u_1) [below =5em of u_0]  {$u^1_1$};
					\node[state,accepting] (u_acc) [below =5em of u_1]  {$u^A_1$};
					
					\path[-Latex] (u_0) edge node[pos=0.4] [in place] {$\leaf\mid\Sugarcane$} (u_1);
					\path[-Latex] (u_1) edge node[pos=0.4] [in place] {$\leaf\mid\Workbench$} (u_acc);
				\end{tikzpicture}
			\end{minipage}
			
			\begin{minipage}[b]{0.25\columnwidth}
				\begin{center}
					$\rmname_2$	
				\end{center}
				
				\begin{tikzpicture}[shorten >=1pt,node distance=2cm,on grid,auto, every initial by arrow/.style ={-Latex} ]
					\node[state,initial,initial text=] (u_0)   {$u^0_2$};
					\node[state] (u_1) [below =5em of u_0]  {$u^1_2$};
					\node[state,accepting] (u_acc) [below =5em of u_1]  {$u^A_2$};
					
					\path[-Latex] (u_0) edge node[pos=0.4] [in place] {$\leaf\mid\Rabbit$} (u_1);
					\path[-Latex] (u_1) edge node[pos=0.4] [in place] {$\leaf\mid\Workbench$} (u_acc);
				\end{tikzpicture}
			\end{minipage}
		}
		\caption{}
		\label{fig:book_hierarchy}
	\end{subfigure}
	\caption{A \textsc{CraftWorld} grid~(a), and a flat HRM (b) and a non-flat one (c) for \textsc{Book}. In (b), an edge from $u$ to $u'$ is labeled $\rmtransition(u,u')$. In (c), an edge from $u$ to $u'$ in RM $\rmname_i$ is labeled $\rmname_j \mid \rmtransition_i(u,u',\rmname_j)$. In both cases, accepting states are double circled, and loops are omitted.}
	\label{fig:craftworld_book_example}
\end{figure*}
\begin{table*}
	\caption{List of \textsc{CraftWorld} tasks. Descriptions ``$x$ ; $y$'' express sequential order (observe/do $x$ then $y$), descriptions ``$x$ \& $y$'' express that $x$ and $y$ can be observed/done in any order, and $\rmheight$ is the root RM's height.}
	\label{tab:craftworld_tasks}
	\centering
	\resizebox{0.8\linewidth}{!}{
		\begin{tabular}{lclclclclcl}
			\toprule
			\multicolumn{1}{c}{Task} & $h$   & \multicolumn{1}{c}{Description}         & & \multicolumn{1}{c}{Task}& $h$     & \multicolumn{1}{c}{Description}                 & & \multicolumn{1}{c}{Task}   & $h$   & \multicolumn{1}{c}{Description}\\
			\cmidrule{1-3} \cmidrule{5-7} \cmidrule{9-11}
			\textsc{Batter}          & 1     & ({\Wheat} \& {\Chicken}) ; {\Table}     & &  \textsc{Quill}         & 1       & ({\Squid} \& {\Chicken}) ; {\Table}             & & \textsc{BookQuill}         & 3     & \textsc{Book} \& \textsc{Quill}\\
			\textsc{Bucket}          & 1     & {\Iron} ; \Table                        & & \textsc{Sugar}          & 1       & {\Sugarcane} ; \Table                           & & \textsc{MilkB.Sugar}       & 3     & \textsc{MilkBucket} \& \textsc{Sugar} \\
			\textsc{Compass}         & 1     & ({\Iron} \& {\Redstone}) ; {\Workbench} & & \textsc{Book}           & 2       & (\textsc{Paper} \& \textsc{Leather}) ; {\Table} & & \textsc{Cake}              & 4     & \textsc{Batter} ; \textsc{MilkB.Sugar} ; \Workbench \\
			\textsc{Leather}         & 1     & {\Rabbit} ; {\Workbench}                & & \textsc{Map}            & 2       & (\textsc{Paper} \& \textsc{Compass}) ; \Table   & &                            &       &                                \\
			\textsc{Paper}           & 1     & {\Sugarcane} ; \Workbench               & & \textsc{MilkBucket}     & 2       & \textsc{Bucket} ; \Cow                          & &                            &       &                                \\
			\bottomrule
	\end{tabular}}
\end{table*}

\section{Background}
\label{sec:background}
Given a finite set $\arbitraryset$, we use $\Delta(\arbitraryset)$ to denote the probability simplex over $\arbitraryset$, $\arbitraryset^*$ to denote (possibly empty) sequences of elements from $\arbitraryset$, and $\arbitraryset^+$ to denote non-empty sequences. We use $\bot$ and $\top$ to denote the truth values false and true, respectively. $\mathds{1}[A]$ is the indicator function of event $A$.

\textbf{Reinforcement Learning.}~
We represent RL tasks as \emph{episodic} labeled Markov decision processes~\citep[MDPs;][]{XuGAMNTW20}, each consisting of a set of states $\mdpstates$, a set of actions $\mdpactions$, a transition function $\mdpprob:\mdpstates \times \mdpactions \to \Delta(\mdpstates)$, a reward function $\mdprewfunc:(\mdpstates \times \mdpactions)^+ \times \mdpstates \to \mathbb{R}$, a discount factor $\mdpdiscount\in[0,1)$, a finite set of \emph{propositions} $\propset$ representing high-level events, a \emph{labeling function} $\lfunc:\mdpstates \to 2^\propset$ mapping states to proposition subsets called \emph{labels}, and a \emph{termination function} $\tau: (\mdpstates \times \mdpactions)^* \times \mdpstates \to \{\bot,\top\}\times\{\bot,\top\}$. Hence the transition function $\mdpprob$ is Markovian, but the reward function $\mdprewfunc$ and termination function $\mdptermfunc$ are not. Given a \emph{history} $\history_t=\langle \mdpstate_0,\mdpaction_0,\ldots,\mdpstate_t\rangle\in(\mdpstates\times \mdpactions)^*\times\mdpstates$, a {\em label trace} (or trace, for short) $\trace_t=\langle\lfunc(\mdpstate_0),\ldots,\lfunc(\mdpstate_t)\rangle\in(2^\propset)^+$ assigns labels to all states in $\history_t$. We assume $(\trace_t,\mdpstate_t)$ captures all relevant information about $\history_t$; thus, the reward and transition information can be written $\mdprewfunc(\history_t,\mdpaction_t,\mdpstate_{t+1})=\mdprewfunc(h_{t+1})=\mdprewfunc(\trace_{t+1},\mdpstate_{t+1})$ and $\mdptermfunc(h_t)=\mdptermfunc(\trace_t,\mdpstate_t)$, respectively. We aim to find a \emph{policy} $\pi:(2^\propset)^+ \times \mdpstates \to \mdpactions$, a mapping from traces-states to actions, that maximizes the expected cumulative discounted reward (or \emph{return}) $R_t=\mathbb{E}_\pi[\sum_{k=t}^n\mdpdiscount^{k-t}\mdprewfunc(\trace_{k+1},\mdpstate_{k+1})]$, where $n$ is the last episode's step.

At time $t$, the trace is $\trace_t\in(2^\propset)^+$, and the agent observes a tuple $\obstuple_t=\langle \obs_t, \obsterm_t, \obsgoal_t \rangle$, where $\obs_t\in \mdpstates$ is the state and $(\obsterm_t,\obsgoal_t)=\tau(\trace_t,\obs_t)$ is the termination information, with $\obsterm_t$ and $\obsgoal_t$ indicating whether or not the history $(\trace_t,\obs_t)$ is terminal or a goal, respectively. The agent also observes a label $\proplabel_{t} = \lfunc(\mdpstate_t)$. If the history is non-terminal, the agent runs action $a_t\in \mdpactions$, and the environment transitions to state $\obs_{t+1} \sim \mdpprob(\cdot|\obs_t,a_t)$. The agent then observes tuple $\obstuple_{t+1}$ and label $\proplabel_{t+1}$, extends the trace as $\trace_{t+1}=\trace_t\oplus\proplabel_{t+1}$, and receives reward $r_{t+1}= r(\trace_{t+1},\obs_{t+1})$. A trace $\trace_t$ is a \emph{goal} trace if $(\obsterm_t, \obsgoal_t) = (\top, \top)$, a \emph{dead-end} trace if $(\obsterm_t, \obsgoal_t) = (\top, \bot)$, and an \emph{incomplete} trace if $\obsterm_t = \bot$. We assume that the reward is $r(\trace_{t+1},\obs_{t+1})=\mathds{1}[\tau(\trace_{t+1},\obs_{t+1})=(\top,\top)]$, i.e.~$1$ for goal traces and $0$ otherwise.

	\begin{example}
		The \textsc{CraftWorld} domain (cf.~Figure~\ref{fig:craftworld_grid}) is used as a running example. In this domain, the agent (\Agent[scale=-1]) can move forward or rotate 90$^{\circ}$, staying put if it moves towards a wall. Locations are labeled with propositions from $\propset = \lbrace \Iron, \Table,\Cow,\Sugarcane, \Wheat, \Chicken,\allowbreak \Redstone, \Rabbit, \Squid, \Workbench\rbrace$. The agent observes propositions that it steps on, e.g. 
		$\lbrace\Chicken\rbrace$ in the top-left corner. Table~\ref{tab:craftworld_tasks} lists the tasks we consider, which consist of observing a sequence of propositions.\footnote{The tasks are based on those by \citet{AndreasKL17} and \citet{IcarteKVM18}, but definable in terms of each other.} For the task \textsc{Book}, two goal traces are $\langle \{\Sugarcane\}, \{\Workbench\}, \{\Rabbit\}, \{\Workbench\}, \{\Table\} \rangle$ and $\langle \{\Rabbit\}, \{\Workbench\},\{\Sugarcane\}, \{\Workbench\}, \{\Table\} \rangle$ (they could contain irrelevant labels in between). The traces $\langle\{\Rabbit\},\{\Workbench\}\rangle$ and $\langle\{\Chicken\}\rangle$ are incomplete. There are no dead-end traces in this scenario.
	\end{example}

Options~\citep{SuttonPS99} address temporal abstraction in RL. Given an episodic labeled MDP, an option is a tuple $\opt=\langle \optinit, \pi_\opt, \beta_\opt\rangle$, where $\optinit \subseteq \mdpstates$ is the option's initiation set, $\pi_\opt: \mdpstates \to \mdpactions$ is the option's policy, and $\beta_\opt: \mdpstates \to [0,1] $ is the option's termination condition. An option is available in $\mdpstate \in \mdpstates$ if $\mdpstate \in \optinit$, selects actions according to $\pi_\opt$, and terminates in $\mdpstate \in \mdpstates$ with probability $\beta_\opt(\mdpstate)$.

\textbf{Reward Machines.}~
A \emph{(simple) reward machine} \citep[RM;][]{IcarteKVM18,IcarteKVM22} is a tuple $\rmtuple$, where $\rmstates$ is a finite set of states; $\propset$ is a finite set of propositions; $\rmtransition:\rmstates \times \rmstates \to \dnfprop$ is a state transition function such that $\rmtransition(u,u')$ denotes the disjunctive normal form (DNF) formula over $\propset$ to be satisfied to transition from $u$ to $u'$; $\rmreward:\rmstates \times \rmstates \to \mathbb{R}$ is a reward function such that $\rmreward(u,u')$ is the reward for transitioning from $u$ to $u'$; $\rminitstate\in \rmstates$ is an initial state; $\rmstatesacc \subseteq \rmstates$ is a set of accepting states denoting the task's goal achievement; and $\rmstatesrej \subseteq \rmstates$ is a set of rejecting states denoting the unfeasibility of achieving the goal. The state transition function is \emph{deterministic}, i.e.~at most one formula from each state is satisfied. To verify if a formula is satisfied by a label $\proplabel\subseteq \propset$, $\proplabel$ is used as a truth assignment where propositions in $\proplabel$ are true, and false otherwise (e.g., $\{a\} \models a \land \neg b$). If no transition formula is satisfied, the state remains unchanged.

Ideally, RM states should capture traces, such that (i)~pairs $(u,\mdpstate)$ of an RM state and an MDP state make termination and reward Markovian, (ii)~the reward $\rmreward(u,u')$ matches the underlying MDP's reward, and (iii)~goal traces end in an accepting state, rejecting traces end in a rejecting state, and incomplete traces do not end in accepting or rejecting states. As per the previous reward assumption, the reward transition functions are $r(u,u')=\mathds{1}[u\notin\rmstatesacc \land u'\in \rmstatesacc]$.	

	\begin{example}
		Figure~\ref{fig:book_hierarchy_flat} shows an RM for \textsc{Book}. 
		The state transition function $\rmtransition$ is deterministic since no label satisfies both $\rmtransition(u^0, u^1)=\Sugarcane \land \neg \Rabbit$ and $\rmtransition(u^0,u^4)=\Rabbit$; in contrast, $\rmtransition$ becomes non-deterministic if $\rmtransition(u^0, u^1)=\Sugarcane$ since $\{\Sugarcane, \Rabbit\}$ satisfies both formulas. Note that RMs compactly represent traces in terms of key events, e.g.~$u^2$ indicates that a label satisfying $\Sugarcane \land \neg \Rabbit$ followed by another satisfying $\Workbench$ were observed (everything else is ignored).
	\end{example}

\section{Formalization of HRMs}
\label{sec:formalism}
RMs are the building blocks of our formalism. To constitute a hierarchy of RMs, we need to endow RMs with the ability to call each other. We redefine the \emph{state transition function} as $\rmtransition:\rmstates \times \rmstates \times \machineset \to \dnfprop$, where $\machineset$ is a set of RMs. The expression $\varphi(u,u',\rmname)$ denotes the DNF formula over $\propset$ that must be satisfied to transition from $u\in \rmstates$ to $u'\in \rmstates$ by calling RM $\rmname\in \machineset$. We refer to the formulas $\varphi(u,u',\rmname)$ as \emph{contexts} since they represent conditions under which calls are made. As we shall see later, contexts help preserve determinism and must be satisfied to start a call (a necessary but not sufficient condition). The hierarchies we consider contain an RM $\leaf$ called the \emph{leaf} RM, which solely consists of an accepting state (i.e., $\rmstates_\top = \rmstatesacc_\top = \{\rminitstate_\top\}$), and immediately returns control to the RM that calls it.

\begin{definition}
	A \emph{hierarchy of reward machines~(HRM)} is a tuple $\hrmtuple$, where $\machineset=\{\rmname_0,\ldots, \rmname_{m-1} \} \cup \{\leaf\}$ is a set of $m$ RMs and the leaf RM $\leaf$, $\rmname_r \in \machineset \setminus \{\leaf\}$ is the root RM, and $\propset$ is a finite set of propositions used by all constituent RMs.
\end{definition}

We make the following \emph{assumptions}: (i)~HRMs do not have circular dependencies (i.e., an RM cannot be called back from itself, including recursion), (ii)~rejecting states are global (i.e., cause the root task to fail), (iii)~accepting and rejecting states do not have transitions to other states, and (iv)~the reward function of the root corresponds to the reward obtained in the underlying MDP. Given assumption~(i), each RM $\rmname_i$ has a \emph{height} $\rmheight_i$, which corresponds to the maximum number of nested calls needed to reach the leaf. Formally, if $i=\top$, then $\rmheight_i=0$; otherwise, $\rmheight_i=1+\max_{j}\rmheight_j$, where $j$ ranges over all RMs called by $\rmname_i$ (i.e., there exists $(u,v) \in \rmstates_i \times \rmstates_i$ such that $\rmtransition_i(u,v,\rmname_j)\neq\bot$).

\begin{example}
	Figure~\ref{fig:book_hierarchy} shows \textsc{Book}'s HRM, whose root has height 2. The \textsc{Paper} and \textsc{Leather} RMs, which have height~1 and consist of observing a two-proposition sequence, can be run in any order followed by observing {\Table}. The context $\neg\Rabbit$ in the call to $\rmname_1$ preserves determinism, as detailed later.
\end{example}

In the following paragraphs, we describe how an HRM processes a label trace. To indicate where the agent is in an HRM, we define the notion of \emph{hierarchy states}.
\begin{definition}
	Given an HRM $\hrmtuple$, a \emph{hierarchy state} is a tuple $\langle \rmname_i, u, \Context, \stack \rangle$, where $\rmname_i\in\machineset$ is an RM, $u\in \rmstates_i$ is a state, $\Context\in \dnfprop$ is an accumulated context, and $\stack$ is a call stack.
\end{definition}

\begin{definition}
	Given an HRM $\hrmtuple$, a \emph{call stack} $\stack$ contains tuples $\langle u,v,\rmname_i,\rmname_j,\context,\Context \rangle$, each denoting a call where $u\in \rmstates_i$ is the state from which the call is made; $v\in \rmstates_i$ is the next state in the calling RM $\rmname_i\in\machineset$ after reaching an accepting state of the called RM $\rmname_j \in \machineset$; $\context\in\dnfprop$ are the disjuncts of $\rmtransition_i(u,v,\rmname_j)$ satisfied by a label; and $\Context\in \dnfprop$ is the accumulated context.
\end{definition}

Call stacks determine where to resume the execution. Each RM appears in the stack at most once since, by assumption, HRMs have no circular dependencies. We use $\stack \oplus \langle u,v,\rmname_i,\rmname_j,\context,\Context \rangle$ to denote a stack recursively defined by a stack $\stack$ and a top element $\langle u,v,\rmname_i,\rmname_j,\context,\Context \rangle$, where the \emph{accumulated context} $\Context$ is the condition under which a call from a state $u$ is made. The initial hierarchy state of an HRM $\hrmtuple$ is $\langle \hrmroot, u^0_r, \top, []\rangle$: we are in the initial state of the root, there is no accumulated context, and the stack is empty.

At the beginning of this section, we mentioned that satisfying the context of a call is a necessary but not sufficient condition to start the call. We now introduce a sufficient condition, called \emph{exit condition}.

\begin{definition}
	Given an HRM $\hrmtuple$ and a hierarchy state $\langle \rmname_i, u, \Context, \stack\rangle$, the \emph{exit condition} $\excond_{i,u,\Context}\in\dnfprop$ is the formula that must be satisfied to leave that hierarchy state. Formally, 
	\begin{align*}
		\excond_{i,u,\Context}=\begin{cases}
			\Context & \textnormal{if~}i=\top,\\
			\bigvee_{\substack{\phi = \varphi_i(u,v,\rmname_j),\\\phi \neq \bot, v \in \rmstates_i,\rmname_j\in\machineset}}\excond_{j,u^0_j,\dnf(\Context\land\phi)}  & \textnormal{otherwise},
		\end{cases}
	\end{align*}
	where $\textnormal{DNF}(\Context \land \phi)$ is $\Context \land \phi$ in DNF. The formula is $\Context$ if $\rmname_i=\leaf$ since it always returns control once called. Otherwise, the formula is recursively defined as the disjunction of the exit conditions from the initial state of the called RM. For instance, the exit condition for the initial hierarchy state in Figure~\ref{fig:book_hierarchy} is $(\neg \Rabbit \land \Sugarcane) \vee \Rabbit$.
\end{definition}

We can now define the \emph{hierarchical transition function} $\hrmtrans$, which maps a hierarchy state $\langle \rmname_i, u, \Context, \stack \rangle$ into another given a label $\proplabel$. There are three cases:
\begin{enumerate}
	\item If $u$ is an accepting state of $\rmname_i$ and the stack $\stack$ is non-empty, pop the top element of $\stack$ and return control to the previous RM, recursively applying $\hrmtrans$ in case several accepting states are reached simultaneously. Formally, the next hierarchy state is $\hrmtrans(\langle \rmname_j, u', \top, \stack' \rangle,\bot)$ if $u\in \rmstatesacc_i$, $|\stack| > 0$, where $\stack=\stack'\oplus\langle\cdot,u',\rmname_j,\rmname_i, \cdot,\cdot \rangle$, $\bot$ denotes a label that cannot satisfy any formula, and $\cdot$ denotes something unimportant for the case.
	
	\item If $\proplabel$ satisfies the context of a call and the exit condition from the initial state of the called RM, push the call onto the stack and recursively apply $\hrmtrans$ until $\leaf$ is reached. Formally, the next hierarchy state is $\hrmtrans(\langle \rmname_j, u^0_j,\Context',\Gamma\oplus\langle u,u',\rmname_i,\rmname_j,\phi,\Context \rangle \rangle, \proplabel)$ if $\proplabel\models\excond_{j,u^0_j,\Context'}$, where $\phi = \rmtransition_i(u,u',\rmname_j)(\proplabel)$ and $\Context'=\dnf(\Context\land\phi)$. Here, $\varphi(\proplabel)$ denotes the disjuncts of a DNF formula $\varphi \in \dnfprop$ satisfied by  $\proplabel$.
	
	\item If none of the above holds, the hierarchy state remains unchanged. 
\end{enumerate}

The state transition functions $\rmtransition$ of the RMs must be such that $\hrmtrans$ is \emph{deterministic}, i.e.~a label cannot simultaneously satisfy the contexts and exit conditions associated with two triplets $\langle u,v,\rmname_i\rangle$ and $\langle u,v',\rmname_j\rangle$ such that either (i) $v= v'$ and $i \neq j$, or (ii) $v\neq v'$. Contexts help enforce determinism by making formulas mutually exclusive. For instance, if the call to $\rmname_1$ from the initial state of $\rmname_0$ in Figure~\ref{fig:book_hierarchy} had context $\top$ instead of $\neg \Rabbit$, then $\rmname_1$ and $\rmname_2$ could be both started if $\{\Sugarcane,\Rabbit\}$ was observed, thus making the HRM non-deterministic. Finally, we introduce \emph{hierarchy traversals}, which determine how a label trace is processed by an HRM.

\begin{definition}
	Given a label trace $\trace=\langle \proplabel_0, \ldots, \proplabel_n\rangle$, a \emph{hierarchy traversal} $\hrm(\trace)=\langle v_0,v_1,\ldots,\allowbreak v_{n+1} \rangle$ is a unique sequence of hierarchy states such that (i)~$v_0 = \langle \rmname_r,u_r^0,\top,[] \rangle$, and (ii)~$\hrmtrans(v_i,\proplabel_i)=v_{i+1}$ for $i=0,\ldots,n$. An HRM $\hrm$ \emph{accepts} $\trace$ if $v_{n+1}=\langle \rmname_r,u,\top, [] \rangle$ and $u \in \rmstatesacc_r$ (i.e., an accepting state of the root is reached). Analogously, $\hrm$ \emph{rejects} $\trace$ if $v_{n+1}=\langle \rmname_k,u,\cdot,\cdot\rangle$ and $u \in \rmstatesrej_k$ for any $k \in [0,m-1]$ (i.e., a rejecting state in the HRM is reached).
\end{definition}

\begin{example}
	The HRM in Figure~\ref{fig:book_hierarchy} accepts $\trace=\allowbreak\langle \{\Sugarcane\},\allowbreak \{\Workbench\},\allowbreak \{\},\allowbreak\{\Rabbit\},\allowbreak \{\Workbench\},\allowbreak \{\Table\} \rangle$ since $\hrm(\trace)=\langle\langle\rmname_0, u^0_0, \top, [] \rangle,\allowbreak \langle\rmname_1, u^1_1, \top, [\langle u^0_0, u^1_0, \rmname_0, \rmname_1, \neg\Rabbit, \top \rangle] \rangle,\allowbreak \langle\rmname_0, u^1_0, \top, [] \rangle,\allowbreak \langle\rmname_0, u^1_0, \top, [] \rangle, \langle\rmname_2, u^1_2, \top, [\langle u^1_0, u^3_0, \rmname_0, \rmname_2, \top, \top \rangle] \rangle,\allowbreak \langle\rmname_0, u^3_0, \top, [] \rangle,\allowbreak \langle\rmname_0, u^A_0, \top, [] \rangle\rangle$. Appendix~\ref{app:hierarchy_traversal_full_example} shows the step-by-step application of $\hrmtrans$ omitted here.
	\label{example:hierarchy_traversal_short_example}
\end{example}

The behavior of an HRM $\hrm$ can be reproduced by an \emph{equivalent flat} HRM $\bar{\hrm}$; that is, (i)~the root of $\bar{\hrm}$ has height~1 and, (ii)~$\bar{\hrm}$ accepts a trace iff $\hrm$ accepts it, rejects a trace iff $\hrm$ rejects it, and neither accepts nor rejects a trace iff $\hrm$ does not accept it nor reject it. Flat HRMs thus capture the original RM definition, e.g.~Figure~\ref{fig:book_hierarchy_flat} is a flat HRM for \textsc{Book}. We formally define equivalence and prove the following equivalence theorem by construction in Appendix~\ref{app:flat_equivalent}.

\begin{restatable}{theorem}{flatequivalent}
	\label{thm:flat_equivalent}
	Given an HRM $\hrm$, there exists an equivalent flat HRM $\bar{\hrm}$.
\end{restatable}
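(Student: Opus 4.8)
The plan is to prove the theorem constructively via a reachability (powerset-style) construction: I will build a single non-leaf RM $\bar{\rmname}_r$ whose states are exactly the hierarchy states of $\hrm$ reachable from the initial one, and whose edges (all calling $\leaf$) are labeled by the DNF formulas that reproduce $\hrmtrans$ on those states. Since calling $\leaf$ returns control immediately, such an RM is flat, and its behavior coincides with that of $\hrm$ by construction.

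First I would establish that the set $Q$ of reachable hierarchy states is finite. Starting from $v_0 = \langle \rmname_r, u^0_r, \top, [] \rangle$, I close under $\hrmtrans(\cdot, \proplabel)$ over all labels $\proplabel \in 2^\propset$. Finiteness follows because (i) by the acyclicity assumption each RM occurs at most once in any call stack, so the stack depth is bounded by $|\machineset|$; (ii) each RM has finitely many states; and (iii) only finitely many accumulated contexts $\Context$ can arise, since each is obtained by conjoining disjuncts of the finitely many transition formulas a bounded number of times, and in any case there are only finitely many DNF formulas over the finite set $\propset$ up to equivalence.

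Then I would define $\bar{\hrm} = \langle \{\bar{\rmname}_r, \leaf\}, \bar{\rmname}_r, \propset \rangle$. Its states are $Q$ (suitably relabeled), its initial state is $v_0$, and for each pair $v, v' \in Q$ with $v \neq v'$ I set the context $\varphi_{\bar{r}}(v, v', \leaf)$ to be the DNF formula satisfied by exactly the labels $\{\proplabel : \hrmtrans(v, \proplabel) = v'\}$ — such a formula exists because $\propset$ is finite, so any set of labels is DNF-definable; in fact this set is precisely the sub-formula of the exit condition $\excond$ that routes to $v'$. A state $v = \langle \rmname_r, u, \top, [] \rangle$ with $u \in \rmstatesacc_r$ is marked accepting, and any $v$ whose current RM-state lies in some $\rmstatesrej_k$ is marked rejecting; rewards are inherited from the root's. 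Determinism of $\bar{\rmname}_r$ is immediate: because $\hrmtrans$ is a function, the label-sets routing to distinct targets are disjoint, so the contexts are mutually exclusive and the flattened $\hrmtrans$ stays deterministic. As the only called RM is $\leaf$, the root has height $1$ and $\bar{\hrm}$ is flat; labels satisfying no context leave the state unchanged, matching the Case-3 self-loops of $\hrm$.

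The crux — and the step I expect to be the main obstacle — is the equivalence argument. Here I would first prove the auxiliary fact that, for a flat HRM, one step of $\hrmtrans$ from $\langle \bar{\rmname}_r, v, \top, [] \rangle$ on label $\proplabel$ lands in $\langle \bar{\rmname}_r, v', \top, [] \rangle$ exactly when $v'$ is the unique target with $\proplabel \models \varphi_{\bar{r}}(v, v', \leaf)$; this is verified by unwinding the Case-2/Case-1 chain (pushing the call to $\leaf$, immediately hitting its accepting state, and popping back). Granting this, a straightforward induction on the length of $\trace$ shows that the flat traversal $\bar{\hrm}(\trace)$ visits the flat state named $v_i$ precisely when $\hrm(\trace)$ is in hierarchy state $v_i$, since at each step both transitions are governed by $\hrmtrans(v_i, \proplabel_i)$. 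Consequently $\bar{\hrm}$ reaches an accepting (resp.\ rejecting) configuration iff $\hrm$ does, and neither otherwise, which is exactly the required notion of equivalence. The delicate points to get right are (a) confirming the bound on accumulated contexts so that $Q$ is genuinely finite, and (b) verifying that rejection, which in $\hrm$ can occur deep inside the stack, is faithfully captured by marking the corresponding flat states rejecting and exploiting the assumption that rejecting states are global sinks with no outgoing transitions.
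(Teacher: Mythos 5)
Your proposal is correct, but it takes a genuinely different route from the paper. The paper flattens \emph{syntactically} and compositionally: it works two hierarchy levels at a time, inlining each called RM into the caller by cloning its internal states as $v_{u,u',j}$ and conjoining the call's context onto the edges leaving the callee's initial state; because that conjunction would wrongly re-check the context whenever an internal edge loops back to the callee's initial state, the paper first applies a preliminary transformation that splits off a dummy initial state $\hat{u}^0$. You instead build the reachable-configuration graph of $\hrmtrans$ directly: states are the finitely many reachable hierarchy states (your finiteness argument via bounded stack depth, finite RM state sets, and finitely many DNF contexts is sound --- in fact the ``rest'' states of a traversal always carry context $\top$, so the bound is even easier than you suggest), and each edge to $\leaf$ is labeled by a DNF formula defining exactly the label set $\{\proplabel : \hrmtrans(v,\proplabel)=v'\}$, with determinism free because $\hrmtrans$ is a function. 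Your construction sidesteps the dummy-initial-state issue entirely, since the ``context is only checked on entry'' behaviour is already baked into the semantics of $\hrmtrans$ that you quotient over; the price is a flat RM whose states and edge formulas are semantic objects rather than recognizable fragments of the original RMs, which makes it less useful for the structural size analysis the paper then performs in Theorem~\ref{thm:flat_size}. One small point to tighten: your remark that the edge formula ``is precisely the sub-formula of the exit condition that routes to $v'$'' is not quite right in general (the exit condition governs only whether a call can start, not the full one-step behaviour including pops and chained accepting states), but nothing in your argument depends on it --- the canonical DNF over complete conjunctions suffices.
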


Given the construction used in Theorem~\ref{thm:flat_equivalent}, we show that the number of states and edges of the resulting flat HRM can be \emph{exponential} in the height of the root (see Theorem~\ref{thm:flat_size}). We prove this in Appendix~\ref{app:flat_size} through an instance of a general HRM parametrization where the constituent RMs are \emph{highly reused}, hence illustrating the convenience of HRMs to succinctly compose existing knowledge. In line with the theory, learning a non-flat HRM can take a few seconds, whereas learning an equivalent flat HRM is often unfeasible (see Section~\ref{sec:experimental_results}).

\begin{restatable}{theorem}{flatsize}
	Let $\hrmtuple$ be an HRM and $\rootheight$ be the height of its root $\rmname_r$. The number of states and edges in an equivalent flat HRM $\bar{\hrm}$ can be exponential in $\rootheight$.
	\label{thm:flat_size}
\end{restatable}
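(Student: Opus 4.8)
The plan is to prove the statement by exhibiting a family of HRMs, rather than reasoning about an arbitrary one, since the claim is an existence (``can be'') result. Concretely, for each target height $k \ge 1$ I would construct an HRM $\hrm^{(k)}$ whose total description size (number of RMs, states, and edges summed over all constituents) is $O(k)$, yet whose every equivalent flat HRM has $2^{\Omega(k)}$ states and edges. The first useful observation is that a flat HRM has a root of height $1$ (by clause (i) of the equivalence definition), so its root may only call $\leaf$; hence a flat HRM is, as far as its trace language and hierarchy traversals are concerned, a single simple RM over $\propset$ that reads labels like a deterministic finite automaton. Bounding ``the number of states and edges'' of $\bar{\hrm}$ therefore reduces to lower-bounding the size of this one deterministic machine.

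Next I would give the construction, making the reuse explicit. Take constituents $\rmname_0, \ldots, \rmname_{k-1}$ (with $\rmname_0 = \hrmroot$) plus $\leaf$, where each $\rmname_i$ with $i < k-1$ has a fixed, constant-size shape that calls the single RM $\rmname_{i+1}$ from two distinct call sites, guarded by mutually exclusive contexts (e.g.\ a fresh proposition $p_i$ and its negation) and leading to branches with distinguishable continuations, so that the two sub-calls cannot be identified with one another; the bottom RM $\rmname_{k-1}$ calls only $\leaf$ on a base pattern. Because $\rmname_{i+1}$ is \emph{defined once and reused} at both call sites, each $\rmname_i$ stays constant-size and the whole hierarchy is $O(k)$, while its height is exactly $k$. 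I would then verify the HRM assumptions for this family: there are no circular dependencies (each $\rmname_i$ calls only $\rmname_{i+1}$), the mutually exclusive contexts keep $\hrmtrans$ deterministic, and no rejecting states are needed.

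The size bound then follows from a recursion on $S_i$, the number of states of the flat equivalent of the sub-hierarchy rooted at $\rmname_i$. Flattening $\rmname_i$ forces the flat machine to contain a separate inlined copy of the flattened $\rmname_{i+1}$ for each of the two call sites, because the accumulated contexts and the differing return/continuation behaviour prevent the two copies from being merged; hence $S_i \ge 2\,S_{i+1}$, and with the constant base case $S_{k-1} = \Theta(1)$ this unrolls to $S_0 \ge 2^{\,k-1}\,S_{k-1} = 2^{\Omega(k)}$. Since every reachable non-accepting state of this flat machine carries at least one outgoing edge, the edge count is likewise $2^{\Omega(k)}$, and taking $k = \rootheight$ gives the theorem. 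Reading the states off the explicit construction of Theorem~\ref{thm:flat_equivalent} (whose output is exactly such an inlined, context-annotated machine) yields the same count directly.

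I expect the crux to be the ``no-collapse'' step $S_i \ge 2\,S_{i+1}$: one must argue that the two inlined copies are genuinely necessary and cannot be shared by a cleverer flat machine. The cleanest way to make this airtight is a Myhill--Nerode argument at the level of the whole family: I would exhibit $2^{k}$ trace prefixes, one per binary string $b \in \{0,1\}^{k}$ recording the context choice $p_i$ versus $\neg p_i$ made at each level during the descent, and show that for any two distinct strings there is a continuation accepted after one prefix but not the other (the matching-return structure of the nested calls makes the required suffix depend on the entire descent). This forces any deterministic flat RM equivalent to $\hrm^{(k)}$ — and in particular the one produced by Theorem~\ref{thm:flat_equivalent} — to keep $2^{k}$ distinct states, which is precisely the desired lower bound and simultaneously upgrades the claim from ``the construction is large'' to ``\emph{every} equivalent flat HRM is large''. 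Designing the per-level gadget so that this distinguishing continuation provably exists under the exact context and exit-condition semantics is the fiddly part I would spend the most care on.
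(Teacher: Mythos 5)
Your proposal is correct and rests on the same basic strategy as the paper: exhibit an explicitly constructed, linearly-sized family of HRMs in which each constituent RM calls the RM one level below from two call sites, so that flattening doubles the number of states and edges at every level. It differs in two respects. First, your per-level gadget branches in parallel under mutually exclusive contexts $p_i$ and $\neg p_i$ with distinguishable continuations, whereas the paper's witness (Figure~\ref{fig:hrm_instance_exp_states}) simply chains the two calls sequentially with context $\top$, so that the root of height $\rootheight$ accepts exactly the traces of the form $(ab)^{2^{\rootheight-1}}$; the sequential gadget is simpler, needs only two propositions, and already does the job. Second, and more substantively, the paper only counts the states and edges of the particular flat HRM output by the flattening algorithm of Theorem~\ref{thm:flat_equivalent}, via the recurrences $\bar{U}^{(i)}=2\bar{U}^{(i-1)}-1$ and $\bar{E}^{(i)}=2\bar{E}^{(i-1)}$, and then remarks only informally that this machine cannot be compressed (``each state has at most one incoming edge''). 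Your Myhill--Nerode argument would instead lower-bound \emph{every} deterministic flat equivalent, which is strictly stronger than the literal ``can be exponential'' claim and closes the gap the paper leaves to that informal remark. The distinguishability step you flag as fiddly does go through for either gadget --- for the paper's, the prefixes $(ab)^j$ with $0\le j\le 2^{\rootheight-1}$ are already pairwise distinguishable --- so the extra care you budget for the gadget design is warranted but does not hide a flaw.
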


\section{Policy Learning in HRMs}
\label{sec:policy_learning}

In what follows, we explain how to \emph{exploit} the temporal structure of an HRM $\hrmtuple$ using two types of \emph{options}. We also describe (i)~how to learn the policies of these options, (ii)~when these options terminate, and (iii)~an option selection algorithm that ensures the currently running options and the current hierarchy state are aligned.

\textbf{Option Types.} Given an RM $\rmname_i \in \machineset$, a state $u\in \rmstates_i$ and a context $\Context$, an option $\opt^{j,\context}_{i,u,\Context}$ is derived for each non-false disjunct $\context$ of each transition $\rmtransition_i(u,v,\rmname_j)$, where $v\in \rmstates_i$ and $\rmname_j\in\machineset$. An option is either (i)~a \emph{formula option} if $j=\top$ (i.e., $\leaf$ is called), or (ii)~a \emph{call option} otherwise. A formula option attempts to reach a label that satisfies $\context \land \Context$ through primitive actions, whereas a call option aims to reach an accepting state of the called RM $\rmname_j$ under context $\context \land \Context$ by invoking other options.

\textbf{Policies.}
Policies are $\epsilon$-greedy during training, and greedy during evaluation. A \emph{formula option's policy} is derived from a Q-function $\qfunc_{\context\land\Context}(\obs,a;\nnparams_{\context\land\Context})$ approximated by a deep Q-network \citep[DQN;][]{MnihKSRVBGRFOPB15} with parameters $\nnparams_{\context\land\Context}$, which outputs the Q-value of each action given an MDP state. We store all options' experiences $(\obstuple_t,a,\obstuple_{t+1}, \proplabel_{t+1})$ in a single replay buffer $\mathcal{D}$, thus performing intra-option learning~\citep{SuttonPS98}. The Q-learning update uses the following loss function:
\begin{align}
	\mathbb{E}_{(\obstuple_t,a,\obstuple_{t+1},\proplabel_{t+1})\sim\mathcal{D}}\left[\left(y_{\context\land\Context} - \qfunc_{\context\land\Context}(\obs_t,a;\nnparams_{\context\land\Context})\right)^2 \right],
	\label{eq:q_learning}
\end{align}
where $y_{\context\land\Context}=r_{\context\land\Context}(\proplabel_{t+1})+\mdpdiscount\max\limits_{a'}\qfunc_{\context\land\Context}(\obs_{t+1},a';\nnparams^-_{\context\land\Context})$. The reward $r_{\context\land\Context}(\proplabel_{t+1})$ is 1 if $\context\land \Context$ is satisfied by $\proplabel_{t+1}$ and 0 otherwise; the term $\qfunc_{\context\land\Context}(\obs_{t+1},a';\nnparams^-_{\context\land\Context})$ is 0 when $\context\land\Context$ is satisfied or a dead-end is reached (i.e.,~$\obsterm_{t+1}=\top$ and $\obsgoal_{t+1}=\bot$); and $\nnparams^-_{\context\land\Context}$ are the parameters of a fixed target network.

A \emph{call option's policy} is induced by a Q-function $\qfunc_i(\obs,u,\Context,\langle\rmname_j,\context \rangle ; \nnparams_i)$ associated with the called RM $\rmname_i$ and approximated by a DQN with parameters $\nnparams_i$ that outputs the Q-value of each call in the RM given an MDP state, an RM state and a context. We store experiences $(\obstuple_t,\omega^{j,\context}_{i,u,\Context},\obstuple_{t+k})$ in a replay buffer $\mathcal{D}_i$ associated with $\rmname_i$, and perform SMDP Q-learning using the following loss:
\begin{align*}
	\mathbb{E}_{(\obstuple_t,\omega^{j,\context}_{i,u,\Context},\obstuple_{t+k})\sim\mathcal{D}_i}\left[\left(y_i - \qfunc_{i}(\obs_t,u,\Context,\langle \rmname_j, \context \rangle; \nnparams_i)  \right)^2\right],
\end{align*}
where $y_i=r + \mdpdiscount^k\max_{j',\context'}\qfunc_{i}(\obs_{t+k},u',\Context',\langle\rmname_{j'},\context' \rangle;\nnparams^-_i)$; $k$ is the number of steps between $\obstuple_t$ and $\obstuple_{t+k}$; $r$ is the sum of discounted rewards during this time; $u'$ and $\Context'$ are the RM state and context after running the option; $\rmname_{j'}$ and $\context'$ correspond to an outgoing transition from $u'$, i.e.~$\context' \in \rmtransition_i(u',\cdot,\rmname_{j'})$; and $\nnparams^-_i$ are the parameters of a fixed target network. The term $\qfunc_i(\obs_{t+k}, u', \ldots)$ is 0 if $u'$ is accepting or rejecting. Following the definition of $\hrmtrans$, $\Context'$ is $\top$ if the hierarchy state changes; thus, $\Context'=\top$ if $u'\neq u$, and $\Context'=\Context$ otherwise. Given our assumption on the MDP reward, we define reward transition functions as $\rmreward_i(u,u')=\mathds{1}[u\notin \rmstatesacc_i \land u' \in \rmstatesacc_i]$. Learning a call option's policy and lower-level option policies at once can be unstable due to \emph{non-stationarity}~\citep{LevyKPS19}, e.g.~a lower-level option may not achieve its goal at times. To relax the issue, experiences are added to the buffer only when options achieve their goal (i.e., call options assume lower-level options terminate successfully). The policies will be \emph{recursively optimal}~\citep{Dietterich00} as each subtask is optimized individually; however, since the Q-functions are approximated, policies may only be approximately optimal. The implementation details are discussed in Appendix~\ref{app:policy_learning_policy_details}.

\textbf{Termination.} 
An option terminates in two cases. First, if the episode ends in a goal or dead-end state. Second, if the hierarchy state changes and either successfully completes the option or interrupts the option. Concretely, a formula option $\opt^{\top,\phi}_{i,u,\Context}$ is only applicable in a hierarchy state $\langle \rmname_i, u, \Context, \stack \rangle$, while a call option $\opt^{j,\phi}_{i,u,\Context}$ always corresponds to a stack item $\langle u,\cdot,\rmname_i,\rmname_j,\phi,\Context\rangle$. We can thus analyze the hierarchy state to see if an option is still executing or should terminate.

\textbf{Algorithm.}
An \emph{option stack} $\opth$ stores the currently executing options. Initially, $\opth$ is empty. At each step, $\opth$ is filled (if needed) by repeatedly choosing options starting from the current hierarchy state using call option policies until a formula option is selected. Since HRMs have, by assumption, no circular dependencies, a formula option will eventually be chosen. An action is then selected using the formula option's policy. Once the action is applied, the DQNs associated with formula options are updated. The new hierarchy state is then used to determine which options in $\opth$ have terminated. Experiences for the terminated options that achieved their goal are pushed into the corresponding buffers, and the DQNs associated with the call options are updated. Finally, $\opth$ is updated to \emph{match the call stack} of the new hierarchy state (if needed) by mapping each call stack item into an option, and adding it to $\opth$ if it is not already there. By aligning the option stack with the call stack, we can update DQNs for options that ended up being run in \emph{hindsight} and which would have been otherwise ignored. We refer the reader to Appendix~\ref{app:option_selection_algorithm} for the pseudo-code and step-by-step examples.

	\begin{example}
		Given the HRM in Figure~\ref{fig:book_hierarchy}, let us assume the agent had chosen to run $\rmname_1$ from $u^0_0$. The option running $\rmname_1$ is interrupted if the agent observes $\{\Rabbit\}$ before $\{\Sugarcane\}$ since $\rmname_2$ is started; thus, $\opth$ is updated and indicates that the agent is now acting according to $\rmname_2$. In contrast, if $\{\Sugarcane\}$ is observed, the agent gets into $\rmname_1$ as originally decided and, hence, the corresponding option is not interrupted. 	
	\end{example}

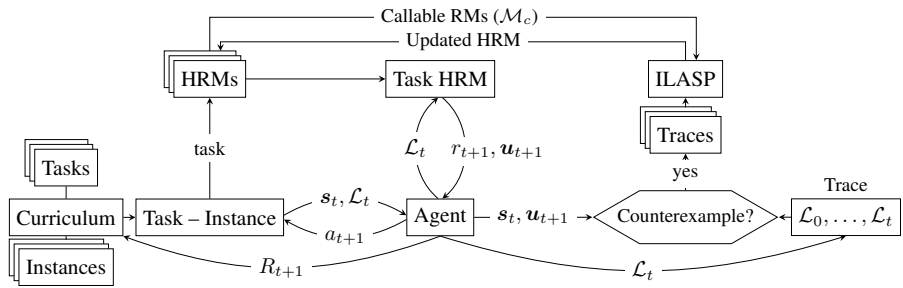
\begin{figure*}
	\centering
	\tikzstyle{none}=[inner sep=0mm]
	\tikzstyle{input} = [inner sep=0mm, outer sep=-0.3]
	\tikzstyle{output} = [inner sep=0mm, outer sep=-0.3]
	\tikzstyle{arrow}=[->,>=stealth]  
	\resizebox{0.7\linewidth}{!}{
		\begin{tikzpicture}	
			\node [fill=white, draw=black, shape=rectangle, minimum width=1cm, minimum height=0.6cm] (task_instance) at (10, 10){Task -- Instance};
			
			\node [fill=white, draw=black, shape=rectangle, minimum width=1cm, minimum height=0.6cm, right=2 of task_instance] (agent) {Agent};
			\draw ([yshift=1]task_instance.east) edge[arrow, bend left] node[midway, fill=white] {$\obstuple_t, \proplabel_t$} ([yshift=1]agent.west) ;
			\draw ([yshift=-1]agent.west) edge[arrow, bend left] node[midway,fill=white] {$a_{t+1}$} ([yshift=-1]task_instance.east);
			
			\node [cascaded, fill=white, draw=black, shape=rectangle, minimum width=1cm, minimum height=0.6cm, above=1.65 of task_instance] (hrms) {HRMs};
			\draw[arrow] (task_instance) -- (hrms) node[midway, fill=white] {\footnotesize task};
			
			\path let \p1 = (hrms), \p2=(agent) in coordinate (task_hrm_coords) at (\x2,\y1);
			\node [fill=white, draw=black, shape=rectangle, minimum width=1cm, minimum height=0.6cm] (task_hrm) at (task_hrm_coords) {Task HRM};
			\draw[arrow] (hrms) -- (task_hrm);
			\draw ([xshift=-1]agent.north) edge[arrow, bend left=50] node[midway, fill=white]  {$\mathcal{L}_t$} ([xshift=-1]task_hrm.south);
			\draw ([xshift=1]task_hrm.south) edge[arrow, bend left=50] node[right,xshift=-10,yshift=-1.5, fill=white] {$\mdprewfunc_{t+1}, \hrmstate_{t+1}$} ([xshift=1]agent.north);
			
			\node [fill=white, draw=black, shape=rectangle, minimum width=1cm, minimum height=0.6cm, right=2.5 of task_hrm] (ilasp) {ILASP};
			\draw [style=arrow] ([yshift=4.3]hrms.north) -- +(0,+0.55) -| node[xshift=-110, fill=white] {\footnotesize Callable RMs ($\machineset_c$)} ([xshift=5]ilasp.north);
			\draw [style=arrow] (ilasp.north) -- +(0,+0.3) -| node[xshift=113, fill=white] {\footnotesize Updated HRM} ([xshift=5,yshift=4.3]hrms.north);
			
			\node [cascaded, fill=white, draw=black, shape=rectangle, minimum width=1cm, minimum height=0.6cm, below=0.3 of ilasp] (traces) {Traces};
			\draw [style=arrow] ([yshift=4]traces.north) -- (ilasp);
			
			\node[regular polygon, regular polygon sides=6, minimum width=1cm, xscale=3,draw,label=center:\footnotesize Counterexample?, below=0.6 of traces] (decision) {};
			
			\draw [style=arrow] (decision) -- (traces) node [fill=white, midway, pos=0.435] {\footnotesize yes};
			\draw [style=arrow] (agent) -- (decision) node [fill=white, midway] {$\obstuple_t, \hrmstate_{t+1}$};
			
			\node [fill=white, draw=black, shape=rectangle, minimum width=1cm, minimum height=0.6cm, right =0.2 of decision] (trace) {$\proplabel_0, \ldots, \proplabel_t$};
			\node [above=0.0 of trace] (trace_label) {\footnotesize Trace};
			\draw[style=arrow] (trace) -- (decision);
			\draw (agent.south) edge[arrow, bend right=17] node[fill=white] {$\proplabel_t$} (trace.south);
			
			\node [fill=white, draw=black, shape=rectangle, minimum width=1cm, minimum height=0.6cm, left =0.2 of task_instance] (curriculum) {Curriculum};
			\draw [style=arrow] (curriculum) -- (task_instance);
			\node [cascaded, fill=white, draw=black, shape=rectangle, minimum width=1cm, minimum height=0.6cm, above =0.2 of curriculum] (tasks) {Tasks};
			\node [cascaded, fill=white, draw=black, shape=rectangle, minimum width=1cm, minimum height=0.6cm, below =0.2 of curriculum] (instances) {Instances};
			\draw (tasks) -- (curriculum) -- ([yshift=4pt]instances.north);
			
			\draw (agent.south) edge[arrow, bend left=20] node[fill=white] {$R_{t+1}$} (curriculum.south east);
		\end{tikzpicture}
	}
	\caption{Overview of the interleaving algorithm. Given a set of tasks and a set of instances, the curriculum selects a task-instance pair at the start of an episode, and the HRM for the chosen task is taken from the bank of HRMs. At each step, the agent observes a tuple $\obstuple_t$ and a label $\proplabel_t$ from the task-instance environment, and performs an action $\mdpaction_{t+1}$. The label is used to (i)~determine the next hierarchy state $\hrmstate_{t+1}$ and the reward $\mdprewfunc_{t+1}$, and (ii)~update the trace $\langle \proplabel_0, \ldots, \proplabel_t\rangle$. If the trace is a counterexample, it is added to the task's counterexample set and ILASP learns a new HRM (perhaps using previously learned RMs). The learned HRM replaces the old one in the bank of HRMs. If no counterexample is observed during the episode, the curriculum is updated using the undiscounted return $R_{t+1}$. Further details are described in the main text.
	}
	\label{fig:lhrm_general}
\end{figure*}

\section{Learning HRMs from Traces}
\label{sec:hierarchy_learning}
In the previous section, we explained how a \emph{given} HRM can be exploited using options; however, engineering an HRM is impractical. We here describe \learningmethod, a method that \emph{interleaves} policy learning with HRM learning from interaction. We consider a \emph{multi-task} setting. Given $\numtasks$ tasks and $\numinstances$ instances (e.g., grids) of an environment, the agent learns (i)~an HRM for each task using traces from several instances for better accuracy, and (ii)~general policies to reach the goal in each task-instance pair. Namely, the agent interacts with $\numtasks \times \numinstances$ MDPs $\mdp_{ij}$, where $i\in[1,\numtasks]$ and $j\in[1,\numinstances]$. The learning proceeds from simpler to harder tasks such that HRMs for the latter build on the former.

In what follows, we detail \learningmethod's components. We \emph{assume} that (i)~all MDPs share propositions $\propset$ and actions $\mdpactions$, and those defined on a given instance share states $\mdpstates$ and labeling function $\lfunc$; (ii)~to stabilize policy learning, dead-end traces are shared across tasks;\footnote{The term $\qfunc_{\context\land\Context}(\obs_{t+1},\ldots)$ in Equation~\ref{eq:q_learning} is 0 if $(\obsterm_{t+1}, \obsgoal_{t+1})=(\top, \bot)$. Since experiences $(\obstuple_t, a, \obstuple_{t+1}, \proplabel_{t+1})$ are shared through the buffer, evaluating the condition differently causes instabilities.} (iii)~the root's height of a task's HRM (or \emph{task level}, for brevity) is known (see Table~\ref{tab:craftworld_tasks} for \cw); and (iv)~without loss of generality, each RM has a single accepting state and a single rejecting state.

\textbf{Curriculum Learning \citep{BengioLCW09}.} 
\learningmethod learns the tasks' HRMs from lower to higher levels akin to  \citet{PierrotLRS0LKBF19}. Before starting an episode, \learningmethod selects an MDP $\mdp_{ij}$, where $i\in[1,\numtasks]$ and $j\in[1,\numinstances]$. The probability of selecting an MDP $\mdp_{ij}$ is determined by an estimate of its average undiscounted return $\avgreturn_{ij}$ such that lower returns are mapped into higher probabilities (see details in Appendix~\ref{app:curriculum_learning}). Initially, only level 1 MDPs can be chosen. When the minimum average return across MDPs up to the current level surpasses a given threshold, the current level increases by 1, hence ensuring the learned HRMs and their associated policies are reusable in higher level tasks.

\textbf{Learning an HRM.} The learning of an HRM is analogous to the learning of a flat RM ~\citep{IcarteWKVCM19,XuGAMNTW20,FurelosBlancoLJBR21,HasanbeigJAMK21}. The objective is to learn the state transition function $\rmtransition_r$ of the root $\rmname_r$ with height $\rmheight_r$ given (i)~a set of states $\rmstates_r$, (ii)~a set of label traces $\traceset=\traceset^G \cup \traceset^D \cup \traceset^I$, (iii)~a set of propositions $\propset$, (iv)~a set of RMs $\machineset$ with lower heights than $\rmheight_r$, (v)~a set of callable RMs $\machinesetcall \subseteq \machineset$ (by default, $\machinesetcall = \machineset$), and (vi)~the maximum number of disjuncts $\maxdisjuncts$ in the DNF formulas labeling the edges. The learned state transition function $\rmtransition_r$ is such that the resulting HRM $\hrm=\langle \machineset \cup \{\rmname_r\},\rmname_r,\propset \rangle$ accepts all goal traces $\traceset^G$, rejects all dead-end traces $\traceset^D$, and neither accepts or rejects incomplete traces $\traceset^I$. The transition functions can be represented as sets of logic rules, which are learned using the ILASP~\citep{ILASP_system} inductive logic programming system (see Appendix~\ref{app:learning_hrms_ilasp} for details on the ILASP encoding).

\begin{figure*}[h]
	\begin{minipage}[t]{0.3\linewidth}
		\vskip 0pt
		\includegraphics[width=\linewidth]{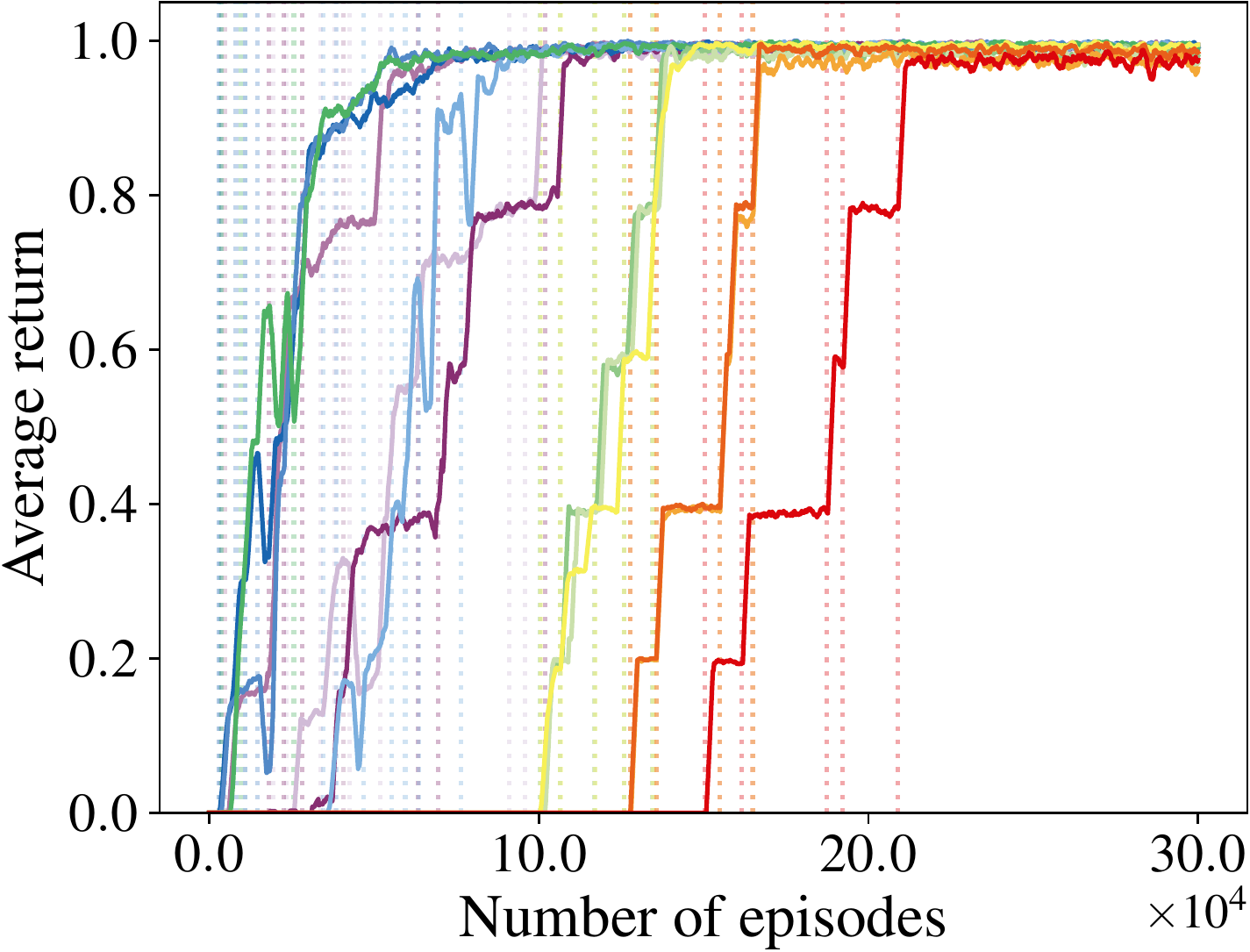}
	\end{minipage}
	\begin{minipage}[t]{0.1059375\linewidth}
		\vskip 0pt
		\resizebox{\linewidth}{!}{
			\begin{tikzpicture}
				\begin{customlegend}[legend columns=1, legend style={column sep=1ex},legend cell align={left}, legend entries={
						\textsc{Batter}, 
						\textsc{Bucket}, 
						\textsc{Compass},
						\textsc{Leather}, 
						\textsc{Paper}, 
						\textsc{Quill}, 
						\textsc{Sugar},  
						------------,
						\textsc{Book}, 
						\textsc{Map}, 
						\textsc{MilkBucket}, 
						------------,
						\textsc{BookQuill}, 
						\textsc{MilkB.Sugar}, 
						------------,
						\textsc{Cake}
				}]
					\addlegendimage{batter,line width=2pt}
					\addlegendimage{bucket,line width=2pt}
					\addlegendimage{compass,line width=2pt}
					\addlegendimage{leather,line width=2pt}
					\addlegendimage{paper,line width=2pt}
					\addlegendimage{quill,line width=2pt}
					\addlegendimage{sugar,line width=2pt}
					\addlegendimage{white,line width=2pt}
					\addlegendimage{book,line width=2pt}
					\addlegendimage{map,line width=2pt}
					\addlegendimage{milkbucket,line width=2pt}
					\addlegendimage{white,line width=2pt}
					\addlegendimage{bookquill,line width=2pt}
					\addlegendimage{milkbucketandsugar,line width=2pt}
					\addlegendimage{white,line width=2pt}
					\addlegendimage{cake,line width=2pt}
				\end{customlegend}
		\end{tikzpicture}}
	\end{minipage}
	\hfill
	\begin{minipage}[t]{0.3\linewidth}
		\vskip 0pt
		\includegraphics[width=\linewidth]{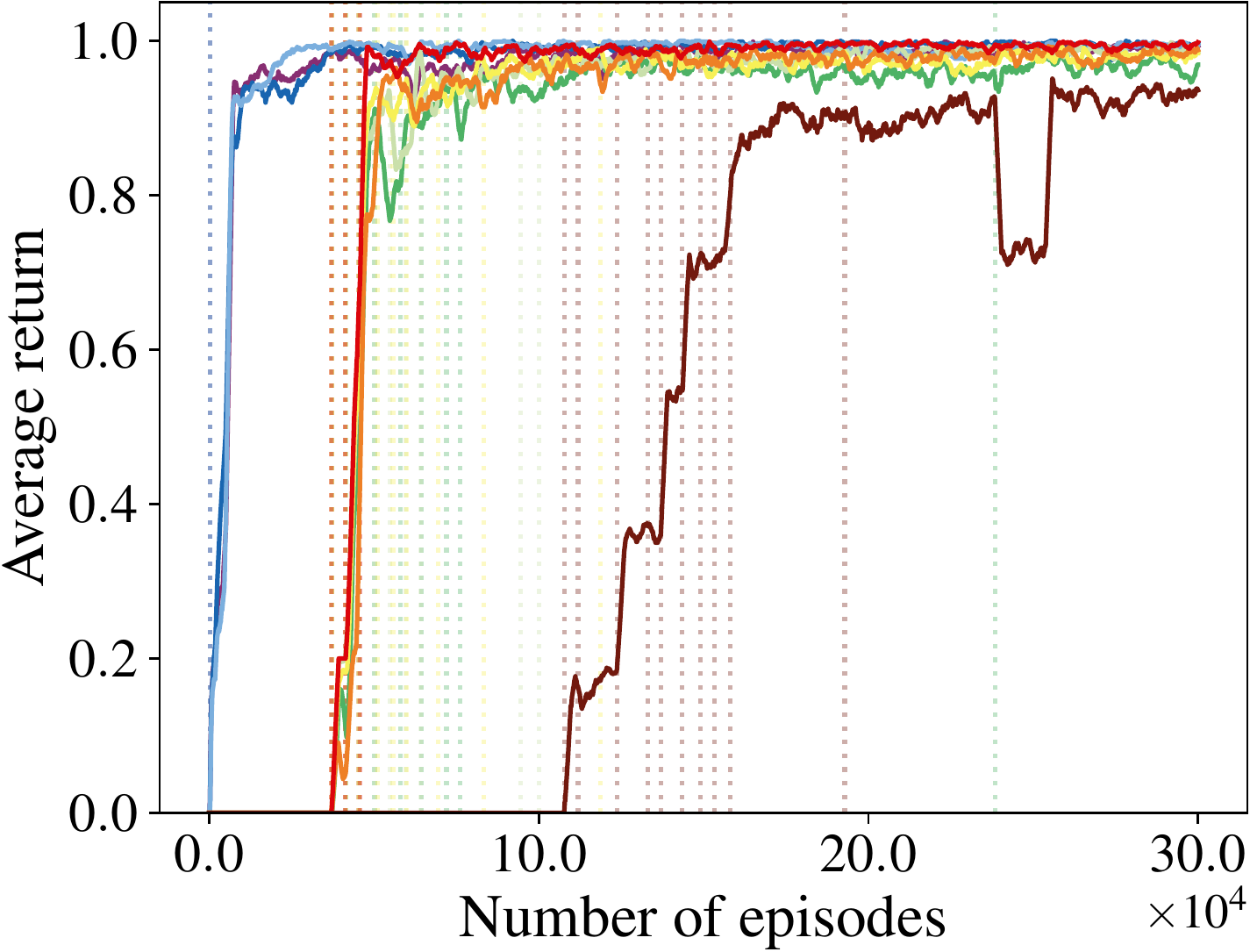}
	\end{minipage}
	\begin{minipage}[t]{0.162\linewidth}
		\vskip 0pt
		\resizebox{\linewidth}{!}{
			\begin{tikzpicture}
				\begin{customlegend}[legend columns=1, legend style={column sep=1ex},legend cell align={left}, legend entries={
						\textsc{rg} [$r$ ; $g$], 
						\textsc{bc} [$b$ ; $c$], 
						\textsc{my} [$m$ ; $y$], 
						--------------------------,
						\textsc{rg\&bc} [\textsc{rg} \& \textsc{bc}], 
						\textsc{bc\&my} [\textsc{bc} \& \textsc{my}], 
						\textsc{rg\&my} [\textsc{rg} \& \textsc{my}],
						\textsc{rgb} [\textsc{rg} ; $b$],
						\textsc{cmy} [$c$ ; \textsc{my}],
						--------------------------,
						\textsc{rgb\&cmy} [\textsc{rgb} \& \textsc{cmy}]
					}]
					\addlegendimage{rg,line width=2pt}
					\addlegendimage{bc,line width=2pt}
					\addlegendimage{my,line width=2pt}
					\addlegendimage{white,line width=2pt}
					\addlegendimage{rgbc,line width=2pt}
					\addlegendimage{bcmy,line width=2pt}
					\addlegendimage{rgmy,line width=2pt}
					\addlegendimage{rgb,line width=2pt}
					\addlegendimage{cmy,line width=2pt}
					\addlegendimage{white,line width=2pt}
					\addlegendimage{rgbcmy,line width=2pt}
				\end{customlegend}
		\end{tikzpicture}}
	\end{minipage}
	\caption{\learningmethod learning curves for \cw (FRL) and \ww (WD). The legends separate tasks by level. The \ww legend describes the subtask order in brackets following the specification introduced in Table~\ref{tab:craftworld_tasks}. The dotted vertical lines correspond to episodes in which an HRM is learned.}
	\label{fig:learning_hrm_and_policy}
\end{figure*}

\textbf{Interleaving  Algorithm.} \learningmethod \emph{interleaves} the induction of HRMs with policy learning akin to \citet{FurelosBlancoLJBR21}. Figure~\ref{fig:lhrm_general} illustrates the core blocks of the algorithm. Initially, the HRM's root of each task $i \in [1, \numtasks]$ consists of 3 states (the initial, accepting, and rejecting states) and neither accepts nor rejects anything. A new HRM is learned when an episode's label trace is not correctly recognized by the current HRM (i.e., if a goal trace is not accepted, a dead-end trace is not rejected, or an incomplete trace is accepted or rejected). The number of states in $\rmstates_r$ increases by 1 when an HRM that covers the examples cannot be learned, hence guaranteeing that the root has the smallest possible number of states (i.e.,~it is \emph{minimal}) for a specific value of $\maxdisjuncts$. When an HRM for task $i$ is learned, the returns $\avgreturn_{ij}$ in the curriculum are set to 0 for all $j\in[1,\numinstances]$. Analogously to some  RM learning methods~\citep{IcarteWKVCM19,XuGAMNTW20,HasanbeigJAMK21}, the first HRM for a task is learned using a set of traces; in our case, the $\numshortestgoaltracesexp$ shortest traces from a set of $\numgoaltracesexp$ goal traces are used (empirically, short traces speed up learning). \learningmethod leverages learned options to \emph{explore} the environment during the goal trace collection, accelerating the process when labels are sparse; specifically, options from lower height RMs are sequentially selected uniformly at random, and their greedy policy is run until termination. We describe other details in Appendix~\ref{app:interleaving_algorithm}.

\section{Experimental Results}
\label{sec:experimental_results}
	We evaluate the policy and HRM learning components in two \emph{domains}: \cw and \ww. We consider four grid types for \cw (see Section~\ref{sec:background}): an open plan $7\times 7$ grid (OP, Figure~\ref{fig:craftworld_grid}), an open plan $7\times 7$ grid with a lava location (OPL), a $13\times 13$ four rooms grid \citep[FR;][]{SuttonPS99}, and a $13\times 13$ four rooms grid with a lava location per room (FRL). The lava proposition must be avoided. \ww~\citep{Karpathy15,Sidor16,IcarteKVM18} consists of a 2D box containing 12 balls of 6 different colors (2 per color) each moving at a constant speed in a fixed direction. The agent ball can change its velocity in any cardinal direction. The propositions $\propset=\{r,g,b,c,m,y\}$ are the balls' colors. Labels consist of the color of the balls the agent overlaps with and, unlike \cw, they may contain multiple propositions. The tasks consist in observing color sequences. We consider two settings: without dead-ends (WOD) and with dead-ends (WD). In WD, the agent must avoid 2 balls of an extra color. Further details are described in Appendix~\ref{app:experimental_details_domains}.

We report the average performance across 5 runs, each using a different set of 10 random instances. The learning curves show the average undiscounted return obtained by the greedy policy every 100 episodes across instances. For other metrics (e.g., learning times), we present the average and the standard error (the latter in brackets). In HRM learning experiments, we set a 2-hour limit to learn the HRMs. The code is available at \url{https://github.com/ertsiger/hrm-learning}.

\begin{figure*}[h]
	\centering
	\begin{subfigure}[b]{0.29\linewidth}
		\centering
		\includegraphics[width=\linewidth]{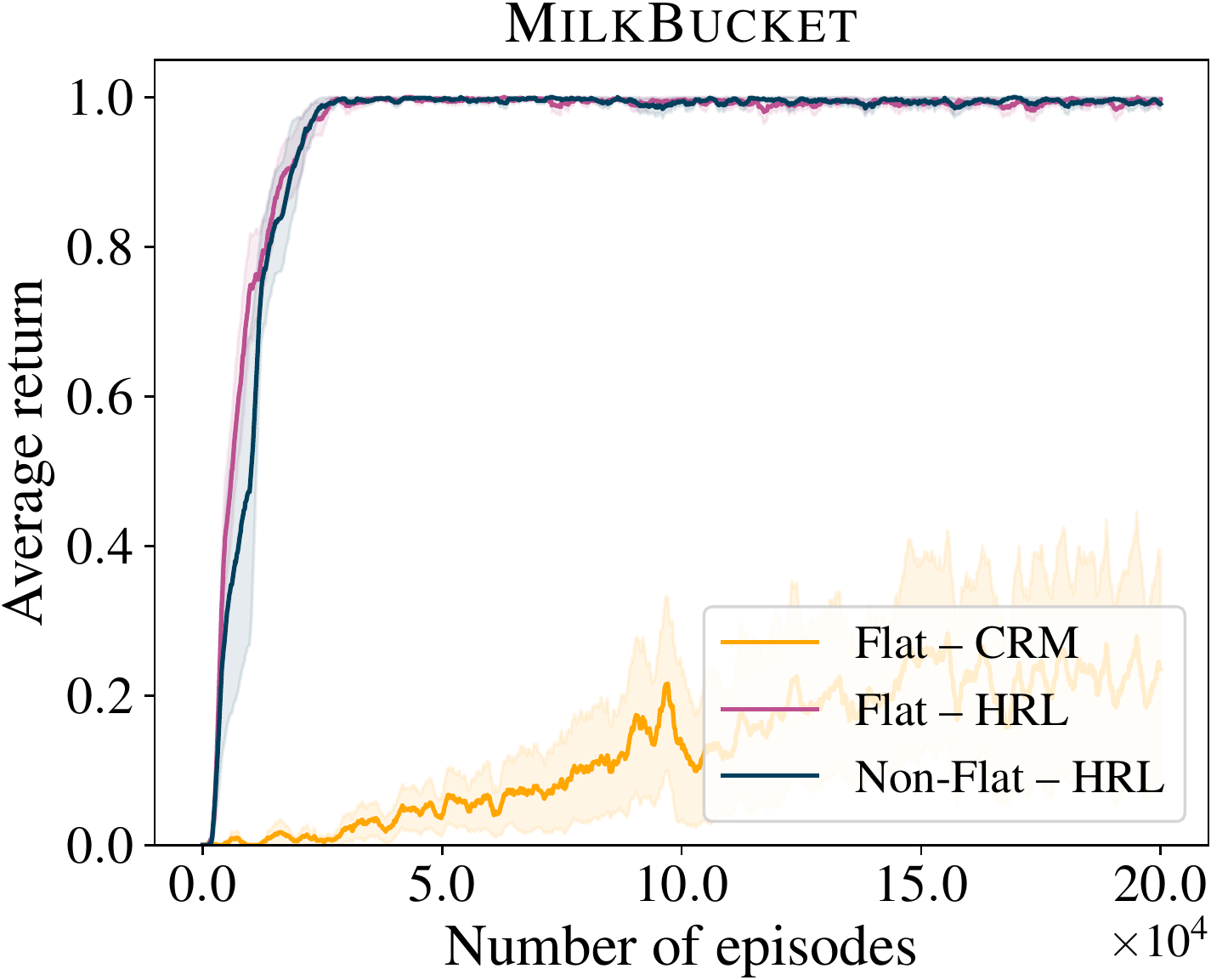}
	\end{subfigure}
	\hfill
	\begin{subfigure}[b]{0.29\linewidth}
		\centering
		\includegraphics[width=\linewidth]{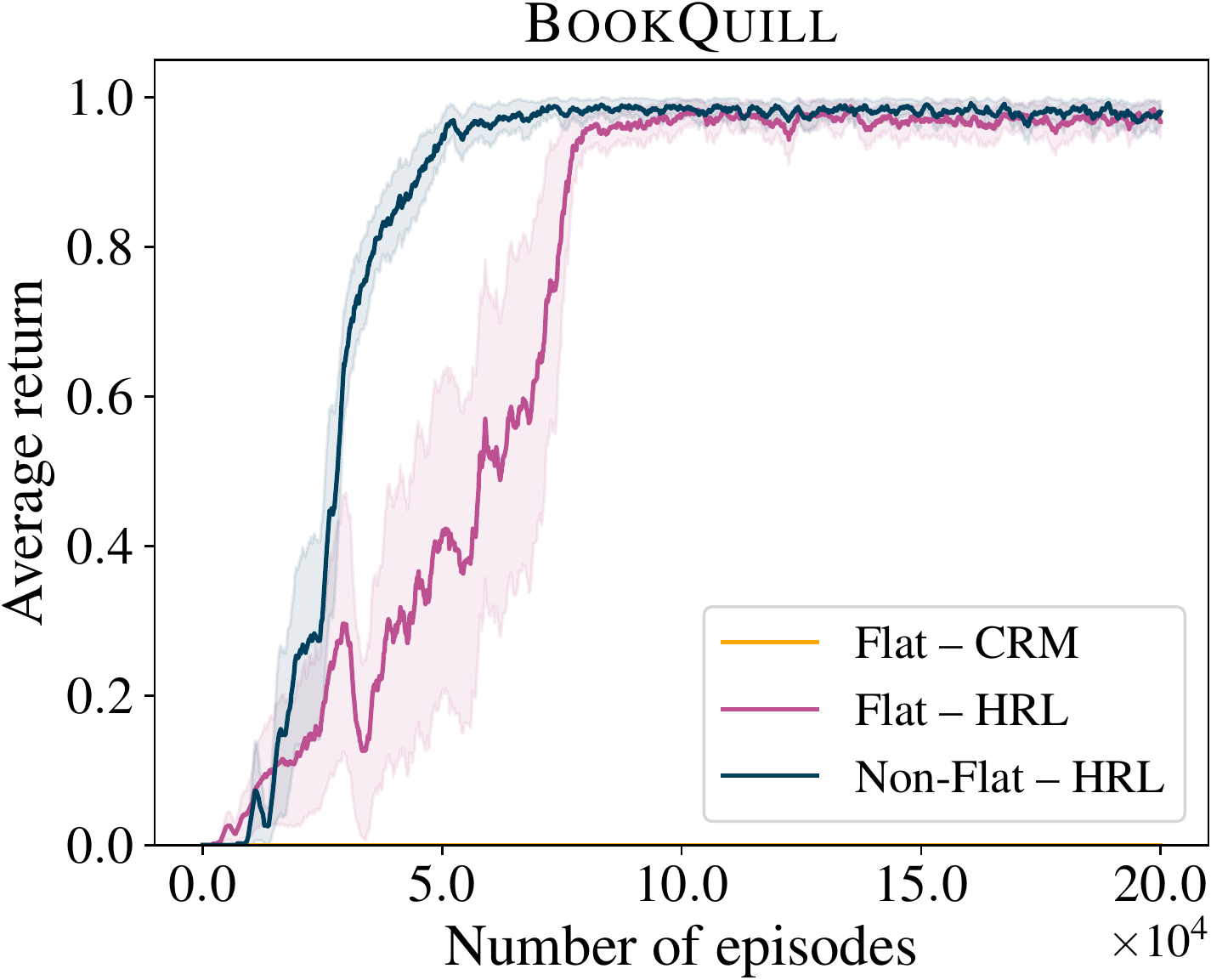}
	\end{subfigure}	
	\hfill
	\begin{subfigure}[b]{0.29\linewidth}
		\centering
		\includegraphics[width=\linewidth]{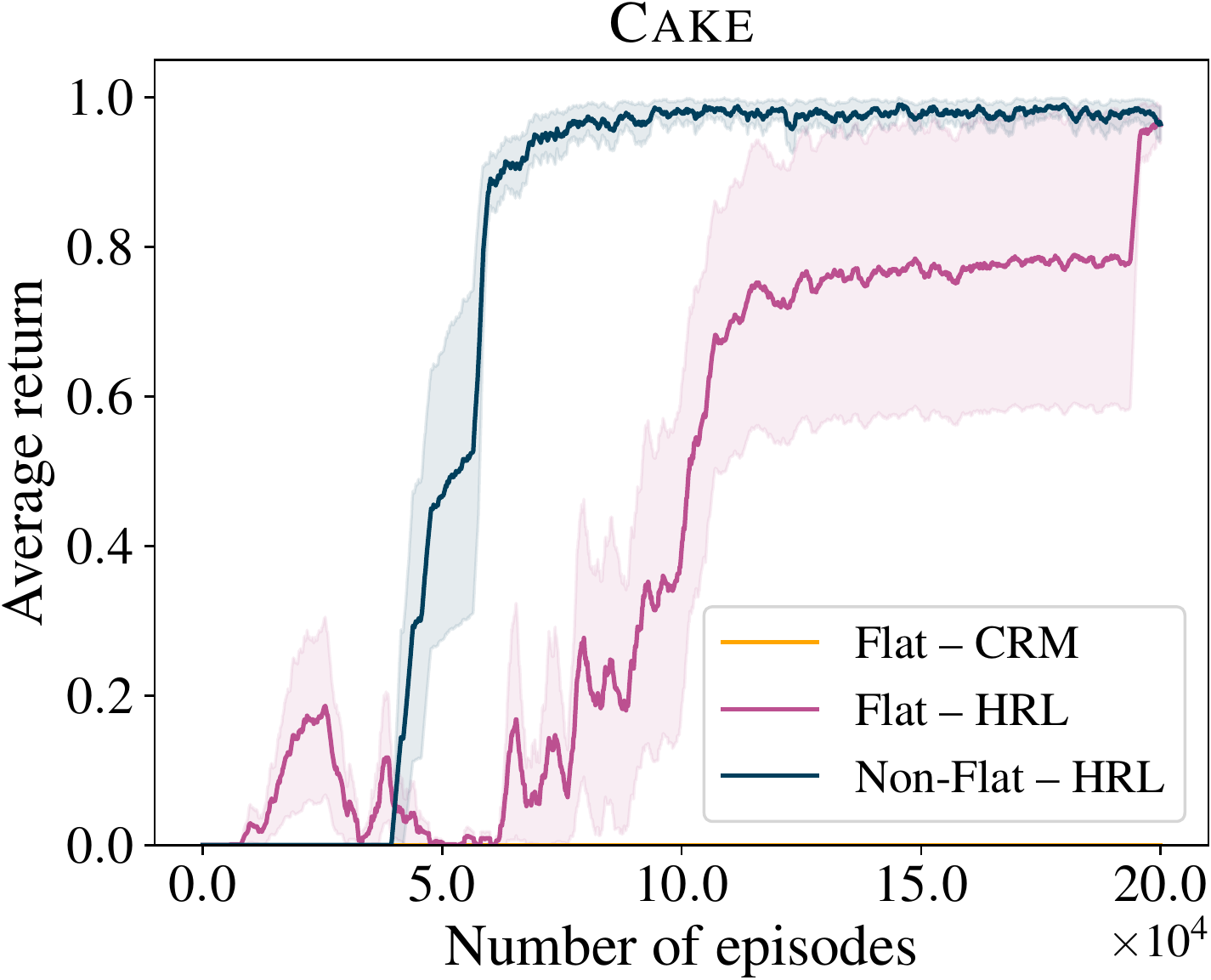}
	\end{subfigure}
	\caption{Learning curves for three \cw (FRL) tasks using handcrafted HRMs.}
	\label{fig:flat_vs_hrm_policy}
\end{figure*}

\subsection{Learning of Non-Flat HRMs}
\label{sec:learning_nonflat_hrms}
Figure~\ref{fig:learning_hrm_and_policy} shows the \learningmethod learning curves for \cw (FRL) and \ww (WD). These settings are the most challenging due to the inclusion of dead-ends since (i)~they hinder the observation of goal examples in level~1 tasks using random walks, (ii)~the RMs must include rejecting states, (iii)~formula options must avoid dead-ends, and (iv)~call options must avoid invoking options leading to rejecting states. In line with the curriculum method, \learningmethod does not start learning a level $\rmheight$ task until performance in tasks from levels $1,\ldots,\rmheight-1$ is sufficiently good. The convergence for high-level tasks is often fast due to the reuse of lower level HRMs and policies.

The average time (in seconds) exclusively spent on learning \emph{all} HRMs is 1009.8~(122.3) for OP, 1622.6~(328.7) for OPL, 1031.6~(150.3) for FR, 1476.8~(175.3) for FRL, 35.4~(2.0) for WOD, and 67.0~(6.2) for WD (see Tables~\ref{tab:non_flat_hrm_default_cw} and \ref{tab:non_flat_hrm_default_ww} in Appendix~\ref{app:extended_results_non_flat_hrm_learning}). Dead-ends (OPL, FRL, WD) incur longer times since (i)~there is one more proposition, (ii)~there are edges to the rejecting state(s), and (iii)~there are dead-end traces to cover. We observe that the complexity of learning an HRM does not necessarily correspond with the task complexity (e.g.,~the times for OP and FRL are close). Learning in \ww is faster than in \cw since the RMs have fewer states and there are fewer callable RMs.

\textbf{Ablations.} By \emph{restricting the callable RMs} to those required by the HRM (e.g., \emph{just} using \textsc{Paper} and \textsc{Leather} RMs to learn \textsc{Book}'s), there are fewer ways to label the edges of the induced RM. Learning is {5-7\texttimes} faster using 20\% fewer calls to the learner (i.e., fewer examples) in \cw, and {1.5\texttimes} faster in \ww (see Tables~\ref{tab:non_flat_hrm_restricted_cw} and \ref{tab:non_flat_hrm_restricted_ww} in Appendix~\ref{app:extended_results_non_flat_hrm_learning}); thus, HRM learning becomes less scalable as the number of tasks and levels grows. This is an instance of the \emph{utility} problem \citep{Minton88}. Refining the callable RM set prior to HRM learning is an avenue for future work.

We evaluate the performance of \emph{exploration with options} using the number of episodes needed to collect the $\numgoaltracesexp$ goal traces for a given task since the activation of its level. Intuitively, the agent rarely moves far from a region of the state space using primitive actions only, thus taking longer to collect the traces; in contrast, options enable the agent to explore the state space efficiently. In \cw's FRL setting, using primitive actions requires {128.1\texttimes} more episodes than options in \textsc{MilkBucket}, the only level~2 task for which $\numgoaltracesexp$ traces are collected. Likewise, primitive actions take {53.1\texttimes} and {10.1\texttimes} more episodes in OPL and WD respectively. In OP and WOD, options are not as beneficial since episodes are relatively long (1000 steps), there are no dead-ends and it is easy to observe the different propositions. See Tables~\ref{tab:non_flat_hrm_action_explore_cw} and \ref{tab:non_flat_hrm_action_explore_ww} in Appendix~\ref{app:extended_results_non_flat_hrm_learning} for detailed results.

Learning the first HRMs using a \emph{single goal trace} ($\numgoaltracesexp=\numshortestgoaltracesexp=1$) incurs timeouts in all \cw settings, thus showing the value of using many short traces instead.

\subsection{Learning of Flat HRMs}
\label{sec:learning_flat_hrms}
Learning a flat HRM is often less scalable than learning a non-flat equivalent since (i)~already learned HRMs cannot be reused, and (ii)~a flat HRM usually has more states and edges (as shown in Theorem~\ref{thm:flat_size}, growth can be exponential). We compare the performance of learning (from interaction) a non-flat HRM using \learningmethod with that of an equivalent flat HRM using \learningmethod, DeepSynth~\citep{HasanbeigJAMK21}, JIRP~\citep{XuGAMNTW20} and LRM~\citep{IcarteWKVCM19}.  \learningmethod and JIRP induce RMs with explicit accepting states, while DeepSynth and LRM do not. We use OP and WOD instances in \cw and \ww respectively.

A non-flat HRM for \textsc{MilkBucket} (level~2) is learned in 1.5~(0.2) seconds, whereas flat HRMs take longer: 3.2~(0.6) w/\learningmethod, 325.6~(29.7) w/DeepSynth, 17.1~(5.5) w/JIRP and 347.5~(64.5) w/LRM. \learningmethod and JIRP learn minimal RMs, hence producing the same RM consisting of 4 states and 3 edges. DeepSynth and LRM do not learn a minimal RM but one that is good at predicting the next possible label given the current one. In domains like ours where propositions can be observed anytime (i.e., without temporal dependencies between them), these methods tend to `overfit' the input traces and output large RMs that barely reflect the task's structure, e.g. DeepSynth learns RMs with 13.4~(0.4) states and 93.2~(1.7) edges. In contrast, methods learning minimal RMs only from observable traces may suffer from \emph{overgeneralization}~\citep{Angluin80} in other domains (e.g.,~with temporally-dependent propositions). These observations apply to more complex tasks (i.e., involving more high-level temporal steps and multiple paths to the goal), such as \textsc{Book} (level~2), \textsc{BookQuill} (level~3) and \textsc{Cake} (level~4). \learningmethod learns non-flat HRMs (e.g., see Figure~\ref{fig:book_hierarchy}) for these tasks in (at most) a few minutes, while learning an informative flat HRM (e.g., see Figure~\ref{fig:book_hierarchy_flat}) is unfeasible. We refer the reader to Table~\ref{tab:flat_hrms_table} in Appendix~\ref{app:extended_results_flat_hrm_learning} for details.

DeepSynth, JIRP and LRM perform poorly in \ww. Unlike \learningmethod, these learn RMs whose edges are not labeled by formulas but proposition sets; hence, the RMs may have exponentially more edges (e.g., 64  instead of 2 for \textsc{rg}), and become unfeasible to learn. Indeed, flat HRM learners time out in \textsc{rg\&bc} and \textsc{rgb\&cmy}, while \learningmethod needs a few seconds (see Table~\ref{tab:flat_hrms_table} in Appendix~\ref{app:extended_results_flat_hrm_learning}).

\subsection{Policy Learning in Handcrafted HRMs}
\label{sec:experimental_results_policy_learning_handcrafted}
We compare the performance of policy learning in handcrafted non-flat HRMs against in flat equivalents, which are guaranteed to exist by Theorem~\ref{thm:flat_equivalent}. For fairness, the flat HRMs are minimal. To exploit the flat HRMs, we apply our HRL algorithm (Section~\ref{sec:policy_learning}) and CRM~\citep{IcarteKVM22}, which learns a Q-function over $\mdpstates \times \rmstates$ using synthetic counterfactual experiences for each RM state. Figure~\ref{fig:flat_vs_hrm_policy} shows the learning curves for some \cw tasks in the FRL setting. The convergence rate is similar in the simplest task (\textsc{MilkBucket}), but higher for non-flat HRMs in the hardest ones. Unlike the HRL approaches, CRM does not decompose the subtask into independently solvable subtasks and, hence, deals with sparser rewards that result in a slower convergence. In the case of the HRL approaches, since both use the same set of formula option policies, differences arise from flat HRMs' lack of modularity. Call options, which are not present in flat HRMs, form independent modules that reduce reward sparsity. \textsc{MilkBucket} involves fewer high-level steps than \textsc{BookQuill} and \textsc{Cake}, thus reward is less sparse and non-flat HRMs are not as beneficial. The efficacy of non-flat HRMs is also limited when (i)~the task's goal is reachable regardless of the chosen options (e.g., if there are no rejecting states, like in OP and FR), and (ii)~the reward is not sparse, like in OPL (the grid is small) or \ww (the balls easily get near the agent). See Appendix~\ref{app:extended_results_policy_learning} for additional results.

\section{Related Work}
\label{sec:related_work}

\textbf{RMs and Composability.} 
Our RMs differ from the original~\citep{IcarteKVM18,IcarteKVM22} in that (i)~an RM can call other RMs, (ii)~there are explicit accepting and rejecting states~\citep{XuGAMNTW20,FurelosBlancoLJBR21}, and (iii)~transitions are labeled with propositional logic formulas instead of proposition sets~\citep{FurelosBlancoLJBR21}. Recent works \emph{derive} RMs (or similar FSMs) from formal language specifications~\citep{CamachoIKVM19,Araki0VDFR21} and expert demonstrations~\citep{CamachoVZJIK21}, or \emph{learn} them from experience using discrete optimization~\citep{IcarteWKVCM19,ChristoffersenLTM20}, SAT solving~\citep{XuGAMNTW20,CorazzaGN22}, active learning~\citep{GaonB20,XuWONT21,DohmenTAB0V22}, state-merging~\citep{XuGAMNTW19,GaonB20}, program synthesis~\citep{HasanbeigJAMK21} or inductive logic programming~\citep{FurelosBlancoLJBR21,ArdonFR23}. A prior way of composing RMs consists in merging the state and reward transition functions~\citep{DeGiacomo20}. Other works have considered settings where the labeling function is noisy~\citep{LiCVKTM22,VerginisKCT22}, the RM transitions and/or rewards are stochastic~\citep{CorazzaGN22,DohmenTAB0V22} or defined over predicates~\citep{ZhouL22}, and multiple agents interact with the world~\citep{Neary00T21,Dann0A0T22,ArdonFR23}. High-probability regret bounds have been derived for RMs~\citep{BourelJMT23}.

Alternative methods for modeling task composability include subtask sequences~\citep{AndreasKL17}, context-free grammars~\citep{Chevalier-Boisvert19}, formal languages~\citep{JothimuruganAB19,IllanesYIM20,LeonSB20,WangTLHL20} and logic-based algebras~\citep{TasseJR20}.

\textbf{Hierarchical RL.} 
Our method for exploiting HRMs resembles a hierarchy of DQNs \citep{KulkarniNST16}. Akin to option discovery methods, \learningmethod induces a set of options from experience. While \learningmethod's options are a byproduct of finding an HRM that compactly captures label traces, usual option discovery methods explicitly look for them (e.g., options that reach novel states). \learningmethod requires a set of propositions and tasks, which bound the number of discoverable options; similarly, some of these methods impose an explicit bound~\citep{BaconHP17, MachadoBB17}. \learningmethod requires each task to be solved at least once before learning an HRM (and, hence, options), just like other methods~\citep{McGovernB01,StolleP02}. The problem of discovering options for exploration has been considered before \citep{BellemareSOSSM16,MachadoBB17,JinnaiPAK19,DabneyOB21}. While our options are not discovered for exploration, we leverage them to find goal traces in new tasks. \citet{LevyKPS19} learn policies from multiple hierarchical levels in parallel by training each level as if the lower levels were optimal; likewise, we train call option policies from experiences where invoked options achieve their goal.

HRMs are close to hierarchical abstract machines \citep[HAMs;][]{ParrR97} since both are hierarchies of FSMs, but there are two core differences. First, HAMs do not have reward transition functions. Second, (H)RMs decouple the traversal from the policies, i.e.~independently of the agent's choices, the (H)RM is followed; thus, an agent using an (H)RM must be able to interrupt its choices (see Section~\ref{sec:policy_learning}). While HAMs do not support interruption, Programmable HAMs~\citep{AndreR00} extend them to support it along with other program-like features. Despite the similarity, there are few works on learning HAMs~\citep{LeonettiIP12} and many on learning RMs, as outlined before.

\textbf{Curriculum Learning.} \citet{PierrotLRS0LKBF19} learn hierarchies of neural programs given the level of each program, akin to our RMs' height; likewise, \citet{AndreasKL17} prioritize tasks consisting of fewer high-level steps. The `online' method by \citet{MatiisenOCS20} also keeps an estimate of each task's average return, but it is not applied in an HRL scenario. \citet{WangTLHL20} learn increasingly complex temporal logic formulas leveraging previously learned formulas using a set of templates.

\section{Conclusions and Future Work}
\label{sec:conclusions}
We have here proposed (1)~HRMs, a \emph{formalism} that composes RMs in a hierarchy by enabling them to call each other, (2)~an HRL method that \emph{exploits} the structure of an HRM, and (3)~a curriculum-based method for \emph{learning} a set of HRMs from traces. Non-flat HRMs have significant advantages over their flat equivalents. Theoretically, a flat equivalent of a given HRM can have exponentially more states and edges. Empirically, (i)~our HRL method converges faster given a non-flat HRM instead of a flat equivalent one, and (ii)~in line with the theory, learning an HRM is feasible in cases where a flat equivalent is not.

\learningmethod \emph{assumes} the proposition set is known, shared dead-end indicators across tasks, and a fixed set of tasks. Relaxing these assumptions by forming the propositions from raw data, conditioning policies to dead-ends, and letting the agent propose its own composable tasks are promising directions for future work. Other  directions include \emph{non-episodic} settings and learning \emph{globally optimal} policies over HRMs.

\section*{Acknowledgements}
We thank the reviewers, as well as Hadeel Al-Negheimish, Nuri Cingillioglu, and Alex F. Spies for their comments. Anders Jonsson is partially funded by TAILOR, AGAUR SGR and Spanish grant PID2019-108141GB-I00.

\bibliography{icml2023}
\bibliographystyle{icml2023}

\newpage
\appendix
\onecolumn
\section{Formalism Details}
In this appendix, we extend Example~\ref{example:hierarchy_traversal_short_example} by showing all the intermediate steps (Appendix~\ref{app:hierarchy_traversal_full_example}), and provide the proofs for Theorems~\ref{thm:flat_equivalent} and \ref{thm:flat_size} (Appendix~\ref{app:equivalente_theory_results}).

\subsection{Hierarchy Traversal Example}
\label{app:hierarchy_traversal_full_example}
The HRM in Figure~\ref{fig:book_hierarchy} accepts trace $\trace=\allowbreak\langle \{\Sugarcane\},\allowbreak \{\Workbench\},\allowbreak \{\},\allowbreak\{\Rabbit\},\allowbreak \{\Workbench\},\allowbreak \{\Table\} \rangle$, whose traversal is $\hrm(\trace)=\langle v_0,\allowbreak v_1,\allowbreak v_2,\allowbreak v_3,\allowbreak v_4,\allowbreak v_5,\allowbreak v_6\rangle$, where:
\begingroup
\allowdisplaybreaks
\begin{align*}
	v_0 &= \langle\rmname_0, u^0_0, \top, [] \rangle, \\
	v_1 &= \hrmtrans(v_0, \{\Sugarcane\})\\ &=\hrmtrans(\langle\rmname_0, u^0_0, \top, [] \rangle, \{\Sugarcane\})\\
	&= \hrmtrans(\langle\rmname_1, u^0_1, \neg\Rabbit, [\langle u^0_0, u^1_0, \rmname_0, \rmname_1, \neg\Rabbit, \top \rangle] \rangle, \{\Sugarcane\})\\
	&= \hrmtrans(\langle\leaf, u^0_\top, \neg\Rabbit \land \Sugarcane, [\langle u^0_0, u^1_0, \rmname_0, \rmname_1, \neg\Rabbit, \top \rangle, \langle u^0_1, u^1_1, \rmname_1, \leaf, \Sugarcane, \neg\Rabbit \rangle] \rangle, \{\Sugarcane\})\\
	&= \hrmtrans(\langle\rmname_1, u^1_1, \top, [\langle u^0_0, u^1_0, \rmname_0, \rmname_1, \neg\Rabbit, \top \rangle] \rangle, \bot)\\
	&= \langle\rmname_1, u^1_1, \top, [\langle u^0_0, u^1_0, \rmname_0, \rmname_1, \neg\Rabbit, \top \rangle] \rangle,\\
	v_2 &= \hrmtrans(v_1, \{\Workbench\})\\
	&= \hrmtrans(\langle\rmname_1, u^1_1, \top, [\langle u^0_0, u^1_0, \rmname_0, \rmname_1, \neg\Rabbit, \top \rangle] \rangle, \{\Workbench\})\\
	&= \hrmtrans(\langle\leaf, u^0_\top, \Workbench, [\langle u^0_0, u^1_0, \rmname_0, \rmname_1, \neg\Rabbit, \top \rangle, \langle u^1_1, u^A_1, \rmname_1, \rmname_\top, \Workbench,\top \rangle] \rangle, \{\Workbench\})\\
	&= \hrmtrans(\langle\rmname_1, u^A_1, \top, [\langle u^0_0, u^1_0, \rmname_0, \rmname_1, \neg\Rabbit, \top \rangle] \rangle, \bot)\\
	&= \hrmtrans(\langle\rmname_0, u^1_0, \top, [] \rangle, \bot)\\
	&= \langle\rmname_0, u^1_0, \top, [] \rangle,\\
	v_3 &= \hrmtrans(v_2, \{\})\\
	&= \hrmtrans(\langle\rmname_0, u^1_0, \top, [] \rangle, \{\})\\
	&= \langle\rmname_0, u^1_0, \top, [] \rangle,\\
	v_4 &= \hrmtrans(v_3, \{\Rabbit\})\\
	&= \hrmtrans(\langle\rmname_0, u^1_0, \top, [] \rangle, \{\Rabbit\})\\
	&= \hrmtrans(\langle\rmname_2, u^0_2, \top, [\langle u^1_0, u^3_0, \rmname_0, \rmname_2, \top, \top \rangle] \rangle, \{\Rabbit\})\\
	&= \hrmtrans(\langle\leaf, u^0_\top, \Rabbit, [\langle u^1_0, u^3_0, \rmname_0, \rmname_2, \top, \top \rangle, \langle u^0_2, u^1_2, \rmname_2, \leaf, \Rabbit, \top \rangle] \rangle, \{\Rabbit\})\\
	&= \hrmtrans(\langle\rmname_2, u^1_2, \top, [\langle u^1_0, u^3_0, \rmname_0, \rmname_2, \top, \top \rangle] \rangle, \bot)\\
	&= \langle\rmname_2, u^1_2, \top, [\langle u^1_0, u^3_0, \rmname_0, \rmname_2, \top, \top \rangle] \rangle,\\
	v_5 &= \hrmtrans(v_4, \{\Workbench\})\\
	&= \hrmtrans(\langle\rmname_2, u^1_2, \top, [\langle u^1_0, u^3_0, \rmname_0, \rmname_2, \top, \top \rangle] \rangle, \{\Workbench\})\\
	&= \hrmtrans(\langle\leaf, u^0_\top, \Workbench, [\langle u^1_0, u^3_0, \rmname_0, \rmname_2, \top, \top \rangle, \langle u^1_2, u^A_2, \rmname_2, \leaf, \Workbench, \top \rangle]\rangle, \{\Workbench\})\\
	&= \hrmtrans(\langle\rmname_2, u^A_2, \top, [\langle u^1_0, u^3_0, \rmname_0, \rmname_2, \top, \top \rangle] \rangle, \bot)\\
	&= \hrmtrans(\langle\rmname_0, u^3_0, \top, [] \rangle, \bot)\\
	&= \langle\rmname_0, u^3_0, \top, [] \rangle,\\
	v_6 &= \hrmtrans(v_5, \{\Table\})\\
	&= \hrmtrans(\langle\rmname_0, u^3_0, \top, [] \rangle, \{\Table\})\\
	&= \hrmtrans(\langle \leaf, u^0_\top, \Table, [\langle u^3_0, u^A_0, \rmname_0, \leaf, \Table, \top \rangle] \rangle, \{\Table\})\\
	&= \hrmtrans(\langle\rmname_0, u^A_0, \top, [] \rangle, \bot)\\
	&=\langle\rmname_0, u^A_0, \top, [] \rangle.
\end{align*}
\endgroup

\subsection{Equivalence to Flat Hierarchies of Reward Machines}
\label{app:equivalente_theory_results}
In this section, we prove the theorems introduced in Section~\ref{sec:formalism} regarding the equivalence of an arbitrary HRM to a flat HRM.

\subsubsection{Proof of Theorem~\ref{thm:flat_equivalent}}
\label{app:flat_equivalent}
We formally show that any HRM can be transformed into an equivalent one consisting of a single non-leaf RM. The latter HRM type is called \emph{flat} since there is a single hierarchy level.
\begin{definition}
	Given an HRM $\hrmtuple$, a constituent RM $\rmname_i \in \machineset$ is \emph{flat} if its height $h_i$ is 1.
\end{definition}
\begin{definition}
	An HRM $\hrmtuple$ is \emph{flat} if the root RM $\hrmroot$ is flat.
\end{definition}

We now define what it means for two HRMs to be equivalent. This definition is based on that used in automaton theory~\citep{Sipser97}.
\begin{definition}
	Given a set of propositions $\propset$ and a labeling function $\lfunc$, two HRMs $\hrmtuple$ and $\hrmtuplep$ are \emph{equivalent} if for any label trace $\trace$ one of the following conditions holds: (i)~both HRMs accept $\trace$, (ii)~both HRMs reject $\trace$, or (iii)~neither of the HRMs accepts or rejects $\trace$.
	\label{def:hierarchy_equiv}
\end{definition}

We now have all the required definitions to prove Theorem~\ref{thm:flat_equivalent}, which is restated below.
\flatequivalent*

To prove the theorem, we introduce an algorithm for flattening any HRM. Without loss of generality, we work on the case of an HRM with two hierarchy levels; that is, an HRM consisting of a root RM that calls flat RMs. Note that an HRM with an arbitrary number of levels can be flattened by considering the RMs in two levels at a time. We start flattening RMs in the second level (i.e., with height~2), which use RMs in the first level (by definition, these are already flat), and once the second level RMs are flat, we repeat the process with the levels above until the root is reached. This process is applicable since, by assumption, the hierarchies do not have cyclic dependencies (including recursion). For simplicity, we use the MDP reward assumption made in Section~\ref{sec:background}, i.e.~the reward transition function of any RM $\rmname_i$ is $\rmreward_i(u,u')=\mathds{1}[u\notin \rmstatesacc_i \land u' \in \rmstatesacc_i]$ like in Section~\ref{sec:policy_learning}. However, the proof below could be adapted to arbitrary definitions of $\rmreward_i(u,u')$.

\textbf{Preliminary Transformation Algorithm.}~
Before proving Theorem~\ref{thm:flat_equivalent}, we introduce an intermediate step that transforms a flat HRM into an equivalent one that takes contexts with which it may be called into account. Remember that a call to an RM is associated with a context. In the case of two-level HRMs such as the ones we are considering in this flattening process, the context and the exit condition from the called flat RM must be satisfied. Crucially, the context must only be satisfied at the time of the call; that is, it only lasts for a single transition. Therefore, if we revisit the initial state of the called RM by taking an edge to it, the context should not be checked anymore. 

To make the need for this transformation clearer, we use the HRM illustrated in Figure~\ref{fig:transformation_example_a}. The flattening algorithm described later embeds the called RM into the calling one; crucially, the context of the call is taken into account by putting it in conjunction with the outgoing edges from the initial state of the called RM.\footnote{We refer the reader to the `Flattening Algorithm' description introduced later for specific details.} Figure~\ref{fig:transformation_example_b} is a flat HRM obtained using the flattening algorithm; however, it does not behave like the HRM in Figure~\ref{fig:transformation_example_a}. Following the definition of the hierarchical transition function $\hrmtrans$, the context of a call only lasts for a single transition in the called RM in Figure~\ref{fig:transformation_example_a} (i.e.,~$a\land\neg c$ is only checked when $\rmname_{1}$ is started), but the context is kept permanently in Figure~\ref{fig:transformation_example_b}, which is problematic if we go back to the initial state at some point. We later come back to this example after presenting the transformation algorithm.

\begin{figure}[h]
	\begin{subfigure}[b]{0.5\linewidth}
		\centering
		\begin{minipage}{0.29\linewidth}
			\centering
			\resizebox{0.8\linewidth}{!}{
				\begin{tikzpicture}[shorten >=1pt,node distance=2cm,on grid,auto,every initial by arrow/.style ={-Latex} ]
					\node[state,initial, initial text=] (u_0)   {$u^0_0$};
					\node[state,accepting] (u_1) [below =of u_0]   {$u^A_0$};
					\path[-Latex] (u_0) edge node [in place] {$\rmname_1\mid \neg c$} (u_1);
				\end{tikzpicture}
			}
			
			$\rmname_{0}$
		\end{minipage}
		\hfill
		\begin{minipage}{0.69\linewidth}
			\centering
			\resizebox{0.9\linewidth}{!}{
				\begin{tikzpicture}[shorten >=1pt,node distance=2cm,on grid,auto,every initial by arrow/.style ={-Latex} ]
					\node[state,initial, initial text=] (u_0)   {$u^0_1$};
					\node[state] (u_1) [below =of u_0]   {$u^1_1$};
					\node[state,accepting] (u_A) [right =21ex of u_1]   {$u^A_1$};
					\path[-Latex] (u_0) edge[bend right=70] node [in place] {$\leaf \mid a$} (u_1);
					\path[-Latex] (u_1) edge[bend right=70] node [in place] {$\leaf\mid b$} (u_0);
					\path[-Latex] (u_1) edge node [in place] {$\leaf\mid c \land \neg b$} (u_A);
				\end{tikzpicture}
			}
			
			$\rmname_{1}$
		\end{minipage}	
		\caption{Original HRM with root $\rmname_{0}$.}
		\label{fig:transformation_example_a}
	\end{subfigure}
	\begin{subfigure}[b]{0.5\linewidth}
		\centering
		\resizebox{0.7\linewidth}{!}{
			\begin{tikzpicture}[shorten >=1pt,node distance=2cm,on grid,auto,every initial by arrow/.style ={-Latex} ]
				\node[state,initial, initial text=] (u_0)   {$u^0_0$};
				\node[state] (u_1) [below =of u_0]   {$u^1_1$};
				\node[state,accepting] (u_A) [right =21ex of u_1]   {$u^A_0$};
				\path[-Latex] (u_0) edge[bend right=70] node [in place] {$\bm{\leaf \mid a\land \neg c}$} (u_1);
				\path[-Latex] (u_1) edge[bend right=70] node [in place] {$\leaf\mid b$} (u_0);
				\path[-Latex] (u_1) edge node [in place] {$\leaf\mid c \land \neg b$} (u_A);
			\end{tikzpicture}
		}
		\caption{Flattened HRM without transforming $\rmname_1$.}
		\label{fig:transformation_example_b}
	\end{subfigure}
	~\\
	\begin{subfigure}{0.45\linewidth}
		\centering
		\resizebox{0.9\linewidth}{!}{
			\begin{tikzpicture}[shorten >=1pt,node distance=2cm,on grid,auto,every initial by arrow/.style ={-Latex} ]
				\node[state,initial, initial text=] (u_0)   {$u^0_1$};
				\node[state] (u_1) [below =of u_0]   {$u^1_1$};
				\node[state] (u_0d) [left =7em of u_1]   {$\hat{u}^0_1$};
				\node[state,accepting] (u_A) [right =21ex of u_1]   {$u^A_1$};
				\path[-Latex] (u_0) edge node [in place] {$\leaf \mid a$} (u_1);
				\path[-Latex] (u_1) edge[bend right=40] node [in place] {$\leaf\mid b$} (u_0d);
				\path[-Latex] (u_0d) edge[bend right=40] node [in place] {$\leaf\mid a$} (u_1);
				\path[-Latex] (u_1) edge node [in place] {$\leaf\mid c \land \neg b$} (u_A);
		\end{tikzpicture}}
		\caption{Transformed $\rmname_1$ from (a).}
		\label{fig:transformation_example_c}
	\end{subfigure}
	\hfill
	\begin{subfigure}{0.45\linewidth}
		\centering
		\resizebox{0.9\linewidth}{!}{
			\begin{tikzpicture}[shorten >=1pt,node distance=2cm,on grid,auto,every initial by arrow/.style ={-Latex} ]
				\node[state,initial, initial text=] (u_0)   {$u^0_0$};
				\node[state] (u_1) [below =of u_0]   {$u^1_1$};
				\node[state] (u_0d) [left =7em of u_1]   {$\hat{u}^0_1$};
				\node[state,accepting] (u_A) [right =21ex of u_1]   {$u^A_0$};
				\path[-Latex] (u_0) edge node [in place] {$\bm{\leaf \mid a \land \neg c}$} (u_1);
				\path[-Latex] (u_1) edge[bend right=40] node [in place] {$\leaf\mid b$} (u_0d);
				\path[-Latex] (u_0d) edge[bend right=40] node [in place] {$\leaf\mid a$} (u_1);
				\path[-Latex] (u_1) edge node [in place] {$\leaf\mid c \land \neg b$} (u_A);
		\end{tikzpicture}}
		\caption{Flattened HRM after transforming $\rmname_1$.}
		\label{fig:transformation_example_d}
	\end{subfigure}
	\caption{Example to justify the need for the preliminary transformation algorithm.}
	\label{fig:transformation_example}
\end{figure}
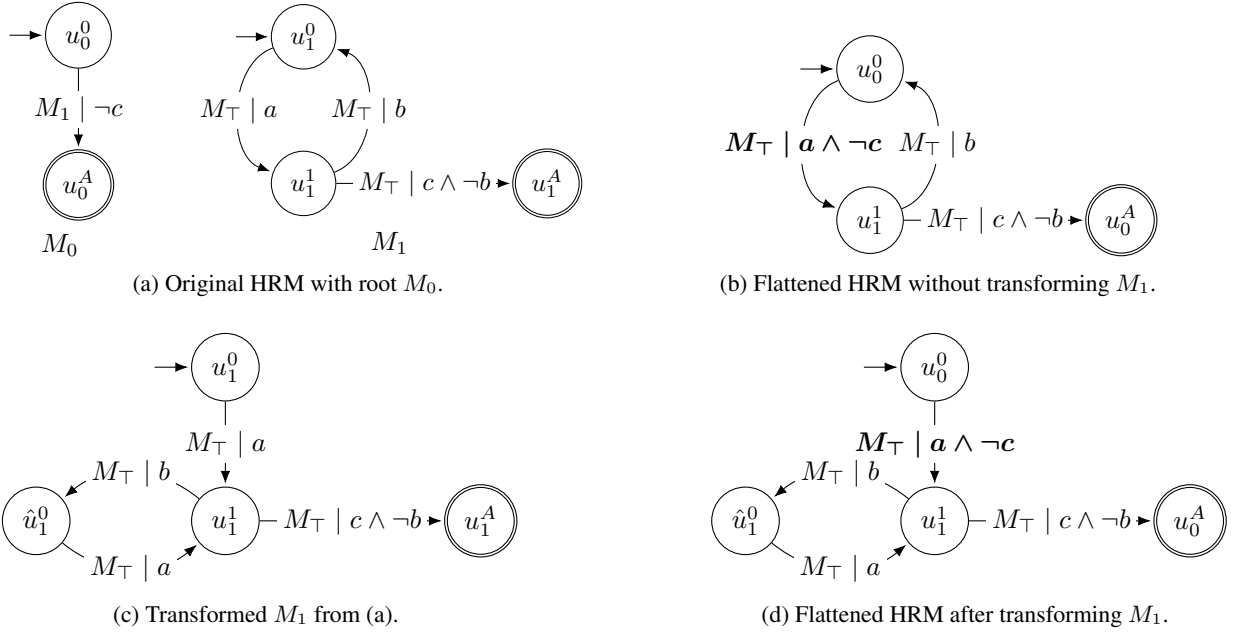

To deal with the situation above, we need to transform an RM to ensure that contexts are only checked once from the initial state. We describe this transformation as follows. Given a flat HRM $\hrmtuple$ with root $\rmname_r=\langle \rmstates_r, \propset, \rmtransition_r, \rmreward_r, \rminitstate_r, \rmstatesacc_r, \rmstatesrej_r \rangle$, we construct a new HRM $\hrmtuplep$ with root $\rmname'_r=\langle \rmstates'_r, \propset, \rmtransition'_r, \rmreward'_r, \rminitstate_r, \rmstatesacc_r, \rmstatesrej_r \rangle$ such that:
\begin{itemize}
	\item $\rmstates_r' = \rmstates_r \cup \{\hat{u}_r^0\}$, where $\hat{u}_r^0$ plays the role of the initial state after the first transition is taken.
	\item The state transition function $\rmtransition'_r$ is built by copying $\rmtransition_r$ and applying the following changes:
	\begin{enumerate}
		\item Remove the edges to the actual initial state from any state $v\in \rmstates_r'$: $\rmtransition'_r(v,\rminitstate_r,\leaf)=\bot$. Note that since the RM is flat, the only callable RM is the leaf $\leaf$.
		\item Add edges to the dummy initial state $\hat{u}^0_r$ from all states $v\in \rmstates_r'$ that had an edge to the actual initial state:  $\rmtransition_r'(v, \hat{u}^0_r, \leaf)=\rmtransition_r(v, u^0_r, \leaf)$. 
		\item Add edges from the dummy initial state $\hat{u}^0_r$ to all those states $v\in \rmstates_r'$ that the actual initial state $u^0_r$ points to: $\rmtransition'_r(\hat{u}^0_r,v, \leaf)=\rmtransition'_r(u^0_r,v, \leaf)$.
	\end{enumerate}
	\item The reward transition function $\rmreward'_r(u,u')=\mathds{1}[u\notin \rmstatesacc_r \land u' \in \rmstatesacc_r]$ is defined as stated at the beginning of the section.
\end{itemize}
The HRM $\hrm'$ is such that $\machineset'=\{\rmname'_r, \leaf \}$. Note that this transformation is only required in HRMs where the RMs have initial states with incoming edges.

We now prove that this transformation is correct; that is, the HRMs are equivalent. There are two cases depending on whether the initial state has incoming edges or not. First, if the initial state $u^0_r$ does not have incoming edges, step 1 does not remove any edges going to $u^0_r$, and step 2 does not add any edges going to $\hat{u}^0_r$, making it unreachable. Even though edges from $\hat{u}^0_r$ to other states may be added, it is irrelevant since it is unreachable. Therefore, we can safely say that in this case, the transformed HRM is equivalent to the original one. Second, if the initial state has incoming edges, we prove equivalence by examining the traversals $\hrm(\trace)$ and $\hrm'(\trace)$ for the original HRM $\hrmtuple$ and the transformed one $\hrmtuplep$ given a generic label trace $\trace=\langle \proplabel_0, \ldots, \proplabel_n\rangle$. By construction, both $\hrm(\trace)$ and $\hrm'(\trace)$ will be identical until reaching a state $w$ with an outgoing transition to $u^0_r$ in the case of $\hrm$ and the dummy initial state $\hat{u}^0_r$ in the case of $\hrm'$. More specifically, upon reaching $w$ and satisfying an outgoing formula to the aforementioned states, the traversals are:
\begin{align*}
	\hrm(\trace)&=\langle \langle \rmname_r, u^0_r, \top, [] \rangle, \ldots, \langle \rmname_r, w, \top, [] \rangle \rangle,\\
	\hrm'(\trace)&=\langle \langle \rmname'_r, u^0_r, \top, [] \rangle, \ldots, \langle \rmname'_r, w, \top, [] \rangle \rangle.
\end{align*}
By construction, state $w$ is in both HRMs, and both of the aforementioned transitions from this state are associated with the same formula, i.e. $\rmtransition_r(w,u^0_r,\leaf)=\rmtransition'_r(w,\hat{u}^0_r,\leaf)$. Therefore, if one of them is satisfied, the other will be too, and the traversals will become:
\begin{align*}
	\hrm(\trace)&=\langle \langle \rmname_r, u^0_r, \top, [] \rangle, \ldots, \langle \rmname_r, w, \top, [] \rangle, \langle \rmname_r, u^0_r,  \top, [] \rangle \rangle,\\
	\hrm'(\trace)&=\langle \langle \rmname'_r, u^0_r, \top, [] \rangle, \ldots, \langle \rmname'_r, w, \top, [] \rangle, \langle \rmname'_r, \hat{u}^0_r, \top, [] \rangle \rangle.
\end{align*}
We stay in $u^0_r$ and $\hat{u}^0_r$ until a transition to a state $w'$ is satisfied. By construction, $w'$ is in both HRMs and the same formula is satisfied, i.e., $\rmtransition_r(u^0_r, w', \leaf)=\rmtransition'_r(\hat{u}^0, w', \leaf)$. The hierarchy traversals then become:
\begin{align*}
	\hrm(\trace)&=\langle \langle \rmname_r, u^0_r, \top, [] \rangle, \ldots, \langle \rmname_r, w, \top, [] \rangle, \langle \rmname_r, u^0_r, \top, [] \rangle, \ldots, \langle\rmname_r, u^0_r, \top, [] \rangle, \langle \rmname_r, w', \top, [] \rangle \rangle,\\
	\hrm'(\trace)&=\langle \langle \rmname_r', u^0_r, \top, [] \rangle, \ldots, \langle \rmname_r', w, \top, [] \rangle, \langle \rmname'_r, \hat{u}^0_r, \top, [] \rangle, \ldots, \langle \rmname'_r, \hat{u}^0_r, \top, [] \rangle, \langle \rmname'_r, w', \top, [] \rangle \rangle.
\end{align*}
From here both traversals will be the same until transitions to $u^0_r$ and $\hat{u}^0_r$ are respectively satisfied again (if any) in $\hrm$ and $\hrm'$. Clearly, the only change in $\hrm(\trace)$ with respect to $\hrm'(\trace)$ (except for the different roots) is that the hierarchy states of the form $\langle \rmname'_r, \hat{u}^0_r, \top, []\rangle$ in the latter appear as $\langle \rmname_r, u^0_r, \top, []\rangle$ in the former. We now check if the equivalence conditions from Definition~\ref{def:hierarchy_equiv} hold:
\begin{itemize}
	\item If $\hrm(\trace)$ ends with state $u^0_r$, $\hrm'(\trace)$ ends with state $\hat{u}^0_r$ following the reasoning above. By construction, neither of these states is accepting or rejecting; therefore, neither of these HRMs accepts or rejects $\trace$.
	\item If $\hrm(\trace)$ ends with state $w$, $\hrm'(\trace)$ will also end with this state following the reasoning above. Therefore, if $w$ is an accepting state, both HRMs accept $\trace$; if $w$ is a rejecting state, both HRMs reject $\trace$; and if $w$ is not an accepting or rejecting state, neither of the HRMs accepts or rejects $\trace$.
\end{itemize}
Since all equivalence conditions are satisfied for any trace $\trace$, $\hrm$ and $\hrm'$ are equivalent.

Figure~\ref{fig:transformation_example_c} exemplifies the output of the transformation algorithm given $\rmname_{1}$ in Figure~\ref{fig:transformation_example_a} as input, whereas Figure~\ref{fig:transformation_example_d} is the output of the flattening algorithm discussed next, which properly handles the context unlike the HRM in Figure~\ref{fig:transformation_example_b}.

\textbf{Flattening Algorithm.}~
We describe the algorithm for flattening an HRM. As previously stated, we assume without loss of generality that the HRM to be flattened consists of two hierarchy levels (i.e., the root calls flat RMs). We also assume that the flat RMs have the form produced by the previously presented transformation algorithm. 

Given an HRM $\hrmtuple$ with root $\rmname_r=\langle \rmstates_r,\propset, \rmtransition_r, \rmreward_r, \rminitstate_r, \rmstatesacc_r, \rmstatesrej_r\rangle$, we build a flat RM $\bar{\rmname}_r=\langle \bar{\rmstates}_r,\propset, \bar{\rmtransition}_r,\allowbreak \bar{\rmreward}_r, \bar{u}^0_r, \bar{\rmstates}^A_r, \bar{\rmstates}^R_r \rangle$ using the following steps:
\begin{enumerate}
	\item Copy the sets of states and initial state from $\rmname_r$ (i.e., $\bar{\rmstates}_r=\rmstates_r, \bar{u}^0_r=u_r^0, \bar{\rmstates}^A_r=\rmstates^A_r,\bar{\rmstates}^R_r=\rmstates^R_r$).
	\item Loop through the non-false entries of the transition function $\rmtransition_r$ and decide what to copy. That is, for each triplet $(u,u',\rmname_j)$ where $u,u'\in \rmstates_r$ and $\rmname_j \in \machineset$ such that $\rmtransition_r(u,u',\rmname_j) \neq \bot$:
	\begin{enumerate}
		\item If $\rmname_j = \leaf$ (i.e., the called RM is the leaf), we copy the transition: $\bar{\rmtransition}_r(u,u',\leaf)=\rmtransition_r(u,u',\leaf)$.
		\item If $\rmname_j \neq \leaf$, we embed the transition function of $\rmname_j=\langle \rmstates_j, \propset, \rmtransition_j, \rmreward_j, u^0_j, \rmstatesacc_j, \rmstatesrej_j \rangle$ into $\bar{\rmname}_r$. Remember that $\rmname_j$ is flat. To do so, we run the following steps:
		\begin{enumerate}
			\item Update the set of states by adding all non-initial and non-accepting states from $\rmname_j$. Similarly, the set of rejecting states is also updated by adding all rejecting states of the called RM. The initial and accepting states from $\rmname_j$ are unimportant: their roles are played by $u$ and $u'$ respectively. In contrast, the rejecting states are important since, by assumption, they are global. Note that the added states $v$ are renamed to $v_{u,u',j}$ in order to take into account the edge being embedded: if the same state $v$ was reused for another edge, then we would not be able to distinguish them.
			\begin{align*}
				\bar{\rmstates}_r   &= \bar{\rmstates}_r \cup \left\lbrace v_{u,u',j} \mid v \in \left(\rmstates_j \setminus \left(\{u^0_j\} \cup \rmstatesacc_j \right) \right) \right\rbrace,\\
				\bar{\rmstates}^R_r &= \bar{\rmstates}^R_r \cup \left\lbrace v_{u,u',j} \mid v \in \rmstates^R_j \right\rbrace.
			\end{align*}
			\item Embed the transition function $\rmtransition_j$ of $\rmname_j$ into $\bar{\rmtransition}_r$. Since $\rmname_j$ is flat, we can make copies of the transitions straightaway: the only important thing is to check whether these transitions involve initial or accepting states which, as stated before, are going to be replaced by $u$ and $u'$ accordingly. Given a triplet $(v,w,\leaf)$ such that $v,w\in \rmstates_j$ and for which $\rmtransition_j(v,w,\leaf)=\phi$ and $\phi \neq \bot$ we update $\bar{\rmtransition}_r$ as follows:\footnote{We do not to cover the case where $v$ is an accepting state since, by assumption, there are no outgoing transitions from it. In the case of rejecting states, we keep all of them as explained in the previous case and, therefore, there are no substitutions to be made. We also do not cover the case where $w=u^0_j$ since the input flat machines never have edges to their initial states, but to the dummy initial state.}
			\begin{enumerate}
				\item If $v=u^0_j$ and $w\notin \rmstatesacc_j$, then $\bar{\rmtransition}_r(u,w_{u,u',j},\leaf)= \dnf(\phi \land \rmtransition_r(u,u',\rmname_j))$. The initial state of $\rmname_j$ has been substituted by $u$, we use the clone of $w$ associated with the call ($w_{u,u',j}$), and append the context of the call to $\rmname_j$ to the formula $\phi$.
				\item If $v=u^0_j$ and $w\in \rmstatesacc_j$, then $\bar{\rmtransition}_r(u,u',\leaf)= \textnormal{DNF}(\phi \land \rmtransition_r(u,u',\rmname_j))$. Like the previous case but performing two substitutions: $u$ replaces $v$ and $u'$ replaces $w$. The context is appended since it is a transition from the initial state of $\rmname_j$.
				\item If $v\neq u^0_j$ and $w\in \rmstatesacc_j$, then $\bar{\rmtransition}_r(v_{u,u',j},u',\leaf)= \phi$. We substitute the accepting state $w$ by $u'$, and use the clone of $v$ associated with the call ($v_{u,u',j}$). This time the call's context is not added since $v$ is not the initial state of $\rmname_j$.
				\item If none of the previous cases holds, there are no substitutions to be made nor contexts to be taken into account. Hence, $\bar{\rmtransition}_r(v_{u,u',j},w_{u,u',j},\leaf)= \phi$. We just use the clones of $v$ and $w$ corresponding to the call ($v_{u,u',j}$ and $w_{u,u',j}$).
			\end{enumerate}
		\end{enumerate}
	\end{enumerate}
	\item We apply the transformation algorithm we described before, and form a new flat HRM $\bar{\hrm}=\langle \{\bar{\rmname}_r,\leaf\}, \bar{\rmname}_r, \propset\rangle$ with the flattened (and transformed) root $\bar{\rmname}_r$.
\end{enumerate}
The reward transition function $\rmreward'_r(u,u')=\mathds{1}[u\notin \bar{\rmstates}^A_r \land u' \in \bar{\rmstates}^A_r]$ is defined as stated at the beginning of the section. Note that $u$ might not necessarily be a state of the non-flat root, but derived from an RM with lower height.

We now have everything to prove the previous theorem. Without loss of generality and for simplicity, we assume that the transformation algorithm has not been applied over the flattened root (we have already shown that the transformation produces an equivalent flat machine).

\flatequivalent*
\begin{proof}
	Let us assume that an HRM $\bar{\hrm}=\langle \bar{\machineset},\bar{\rmname}_r, \propset \rangle$, where $\bar{\rmname}_r=\langle \bar{\rmstates}_r, \propset, \bar{\rmtransition}_r, \bar{\rmreward}_r, \bar{u}^0_r, \bar{\rmstates}_r^A, \bar{\rmstates}_r^R \rangle$, is a flat HRM that results from applying the flattening algorithm on an HRM $\hrmtuple$, where $\rmname_r=\langle \rmstates_r, \propset, \rmtransition_r, \rmreward_r, u^0_r, \rmstatesacc_r, \rmstatesrej_r \rangle$. For these HRMs to be equivalent, any label trace $\trace=\langle \proplabel_0, \ldots, \proplabel_n\rangle$ must satisfy one of the conditions in Definition~\ref{def:hierarchy_equiv}. To prove the equivalence, we examine the hierarchy traversals $\hrm(\trace)$ and $\bar{\hrm}(\trace)$ given a generic label trace $\trace$.
	
	Let $u \in \rmstates_r$ be a state in the root $\rmname_r$ of $\hrm$ and let $\rmtransition_r(u,u',\leaf)$  be a satisfied transition from that state.  By construction, $u$ is also in the root $\bar{\rmname}_r$ of the flat hierarchy $\bar{\hrm}$, and $\bar{\rmname}_r$ has an identical transition $\bar{\rmtransition}_r(u,u',\leaf)$, which must also be satisfied. If the hierarchy states are $\langle \rmname_r, u, \top, []\rangle$ and $\langle \bar{\rmname}_r, u, \top, []\rangle$ for  $\hrm$ and $\bar{\hrm}$ respectively, then the next hierarchy states upon application of $\hrmtrans$ will be $\langle\rmname_r, u', \top, []\rangle$ and $\langle \bar{\rmname}_r, u', \top, []\rangle$. Therefore, both HRMs behave equivalently when calls to the leaf RM are made.
	
	We now examine what occurs when a non-leaf RM is called in $\hrm$. Let $\rmtransition_r(u,u',\rmname_j)$ be a satisfied transition in $\rmname_r$, and let $\varphi_j(u^0_j,w,\leaf)$ be a satisfied transition from $\rmname_j$'s initial state. By construction, $\bar{\rmname}_r$ contains a transition whose associated formula is the conjunction of the previous two, i.e.~$\rmtransition_r(u,u',\rmname_j) \land \rmtransition_j(u^0_j,w,\leaf)$. Now, the hierarchy traversals will be different depending on $w$:
	\begin{itemize}
		\item If $w \notin \rmstatesacc_j$ (i.e., $w$ is not an accepting state of $\rmname_j$), by construction, $\bar{\rmname}_r$ contains the transition $\bar{\rmtransition}_r(u,w_{u,u',j},\leaf)=\rmtransition_r(u,u',\rmname_j) \land \rmtransition_j(u^0_j,w,\leaf)$. If the current hierarchy states are (the equivalent) $\langle \rmname_r, u,\top,[]\rangle$ and $\langle \bar{\rmname}_r,u,\top,[]\rangle$ for $\hrm$ and $\bar{\hrm}$, then the next hierarchy states upon application of $\hrmtrans$ are $\langle \rmname_j, w, \top, [\langle u, u', \rmname_r, \rmname_j, \rmtransition_r(u,u',\rmname_j), \top \rangle] \rangle$ and $\langle \bar{\rmname}_r, w_{u,u',j}, \top, [] \rangle$. These hierarchy states are equivalent since $w_{u,u',j}$ is a clone of $w$ that saves all the call information (i.e., a call to machine $\rmname_j$ for transitioning from $u$ to $u'$).
		\item If $w \in \rmstatesacc_j$ (i.e., $w$ is an accepting state of $\rmname_j$), by construction, $\bar{\rmname}_r$ contains the transition $\bar{\rmtransition}_r(u,u',\leaf)=\rmtransition_r(u,u',\rmname_j) \land \rmtransition_j(u^0_j,w,\leaf)$. If the current hierarchy states are (the equivalent) $\langle \rmname_r, u,\top,[]\rangle$ and $\langle \bar{\rmname}_r,u,\top,[]\rangle$ for $\hrm$ and $\bar{\hrm}$, then the next hierarchy states upon application of $\hrmtrans$ are $\langle \rmname_r, u', \top, [] \rangle$ and $\langle \bar{\rmname}_r, u', \top, [] \rangle$. These hierarchy states are clearly equivalent since the machine states are exactly the same.
	\end{itemize}
	
	We now check the case in which we are inside a called RM. Let $\rmtransition_r(u,u',\rmname_j)$ be the transition that caused $\hrm$ to start running $\rmname_j$, and let $\rmtransition_j(v,w,\leaf)$ be a satisfied transition within $\rmname_j$ such that $v\neq u^0_j$. By construction, $\bar{\rmname}_r$ contains a transition associated with the same formula $\rmtransition_j(v,w,\leaf)$. The hierarchy traversals vary depending on $w$:
	\begin{itemize}
		\item If $w \notin \rmstatesacc_j$ (i.e., $w$ is not an accepting state of $\rmname_j$), by construction, $\bar{\rmname}_r$ contains the transition $\bar{\rmtransition}_r(v_{u,u',j},w_{u,u',j},\leaf)=\rmtransition_j(v,w,\leaf)$. For the transition to be taken in $\hrm$, the hierarchy state must be $\langle \rmname_j, v, \top, [\langle u, u', \rmname_r, \rmname_j, \rmtransition_r(u,u',\rmname_j), \top \rangle] \rangle$, whereas in $\bar{\hrm}$ it will be $\langle \bar{\rmname}_r, v_{u,u',j}, \top, [] \rangle$. These hierarchy states are clearly equivalent: $v_{u,u',j}$ is a clone of $v$ that saves all information related to the call being made (the called machine, and the starting and resulting states in the transition). Upon application of $\hrmtrans$, the hierarchy states will remain equivalent: $\langle \rmname_j, w, \top, [\langle u, u', \rmname_r, \rmname_j, \rmtransition_r(u,u',\rmname_j), \top \rangle] \rangle$ and $\langle \bar{\rmname}_r, w_{u,u',j}, \top, [] \rangle$ (again $w_{u,u',j}$ saves all the call information, just like the stack).
		\item If $w \in \rmstatesacc_j$ (i.e., $w$ is an accepting state of $\rmname_j$), by construction, $\bar{\rmname}_r$ contains the transition $\bar{\rmtransition}_r(v_{u,u',j},u',\leaf)=\rmtransition_j(v,w,\leaf)$. This case corresponds to that where control is returned to the calling RM. Like in the previous case, for the transition to be taken in $\hrm$, the hierarchy state must be $\langle \rmname_j, v, \top, [\langle u, u', \rmname_r, \rmname_j, \rmtransition_r(u,u',\rmname_j), \top \rangle] \rangle$, whereas in $\bar{\hrm}$ it will be $\langle \bar{\rmname}_r, v_{u,u',j},  \top, [] \rangle$. The resulting hierarchy states then become $\langle \rmname_r, u',  \top, [] \rangle$ and $\langle \bar{\rmname}_r, u',  \top, [] \rangle$ respectively, which are clearly equivalent (the state is exactly the same and both come from equivalent hierarchy states).
	\end{itemize}
	
	We have shown both HRMs have equivalent traversals for any given trace, implying that both will accept, reject, or not accept nor reject a trace. Therefore, the HRMs are equivalent.
\end{proof}

Figure~\ref{fig:flattening_examples_wo_comp} shows the result of applying the flattening algorithm on the \textsc{Book} HRM shown in Figure~\ref{fig:book_hierarchy}. Note that the resulting HRM can be compressed: there are two states having an edge with the same label to a specific state. Indeed, the presented algorithm might not produce the smallest possible flat equivalent. Figure~\ref{fig:flattening_examples_w_comp} shows the resulting compressed HRM, which is like Figure~\ref{fig:book_hierarchy_flat} but naming the states following the algorithm for clarity. Estimating how much a flat HRM (or any HRM) can be compressed and designing an algorithm to perform such compression are left as future work. 

\begin{figure}[h]
	\begin{subfigure}[b]{0.49\linewidth}
		\centering
		\resizebox{0.8\linewidth}{!}{
			\begin{tikzpicture}[shorten >=1pt,node distance=2.35cm,on grid,auto,  state/.style={circle, draw, minimum size=1.2cm}, every initial by arrow/.style ={-Latex}]
				\node[state,initial,initial text=] (u_0)   {$u^0_0$};
				\node[state] (u_1) [below left = of u_0]  {$u^1_{1:0,1}$};
				\node[state] (u_2) [below left = of u_1]  {$u^1_0$};
				\node[state,color=imperial,line width=0.5mm] (u_3) [below right = of u_2]  {$u^1_{2:1,3}$};
				\node[state] (u_4) [below right = of u_0]  {$u^1_{2:0,2}$};
				\node[state] (u_5) [below right = of u_4]  {$u^2_0$};
				\node[state,color=imperial,line width=0.5mm] (u_6) [below left = of u_5]  {$u^1_{1:2,3}$};
				\node[state] (u_7) [below right = of u_3]  {$u^3_0$};
				\node[state,accepting] (u_acc) [right = of u_7]  {$u^{A}_0$};
				
				\path[-Latex] (u_0) edge node[pos=0.5] [in place] {$\Sugarcane \land \neg \Rabbit$} (u_1);
				\path[-Latex] (u_1) edge node[pos=0.45] [in place] {\Workbench} (u_2);
				\path[-Latex] (u_2) edge node[pos=0.5] [in place] {\Rabbit} (u_3);
				\path[-Latex,color=imperial,line width=0.5mm] (u_3) edge node[pos=0.35] [in place] {\Workbench} (u_7);
				
				\path[-Latex] (u_0) edge node[pos=0.5] [in place] {$\Rabbit$} (u_4);
				\path[-Latex] (u_4) edge node[pos=0.45] [in place] {\Workbench} (u_5);
				\path[-Latex] (u_5) edge node[pos=0.5] [in place] {\Sugarcane} (u_6);	
				\path[-Latex,color=imperial,line width=0.5mm] (u_6) edge node[pos=0.35] [in place] {\Workbench} (u_7);
				\path[-Latex] (u_7) edge node[pos=0.5] [in place] {\Table} (u_acc);
		\end{tikzpicture}}
		\caption{Without compression.}
		\label{fig:flattening_examples_wo_comp}
	\end{subfigure}
	\hfill
	\begin{subfigure}[b]{0.49\linewidth}
		\centering
		\resizebox{0.48\linewidth}{!}{
			\begin{tikzpicture}[shorten >=1pt,node distance=2.35cm,on grid,auto, state/.style={circle, draw, minimum size=1.2cm}, every initial by arrow/.style ={-Latex}]
				\node[state,initial,initial text=] (u_0)   {$u^0_0$};
				\node[state] (u_1) [below left = of u_0]  {$u^1_{1:0,1}$};
				\node[state] (u_2) [below = of u_1]  {$u^1_0$};
				\node[state] (u_3) [below right = of u_0]  {$u^1_{2:0,2}$};
				\node[state] (u_4) [below = of u_4]  {$u^2_0$};
				\node[state,color=imperial,line width=0.5mm] (u_5) [below =1.73241cm of u_0]  {$u^1_{2:1,3}$};
				\node[state] (u_6) [below = of u_5]  {$u^3_0$};
				\node[state,accepting] (u_acc) [below = of u_6]  {$u^{A}_0$};
				
				\path[-Latex] (u_0) edge node[pos=0.5] [in place] {$\Sugarcane \land \neg \Rabbit$} (u_1);
				\path[-Latex] (u_1) edge node[pos=0.45] [in place] {\Workbench} (u_2);
				\path[-Latex] (u_2) edge node[pos=0.5] [in place] {\Rabbit} (u_5);
				\path[-Latex,color=imperial,line width=0.5mm] (u_5) edge node[pos=0.35] [in place] {\Workbench} (u_6);
				\path[-Latex] (u_0) edge node[pos=0.5] [in place] {$\Rabbit$} (u_3);
				\path[-Latex] (u_3) edge node[pos=0.45] [in place] {\Workbench} (u_4);
				\path[-Latex] (u_4) edge node[pos=0.5] [in place] {\Sugarcane} (u_5);	
				\path[-Latex] (u_6) edge node[pos=0.5] [in place] {\Table} (u_acc);
		\end{tikzpicture}}
		\caption{With compression.}
		\label{fig:flattening_examples_w_comp}
	\end{subfigure}
	\caption{Results of flattening the HRM in Figure~\ref{fig:book_hierarchy}. The notation $u^i_{j:x,y}$ denotes the $i$-th state of RM $j$ in the call between states $x$ and $y$ in the parent RM. Note that $x$ and $y$ appear only if that state comes from a called RM. The blue states and edges in (a) can be compressed as shown in (b).}
	\label{fig:flattening_examples}
\end{figure}
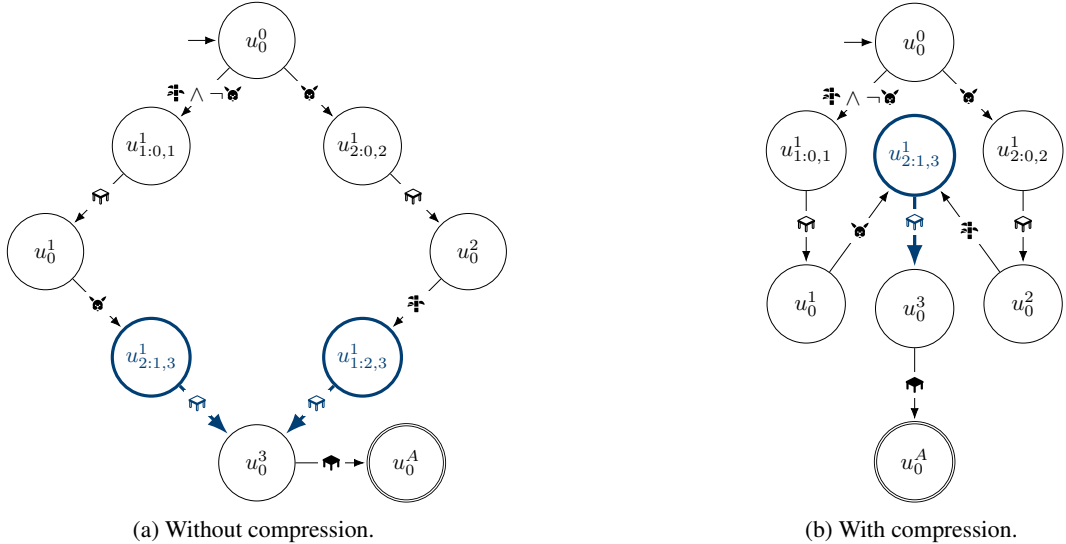

\subsubsection{Proof of Theorem~\ref{thm:flat_size}}
\label{app:flat_size}
We prove Theorem~\ref{thm:flat_size} by first characterizing an HRM $\hrm$ using a set of abstract parameters. Then, we describe how the number of states and edges in an HRM and its corresponding flat equivalent are computed, and use these quantities to give an example for which the theorem holds. The parameters are the following:
\begin{itemize}
	\item The height of the root $\rootheight$.
	\item The number of RMs with height $i$, $\numrmslevel^{(i)}$.
	\item The number of states in RMs with height $i$, $\numstateslevel^{(i)}$.
	\item The number of edges from each state in RMs with height $i$, $\numedgeslevel^{(i)}$.
\end{itemize}

We assume that (i)~RMs with height $i$ only call RMs with height $i-1$; (ii)~all RMs have a single accepting state and no rejecting states; (iii)~all RMs except for the root are called; and (iv)~the HRM is well-formed (i.e., it behaves deterministically and there are no cyclic dependencies). Note that $\numrmslevel^{(\rootheight)}=1$ since there is a single root. Assumption~(i) can be made since for the root to have height $\rootheight$ we need it to call at least one RM with height $\rootheight-1$. Considering that all called RMs have the same height simplifies the analysis since we can characterize the RMs at each height independently. Assumption~(ii) is safe to make since a single accepting state is enough, and helps simplify the counting since only some RMs could have rejecting states. Assumption~(iii) ensures that the flat HRM will comprise all RMs in the original HRM. This is also a fair assumption: if a given RM is not called by any RM in the hierarchy, we could remove it beforehand.

The \emph{number of states $|\hrm|$ in the HRM $\hrm$} is obtained by summing the number of states of each RM:
\begin{equation*}
	|\hrm| = \sum^{\rootheight}_{i=1} \numrmslevel^{(i)}\numstateslevel^{(i)}.
\end{equation*}

The \emph{number of states $|\bar{\hrm}|$ in the flat HRM $\bar{\hrm}$} is given by the number of states in the flattened root RM
\begin{equation*}
	|\bar{\hrm}| = \bar{\numstateslevel}^{(\rootheight)},
\end{equation*}
where $\bar{\numstateslevel}^{(i)}$ is the number of states in the flattened representation of an RM with height $i$, which is recursively defined as:
\begin{equation*}
	\bar{\numstateslevel}^{(i)} = \begin{cases}
		\numstateslevel^{(i)} & \textnormal{if~} i=1,\\
		\numstateslevel^{(i)} + \left(\bar{\numstateslevel}^{(i-1)} -2 \right)\left( \numstateslevel^{(i)} -1 \right) \numedgeslevel^{(i)} & \textnormal{if~} i>1.
	\end{cases}
\end{equation*}
That is, the number of states in a flattened RM with height $i$ has all states that the non-flat HRM had. In addition, for each of the $\numstateslevel^{(i)}-1$ non-accepting states in the non-flat RM, there are $\numedgeslevel^{(i)}$ edges, each of which calls an RM with height $i-1$ whose number of states is $\bar{\numstateslevel}^{(i-1)}$. These edges are replaced by the called RM except for the initial and accepting states, whose role is now played by the states involved in the substituted edge (hence the $-2$). This construction process corresponds to the one used to prove Theorem~\ref{thm:flat_equivalent}.

The \emph{total of number of edges} in an HRM is given by:
\begin{equation*}
	\sum_{i=1}^{\rootheight}\numrmslevel^{(i)}(\numstateslevel^{(i)}-1)\numedgeslevel^{(i)},
\end{equation*}
where $(\numstateslevel^{(i)}-1)\numedgeslevel^{(i)}$ is the total number of edges in an RM with height $i$ (the $-1$ is because the accepting state is discarded), so $\numrmslevel^{(i)}(\numstateslevel^{(i)}-1)\numedgeslevel^{(i)}$ determines how many edges there are across RMs with height $i$.

The \emph{total number of edges in the flat HRM} is given by the total number of edges in the flattened root RM, $\numedgestotal^{(\rootheight)}$, where $\numedgestotal^{(i)}$ is the total number of edges in the flattened representation of an RM with height $i$, which is recursively defined as follows:
\begin{equation*}
	\numedgestotal^{(i)} = \begin{cases}
		(\numstateslevel^{(i)}-1)\numedgeslevel^{(i)} & \textnormal{if~}i=1,\\
		(\numstateslevel^{(i)}-1)\numedgeslevel^{(i)}\numedgestotal^{(i-1)} & \textnormal{if~}i>1.
	\end{cases}
\end{equation*}
That is, each of the $(\numstateslevel^{(i)}-1)\numedgeslevel^{(i)}$ edges in an RM with height $i$ is replaced by $\numedgestotal^{(i-1)}$ edges given by an RM with height $i-1$ (if any).

The \emph{key intuition} is that an HRM with root height $\rootheight>1$ is beneficial representation-wise if the number of calls across RMs with height $i$ is higher than the number of RMs with height $i-1$; in other words, RMs with lower heights are being reused. Numerically, the total number of edges/calls in an RM with height $i$ is $(\numstateslevel^{(i)}-1)\numedgeslevel^{(i)}$ and, therefore, the total number of calls across RMs with height $i$ is $(\numstateslevel^{(i)}-1)\numedgeslevel^{(i)}\numrmslevel^{(i)}$. If this quantity is higher than $\numrmslevel^{(i-1)}$, then RMs with lower heights are reused, and therefore having RMs with different heights is beneficial.

\flatsize*
\begin{proof}
	By example. Let $\hrmtuple$ be an HRM whose root $\rmname_r$ has height $\rootheight$ and is parameterized by $\numrmslevel^{(i)}=1$, $\numstateslevel^{(i)}=3$, $\numedgeslevel^{(i)}=1$ for $i=1,\ldots,\rootheight$. Figure~\ref{fig:hrm_instance_exp_states} shows an instance of this hierarchy. Let us write the number of states in the flat RMs of each level:
	\begin{align*}
		\bar{\numstateslevel}^{(1)}&=U^{(1)}=3,\\
		\bar{\numstateslevel}^{(2)}&=U^{(2)}  +\left(\bar{\numstateslevel}^{(1)} -2 \right)\left( \numstateslevel^{(2)} -1 \right) E^{(2)}=3 + \left(3 -2 \right)\left(3 -1 \right) 1 = 5,\\
		\bar{\numstateslevel}^{(3)}&=U^{(3)}  +\left(\bar{\numstateslevel}^{(2)} -2 \right)\left( \numstateslevel^{(3)} -1 \right) E^{(3)}=3 + \left(5 -2 \right)\left(3 -1 \right) 1 = 9,\\
		&\vdots\\
		\bar{\numstateslevel}^{(i)}&= 2\bar{\numstateslevel}^{(i-1)}-1 =  2^{i}+1.
	\end{align*}
	Hence, the number of states in the flat HRM is $|\bar{\hrm}|=\bar{\numstateslevel}^{(\rootheight)}=2^{\rootheight}+1$, showing that the number of states in the flat HRM grows exponentially with the height of the root. In contrast, the number of states in the HRM grows linearly with the height of the root, $|\hrm| = \sum_{i=1}^{\rootheight}\numrmslevel^{(i)}\numstateslevel^{(i)}=\sum_{i=1}^{\rootheight}1\cdot3=3\rootheight$.
	
	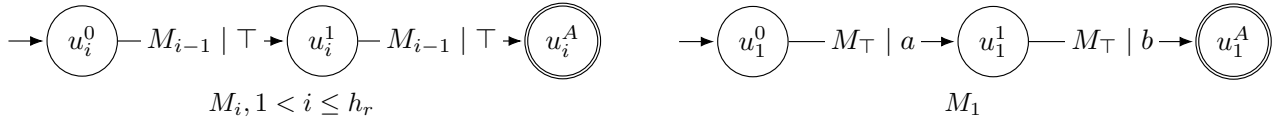
\begin{figure}[h]
		\begin{minipage}{0.48\linewidth}
			\centering
			\resizebox{\columnwidth}{!}{
				\begin{tikzpicture}[shorten >=1pt,node distance=3cm,on grid,auto, every initial by arrow/.style ={-Latex}]
					\node[state,initial, initial text=] (u_0)   {$u^0_i$};
					\node[state] (u_1) [right =of u_0]   {$u^1_i$};
					\node[state,accepting] (u_acc) [right =of u_1]   {$u^A_i$};
					\path[-Latex] (u_0) edge node [in place] {$\rmname_{i-1}\mid \top$} (u_1);
					\path[-Latex] (u_1) edge node [in place] {$\rmname_{i-1}\mid \top$} (u_acc);
			\end{tikzpicture}}
			
			$\rmname_i, 1<i\leq \rootheight$
		\end{minipage} \hfill
		\begin{minipage}{0.48\linewidth}
			\centering
			\resizebox{\columnwidth}{!}{
				\begin{tikzpicture}[shorten >=1pt,node distance=3cm,on grid,auto, every initial by arrow/.style ={-Latex}]
					\node[state,initial, initial text=] (u_0)   {$u^0_1$};
					\node[state] (u_1) [right =of u_0]   {$u^1_1$};
					\node[state,accepting] (u_acc) [right =of u_1]   {$u^A_1$};
					\path[-Latex] (u_0) edge node [in place] {$\leaf\mid a$} (u_1);
					\path[-Latex] (u_1) edge node [in place] {$\leaf\mid b$} (u_acc);
			\end{tikzpicture}}
			
			$\rmname_1$
		\end{minipage}
		\caption{Example of an HRM whose root has height $\rootheight$ used in the proof of Theorem~\ref{thm:flat_size}.}
		\label{fig:hrm_instance_exp_states}
	\end{figure}
	
	In the case of the total number of edges, we again write some iterations to derive a general expression:
	\begin{align*}
		\numedgestotal^{(1)} &= (\numstateslevel^{(1)}-1)\numedgeslevel^{(1)} = (3-1)1=2,\\
		\numedgestotal^{(2)} &= (\numstateslevel^{(2)}-1)\numedgeslevel^{(2)}\numedgestotal^{(1)} = (3-1)\cdot1\cdot2=4,\\
		\numedgestotal^{(3)} &= (\numstateslevel^{(3)}-1)\numedgeslevel^{(3)}\numedgestotal^{(2)} = (3-1)\cdot1\cdot4=8,\\
		& \vdots\\
		\numedgestotal^{(i)} &= 2\numedgestotal^{(i-1)} = 2^{i}.
	\end{align*}
	Therefore, the total number of edges in the flat HRM is $\numedgestotal^{(\rootheight)}=2^{\rootheight}$. In contrast, the total number of edges in the HRM grows linearly: $\sum_{i=1}^{\rootheight}N^{(i)}(U^{(i)}-1)E^{(i)}=\sum_{i=1}^{\rootheight}1(3-1)1=2\rootheight$.
	
	Finally, we emphasize that the resulting flat HRM cannot be compressed, unlike the HRM in Figure~\ref{fig:flattening_examples}: each state has at most one incoming edge, so there are not multiple paths that can be merged. We have thus shown that there are HRMs whose equivalent flat HRM has a number of states and edges that grows exponentially with the height of the root.
\end{proof}

Using the aforementioned intuition, we observe that the hierarchical structure is actually expected to be useful: the number of calls across RMs with height $i$ is $(\numstateslevel^{(i)}-1)\numedgeslevel^{(i)}=(3-1)1=2$, which is greater than the number of RMs with height $i-1$ (only 1).

There are cases where having a multi-level hierarchy (i.e., with $\rootheight>1$) is not beneficial. For instance, given an HRM whose root has height $\rootheight$ and parameterized by  $\numrmslevel^{(i)}=1$, $\numstateslevel^{(i)}=2$ and $\numedgeslevel^{(i)}=1$, the number of states in the equivalent flat HRM is constant (2), whereas in the HRM itself it grows linearly with $\rootheight$. The same occurs with the number of edges. By checking the previously introduced intuition, we observe that $(\numstateslevel^{(i)}-1)\numedgeslevel^{(i)}\numrmslevel^{(i)}=(2-1)\cdot 1 \cdot 1=1 \not> \numrmslevel^{(i-1)}=1$, which verifies that having non-reused RMs with multiple heights is not useful.

\section{Policy Learning Implementation Details}
\label{app:option_hierarchy_selection}
In this appendix, we describe some implementation details that were omitted in Section~\ref{sec:policy_learning} for simplicity. First, we start by describing some methods used in policy learning. Second, we explain the option selection algorithm step-by-step and provide examples to ease its understanding.

\subsection{Policies}
\label{app:policy_learning_policy_details}
\textbf{Deep Q-networks \citep[DQNs;][]{MnihKSRVBGRFOPB15}.~}
We use Double DQNs~\citep{HasseltGS16} for both formula and call options. The DQNs associated with formula options simply take an MDP state and output a Q-value for each action. In contrast, the DQNs associated with call options also take an RM state and a context, which are encoded as follows:
\begin{itemize}
	\item The RM state is encoded using a one-hot vector. The size of the vector is given by the number of states in the RM.
	\item The context, which is either $\top$ or a DNF formula with a single disjunct/conjunction, is encoded using a vector whose size is the number of propositions $|\propset|$. Each vector position corresponds to a proposition $p \in \propset$ whose value depends on how $p$ appears in the context: (i)~+1 if $p$ appears positively, (ii)~-1 if $p$ appears negatively, or (iii)~0 if $p$ does not appear. Note that if the context is $\top$, the vector solely consists of zeros.
\end{itemize}
These DQNs output a value for each possible call in the RM; however, some of these values must be masked if the corresponding calls are not available from the RM state-context used as input. For instance, the DQN for $\rmname_0$ in Figure~\ref{fig:book_hierarchy} outputs a value for $\langle\rmname_1,\neg\Rabbit\rangle$, $\langle\rmname_2,\top\rangle$, $\langle\rmname_1,\top\rangle$,  and $\langle \leaf,\Table\rangle$. If the RM state was $u^0_0$ and the context was $\top$, only the values for the first two calls are relevant. Just like unavailable calls, we also mask unsatisfiable calls (i.e., calls whose context cannot be satisfied in conjunction with the accumulated context used as input).

To speed up learning, a subset of the Q-functions associated with formula options is updated after each step. Updating all the Q-functions after each step is costly and we observed that similar performance could be obtained with this strategy. To determine the subset, we keep an update counter $c_{\phi}$ for each Q-function $\qfunc_{\phi}$, and a global counter $c$ (i.e., the total number of times Q-functions have been updated). The probability of updating $\qfunc_{\phi}$ is:
\begin{equation*}
	p_{\phi} = \frac{s_{\phi}}{\sum_{\phi'} s_{\phi'}}, \textnormal{where}~s_\phi=c-c_\phi-1.
\end{equation*}
A subset of Q-functions is chosen using this probability distribution without replacement.

\textbf{Exploration.}~
During training, the formula and call option policies are $\epsilon$-greedy. In the case of formula options, akin to Q-functions, each option $\opt^{j,\phi}_{i,u,\Context}$ performs exploration with an exploration factor $\epsilon_{\phi\land\Context}$, which linearly decreases with the number of steps performed using the policy induced by $\qfunc_{\phi\land\Context}$. Likewise, \citet{KulkarniNST16} keep an exploration factor for each subgoal, but vary it depending on the option's success rather than on the number of performed steps. In the case of call options, each RM state-context pair is associated with its own exploration factor, which linearly decreases as options started from that pair terminate.

\textbf{The Formula Tree.}
As explained in Section~\ref{sec:policy_learning}, each formula option's policy is induced by a Q-function associated with a formula. In domains where certain proposition sets cannot occur, it is unnecessary to consider formulas that cover some of these sets. For instance, in a domain where two propositions $a$ and $b$ cannot be simultaneously observed (i.e., it is impossible to observe $\{a,b\}$), formulas such as $a\land\neg b$ or $b\land\neg a$ could instead be represented by the more abstract formulas $a$ or $b$; therefore, $a\land\neg b$ and $a$ could be both associated with a Q-function $\qfunc_a$, whereas $b\land\neg a$ and $b$ could be both associated with a Q-function $\qfunc_b$. By reducing the number of Q-functions, learning naturally becomes more efficient.

We represent relationships between formulas using a \emph{formula tree} which, as the name suggests, arranges a set of formulas in a tree structure. Formally, given a set of propositions $\propset$, a formula tree is a tuple $\langle \ftreenodes, \ftreeroot, \proplabelset\rangle$, where $\ftreenodes$ is a set of nodes, each associated with a formula; $\ftreeroot\in\ftreenodes$ is the root of the tree and it is associated with the formula $\top$; and $\proplabelset\subseteq (2^\propset)^\ast$ is a set of labels. All the nodes in the tree except for the root are associated with conjunctions. Let $\formtolits(X)\subseteq 2^{2\propset}$ denote the set of literals of a formula $X$, e.g. if $X=a\land\neg b$, then $\nu(X)=\{a,\neg b\}$. A formula $X$ \emph{subsumes} a formula $Y$ if (1)~$X=\top$, or (2.i)~$\formtolits(X)\subseteq \formtolits(Y)$ and (2.ii)~for all labels $\proplabel\in \proplabelset$, either $\proplabel \models X$ and $\proplabel\models Y$, or $\proplabel \not\models X$ and $\proplabel\not\models Y$. Case~(2) indicates that $Y$ is a special case of $X$ (it adds literals but it is satisfied by exactly the same labels). The tree is organized such that the formula at a given node subsumes all its descendants. The set of Q-functions is determined by the children of the root.

During the agent-environment interaction, the formula tree is updated if (i)~a new formula appears in the learned HRMs, or (ii)~a new label is observed. Algorithm~\ref{alg:formula_tree} contains the pseudo-code for updating the tree in these two cases. When a new formula is added (line~\ref{algline:tree:add_formula}), we create a node for the formula (line~\ref{algline:tree:create_node}) and add it to the tree. The insertion place is determined by exploring the tree top-down from the root $\ftreeroot$ (lines~\ref{algline:tree:add_node_start}--\ref{algline:tree:add_node_end}). First, we check whether a child of the current node subsumes the new node (line~\ref{algline:tree:find_subsume_call}). If such a node exists, then we go down this path (lines~\ref{algline:tree:add_node_non_nil_1}--\ref{algline:tree:add_node_non_nil_2}); otherwise, the new node is going to be a child of the current node (lines~\ref{algline:tree:current_node_children_update}--\ref{algline:tree:new_node_set_parent}). In the latter case, in addition, all those children nodes of the current node that are subsumed by the new node need to become children of the new node (lines~\ref{algline:tree:insert_node_start}--\ref{algline:tree:insert_node_end}). The other core case in which the tree may need an update occurs when a new label is observed (lines~\ref{algline:tree:on_label_start}--\ref{algline:tree:on_label_end}) since we need to make sure that parenting relationships comply with the set of labels $\proplabelset$. First, we find nodes inconsistent with the new label: a parenting relationship is broken (line~\ref{algline:tree:inconsistency_found}) when the formula of the parent non-root node is satisfied by the label but the formula of the child node is not (or vice versa). Once the inconsistent nodes are found, we remove their current parenting relationship (lines~\ref{algline:tree:remove_parenting_1}--\ref{algline:tree:remove_parenting_2}) and reinsert them in the tree (line~\ref{algline:tree:reinsertion}). Figure~\ref{fig:formula_tree_example} shows two simple examples of formula trees, where the Q-functions are $\qfunc_a$ in (a), and $\qfunc_a$ and $\qfunc_{a\land\neg c}$ in (b).

\begin{algorithm}[!htbp]
	\caption{Formula tree operations}
	\label{alg:formula_tree}
	\begin{algorithmic}[1]
		\REQUIRE a formula tree $\langle \ftreenodes, \ftreeroot,  \proplabelset\rangle$, where $\ftreenodes$ is a set of nodes, $\ftreeroot\in\ftreenodes$ is the root node (associated with the formula $\top$), and $\proplabelset$ is a set of labels.
		\FUNCTION{\textsc{AddFormula}($f$)} \alglinelabel{algline:tree:add_formula}
		\STATE \textsc{AddNode}(\textsc{CreateNode}($f$)) \alglinelabel{algline:tree:create_node}
		\ENDFUNCTION
		\FUNCTION{\textsc{AddNode}(new\_node)} \alglinelabel{algline:tree:add_node_start}
		\STATE $\textnormal{current\_node} \leftarrow \ftreeroot$
		\STATE $\textnormal{added\_node} \leftarrow \bot$
		\WHILE{$\textnormal{added\_node}=\bot$}
		\STATE 	$\textnormal{child\_node} \leftarrow \textsc{FindSubsumingChild}(\textnormal{current\_node}, \textnormal{new\_node})$ \alglinelabel{algline:tree:find_subsume_call}
		\IF[Keep exploring down this path]{$\textnormal{child\_node} \neq \textnormal{nil}$} \alglinelabel{algline:tree:add_node_non_nil_1}
		\STATE $\textnormal{current\_node} \leftarrow \textnormal{child\_node}$ \alglinelabel{algline:tree:add_node_non_nil_2}
		\ELSE[Insert the node]
		\STATE $\textnormal{subsumed\_children} \leftarrow \textsc{GetSubsumedChildren}(\textnormal{current\_node}, \textnormal{new\_node})$ \alglinelabel{algline:tree:insert_node_start}
		\STATE $\textnormal{new\_node.children} \leftarrow \textnormal{new\_node.children} \cup \textnormal{subsumed\_children}$
		\FOR{$\textnormal{child} \in \textnormal{subsumed\_children}$}
		\STATE $\textnormal{current\_node.children} \leftarrow\textnormal{current\_node.children} \setminus \{\textnormal{child}\}$
		\STATE $\textnormal{child.parent} \leftarrow \textnormal{new\_node}$ \alglinelabel{algline:tree:insert_node_end}
		\ENDFOR
		\STATE $\textnormal{current\_node.children} \leftarrow \textnormal{current\_node.children} \cup \{\textnormal{new\_node}\}$ \alglinelabel{algline:tree:current_node_children_update}
		\STATE $\textnormal{new\_node.parent} \leftarrow \textnormal{current\_node}$ \alglinelabel{algline:tree:new_node_set_parent}
		\STATE $\textnormal{added\_node}\leftarrow \top$
		\ENDIF
		\ENDWHILE
		\STATE $\ftreenodes \leftarrow \ftreenodes \cup \{\textnormal{new\_node}\}$ \alglinelabel{algline:tree:add_node_end}
		\ENDFUNCTION
		\FUNCTION{\textsc{OnLabel}($\proplabel$)} \alglinelabel{algline:tree:on_label_start}
		\STATE $\proplabelset \leftarrow \proplabelset \cup \{\proplabel\}$
		\STATE $\textnormal{inconsistent\_nodes} \leftarrow \{\}$
		\FOR{$\textnormal{child} \in \ftreeroot.\textnormal{children}$}
		\STATE \textsc{FindInconsistentNodes}($\textnormal{child}, \proplabel, \textnormal{inconsistent\_nodes}$) 
		\ENDFOR
		\STATE \textsc{ReinsertInconsistentNodes}(inconsistent\_nodes) \alglinelabel{algline:tree:on_label_end}
		\ENDFUNCTION
		\FUNCTION{\textsc{FindSubsumingChild}(current\_node, new\_node)}
		\FOR{$\textnormal{child} \in \textnormal{current\_node.children}$}
		\IF{child.formula \underline{subsumes} new\_node.formula}
		\STATE \textbf{return} child
		\ENDIF
		\ENDFOR
		\STATE \textbf{return} nil
		\ENDFUNCTION
		\FUNCTION{\textsc{GetSubsumedChildren}(current\_node, new\_node)}
		\STATE $\textnormal{subsumed\_children} \leftarrow \{\}$
		\FOR{$\textnormal{child} \in \textnormal{current\_node.children}$}
		\IF{new\_node.formula \underline{subsumes} child.formula}
		\STATE $\textnormal{subsumed\_children} \leftarrow \textnormal{subsumed\_children} \cup \{\textnormal{new\_node}\}$
		\ENDIF
		\ENDFOR
		\STATE \textbf{return} subsumed\_children
		\ENDFUNCTION
		\FUNCTION{\textsc{FindInconsistentNodes}(current\_node, $\proplabel$, inconsistent\_nodes)}
		\FOR{$\textnormal{child} \in \textnormal{current\_node.children}$}
		\IF{$\proplabel \models \textnormal{current\_node.formula} \oplus \proplabel \models \textnormal{child.formula}$} \alglinelabel{algline:tree:inconsistency_found}
		\STATE $\textnormal{inconsistent\_nodes} \leftarrow \textnormal{inconsistent\_nodes} \cup \{\textnormal{child}\}$
		\ELSE
		\STATE \textsc{FindInconsistentNodes}(child, $\proplabel$, inconsistent\_nodes)
		\ENDIF
		\ENDFOR
		\ENDFUNCTION
		\FUNCTION{\textsc{ReinsertInconsistentNodes}(inconsistent\_nodes)}
		\FOR{$\textnormal{node}\in \textnormal{inconsistent\_nodes}$}
		\STATE $\textnormal{node.parent.children} \leftarrow \textnormal{node.parent.children} \setminus \{\textnormal{node}\}$ \alglinelabel{algline:tree:remove_parenting_1}
		\STATE $\textnormal{node.parent}\leftarrow\textnormal{nil}$ \alglinelabel{algline:tree:remove_parenting_2}
		\STATE \textsc{AddNode}(node) \alglinelabel{algline:tree:reinsertion}
		\ENDFOR
		\ENDFUNCTION
	\end{algorithmic}
\end{algorithm}

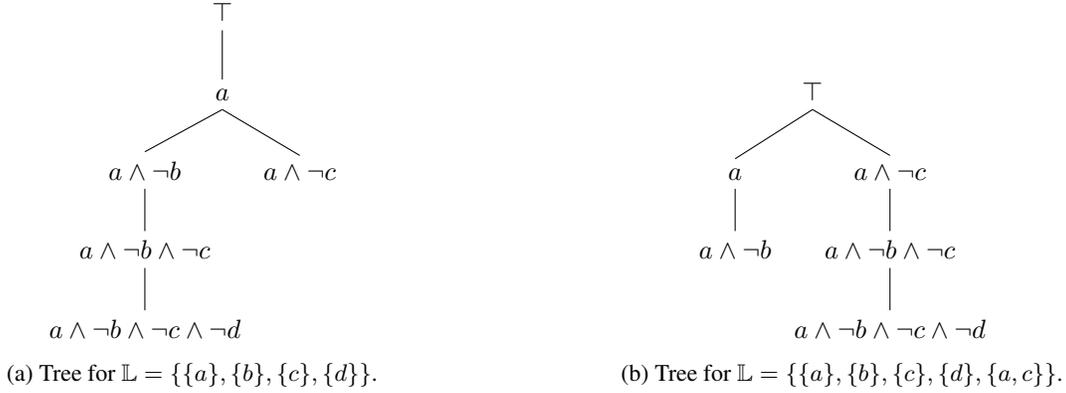
\begin{figure}
	\begin{subfigure}[b]{0.5\linewidth}
		\centering
		\begin{tikzpicture}
			\Tree [.$\top$ 
			[.$a$ 
			[.$a\land\neg b$ 
			[.$a\land\neg b\land \neg c$ $a\land\neg b\land \neg c\land \neg d$ ]
			]
			[.$a\land\neg c$ ]
			] 
			]
		\end{tikzpicture}
		\caption{Tree for $\proplabelset=\left\lbrace\{a\},\{b\},\{c\},\{d\} \right\rbrace$.}
	\end{subfigure}
	\begin{subfigure}[b]{0.5\linewidth}
		\centering
		\begin{tikzpicture}
			\Tree [.$\top$ 
			[.$a$ 
			[.$a\land\neg b$ ]
			]
			[.$a\land\neg c$ 
			[.$a\land\neg b\land \neg c$ $a\land\neg b\land \neg c\land \neg d$ ]
			] 
			]
		\end{tikzpicture}
		\caption{Tree for $\proplabelset=\left\lbrace\{a\},\{b\},\{c\},\{d\}, \{a,c\} \right\rbrace$.}
	\end{subfigure}
	\caption{Examples of formula trees for different sets of literals. Note that the node $a\land\neg b\land\neg c$ in (a) could also be a child of $a\land\neg c$ (the parent depends on the insertion order).}
	\label{fig:formula_tree_example}
\end{figure}

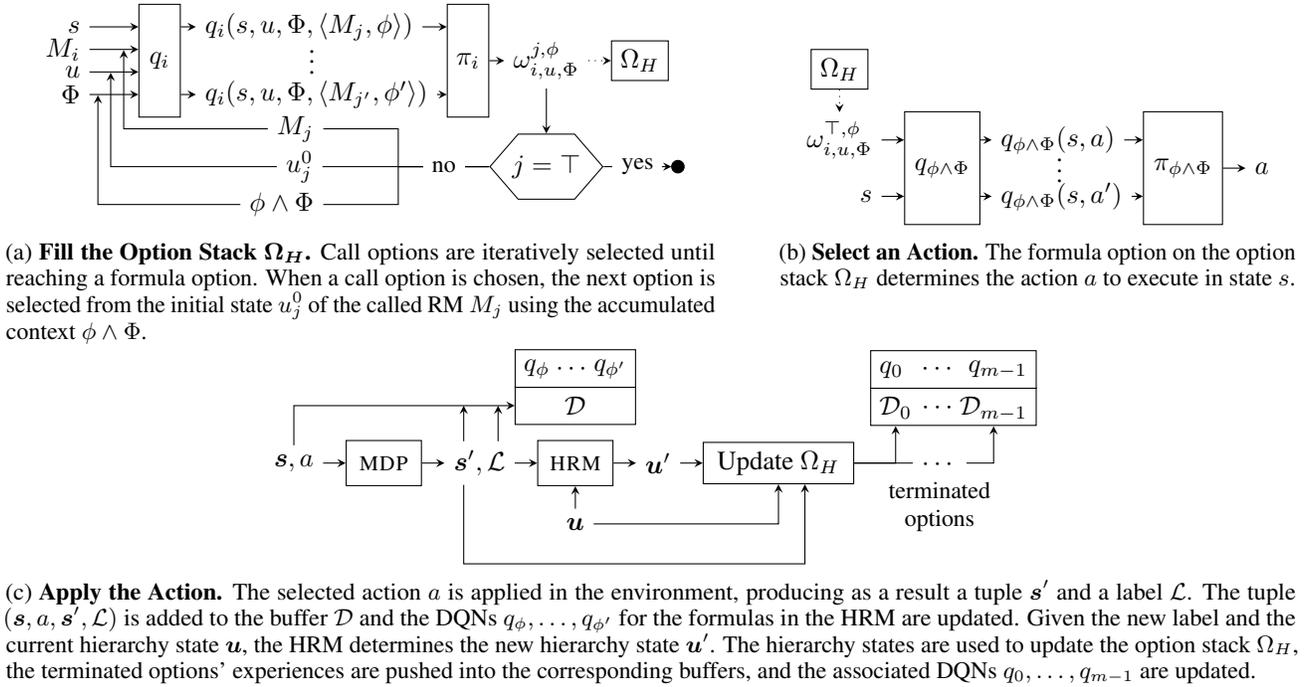
\begin{figure*}
	
	\begin{subfigure}[t]{0.55\linewidth}
		\centering
		\tikzstyle{none}=[inner sep=0mm]
		\tikzstyle{rm_qi_input} = [inner sep=0mm, outer sep=-0.3]
		\tikzstyle{rm_qi_output} = [inner sep=0mm, outer sep=-0.3]
		\tikzstyle{arrow}=[->,>=stealth]  
		\begin{tikzpicture}	
			\node [fill=white, draw=black, shape=rectangle, minimum width=0.55cm, minimum height=1.5cm] (rm_q) at (10, 10) {$\qfunc_i$};
			\node [style=rm_qi_input] (rm_s_qi) at (9.725, 10.45) {};
			\node [left=0.65 of rm_s_qi] (rm_s) {$\mdpstate$};
			\draw [style=arrow] (rm_s) -- (rm_s_qi);
			
			\node [style=rm_qi_input] (rm_m_qi) at (9.725, 10.15) {};
			\node [left=0.65 of rm_m_qi] (rm_m) {$\rmname_i$};
			\draw [style=arrow] (rm_m) -- (rm_m_qi);
			
			\node [style=rm_qi_input] (rm_u_qi) at (9.725, 9.85) {};
			\node [left=0.65 of rm_u_qi] (rm_u) {$\rmstate$};
			\draw [style=arrow] (rm_u) -- (rm_u_qi);
			
			\node [style=rm_qi_input] (rm_phi_qi) at (9.725, 9.55) {};
			\node [left=0.65 of rm_phi_qi] (rm_phi) {$\Context$};
			\draw [style=arrow] (rm_phi) -- (rm_phi_qi);
			
			\node [style=rm_qi_output] (rm_out1) at (10.275, 10.45) {};
			\node [right=0.2 of rm_out1] (rm_out1_res) {$\qfunc_i(\mdpstate,\rmstate,\Context, \langle \rmname_j, \context\rangle)$};
			\draw [style=arrow] (rm_out1) -- (rm_out1_res);
			
			\node [style=rm_qi_output] (rm_out2) at (10.275, 9.55) {};
			\node [right=0.2 of rm_out2] (rm_out2_res) {$\qfunc_i(\mdpstate,\rmstate,\Context, \langle \rmname_{j'}, \context'\rangle)$};
			\draw [style=arrow] (rm_out2) -- (rm_out2_res);
			
			\node at ($(rm_out1_res)!0.4!(rm_out2_res)$) {$\vdots$};
			
			\node [fill=white, draw=black, shape=rectangle, minimum width=0.55cm, minimum height=1.5cm] (rm_pi) at (14.1, 10) {$\pi_i$};
			
			\node [style=rm_qi_input] (pi_q1) at (13.825, 10.45) {};
			\draw [style=arrow] (rm_out1_res) -- (pi_q1);`
			
			\node [style=rm_qi_input] (pi_q2) at (13.825, 9.55) {};
			\draw [style=arrow] (rm_out2_res) -- (pi_q2);
			
			\node [right=0.2 of rm_pi] (option) {$\opt^{j,\context}_{i,\rmstate,\Context}$};
			\draw [style=arrow] (rm_pi) -- (option);
			
			\node [shape=rectangle, draw=black, right=0.3 of option] (option_stack) {$\opth$};
			\draw [style=arrow, dotted] (option) -- (option_stack); 
			
			\node[regular polygon, regular polygon sides=6, minimum width=1cm, xscale=1.5,draw,label=center: {$j = \top$}, below=0.6 of option] (decision) {};
			\draw[style=arrow] (option) -- (decision); 
			
			\node [shape=circle, draw=black, fill=black, right=0.9 of decision, scale=0.5] (algo_end) {};
			\draw[style=arrow] (decision) -- (algo_end) node [midway, fill=white] {\footnotesize yes};
			
			\node [left=1.2 of decision, inner sep=0, outer sep=-0.3] (bifurc_state) {};
			\node [above=0.5 of bifurc_state, inner sep=0, outer sep=-0.6] (bifurc_machine) {};
			\node [below=0.5 of bifurc_state, inner sep=0, outer sep=-0.6] (bifurc_ctx) {};
			
			\draw[-] (decision) -| (bifurc_machine);
			\draw[-] (decision) -| (bifurc_ctx);
			\draw[-] (decision) -- (bifurc_state) node [midway, fill=white] {\footnotesize no};

			\node [left= 1 of bifurc_state] (bifurc_state_node) {$u^0_j$};
			\draw[-] (bifurc_state) -- (bifurc_state_node);
			
			\node [left= 1 of bifurc_machine] (bifurc_machine_node) {$\rmname_j$};
			\draw[-] (bifurc_machine) -- (bifurc_machine_node);
			
			\node [left= 1 of bifurc_ctx] (bifurc_ctx_node) {$\context \land \Context$};
			\draw[-] (bifurc_ctx) -- (bifurc_ctx_node);
			
			%
			\node [inner sep=0, outer sep=-0.25] at ($(rm_m)!0.8!(rm_m_qi)$) (mi_connect) {}; 
			\draw[style=arrow] (bifurc_machine_node) -| (mi_connect);
			
			\node [inner sep=0, outer sep=-0.25] at ($(rm_u)!0.575!(rm_u_qi)$) (u_connect) {}; 
			\draw[style=arrow] (bifurc_state_node) -| (u_connect);
			
			\node [inner sep=0, outer sep=-0.25] at ($(rm_phi)!0.4!(rm_phi_qi)$) (ctx_connect) {}; 
			\draw[style=arrow] (bifurc_ctx_node) -| (ctx_connect);
		\end{tikzpicture}
		\caption{\textbf{Fill the Option Stack $\bm\opth$.} Call options are iteratively selected until reaching a formula option. When a call option is chosen, the next option is selected from the initial state $u^0_j$ of the called RM $\rmname_j$ using the accumulated context $\context\land\Context$.}
		\label{fig:policy_learning_algorithm_fill_stack}
	\end{subfigure}
	\hfill
	\begin{subfigure}[t]{0.4\linewidth}
		\centering
		\tikzstyle{none}=[inner sep=0mm]
		\tikzstyle{input} = [inner sep=0mm, outer sep=-0.3]
		\tikzstyle{output} = [inner sep=0mm, outer sep=-0.3]
		\tikzstyle{arrow}=[->,>=stealth]  
		\begin{tikzpicture}	
			\node [fill=white, draw=black, shape=rectangle, minimum width=1cm, minimum height=1.5cm] (qfunc) at (10, 10) {$\qfunc_{\context \land \Context}$};
			
			\node [style=input] (opt_in) at (9.5, 10.375) {};
			\node [left=0.3 of opt_in] (opt) {$\opt^{\top,\context}_{i,\rmstate,\Context}$};
			\draw [style=arrow] (opt) -- (opt_in);
			
			\node [style=input] (s_in) at (9.5, 9.625) {};
			\node [left=0.3 of s_in] (state) {$\mdpstate$};
			\draw [style=arrow] (state) -- (s_in);
			
			\node [shape=rectangle, draw=black, above=0.3 of opt] (opt_stack) {$\opth$};
			\draw [style=arrow, dotted] (opt_stack) -- (opt);
			
			\node [style=output] (out1) at (10.5, 10.375) {};
			\node [right=0.15 of out1] (rm_out1_res) {$\qfunc_{\context\land\Context}(\mdpstate,\mdpaction)$};
			\draw [style=arrow] (out1) -- (rm_out1_res);
			
			\node [style=output] (out2) at (10.5, 9.625) {};
			\node [right=0.15 of out2] (rm_out2_res) {$\qfunc_{\context\land\Context}(\mdpstate,\mdpaction')$};
			\draw [style=arrow] (out2) -- (rm_out2_res);
			
			\node at ($(rm_out1_res)!0.4!(rm_out2_res)$) {$\vdots$};
			
			\node [fill=white, draw=black, shape=rectangle, minimum width=1.05cm, minimum height=1.5cm] (qfunc) at (13.20, 10) {$\pi_{\context\land\Context}$};
			\node [style=input] (pi_in1) at (12.675, 10.375) {};
			\draw [style=arrow] (rm_out1_res) -- (pi_in1);
			\node [style=input] (pi_in2) at (12.675, 9.625) {};
			\draw [style=arrow] (rm_out2_res) -- (pi_in2);
			
			\node [right=0.3 of qfunc] (action) {$\mdpaction$};
			\draw [style=arrow] (qfunc) -- (action);		
		\end{tikzpicture}
		\caption{\textbf{Select an Action.} The formula option on the option stack $\opth$ determines the action $\mdpaction$ to execute in state $\mdpstate$.}
		\label{fig:policy_learning_algorithm_choose_action}
	\end{subfigure}
	
	\begin{subfigure}{\linewidth}
		\centering
		\tikzstyle{none}=[inner sep=0mm]
		\tikzstyle{input} = [inner sep=0mm, outer sep=-0.3]
		\tikzstyle{output} = [inner sep=0mm, outer sep=-0.3]
		\tikzstyle{arrow}=[->,>=stealth]  
		\begin{tikzpicture}	
			\node (action) at (10, 10) {$\obstuple, \mdpaction$};
			
			\node [fill=white, draw=black, shape=rectangle, minimum width=1cm, minimum height=0.6cm, right=0.3 of action] (env) {\textsc{mdp}};
			\draw [style=arrow] (action) -- (env);
			
			\node [right=0.3 of env] (new_state) {$\obstuple',\proplabel$};
			\draw [style=arrow] (env) -- (new_state);
			
			
			
			\node [fill=white, draw=black, shape=rectangle, minimum width=1cm, minimum height=0.6cm, right=0.3 of new_state] (hrm) {\textsc{hrm}};
			\draw [style=arrow] (new_state) -- (hrm);
			
			\node [below=0.3 of hrm] (h_state) {$\hrmstate$}; 
			\draw [style=arrow] (h_state) -- (hrm);
			
			\node [right=0.3 of hrm] (h_state_new) {$\hrmstate'$}; 
			\draw[style=arrow] (hrm) -- (h_state_new);
			
			\node [fill=white, draw=black, shape=rectangle, minimum width=2cm, minimum height=0.5cm, right=0.3 of h_state_new] (update) {Update $\opth$};
			\draw[style=arrow] (h_state_new) -- (update);
			\draw[style=arrow] (h_state) -| (update);
			\draw [style=arrow] ([xshift=-6.0]new_state.south) -- +(0,-1.05) -| ([xshift=10]update.south);
			
			\node [fill=white, draw=black, shape=rectangle, minimum width=1.6cm, minimum height=0.5cm, above=0.2 of hrm] (buffer) {$\mathcal{D}$};
			\node [fill=white, draw=black, shape=rectangle, minimum width=1.6cm, minimum height=0.5cm, above=-0.02 of buffer] (q_functions) {};
			\node [shape=rectangle, minimum height=0.5cm, anchor=north west] (qphi1) at (q_functions.north west) {$\qfunc_\context$};
			\node [shape=rectangle, minimum height=0.5cm, anchor=north east] (qphi2) at (q_functions.north east) {$\qfunc_{\context'}$};
			\node at ($(qphi1)!0.5!(qphi2)$) {$\cdots$};
			
			\draw [style=arrow] (action.north) |- coordinate[pos=0.89] (random_point_edge) (buffer);
			
			\path let \p1 = (random_point_edge), \p2=([xshift=-6.0]new_state.north) in coordinate (new_state_intersect) at (\x2,\y1);
			\draw [style=arrow] ([xshift=-6.0]new_state.north) -- (new_state_intersect);
			
			\path let \p1 = (random_point_edge), \p2=([xshift=7.0]new_state.north) in coordinate (label_intersect) at (\x2,\y1);
			\draw [style=arrow] ([xshift=7.0]new_state.north) -- (label_intersect);
			
			
			\node [fill=white, draw=black, shape=rectangle, minimum width=2.2cm, minimum height=0.5cm, above right=0.3 of update] (meta_q_buff) {};
			\node [fill=white, draw=black, shape=rectangle, minimum width=2.2cm, minimum height=0.5cm, above=-0.02 of meta_q_buff] (meta_q_functions) {};
			\node [shape=rectangle, minimum height=0.5cm, anchor=north west] (metaq1) at (meta_q_functions.north west) {$\qfunc_0$};
			\node [shape=rectangle, minimum height=0.5cm, anchor=north east] (metaq2) at (meta_q_functions.north east) {$\qfunc_{m-1}$};
			\node [shape=rectangle, minimum height=0.5cm, anchor=north west] (metabuff1) at (meta_q_buff.north west) {$\mathcal{D}_0$};
			\node [shape=rectangle, minimum height=0.5cm, anchor=north east] (metabuff2) at (meta_q_buff.north east) {$\mathcal{D}_{m-1}$};
			\node at ($(metaq1)!0.45!(metaq2)$) {$\cdots$};
			\node (dots_buffer) at ($(metabuff1)!0.45!(metabuff2)$) {$\cdots$};
			
			\draw [style=arrow] (update) -| ([yshift=0.7]metabuff1.south);
			\draw [style=arrow] (update) -| ([yshift=1.5]metabuff2.south);
			
			\node [fill=white, below=0.35of dots_buffer] (dots_arrows) {$\cdots$};
			
			\node [fill=white, below=-0.1of dots_arrows] (terminated_text) {\footnotesize terminated};
			\node [fill=white, below=-0.1of terminated_text] {\footnotesize options};
		\end{tikzpicture}
		\caption{\textbf{Apply the Action.} The selected action $\mdpaction$ is applied in the environment, producing as a result a tuple $\obstuple'$ and a label $\proplabel$. The tuple $(\obstuple,\mdpaction,\obstuple',\proplabel)$ is added to the buffer $\mathcal{D}$ and the DQNs $q_\context,\ldots,q_{\context'}$ for the formulas in the HRM are updated. Given the new label and the current hierarchy state $\hrmstate$, the HRM determines the new hierarchy state $\hrmstate'$. The hierarchy states are used to update the option stack $\opth$, the terminated options' experiences are pushed into the corresponding buffers, and the associated DQNs $q_0,\ldots,q_{m-1}$ are updated.}
		\label{fig:policy_learning_algorithm_apply_action}
	\end{subfigure}
	\caption{The core procedures involved in the policy learning algorithm that exploits HRMs.}
	\label{fig:policy_learning_algorithm}
\end{figure*}

\subsection{Option Selection Algorithm}
\label{app:option_selection_algorithm}
Algorithm~\ref{alg:hrm_episode} shows how options are selected, updated, and interrupted during an episode. Lines \ref{algline:opt:init_start}--\ref{algline:opt:init_end} correspond to the algorithm's initialization. The initial state is that of the environment, while the initial hierarchy state is formed by the root RM $\rmname_r$, its initial state $u_r^0$, an empty context (i.e., $\Context=\top$), and an empty call stack. The option stack $\opth$ contains the options we are currently running, where options at the front are the shallowest ones (e.g.,~the first option in the list is taken in the root RM). The steps taken during an episode are shown in lines \ref{algline:opt:main_loop_start}--\ref{algline:opt:main_loop_end}, which are grouped as follows:
\begin{enumerate}
	\item The agent fills the option stack $\opth$ by selecting options in the HRM from the current hierarchy state until a formula option is chosen (lines~\ref{algline:opt:fill_option_stack_start}--\ref{algline:opt:fill_option_stack_end}). The context is propagated and augmented through the HRM (i.e.,~the context of the calls is conjuncted with the propagating context and converted into DNF form). Note that the context is initially $\top$ (true), and not that of the hierarchy state. It is possible that no new options are selected if the formula option chosen in a previous step has not terminated yet.
	\item The agent chooses an action according to the last option in the option stack (line~\ref{algline:opt:select_action}), which will always be a formula option whose policy maps states into actions. The action is applied, and the agent observes the next state and label (line~\ref{algline:opt:apply_action}). The next hierarchy state is obtained by applying the hierarchical transition function $\hrmtrans$ using the observed label (line~\ref{algline:opt:apply_transition_function}). The Q-functions associated with formula options' policies are updated after this step (line~\ref{algline:opt:update_formula_q}).
	\item The option stack $\opth$ is updated by removing those options that have terminated (lines~\ref{algline:opt:terminate_options_call}, \ref{algline:opt:terminate_options_start}--\ref{algline:opt:terminate_options_end}). The terminated options are saved in a different list $\optterm$ to update the Q-functions of the RMs where they were initiated later on (line~\ref{algline:opt:update_call_q}). The termination of the options is performed as described in Section~\ref{sec:policy_learning}. All options terminate if a terminal state is reached (lines~\ref{algline:opt:terminate_terminal_state_1}--\ref{algline:opt:terminate_terminal_state_2}). Otherwise, we check options in $\opth$ from deeper to shallower levels. The first checked option is always a formula option, which terminates if the hierarchy state has changed (line~\ref{algline:opt:hierarchy_state_change_if}). In contrast, a call option terminates if it does not appear in the stack (lines~\ref{algline:opt:option_stack_call}, \ref{algline:opt:option_stack_start}--\ref{algline:opt:option_stack_end}).\footnote{We denote by $\phi_1 \subseteq \phi_2$, where $\phi_1,\phi_2\in\dnfprop$, the fact that all the disjuncts of $\phi_1$ appear in $\phi_2$. This containment relationship also holds if both formulas are $\top$. For instance, $(a \land \neg c) \subseteq (a \land \neg c) \vee d$.} When an option is found to terminate, it is added to $\optterm$ and removed from $\opth$ (lines~\ref{algline:opt:update_terminated_options}--\ref{algline:opt:remove_last_option}, \ref{algline:opt:update_terminated_options_2}--\ref{algline:opt:remove_last_option_2}). If a non-terminating option is found (lines~\ref{algline:opt:non_terminating_option_found}, \ref{algline:opt:non_terminating_option_found_2}), we stop checking for termination (no higher level options can have terminated in this case).
	\item If at least one option has terminated (line~\ref{algline:opt:term_options_non_empty}), the option stack is updated such that it contains all options appearing in the call stack (lines~\ref{algline:opt:align_stack_call}, \ref{algline:opt:align_stack_start}--\ref{algline:opt:align_stack_end}). Options are derived for the full stack if $\opth$ is empty (lines~\ref{algline:opt:opth_empty_1}--\ref{algline:opt:opth_empty_2}), or for the part of the stack not appearing in $\opth$ (lines~\ref{algline:opt:opth_nonempty_start}--\ref{algline:opt:opth_nonempty_end}). The new derived options (lines~\ref{algline:opt:align_stack_helper_start}--\ref{algline:opt:align_stack_helper_end}) from the call stack are assumed to start in the same state as the last terminated option (i.e.,~the shallowest terminated option, line~\ref{alg:opt:get_shallowest_option}) and to have been run for the same number of steps too. Crucially, the contexts should be propagated accordingly, starting from the context of the last terminated option (line~\ref{alg:opt:new_context_formed}).
	
	As a result of the definition of the hierarchical transition function $\hrmtrans$, the contexts in the stack may be DNF formulas with more than one disjunct. In contrast, the contexts associated with options are either $\top$ or DNFs with a single disjunct (remember that an option is formed for each disjunct). For instance, this occurs if the context is $a\vee b$ and $\{a,b\}$ is observed: since both disjuncts are satisfied, the context shown in the call stack will be the full disjunction $a \vee b$. In the simplest case, the derived option (which as said before is associated with a DNF with a single disjunct or $\top$) can include one of these disjuncts chosen uniformly at random (line~\ref{alg:opt:get_disjunct}). Alternatively, we could memorize all the derived options and perform identical updates for both later on once terminated.
\end{enumerate}
Figure~\ref{fig:policy_learning_algorithm} illustrates the core procedures that constitute the option selection algorithm: (i)~filling the option stack, (ii)~selecting an action using the formula option in the option stack, and (iii)~applying the action and updating the Q-functions and the option stack accordingly.

\begin{algorithm}[!htbp]
	\caption{Episode execution using an HRM (continues on the next page)}
	\label{alg:hrm_episode}
	\begin{algorithmic}[1]
		\REQUIRE an HRM $\hrmtuple$ and an environment $\textsc{Env}=\langle \mdpstates,\mdpactions,\mdpprob,\mdprewfunc,\mdpdiscount,\propset, \lfunc, \mdptermfunc \rangle$.
		\STATE $\obstuple_0 \leftarrow \textsc{Env.Init()}$ \COMMENT{Initial MDP tuple} \alglinelabel{algline:opt:init_start}
		\STATE $\langle \rmname_i, u,\Context,\stack \rangle \leftarrow \langle \rmname_r, u^0_r,\top, []\rangle$ \COMMENT{Initial hierarchy state}
		\STATE $\opth \leftarrow []$ \COMMENT{Initial option stack} \alglinelabel{algline:opt:init_end}
		\FOR{each step $t=0,\ldots,$} \alglinelabel{algline:opt:main_loop_start}
		\STATE $\opth \leftarrow \textsc{FillOptionStack}(\obs_t,\langle \rmname_i, u,\Context,\stack \rangle, \opth)$ \COMMENT{Expand the option stack}
		\STATE $a \leftarrow \textsc{SelectAction}(\obs_t,\opth)$ \COMMENT{Choose $a$ according to the last option in $\opth$} \alglinelabel{algline:opt:select_action}
		\STATE $\obstuple_{t+1}, \proplabel_{t+1} \leftarrow \textsc{Env.ApplyAction}(a)$ \alglinelabel{algline:opt:apply_action}
		\STATE $\langle \rmname_j, u', \Context', \stack'\rangle \leftarrow \hrmtrans(\langle \rmname_i, u,\Context,\stack \rangle, \proplabel_{t+1})$ \COMMENT{Apply transition function} \alglinelabel{algline:opt:apply_transition_function}
		\STATE $\textsc{UpdateFormulaQFunctions}(\obstuple_t,a,\obstuple_{t+1},\proplabel_{t+1})$ \alglinelabel{algline:opt:update_formula_q}
		\STATE $\optterm,\opth \leftarrow \textsc{TerminateOptions}(\opth,\obstuple, \langle \rmname_i,u,\Context,\stack\rangle, \langle \rmname_j,u',\Context',\stack'\rangle)$ \alglinelabel{algline:opt:terminate_options_call}
		\STATE $\textsc{UpdateCallQFunctions}(\optterm,\obs_{t+1},\proplabel_{t+1})$  \alglinelabel{algline:opt:update_call_q}
		\IF {$|\optterm| > 0$} \alglinelabel{algline:opt:term_options_non_empty}
		\STATE $\opth \leftarrow \textsc{AlignOptionStack}(\opth,\stack',\optterm)$ \alglinelabel{algline:opt:align_stack_call}
		\ENDIF
		\STATE $\langle \rmname_i, u,\Context,\stack\rangle \leftarrow \langle\rmname_j, u', \Context',\stack'\rangle$ \alglinelabel{algline:opt:main_loop_end}
		\ENDFOR
		
		\FUNCTION{\textsc{FillOptionStack}($\obs,\langle \rmname_i, u,\cdot,\stack \rangle, \opth$)} \alglinelabel{algline:opt:fill_option_stack_start}
		\STATE $\opth' \leftarrow \opth$
		\STATE $\Context \leftarrow \top$ \COMMENT{The context is initially true}
		\STATE $\rmname_j \leftarrow \rmname_i;v \leftarrow u$ \COMMENT{The state-automaton pair in which an option is selected}
		\WHILE{the last option in $\opth'$ is not a formula option}
		\STATE $\opt^{x,\phi}_{j,v,\Context} \leftarrow \textsc{SelectOption}(\obs,\rmname_j,v,\Context)$ \COMMENT{Select an option (e.g., with $\epsilon$-greedy)}
		\IF[If the option is a call option]{$x \neq \top$}
		\STATE $\rmname_j \leftarrow \rmname_x; v \leftarrow u^0_x$ \COMMENT{Next option is chosen on the called RM's initial state}
		\STATE $\Context \leftarrow \dnf(\Context \land \phi)$ \COMMENT{Update the context}
		\ENDIF
		\STATE $\opth' \leftarrow \opth' \oplus\opt^{x,\phi}_{j,v,\Context}$ \COMMENT{Update the option stack (concatenate new option)}
		\ENDWHILE
		\STATE \textbf{return} $\opth'$ \alglinelabel{algline:opt:fill_option_stack_end}
		\ENDFUNCTION
		
		\FUNCTION{\textsc{TerminateOptions}($\opth,\obstuple,\langle \rmname_i,u,\Context,\stack\rangle,\langle \rmname_j, u',\Context',\stack'\rangle$)} \alglinelabel{algline:opt:terminate_options_start}
		\IF{$\obsterm = \top$} \alglinelabel{algline:opt:terminate_terminal_state_1}
		\STATE \textbf{return} $\opth, []$ \COMMENT{All options terminate} \alglinelabel{algline:opt:terminate_terminal_state_2}
		\ENDIF
		\STATE $\optterm \leftarrow []; \opth' \leftarrow \opth$ \COMMENT{Initialize structures}
		\WHILE[While the option stack is not empty]{$|\opth'| > 0$}
		\STATE $\opt^{x,\phi}_{k,v,\ContextAlt} \leftarrow$ last option in $\opth'$
		\IF[If the option is a call option]{$x \neq \top$}
		\STATE $\textnormal{in\_stack, \_} \leftarrow \textsc{OptionInStack}(\opt^{x,\phi}_{k,v,\ContextAlt},\stack')$ \alglinelabel{algline:opt:option_stack_call}
		\IF{$\neg\textnormal{in\_stack}$}
		\STATE $\optterm \leftarrow \optterm \oplus \opt^{x,\phi}_{k,v,\ContextAlt}$ \COMMENT{Update the list of terminated options} \alglinelabel{algline:opt:update_terminated_options}
		\STATE $\opth' \leftarrow \opth'\ominus \opt^{x,\phi}_{k,v,\ContextAlt}$ \COMMENT{Remove the last option from the option stack} \alglinelabel{algline:opt:remove_last_option}
		\ELSE \alglinelabel{algline:opt:non_terminating_option_found}
		\STATE \textbf{break} \COMMENT{Stop terminating}
		\ENDIF
		\ELSE 
		\IF[If the hierarchy state has changed\ldots]{$\langle \rmname_i, u, \Context,\stack\rangle \neq \langle \rmname_j, u', \Context',\stack'\rangle$} \alglinelabel{algline:opt:hierarchy_state_change_if}
		\STATE $\optterm \leftarrow \optterm \oplus \opt^{x,\phi}_{k,v,\ContextAlt}$ \COMMENT{Update the list of terminated options} \alglinelabel{algline:opt:update_terminated_options_2}
		\STATE $\opth' \leftarrow \opth'\ominus\opt^{x,\phi}_{k,v,\ContextAlt}$ \COMMENT{Remove the last option from the option stack} \alglinelabel{algline:opt:remove_last_option_2}
		\ELSE \alglinelabel{algline:opt:non_terminating_option_found_2}
		\STATE \textbf{break} \COMMENT{Stop terminating}
		\ENDIF
		\ENDIF
		\ENDWHILE 
		\STATE \textbf{return} $\optterm, \opth'$ \alglinelabel{algline:opt:terminate_options_end}
		\ENDFUNCTION
		
		\FUNCTION{\textsc{OptionInStack}($\opt^{x,\phi}_{k,v,\Context}, \stack$)} \alglinelabel{algline:opt:option_stack_start}
		\FOR{$l=0\ldots|\stack|-1$}
		\STATE $\langle u_f, \cdot, \rmname_i, \rmname_j,\phi',\Context'\rangle \leftarrow \stack_l$
		\IF[The call option is in the call stack]{$u_f = v \land i = k \land j = x \land \phi \subseteq \phi' \land \Context \subseteq \Context'$}
		\STATE \textbf{return} $\top, l$ \COMMENT{Return whether it appears in the stack and the index}
		\ENDIF
		\ENDFOR
		\STATE \textbf{return} $\bot, -1$ \alglinelabel{algline:opt:option_stack_end}
		\ENDFUNCTION
	\end{algorithmic}
\end{algorithm}

\begin{algorithm}[!htbp]
	\label{alg:hrm_episode_2}
	\begin{algorithmic}[1]
		\setcounter{ALC@line}{51}
		\FUNCTION{\textsc{AlignOptionStack}($\opth,\stack,\optterm$)} \alglinelabel{algline:opt:align_stack_start}
		\IF{$|\opth|=0$} \alglinelabel{algline:opt:opth_empty_1}
		\STATE \textbf{return} $\textsc{AlignOptionStackHelper}(\opth,\stack,\optterm,0)$ \alglinelabel{algline:opt:opth_empty_2}
		\ELSE
		\STATE $\opt^{x,\phi}_{k,v,\Context} \leftarrow $ last option in $\opth$ \alglinelabel{algline:opt:opth_nonempty_start}
		\STATE $\textnormal{in\_stack},\textnormal{stack\_index} \leftarrow \textsc{OptionInStack}(\opt^{x,\phi}_{k,v,\Context}, \stack)$
		\IF{in\_stack}
		\STATE \textbf{return} $\textsc{AlignOptionStackHelper}(\opth,\stack,\optterm,\textnormal{stack\_index})$ \alglinelabel{algline:opt:opth_nonempty_end}
		\ENDIF
		\ENDIF
		\STATE \textbf{return} $\opth$
		\ENDFUNCTION
		
		\FUNCTION{\textsc{AlignOptionStackHelper}($\opth,\stack,\optterm,\textnormal{stack\_index}$)} \alglinelabel{algline:opt:align_stack_helper_start}
		\STATE $\opth' \leftarrow \opth$
		\STATE $\opt^{\cdot,\cdot}_{\cdot,\cdot,\Context} \leftarrow \textnormal{last option in } \optterm$ \COMMENT{Shallowest terminated option} \alglinelabel{alg:opt:get_shallowest_option}
		\STATE $\Context' \leftarrow \Context$ \COMMENT{Context initialized from last option}
		\FOR{$l=\textnormal{stack\_index}\ldots|\stack|-1$}
		\STATE $\langle u_f, \cdot, \rmname_i, \rmname_j,\phi,\cdot\rangle \leftarrow \stack_l$
		\STATE $\phi_{sel} \leftarrow$ Select disjunct from $\phi$ (e.g., randomly) \alglinelabel{alg:opt:get_disjunct}
		\STATE $\opth' \leftarrow \opth' \oplus \opt^{j,\phi_{sel}}_{i,u_f, \Context'}$ \COMMENT{Append new option to the option stack}
		\STATE $\Context' \leftarrow \dnf(\Context' \land \phi_{sel})$ \alglinelabel{alg:opt:new_context_formed}
		\ENDFOR
		\STATE \textbf{return} $\opth'$ \alglinelabel{algline:opt:align_stack_end} \alglinelabel{algline:opt:align_stack_helper_end}
		\ENDFUNCTION
	\end{algorithmic}
\end{algorithm}

\paragraph{Examples.}
We briefly describe some examples of how policy learning is performed in the HRM of Figure~\ref{fig:book_hierarchy}. We first enumerate the options in the hierarchy. The formula options are $\opt^{\top,\SugarcaneE}_{1,0,\neg\RabbitE}$, $\opt^{\top,\RabbitE}_{2,0,\top}$, $\opt^{\top,\SugarcaneE}_{1,0,\top}$, $\opt^{\top,\WorkbenchE}_{1,1,\top}$, $\opt^{\top,\WorkbenchE}_{2,1,\top}$, and $\opt^{\top,\TableE}_{0,3,\top}$.  The first option should lead the agent to observe the label $\{\Sugarcane\}$ to satisfy $\Sugarcane \land \neg\Rabbit$.  The Q-functions associated with this set of options are $\qfunc_{\SugarcaneE\land\neg\RabbitE}$, $\qfunc_{\RabbitE}$, $\qfunc_{\SugarcaneE}$, $\qfunc_{\WorkbenchE}$ and $\qfunc_{\TableE}$. Note that $\opt^{\top,\WorkbenchE}_{1,1,\top}$ and $\opt^{\top,\WorkbenchE}_{2,1,\top}$ are both associated with $\qfunc_{\WorkbenchE}$. Conversely, the call options are $\opt^{1,\neg\RabbitE}_{0,0,\top}$, $\opt^{2,\top}_{0,0,\top}$, $\opt^{2,\top}_{0,1,\top}$, and $\opt^{1,\top}_{0,2,\top}$, where the first one achieves its local goal if formula options $\opt^{\top,\SugarcaneE}_{1,0,\neg\RabbitE}$ and $\opt^{\top,\WorkbenchE}_{1,1,\top}$ sequentially achieve theirs. The associated Q-functions are $\qfunc_0$, $\qfunc_1$ and $\qfunc_2$. Note that $\opt^{2,\top}_{0,0,\top}$ and $\opt^{2,\top}_{0,1,\top}$ are both associated with $\qfunc_2$.

We now describe a few steps of the aforementioned option selection algorithm in two scenarios. First, we consider the scenario where all chosen options are run to completion (i.e., until their local goals are achieved):
\begin{enumerate}
	\item The initial hierarchy state is $\langle\rmname_0,u^0_0, \top, []\rangle$ and the option stack $\opth$ is empty. We select options to fill $\opth$. The first option is chosen from $u^0_0$ in $\rmname_0$ using a policy induced by $\qfunc_0$. At this state, the available options are $\opt^{1,\neg\RabbitE}_{0,0,\top}$ and $\opt^{2,\top}_{0,0,\top}$. Let us assume that the former is chosen. Then an option from the initial state of $\rmname_1$ under context $\neg\Rabbit$ is chosen, which can only be $\opt^{\top,\SugarcaneE}_{1,0,\neg\RabbitE}$. Since this option is a formula option (the call is made to $\leaf$), we do not select any more options and the option stack is $\opth=\langle\opt^{1,\neg\RabbitE}_{0,0,\top}, \opt^{\top,\SugarcaneE}_{1,0,\neg\RabbitE}\rangle$.
	
	\item The agent selects options according to the formula option in $\opth$, $\opt^{\top,\SugarcaneE}_{1,0,\neg\RabbitE}$, whose policy is induced by $\qfunc_{\SugarcaneE\land\neg\RabbitE}$. Let us assume that the policy tells the agent to turn right. Since the label at this location is empty, the hierarchy state remains the same; therefore, no options terminate, and the option stack does not change.
	
	\item Let us assume that the agent moves forward twice, thus observing $\{\Sugarcane\}$. The hierarchy state then becomes $\langle\rmname_1, u^1_1, \top, [\langle u^0_0, u^1_0, \rmname_0, \rmname_1, \neg\Rabbit, \top \rangle] \rangle$ (see Appendix~\ref{app:hierarchy_traversal_full_example} for a step-by-step application of the hierarchical transition function). We check which options in $\opth$ have terminated starting from the last chosen one. The formula option $\opt^{\top,\SugarcaneE}_{1,0,\neg\RabbitE}$ terminates because the hierarchy state has changed. In contrast, the call option $\opt^{1,\neg\RabbitE}_{0,0,\top}$ does not terminate since there is an item in the call stack, $\langle u^0_0, u^1_0, \rmname_0, \rmname_1, \neg\Rabbit, \top \rangle$ that can be mapped into it (meaning that the option is running).
	
	\item An experience $(\obstuple, \opt^{\top,\SugarcaneE}_{1,0,\neg\RabbitE}, \obstuple')$ is formed for the terminated option, where $\obstuple$ and $\obstuple'$ are the observed tuples on initiation and termination respectively. This tuple is added to the replay buffer associated with the RM where the option appears, $\mathcal{D}_1$, since it achieved its goal (i.e., a label that satisfied $\Sugarcane\land\neg\Rabbit$ was observed).
	
	\item We align $\opth$ with the new stack. In this case, $\opth$ remains unchanged since its only option can be mapped into an item of the new stack.
	
	\item We start a new step. Since the option stack does not contain a formula option, we select new options from the current hierarchy state according to a policy induced by $\qfunc_1$. In this case, there is a single eligible option: $\opt^{\top,\WorkbenchE}_{1,1,\top}$.
\end{enumerate}

In the second scenario, we observe what occurs when the HRM traversal differs from the options chosen by the agent:
\begin{enumerate}
	\item The initial step is like the one in the previous scenario, but we assume $\opt^{2,\top}_{0,0,\top}$ is selected instead. Then, since this is a call option, an option from the initial state of $\rmname_2$ under context $\top$ is chosen, which can only be $\opt^{\top,\RabbitE}_{2,0,\top}$. The option stack thus becomes $\opth=\langle\opt^{2,\top}_{0,0,\top},\opt^{\top,\RabbitE}_{2,0,\top}\rangle$.
	\item Let us assume that by taking actions according to $\opt^{\top,\RabbitE}_{2,0,\top}$ we end up observing $\{\Sugarcane\}$. Like in the previous scenario, the hierarchy state becomes $\langle\rmname_1, u^1_1, \top, [\langle u^0_0, u^1_0, \rmname_0, \rmname_1, \neg\Rabbit, \top \rangle] \rangle$. We check which options in $\opth$ have terminated. The formula option  $\opt^{\top,\RabbitE}_{2,0,\top}$ terminates since the hierarchy state has changed, and the call option $\opt^{2,\top}_{0,0,\top}$ also terminates since it cannot be mapped into an item of the call stack. Note that these options should intuitively finish since the HRM is being traversed through a path different from that chosen by the agent.
	\item The replay buffers are not updated for these options since they have not achieved their local goals.
	\item We align $\opth$ with the new stack. The only item of the stack $\langle u^0_0, u^1_0, \rmname_0, \rmname_1, \neg\Rabbit, \top \rangle$ can be mapped into option $\opt^{1,\neg\RabbitE}_{0,0,\top}$. We assume that this option starts on the same tuple $\obstuple$ and that it has run for the same number of steps as the last terminated option $\opt^{2,\top}_{0,0,\top}$.
\end{enumerate}

\section{HRM Learning Implementation Details}
\label{app:hierarchy_learning}
In this appendix, we present some implementation details omitted in Section~\ref{sec:hierarchy_learning}. First, we explain the specifics of our curriculum learning mechanism (Appendix~\ref{app:curriculum_learning}). Second, we describe how an HRM is learned from traces using ILASP (Appendix~\ref{app:learning_hrms_ilasp}). Finally, we describe additional  details of the algorithm that interleaves RL and HRM learning  (Appendix~\ref{app:interleaving_algorithm}).

\subsection{Curriculum Learning}
\label{app:curriculum_learning}
We here describe the details of the curriculum learning method described in Section~\ref{sec:hierarchy_learning}. When an episode is completed for $\mdp_{ij}$, $\avgreturn_{ij}$ is updated using the episode's undiscounted return $r$ as $\avgreturn_{ij}\leftarrow \beta \avgreturn_{ij} + (1-\beta)r$, where $\beta\in[0,1]$ is a hyperparameter. A score $\taskinstancescore_{ij}=1 - \avgreturn_{ij}$ is computed from the return and used to determine the probability of selecting tasks and instances. Note that this scoring function, also used in the curriculum method by \citet{AndreasKL17}, assumes that the undiscounted return ranges between 0 and 1 (see Section~\ref{sec:background}). The probability of choosing task $i$ is $\max_j \taskinstancescore_{ij} / \sum_k \max_l \taskinstancescore_{kl}$; that is, the task for which an instance is performing very poorly has a higher probability. Having selected task $i$, the probability of choosing instance $j$ is $\taskinstancescore_{ij}/\sum_k \taskinstancescore_{ik}$, i.e.~instances where performance is worse have a higher probability of being chosen.  The average undiscounted returns $R_{ij}$ for each task-instance pair are periodically updated using the undiscounted return obtained by the greedy policies in a single evaluation episode.

\subsection{Learning an HRM from Traces with ILASP}
\label{app:learning_hrms_ilasp}
We formalize the task of learning an HRM using ILASP~\citep{ILASP_system}, an inductive logic programming system that learns answer set programs (ASP) from examples. We refer the reader to \citet{GelfondK14} for an introduction to ASP, and to \citet{Law18} for ILASP. Our formalization is close to that by \citet{FurelosBlancoLJBR21} for flat finite-state machines. Without loss of generality, as stated in Section~\ref{sec:hierarchy_learning}, we assume that each RM has exactly one accepting and one rejecting state.  

We first describe how HRMs are represented in ASP (Appendix~\ref{app:hrm_asp_repr}), and then explain the encoding of the HRM learning task in ILASP (Appendix~\ref{app:hrm_ilasp_repr}). Finally, we detail the version of ILASP and the flags we use to run it (Appendix~\ref{app:ilasp_hyperparams}).

\subsubsection{Representation of an HRM in Answer Set Programming}
\label{app:hrm_asp_repr}
In this section, we explain how HRMs are represented using Answer Set Programming (ASP). First, we describe how traces are represented. Then, we present how HRMs themselves are represented and also introduce the general rules that describe the behavior of these hierarchies. Finally, we prove the correctness of the representation. We use $\asprepr(X)$ to denote the ASP representation of $X$ (e.g., a trace).

\begin{definition}[ASP representation of a label trace]
	\label{def:asp_trace_representation}
	Given a label trace $\trace=\langle \proplabel_0,\ldots,\proplabel_n\rangle$, $M(\trace)$ denotes the set of ASP facts that describe it:
	\begin{equation*}
		\asprepr(\trace) = \begin{matrix*}[l]
			\left\lbrace \mathtt{label(}p, t). \mid 0 \leq t \leq n, p \in \proplabel_t\right\rbrace \cup \\
			\left\lbrace\mathtt{step}(t). \mid 0 \leq t \leq n \right\rbrace \cup \\
			\left\lbrace\mathtt{ last(}n\mathtt{).}\right\rbrace.
		\end{matrix*}
	\end{equation*}
\end{definition}
The $\mathtt{label}(p,t)$ fact indicates that proposition $p \in \propset$ is observed in step $t$, $\mathtt{step}(t)$ states that $t$ is a step of the trace, and $\mathtt{last}(n)$ indicates that the trace ends in step $n$.

\begin{example}
	The set of ASP facts for the label trace $\trace=\langle\lbrace \Iron\rbrace,\lbrace\rbrace,\lbrace \Table\rbrace\rangle$ is $\asprepr(\trace)=\lbrace \mathtt{label}(\Iron,0).,\allowbreak~\mathtt{label}(\Table,2).,\allowbreak~\mathtt{step}(0).,\allowbreak~\mathtt{step}(1).,\allowbreak~\mathtt{step}(2).,\allowbreak~\mathtt{last}(2).\rbrace$.	
\end{example}

\begin{definition}[ASP representation of an HRM]
	Given an HRM $\hrmtuple$, $\asprepr(\hrm)=\bigcup_{\rmname_i \in \machineset \setminus \{\leaf\}} \asprepr(\rmname_i)$, where:
	\begin{align*}
		\asprepr(\rmname_i) &= \asprepr_\rmstates(\rmname_i)\cup \asprepr_\rmtransition(\rmname_i),\\
		\asprepr_\rmstates(\rmname_i) &= \begin{matrix}
			\left\lbrace\mathtt{state}(u,\rmname_i). \mid u \in \rmstates_i \right\rbrace,
		\end{matrix}
	\end{align*}
	{
		\small
		\begin{equation*}
			\asprepr_\varphi\left(\rmname_i\right)=\left\lbrace\begin{array}{@{}l|c@{}}
				\mathtt{call}(u,u',x+e,\rmname_i,\rmname_j).                                       &  \\
				\bar{\varphi}(u,u',x+e,\rmname_i,\mathtt{T) \codeif not~label(}p_1\mathtt{,T), step(T).} &  \rmname_j \in \machineset, u, u' \in \rmstates_i, \\
				\multicolumn{1}{c|}{\vdots}                                                              & \varphi_i(u,u',\rmname_j) \neq \bot, \\
				\bar{\varphi}(u,u',x+e,\rmname_i,\mathtt{T) \codeif not~label(}p_n\mathtt{,T), step(T).} & x=\sum^{j-1}_{k=0}|\varphi_i(u,u',\rmname_k)|, \\
				\bar{\varphi}(u,u',x+e,\rmname_i,\mathtt{T) \codeif label(}p_{n+1}\mathtt{,T), step(T).} & e \in [1, |\varphi_i(u,u',\rmname_j)|], \\
				\multicolumn{1}{c|}{\vdots}                                                              & \phi_{e} \in \varphi_i(u,u',\rmname_j), \\
				\bar{\varphi}(u,u',x+e,\rmname_i,\mathtt{T) \codeif label(}p_m,\mathtt{T), step(T).}     & \begin{aligned}\phi_{e}  &= p_1 \land \cdots \land p_n\\ &\land \neg p_{n+1} \land \cdots \land \neg p_m\end{aligned}
			\end{array}\right\rbrace.
		\end{equation*}
	}
\end{definition}
Note that each non-leaf RM $\rmname_i$ in the hierarchy is associated with its own set of rules $\asprepr(\rmname_i)$, which are described as follows:
\begin{itemize}
	\item Facts $\mathtt{state}(u,\rmname_i)$ indicate that $u$ is a state of RM $\rmname_i$.
	\item Facts $\mathtt{call}(u,u',e,\rmname_i,\rmname_j)$ indicate that edge $e$ between states $u$ and $u'$ in RM $\rmname_i$ is labeled with a call to RM $\rmname_j$.
	\item Normal rules whose \emph{head} is of the form $\bar{\varphi}(u,u',e,\rmname_i, \mathtt{T})$ indicate that the transition from state $u$ to $u'$ with edge $e$ in RM $\rmname_i$ does not hold at step $\mathtt{T}$. The \emph{body} of these rules consists of a single $\mathtt{label}(p, \mathtt{T})$ literal and a $\mathtt{step(T)}$ atom indicating that $\mathtt{T}$ is a step. Commonly, variables are represented using upper case letters in ASP, which is the case of steps $\mathtt{T}$ here.
\end{itemize}
There are some important things to take into account regarding the encoding: 
\begin{itemize}
	\item There is no leaf RM $\leaf$. We later introduce the ASP rules to emulate it.
	\item The edge identifiers $e$ between a given pair of states $(u,u')$ range from 1 to the total number of conjunctions/disjuncts between them. Note that in $\asprepr_\rmtransition$ we assume that the leaf RM has an index, just like the other RMs in the HRM. The index could be $n$ since the rest are numbered from 0 to $n-1$.
\end{itemize}

\begin{example}
	The following rules represent the HRM in Figure~\ref{fig:book_hierarchy}:
	\begingroup
	\allowdisplaybreaks
	\begin{align*}
		\scriptsize
		\begin{matrix*}[l]
			\begin{Bmatrix*}[l]
				\mathtt{state}(u^0_0,\rmname_0).~ \mathtt{state}(u^1_0,\rmname_0).~ \mathtt{state}(u^2_0,\rmname_0). &
				\mathtt{state}(u^3_0,\rmname_0).~
				\mathtt{state}(u^A_0,\rmname_0). \\
				\mathtt{call}(u^0_0, u^1_0, 1, \rmname_0, \rmname_1).~\mathtt{call}(u^0_0, u^2_0, 1, \rmname_0, \rmname_2). & \mathtt{call}(u^1_0, u^3_0, 1, \rmname_0, \rmname_2).~\mathtt{call}(u^2_0, u^3_0, 1, \rmname_0, \rmname_1). \\
				\mathtt{call}(u^3_0, u^A_0, 1, \rmname_0, \rmname_\top). & \bar{\varphi}(u^0_0,u^1_0,1,\rmname_0,\mathtt{T}) \codeif \mathtt{label}(\Rabbit, \mathtt{T),step(T).}
				\\
				\bar{\varphi}(u^3_0,u^A_0,1,\rmname_0,\mathtt{T}) \codeif \mathtt{not~label}(\Table, \mathtt{T),step(T).}
			\end{Bmatrix*} \cup\\
			\begin{Bmatrix*}[l]
				\mathtt{state}(u^0_1,\rmname_1).~ \mathtt{state}(u^1_1,\rmname_1).~ \mathtt{state}(u^A_1,\rmname_1). & \mathtt{call}(u^0_1, u^1_1, 1, \rmname_1, \rmname_\top).~\mathtt{call}(u^1_1, u^A_1, 1, \rmname_1, \rmname_\top).\\
				\bar{\varphi}(u^0_1,u^1_1,1,\rmname_1,\mathtt{T}) \codeif \mathtt{not~label}(\Sugarcane, \mathtt{T),step(T).} & \bar{\varphi}(u^1_1,u^A_1,1,\rmname_1,\mathtt{T}) \codeif \mathtt{not~label}(\Workbench, \mathtt{T),step(T).}
			\end{Bmatrix*} \cup\\
			\begin{Bmatrix*}[l]
				\mathtt{state}(u^0_2,\rmname_2).~ \mathtt{state}(u^1_2,\rmname_2).~ \mathtt{state}(u^A_2,\rmname_2). & \mathtt{call}(u^0_2, u^1_2, 1, \rmname_2, \rmname_\top).~\mathtt{call}(u^1_2, u^A_2, 1, \rmname_2, \rmname_\top).\\
				\bar{\varphi}(u^0_2,u^1_2,1,\rmname_2,\mathtt{T}) \codeif \mathtt{not~label}(\Rabbit, \mathtt{T),step(T).} & \bar{\varphi}(u^1_2,u^A_2,1,\rmname_2,\mathtt{T}) \codeif \mathtt{not~label}(\Workbench, \mathtt{T),step(T).}
			\end{Bmatrix*}.
		\end{matrix*}
	\end{align*}
	\endgroup
\end{example}

\paragraph{General Rules.} The following sets of rules, whose union is denoted by $\generalrules=\cup_{i=0}^5\generalrules_i$, represent how an HRM functions (e.g., how transitions are taken or the acceptance/rejection criteria). For simplicity, all initial, accepting and rejecting states are denoted by $u^0$, $u^A$ and $u^R$ respectively.

The rule below is the inversion of the negation of the state transition function $\bar{\rmtransition}$. Note that the predicate for $\rmtransition$ includes the called RM $\mathtt{M2}$ as an argument.
\begin{align*}
	\generalrules_0 = \begin{Bmatrix*}[l]
		\mathtt{\rmtransition(X,Y,E,M,M2,T) \codeif not~\bar{\rmtransition}(X,Y,E,M,T), call(X,Y,E,M,M2), step(T).}
	\end{Bmatrix*}.
\end{align*}

The rule set $\generalrules_1$ introduces the $\mathtt{pre\_sat(X,M,T)}$ predicate, which encodes the exit condition presented in Section~\ref{sec:formalism} and indicates whether a call from state $\mathtt{X}$ of RM $\mathtt{M}$ can be started at time $\mathtt{T}$. The first rule corresponds to the base case and indicates that if the leaf $\leaf$ is called then the condition is satisfied if the associated formula is satisfied. The second rule applies to calls to non-leaf RMs, where we need to satisfy the context of the call (like in the base case), and also check whether a call from the initial state of the potentially called RM can be started.
\begin{align*}
	\generalrules_1=
	\begin{Bmatrix*}[l]
		\mathtt{pre\_sat(X,M,T) \codeif \varphi(X,\_,\_,M}, \leaf\mathtt{,T).}\\
		\mathtt{pre\_sat(X,M,T) \codeif \varphi(X,\_,\_,M, M2,T), pre\_sat(}u^0, \mathtt{M2, T), M2!\mathord{=}}\leaf.
	\end{Bmatrix*}.
\end{align*}

The rule set $\generalrules_2$ introduces the $\mathtt{reachable(X,M,TO,T2)}$ predicate, which  indicates that state $\mathtt{X}$ of RM $\mathtt{M}$ is reached between steps $\mathtt{T0}$ and $\mathtt{T2}$. The latter step can also be seen as the step we are currently at. The first fact indicates that the initial state of the root RM is reached from step $\mathtt{0}$ to step $\mathtt{0}$. The second rule indicates that the initial state of a non-root RM is reached from step $\mathtt{T}$ to step $\mathtt{T}$ (i.e., it is reached anytime). The third rule represents the loop transition in the initial state of the root $\rmname_r$: we stay there if no call can be started at $\mathtt{T}$ (i.e., we are not moving in the HRM). The fourth rule is analogous to the third but for the accepting state of the root instead of the initial state. Remember this is the only accepting state in the HRM that does not return control to the calling RM. The fifth rule is also similar to the previous ones: it applies to states reached after $\mathtt{TO}$ that are non-accepting, which excludes looping in initial states of non-root RMs at the time of starting them (i.e., loops are permitted in the initial state of a non-root RM if we can reach it afterwards by going back to it). The last rule indicates that $\mathtt{Y}$ is reached at step $\mathtt{T2}$ in RM $\mathtt{M}$ started at $\mathtt{T0}$ if there is an outgoing transition from the current state $\mathtt{X}$ to $\mathtt{Y}$ at time $\mathtt{T}$ that holds between $\mathtt{T}$ and $\mathtt{T2}$, and state $\mathtt{X}$ has been reached between $\mathtt{T0}$ and $\mathtt{T}$. We will later see how $\delta$ is defined.
\begin{align*}
	\generalrules_2=
	\begin{Bmatrix*}[l]
		\mathtt{reachable}(u^0, \rmname_r, \mathtt{0, 0}).\\
		\mathtt{reachable}(u^0, \mathtt{M, T, T) \codeif state}(u^0, \mathtt{M), M !{=}} \rmname_r,\mathtt{ step(T).}\\
		{
			\arraycolsep=1.4pt
			\begin{matrix*}[l]
				\mathtt{reachable(X, M, T0, T\mathord{+}1)}  \codeif& \mathtt{reachable(X, M, T0, T), not~pre\_sat(X,M,T), } \\
				&\mathtt{step(T),X\mathord{=}}u^0, \mathtt{M}\mathord{=}\rmname_r.
			\end{matrix*}
		}\\
		{
			\arraycolsep=1.4pt
			\begin{matrix*}[l]
				\mathtt{reachable(X, M, T0, T\mathord{+}1)} \codeif& \mathtt{reachable(X, M, T0, T), not~pre\_sat(X, M, T), }\\
				&\mathtt{step(T),X}\mathord{=}u^A\mathtt{, M}\mathord{=}\rmname_r.\\
			\end{matrix*}
		}\\
		{
			\arraycolsep=1.4pt
			\begin{matrix*}[l]		
				\mathtt{reachable(X, M, T0, T\mathord{+}1)} \codeif& \mathtt{reachable(X, M, T0, T), not~pre\_sat(X,M,T), }\\ 
				&\mathtt{step(T),TO\mathord{<}T, X!\mathord{=}}u^A.\\
			\end{matrix*}
		}\\
		\mathtt{reachable(Y, M, T0, T2) \codeif reachable(X, M, T0, T), \delta(X, Y, M, T, T2).}
	\end{Bmatrix*}.
\end{align*}

The rule set $\generalrules_3$ introduces two predicates. The predicate $\mathtt{satisfied(M, T0, TE)}$ indicates that RM $\mathtt{M}$ is satisfied if its accepting state $u^A$ is reached between steps $\mathtt{T0}$ and $\mathtt{TE}$. Likewise, the predicate $\mathtt{failed(M, T0, TE)}$ indicates that RM $\mathtt{M}$ fails if its rejecting state $u^R$ is reached between steps $\mathtt{T0}$ and $\mathtt{TE}$. These two descriptions correspond to the first and third rules. The second rule applies to the leaf RM $\leaf$, which always returns control immediately; thus, it is always satisfied between any two consecutive steps.
\begin{align*}
	\generalrules_3=\begin{Bmatrix*}[l]
		\mathtt{satisfied(M, T0, TE) \codeif reachable(}u^A\mathtt{, M, T0, TE).}\\
		\mathtt{satisfied}(\leaf\mathtt{, T, T\mathord{+}1) \codeif step(T).}\\
		\mathtt{failed(M, T0, TE) \codeif reachable(}u^R\mathtt{, M, T0, TE).}
	\end{Bmatrix*}
\end{align*}

The following set, $\generalrules_4$, encodes multi-step transitions within an RM. The predicate $\mathtt{\delta(X, Y, M, T, T2)}$ expresses that the transition from state $\mathtt{X}$ to state $\mathtt{Y}$ in RM $\mathtt{M}$ is satisfied between steps $\mathtt{T}$ and $\mathtt{T2}$. The first rule indicates that this occurs if the context labeling a call to an RM $\mathtt{M2}$ is satisfied and that RM is also satisfied (i.e., its accepting state is reached) between these two steps. In contrast, the second rule is used for the case in which the rejecting state of the called RM is reached between those steps. In the latter case, we transition to the local rejecting state $u^R$ of $\mathtt{M}$ (i.e., the state we would have transitioned to does not matter). This follows from the assumption that rejecting states are global rejectors (see Section~\ref{sec:formalism}). The idea of this rule is that rejection is propagated bottom-up in the HRM.
\begin{align*}
	\generalrules_4 = \begin{Bmatrix*}[l]
		\mathtt{\delta(X, Y, M, T, T2) \codeif \varphi(X, Y, \_, M, M2, T), satisfied(M2, T, T2).}\\
		\mathtt{\delta(X,} u^R, \mathtt{M, T, T2) \codeif \varphi(X, \_, \_, M, M2, T), failed(M2, T, T2).}
	\end{Bmatrix*}.
\end{align*}

The last set, $\generalrules_5$, encodes the accepting/rejecting criteria. Remember that the $\mathtt{last(T)}$ predicate indicates that $\mathtt{T}$ is the last step of a trace. Therefore, the trace is accepted if the root RM is satisfied from the initial step $\mathtt{0}$ to step $\mathtt{T+1}$ (the step after the last step of the trace, once the final label has been processed). In contrast, the trace is rejected if a rejecting state in the hierarchy is reached between these two same steps.
\begin{align*}
	\generalrules_5=\begin{Bmatrix*}[l]
		\mathtt{accept \codeif last(T), satisfied(}\rmname_r,\mathtt{0, T\mathord{+}1).}\\
		\mathtt{reject \codeif last(T), failed(}\rmname_r\mathtt{, 0, T\mathord{+}1).}
	\end{Bmatrix*}
\end{align*}

Unlike the formalism introduced in Section~\ref{sec:formalism}, this encoding does not use stacks, which would be costly to do. Here we know the trace to be processed and, therefore, the RMs can be evaluated bottom-up; that is, we start evaluating the lowest level RMs first on different subtraces, and the result of this evaluation is used in higher level RMs.

We now prove the correctness of the ASP encoding. To do so, we first introduce what means for an HRM to be valid with respect to a trace, as well as a definition and a theorem due to \citet{GelfondL88} that will help us derive the proof.
\begin{definition}
	Given a label trace $\trace^*$, where $\ast \in \lbrace G,D,I\rbrace$, an HRM $\hrm$ is valid with respect to $\trace^*$ if $\hrm$ accepts $\trace^*$ and $\ast=G$ (i.e., $\trace^*$ is a goal trace), or $\hrm$ rejects $\trace^*$ and $\ast=D$ (i.e., $\trace^*$ is a dead-end trace), or $\hrm$ does not accept nor reject $\trace^*$ and $\ast=I$ (i.e., $\trace^*$ is an incomplete trace).
	\label{def:valid_trace}
\end{definition}
\begin{definition}
	An ASP program $\aspprogram$ is stratified when there is a partition 
	\begin{equation*}
		\aspprogram = \aspprogram_0 \cup \aspprogram_1 \cup \cdots \cup \aspprogram_n
		\tag{$\aspprogram_i$ and $\aspprogram_j$ disjoint for all $i\neq j$}
	\end{equation*}
	such that, (1)~for every predicate $p$, the definition of $p$ (all clauses with $p$ in the head) is contained in one of the partitions $\aspprogram_i$	and, (2) ~for each $1 \leq i \leq n$,  if a predicate occurs positively in a clause of $\aspprogram_i$ then its definition is contained within $\bigcup_{j \leq i} \aspprogram_j$, and if a predicate occurs negatively in a clause of $\aspprogram_i$ then its definition is contained within $\bigcup_{j < i} \aspprogram_j$.
	\label{def:stratified_program}
\end{definition}
\begin{theorem}
	If an ASP program $\aspprogram$ is stratified, then it has a unique answer set.
	\label{theorem:stratification_unique_as}
\end{theorem}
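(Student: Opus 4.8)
The plan is to prove the theorem constructively: I would build a candidate answer set $M$ by iterating a least-fixpoint computation of a definite (negation-free) program one stratum at a time, and then argue both that $M$ is an answer set and that it is the only one.

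The construction proceeds by induction over the partition $\aspprogram = \aspprogram_0 \cup \cdots \cup \aspprogram_n$. For the base stratum $\aspprogram_0$, condition~(2) of Definition~\ref{def:stratified_program} forces every negative body literal to be defined in a strictly lower stratum, of which there are none; hence all predicates defined in $\aspprogram_0$ occur only positively and $\aspprogram_0$ behaves as a definite program. Its unique least Herbrand model $M_0$ is the least fixpoint of the immediate-consequence operator $T_{\aspprogram_0}$, which exists by the Knaster--Tarski theorem since $T_{\aspprogram_0}$ is monotone. Inductively, suppose $M_{i-1}$ fixes the truth values of all predicates defined in $\aspprogram_0 \cup \cdots \cup \aspprogram_{i-1}$. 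By condition~(2), every negative literal occurring in $\aspprogram_i$ refers to a predicate whose definition lies in a strictly lower stratum, so its value is already determined by $M_{i-1}$. Replacing these negative literals by their truth values turns $\aspprogram_i$ (together with the atoms of $M_{i-1}$ taken as facts) into a definite program over the predicates newly defined in $\aspprogram_i$; I take $M_i$ to be its least model. Setting $M = M_n$ gives the candidate.

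To show $M$ is an answer set, I would verify that $M$ coincides with the least model of the Gelfond--Lifschitz reduct $\aspprogram^M$. The key point is that, because negation is stratified, the reduct deletes exactly the rules with a negative literal whose atom is true in $M$ and strips exactly the negative literals whose atoms are false in $M$; a stratum-by-stratum argument then shows the least model of $\aspprogram^M$ equals $M_i$ at each level, so $M$ is stable. For uniqueness, I would take an arbitrary answer set $N$ and prove $N = M$ by induction on the strata: on $\aspprogram_0$ both are least models of the same definite program, and at stratum $i$ the negative literals are frozen identically (since $N$ and $M$ agree below $i$ by the induction hypothesis), whence $N$ and $M$ are again the least model of the same reduced definite program and therefore coincide.

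The main obstacle is the verification that the iterated least-fixpoint model genuinely satisfies the stability (reduct) condition, and that the result is independent of which valid stratification is chosen; both require careful bookkeeping that the value of each negated atom is completely pinned down by the lower strata before stratum $i$ is evaluated, so that the freezing of negative literals is well-defined and agrees across all admissible stratifications. Monotonicity of each $T_{\aspprogram_i}$ and the minimality of the least fixpoints are the routine ingredients that make the inductive comparison between $M$ and an arbitrary answer set $N$ go through.
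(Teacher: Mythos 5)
Your proposal is a correct outline, but there is nothing in the paper to compare it against: the paper does not prove this theorem at all. It is stated as an imported classical result, attributed to Gelfond and Lifschitz (1988) (``a definition and a theorem due to \citet{GelfondL88}''), and is used only as a black box in the proof of Proposition~\ref{prop:asp_correctness}, where the authors exhibit an explicit stratification of $\aspprogram=\asprepr(\hrm) \cup \generalrules \cup \asprepr(\trace^\ast)$ and invoke the theorem to conclude uniqueness of the answer set. What you have written is essentially the standard textbook proof of that classical fact: the iterated least-fixpoint (perfect-model) construction of Apt--Blair--Walker, followed by the verification that the resulting model is stable and that any stable model must agree with it stratum by stratum. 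Two small remarks. First, Definition~\ref{def:stratified_program} as printed only imposes condition~(2) for $1 \leq i \leq n$, so $\aspprogram_0$ is not literally constrained; your reading (that $\aspprogram_0$ contains no negation of defined predicates, hence is effectively definite) is the intended one, and without it the theorem would be false, e.g.\ for $\aspprogram_0=\{a \codeif \mathtt{not}\ b.\ \ b \codeif \mathtt{not}\ a.\}$. Second, the independence-of-stratification bookkeeping you flag as an obstacle is not actually needed for the statement: ``answer set'' is defined without reference to a stratification, so once you show that the model built from \emph{one} admissible stratification is the unique answer set, you are done. Your uniqueness induction also tacitly uses condition~(1) (each predicate's definition lives in a single stratum) to ensure that higher-stratum rules cannot derive lower-stratum atoms in the reduct; that is worth making explicit, but it is routine.
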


\begin{restatable}[Correctness of the ASP encoding]{proposition}{aspcorrectness}
	Given a finite label trace $\trace^\ast$, where $\ast \in \lbrace G, D, I\rbrace$, and an HRM $\hrmtuple$ that is valid with respect to $\trace^\ast$, the program $\aspprogram=\asprepr(\hrm) \cup \generalrules \cup \asprepr(\trace^\ast)$ has a unique answer set $AS$ and (1) $\mathtt{accept}\in \answerset$ if and only if $\ast = G$, and (2) $\mathtt{reject} \in \answerset$ if and only if $\ast = D$.
	\label{prop:asp_correctness}
\end{restatable}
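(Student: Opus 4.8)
The plan is to split the claim into two independent parts: (i) that $\aspprogram$ has a unique answer set $\answerset$, and (ii) that the atoms $\mathtt{accept}$ and $\mathtt{reject}$ faithfully record whether $\hrm$ accepts or rejects $\trace^\ast$. Once (ii) is established, the two equivalences with $\ast=G$ and $\ast=D$ follow immediately from the validity assumption (Definition~\ref{def:valid_trace}), since validity says $\hrm$ accepts $\trace^\ast$ iff $\ast=G$ and rejects it iff $\ast=D$.

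For (i), I would exhibit an explicit stratification of $\aspprogram$ and invoke Theorem~\ref{theorem:stratification_unique_as}. The only negatively-occurring predicates are $\mathtt{label}$ (negated in the bodies of some $\bar{\varphi}$ rules), $\bar{\varphi}$ (negated in $\generalrules_0$ to define the transition predicate $\varphi$), and $\mathtt{pre\_sat}$ (negated in the loop rules of $\generalrules_2$ defining $\mathtt{reachable}$). A suitable partition places the EDB facts $\mathtt{label}, \mathtt{step}, \mathtt{last}, \mathtt{state}, \mathtt{call}$ at the bottom, then $\bar{\varphi}$, then $\varphi$, then $\mathtt{pre\_sat}$, then the mutually (positively) recursive block $\{\mathtt{reachable}, \delta, \mathtt{satisfied}, \mathtt{failed}\}$, and finally $\{\mathtt{accept}, \mathtt{reject}\}$. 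The verification reduces to checking that each of the three negative edges crosses strictly upward — in particular that $\mathtt{pre\_sat}$ lies strictly below $\mathtt{reachable}$, which holds because $\mathtt{pre\_sat}$ never positively depends on any predicate of the recursive block (its rules only consult $\varphi$ and $\mathtt{pre\_sat}$ of called RMs). Since Definition~\ref{def:stratified_program} permits positively-occurring predicates to share a stratum, this is a valid stratification and Theorem~\ref{theorem:stratification_unique_as} yields the unique $\answerset$.

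For (ii), I would prove by induction on the height of a constituent RM $M$ a correspondence lemma tying the derived atoms to running $M$ as a standalone root on the trace: for all steps $T_0 \le T_2$, $\mathtt{reachable}(X, M, T_0, T_2) \in \answerset$ iff the traversal of $M$ begun at step $T_0$ sits in hierarchy state $\langle M, X, \top, [] \rangle$ after consuming $\proplabel_{T_0}, \ldots, \proplabel_{T_2-1}$; and $\mathtt{satisfied}(M, T_0, T_E)$ (resp.\ $\mathtt{failed}(M, T_0, T_E)$) holds iff that run reaches $u^A_M$ (resp.\ a rejecting state) at step $T_E$. In the base case ($M$ calls only $\leaf$) the fact $\mathtt{satisfied}(\leaf, T, T{+}1)$ collapses $\delta$ to single-step transitions governed by whether the edge formula holds at $T$ ($\varphi$ being the complement of $\bar{\varphi}$), and the loop rules reproduce the ``stay put'' rule of flat RMs; here I would check that $\mathtt{pre\_sat}(X,M,T)$ coincides with satisfaction of the exit condition $\excond_{M,X,\top}$. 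In the inductive step I would use the hypothesis on a called RM $M_2$ of strictly smaller height to read $\generalrules_4$: its first rule realises a successful multi-step transition $X\to Y$ exactly when the context holds at $T$ (via $\varphi(X,Y,\_,M,M_2,T)$) and $M_2$ started at $T$ succeeds by $T_2$, while its second rule propagates rejection upward to the global rejecting state, matching the HRM's global-rejection assumption. A crucial point to verify is that a non-root RM cannot loop in its initial state at its first step — the loop rule for non-root, non-accepting states requires $T_0 < T$, and the root-specific loop rules do not apply — so a call is ``taken'' only when the first observed label immediately triggers progress out of $u^0_{M_2}$, which is precisely what $\hrmtrans$ packages into the exit condition $\excond_{j,u^0_j,\dnf(\Context\land\phi)}$. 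Specialising the lemma to $M=\rmname_r$, $T_0=0$, $T_E=n{+}1$ and reading off $\generalrules_5$ then gives $\mathtt{accept}\in\answerset$ iff $\hrm$ accepts $\trace^\ast$ and $\mathtt{reject}\in\answerset$ iff $\hrm$ rejects it.

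The main obstacle is the induction in (ii): reconciling the bottom-up ASP evaluation, which independently contemplates a start time $T_0$ for every sub-RM and every step pair, with the single deterministic stack-based traversal produced by $\hrmtrans$. The determinism of the HRM (guaranteed by the contexts, Section~\ref{sec:formalism}) is what makes the set of states reachable from $\langle \rmname_r, u^0_r, \top, [] \rangle$ at step $0$ a single path rather than a branching family, so that the many candidate sub-computations the encoding considers collapse onto the unique traversal $\hrm(\trace^\ast)$. Making this collapse precise — together with the bookkeeping of ``effective'' steps (labels consumed by nested calls) and the initial-state and loop-rule subtleties noted above — is where the bulk of the work lies.
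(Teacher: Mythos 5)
Your proposal is correct and follows essentially the same route as the paper's proof: stratify $\aspprogram$ (your finer partition refines the paper's four strata $\asprepr(\trace^\ast)$, $\asprepr(\hrm)$, $\generalrules_0\cup\generalrules_1$, $\generalrules_2\cup\cdots\cup\generalrules_5$ and is equally valid) and invoke the uniqueness theorem, then characterize the atoms $\mathtt{reachable}$, $\mathtt{satisfied}$, and $\mathtt{failed}$ in terms of traversals of each constituent RM started at arbitrary steps, and finally read off $\generalrules_5$ together with the validity assumption. The only difference is one of explicitness: the paper simply exhibits the unique answer set stratum by stratum (Figure~\ref{fig:answer_set_proof}) and asserts the correspondence, whereas you propose to justify that correspondence by induction on RM height, which is precisely the argument the paper leaves implicit.
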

\begin{proof}
	First, we prove that the program $\aspprogram=\asprepr(\hrm) \cup \generalrules \cup \asprepr(\trace^\ast)$, where $\generalrules = \bigcup_{i=0}^5\generalrules_i$, has a unique answer set. By Theorem~\ref{theorem:stratification_unique_as}, if $\aspprogram$ is stratified then it has a unique answer set. We show there is a way of partitioning $\aspprogram$ following the constraints in Definition~\ref{def:stratified_program}. A possible partition is $\aspprogram=\aspprogram_0 \cup \aspprogram_1 \cup \aspprogram_2 \cup \aspprogram_3$, where $\aspprogram_0=\asprepr(\trace^\ast)$, $\aspprogram_1 = \asprepr(\hrm)$, $\aspprogram_2 = \generalrules_0 \cup \generalrules_1$, $\aspprogram_3 = \generalrules_2 \cup \generalrules_3 \cup \generalrules_4 \cup \generalrules_5$. The unique answer set $\answerset = \answerset_0 \cup \answerset_1 \cup \answerset_2 \cup \answerset_3$, where $\answerset_i$ corresponds to partition $\aspprogram_i$, is shown in Figure~\ref{fig:answer_set_proof}. For simplicity, $\trace^\ast[t]$ denotes the $t$-th label in trace $\trace^\ast$, $\trace^\ast[t:]$ denotes the subtrace starting from the $t$-th label onwards, and $\rmname_i(\trace^\ast)$ denotes the hierarchy traversal using RM $\rmname_i$ as the root.
	
	\begin{figure}
		\begingroup
		\allowdisplaybreaks 
		\small
		\begin{align*}
			\answerset_0 &= \left\lbrace \mathtt{label(}p, t). \mid 0 \leq t \leq n, p \in \proplabel_t\right\rbrace \cup
			\left\lbrace\mathtt{step}(t). \mid 0 \leq t \leq n \right\rbrace \cup
			\left\lbrace\mathtt{ last(}n\mathtt{).}\right\rbrace,\\\\
			\answerset_1 &= \begin{matrix*}[l]
				\left\lbrace\mathtt{state}(u,\rmname_i). \mid \rmname_i \in \machineset\setminus\{\leaf\}, u \in \rmstates_i \right\rbrace\cup\\
				\left\lbrace\begin{array}{@{}l|l@{}}
					\mathtt{call}(u,u',x+e,\rmname_i,\rmname_j). & \rmname_i \in \machineset \setminus \{\leaf\}, \rmname_j \in \machineset, u,u' \in \rmstates_i, \rmtransition_i(u,u',\rmname_j) \neq \bot,\\
					& x = \sum_{k=0}^{j-1}|\rmtransition_i(u,u',\rmname_k)|, e \in [1, |\rmtransition_i(u,u',\rmname_j)|]
				\end{array}\right\rbrace\cup\\
				\left\lbrace\begin{array}{@{}l|l@{}}
					\bar{\varphi}(u,u',x+e,\rmname_i,t). &  0\leq t \leq n, \rmname_i \in \machineset \setminus \{\leaf\}, \rmname_j \in \machineset, u,u' \in \rmstates_i, \\
					& \varphi_i(u,u',\rmname_j) \neq  \bot, x = \sum_{k=0}^{j-1}|\varphi_i(u,u',\rmname_k)|, \\
					& e \in [1, |\varphi_i(u,u',\rmname_j)|],\trace^\ast[t] \not\models \varphi_i(u,u',\rmname_j)[e]
				\end{array}\right\rbrace
			\end{matrix*},\\\\
			\answerset_2 &= \begin{matrix*}[l]
				\left\lbrace\begin{array}{@{}l|l@{}}
					\varphi(u,u',x+e,\rmname_i,t). &  0\leq t \leq n, \rmname_i \in \machineset \setminus \{\leaf\}, \rmname_j \in \machineset, u,u' \in \rmstates_i, \\
					& \varphi_i(u,u',\rmname_j) \neq  \bot, x = \sum_{k=0}^{j-1}|\varphi_i(u,u',\rmname_k)|,  \\
					&e \in [1, |\varphi_i(u,u',\rmname_j)|],\trace^\ast[t] \models \varphi_i(u,u',\rmname_j)[e]
				\end{array}\right\rbrace\cup\\
				\left\lbrace\begin{array}{@{}l|l@{}}
					\mathtt{pre\_sat}(u,\rmname_i,t). & 0 \leq t \leq n, \rmname_i \in \machineset \setminus \{\leaf\}, u \in \rmstates_i, \trace^\ast[t] \models \excond_{i,u,\top}
				\end{array}\right\rbrace
			\end{matrix*},\\\\
			\answerset_3 &= \begin{matrix*}[l]
				\left\lbrace\mathtt{reachable}(u^0,\rmname_r,0,0).  \right\rbrace \cup\\
				\left\lbrace\mathtt{reachable}(u^0,\rmname_i,t,t). \mid  0 \leq t \leq n, \rmname_i \in \machineset \setminus \{\leaf, \rmname_r\}, u^0 \in \rmstates_i \right\rbrace \cup\\	
				\left\lbrace\begin{array}{@{}l|l@{}}
					\mathtt{reachable}(u,\rmname_r,t_1,t_2). & 0 \leq t_1 < t_2 \leq n+1, u \in \rmstates_r,  \\
					&\hrm(\trace^\ast[t_1:])[t_2-t_1] = \langle \rmname_r, u,\cdot, \cdot \rangle
				\end{array}\right\rbrace\cup\\				
				\left\lbrace\begin{array}{@{}l|l@{}}
					\mathtt{reachable}(u,\rmname_i,t_1,t_2). & 0\leq t_1 < t_2 \leq n+1, \rmname_i \in \machineset \setminus \{\rmname_r, \leaf\}, u \in \rmstates_i,\\
					& \trace^\ast[t_1] \models \excond_{i,u^0,\top},\\
					& \rmname_i(\trace^\ast[t_1:])[t_2-t_1] = \langle \rmname_i, u, \cdot, \cdot \rangle, \\
					& \rmname_i(\trace^\ast[t_1:])[t_2-t_1-1] \neq \langle\rmname_i, u^A,  \cdot, \cdot \rangle
				\end{array}\right\rbrace\cup\\
				\left\lbrace \mathtt{satisfied}(\rmname_r,t_1,t_2) \mid 0\leq t_1 < t_2 \leq n+1, \hrm(\trace^\ast[t_1:])[t_2-t_1] = \langle \rmname_r,  u^A, \cdot, \cdot \rangle \right\rbrace\\
				\left\lbrace\begin{array}{@{}l|l@{}}
					\mathtt{satisfied}(\rmname_i,t_1,t_2). & 0\leq t_1 < t_2 \leq n+1, \rmname_i \in \machineset \setminus \{\rmname_r, \leaf\}, \\
					& \trace^\ast[t_1] \models \excond_{i,u^0,\top},\\
					& \rmname_i(\trace^\ast[t_1:])[t_2-t_1] = \langle \rmname_i, u^A, \cdot, \cdot \rangle, \\
					& \rmname_i(\trace^\ast[t_1:])[t_2-t_1-1] \neq \langle \rmname_i, u^A, \cdot, \cdot \rangle
				\end{array}\right\rbrace\cup\\
				\left\lbrace \mathtt{satisfied}(\leaf, t, t+1) \mid 0 \leq t \leq n \right\rbrace \cup\\
				\left\lbrace \mathtt{failed}(\rmname_r,t_1,t_2) \mid 0\leq t_1 < t_2 \leq n+1, \hrm(\trace^\ast[t_1:])[t_2-t_1] = \langle \cdot, u^R,  \cdot, \cdot \rangle \right\rbrace \cup\\
				\left\lbrace\begin{array}{@{}l|l@{}}
					\mathtt{failed}(\rmname_i,t_1,t_2). & 0\leq t_1 < t_2 \leq n+1, \rmname_i \in \machineset \setminus \{\rmname_r, \leaf\}, \\
					& \trace^\ast[t_1] \models \excond_{i,u^0,\top},\\
					& \rmname_i(\trace^\ast[t_1:])[t_2-t_1] = \langle \cdot,u^R,  \cdot, \cdot \rangle
				\end{array}\right\rbrace\cup\\
				\left\lbrace\begin{array}{@{}l|l@{}}
					\delta(u,u',\rmname_i,t,t+1). &  0\leq t \leq n, \rmname_i \in \machineset \setminus \{\leaf\}, u,u' \in \rmstates_i,\\
					& \trace^\ast[t_1] \models \varphi_i(u,u',\leaf)
				\end{array}\right\rbrace\cup\\
				\left\lbrace\begin{array}{@{}l|l@{}}
					\delta(u,u',\rmname_i,t_1,t_2). &  0\leq t_1 < t_2 \leq n+1, \rmname_i \in \machineset \setminus \{\leaf\}, u,u' \in \rmstates_i,\\
					& \exists \rmname_j \in \machineset\setminus\{\leaf\} \textnormal{ s.t. } \phi = \varphi_i(u,u',\rmname_j), \trace^\ast[t_1] \models \excond_{j,u^0,\phi},\\
					&\rmname_j(\trace^\ast[t_1:])[t_2-t_1] = \langle \rmname_j,u^A,\cdot,\cdot\rangle,\\
					& \rmname_j(\trace^\ast[t_1:])[t_2-t_1-1] \neq \langle \rmname_j, u^A, \cdot, \cdot \rangle
				\end{array}\right\rbrace\cup\\
				\left\lbrace\begin{array}{@{}l|l@{}}
					\delta(u,u^R,\rmname_i,t_1,t_2). & \rmname_i \in \machineset \setminus \{\leaf\}, u \in \rmstates_i, 0\leq t_1 < t_2 \leq n+1,\\
					& \exists \rmname_j \in \machineset\setminus\{\leaf\} \textnormal{ s.t. }\phi = \varphi_i(u,u',\rmname_j), \trace^\ast[t_1] \models \excond_{j,u^0,\phi},\\
					&\rmname_j(\trace^\ast[t_1:])[t_2-t_1] = \langle \rmname_k,u^R,\cdot,\cdot\rangle, \rmname_k \in \machineset
				\end{array}\right\rbrace\cup\\
				\left\lbrace \mathtt{accept} \mid \hrm(\trace^\ast)[n+1] = \langle \rmname_r, u^A, \cdot, \cdot \rangle \right\rbrace \cup\\
				\left\lbrace \mathtt{reject} \mid \hrm(\trace^\ast)[n+1] = \langle \rmname_k, u^R, \cdot, \cdot \rangle, \rmname_k \in \machineset\setminus\{\leaf\} \right\rbrace
			\end{matrix*}.
		\end{align*}
		\endgroup
		\caption{Answer sets for each of the partitions in the program $\aspprogram=\asprepr(\hrm) \cup \generalrules \cup \asprepr(\trace^\ast)$, where $H$ is an HRM, $\generalrules$ is the set of general rules and $\trace^\ast$ is a label trace.}
		\label{fig:answer_set_proof}
	\end{figure}
	
	We now prove that $\mathtt{accept} \in \answerset$ if and only if $\ast = G$ (i.e., the trace achieves the goal). If $\ast=G$ then, since the hierarchy is valid with respect to $\trace^\ast$ (see Definition~\ref{def:valid_trace}), the hierarchy traversal $\hrm(\trace^\ast)$ finishes in the accepting state $u^A$ of the root; that is, $\hrm(\trace^\ast)[n+1] = \langle \rmname_r, u^A_r, \cdot, \cdot \rangle$. This holds if and only if $\mathtt{accept} \in \answerset$.
	
	The proof showing that $\mathtt{reject} \in \answerset$ if and only if $\ast =D$ (i.e., the trace reaches a dead-end) is similar to the previous one. If $\ast=D$ then, since the hierarchy is valid with respect to $\trace^\ast$, the hierarchy traversal $\hrm(\trace^\ast)$ finishes in a rejecting state $u^R$; that is, $\hrm(\trace^\ast)[n+1] = \langle \rmname_k, u^R, \cdot, \cdot\rangle$, where $\rmname_k \in \machineset$. This holds if and only if $\mathtt{reject} \in \answerset$.
\end{proof}

\subsubsection{Representation of the HRM Learning Task in ILASP}
\label{app:hrm_ilasp_repr}
We here formalize the learning of an HRM and its mapping to a general ILASP learning task. We start by defining the HRM learning task introduced in Section~\ref{sec:hierarchy_learning}.

\begin{definition}
	An \emph{HRM learning task} is a tuple $\hrmlearningtask=\langle r, \rmstates, \propset, \machineset, \machinesetcall, u^0, u^A, u^R, \traceset, \maxdisjuncts \rangle$, where $r$ is the index of the root RM in the HRM; $\rmstates \supseteq \{u^0,u^A,u^R\}$ is a set of states of the root RM always containing an initial state $u^0$, an accepting state $u^A$, and a rejecting state $u^R$; $\propset$ is a set of propositions; $\machineset \supseteq \{\leaf\}$ is a set of RMs;  $\machinesetcall\subseteq \machineset$ is a set of callable RMs; $\traceset=\traceset^G\cup\traceset^D\cup\traceset^I$ is a set of label traces; and $\maxdisjuncts$ is the maximum number of conjunctions/disjuncts in each formula. An HRM $\hrm=\langle\machineset \cup \{\rmname_r\},\rmname_r,\propset\rangle$ is a solution of $\hrmlearningtask$ if and only if it is valid with respect to all the traces in $\traceset$.
\end{definition}

We make some assumptions about the sets of RMs $\machineset$: (i)~all RMs reachable from RMs in $\machinesetcall$ must be in $\machineset$, (ii)~all RMs in $\machineset$ are deterministic, and (iii)~all RMs in $\machineset$ are defined over the same set of propositions $\propset$ (or a subset of it).

For completeness, we provide the definition of an ILASP task introduced by \citet{LawRB16}. The first definition corresponds to the form of the examples taken by ILASP, while the second corresponds to the ILASP tasks themselves.
\begin{definition}
	A \emph{context-dependent partial interpretation} (CDPI) is a pair $\langle \langle e^{inc}, e^{exc} \rangle,\allowbreak e^{ctx} \rangle$, where $\langle e^{inc}, e^{exc}\rangle$ is a pair of sets of atoms, called a \emph{partial interpretation}, and $e^{ctx}$ is an ASP program called a \emph{context}. A program $\aspprogram$ \emph{accepts} a CDPI $\langle \langle e^{inc}, e^{exc} \rangle, e^{ctx} \rangle$ if and only if there is an answer set $\answerset$ of $\aspprogram \cup e^{ctx}$ such that $e^{inc} \subseteq \answerset$ and $e^{exc}~\cap \answerset = \emptyset$.
\end{definition}

\begin{definition}
	An \emph{ILASP task} is a tuple $\ilasplearningtask=\langle \ilaspbk,\ilasphypspace,\langle \ilaspexamples^+, \ilaspexamples^-\rangle\rangle$ where $\ilaspbk$ is the ASP background knowledge, which describes a set of known concepts before learning; $\ilasphypspace$ is the set of ASP rules allowed in the hypotheses; and $\ilaspexamples^+$ and $\ilaspexamples^-$ are sets of CDPIs called, respectively, the positive and negative examples. A hypothesis $\ilasphypothesis \subseteq \ilasphypspace$ is an \emph{inductive solution} of $\ilasplearningtask$ if and only if (i)~$\forall e \in \ilaspexamples^+$, $\ilaspbk \cup \ilasphypothesis$ accepts $e$, and (ii)~$\forall e \in \ilaspexamples^-$, $\ilaspbk \cup \ilasphypothesis$ does not accept $e$.
\end{definition}

Given an HRM learning task $\hrmlearningtask$, we map it into an ILASP learning task $\asprepr(\hrmlearningtask)=\langle \ilaspbk,\ilasphypspace,\langle \ilaspexamples^+, \emptyset \rangle\rangle$ and use the ILASP system~\citep{ILASP_system} to find an inductive solution $\asprepr_\varphi(\hrm) \subseteq \ilasphypspace$ that covers the examples. Note that we do not use \emph{negative examples} ($\ilaspexamples^-=\emptyset$). We define the components of $\asprepr(\hrmlearningtask)$ below.

\paragraph{Background Knowledge.}
The background knowledge $\ilaspbk=\ilaspbk_\rmstates \cup \ilaspbk_\machineset \cup \generalrules$ is a set of rules that describe the behavior of the HRM. The set $\ilaspbk_\rmstates$ consists of $\mathtt{state}(u, \rmname_r)$ facts for each state $u\in \rmstates$ of the root RM with index $r$ we aim to induce, whereas $\ilaspbk_\machineset = \bigcup_{\rmname_i \in \machineset\setminus \{\leaf\} }\asprepr(\machineset_i)$ contains the ASP representations of all RMs. Finally, $\generalrules$ is the set of general rules introduced in Appendix~\ref{app:hrm_asp_repr} that defines how HRMs process label traces. Importantly, the index of the root $r$ in these rules must correspond to the one used in $\hrmlearningtask$.

\paragraph{Hypothesis Space.}
The hypothesis space $\ilasphypspace$ contains all $\mathtt{ed}$ and $\bar{\rmtransition}$ rules that characterize a transition from a non-terminal state $u \in \rmstates \setminus \{u^A,u^R\}$ to a different state $u' \in \rmstates \setminus \{u\}$ using edge $i\in[1,\maxdisjuncts]$. Formally, it is defined as
\begin{equation*}
	\ilasphypspace=\left\lbrace\begin{array}{@{}l|c@{}}
		\mathtt{call}(u,u',i,\rmname).                                                  & u \in \rmstates\setminus \left\lbrace u^A,u^R \right\rbrace, \\
		\bar{\varphi}(u,u',i,\rmname,\mathtt{T) \codeif label}(p, \mathtt{T), step(T).} & u' \in \rmstates\setminus \left\lbrace u \right\rbrace,  i \in \left[1, \maxdisjuncts \right], \\
		\bar{\varphi}(u,u',i,\rmname,\mathtt{T) \codeif not~label}(p, \mathtt{T), step(T).}  & \rmname \in \machinesetcall, p \in \propset
	\end{array}\right\rbrace.
\end{equation*}

\paragraph{Example Sets.}
Given a set of traces $\traceset=\traceset^G \cup \traceset^D \cup \traceset^I$, the set of \emph{positive examples} is defined as 
\begin{equation*}
	\ilaspexamples^+=\lbrace\langle e^*, \asprepr(\trace) \rangle \mid \ast \in \lbrace G,D,I \rbrace,\trace \in \traceset^* \rbrace,
\end{equation*}
where $e^G=\langle \lbrace\mathtt{accept}\rbrace,\lbrace\mathtt{reject}\rbrace \rangle$, $e^D=\langle \lbrace\mathtt{reject}\rbrace, \lbrace\mathtt{accept}\rbrace \rangle$, and $e^I=\langle \lbrace\rbrace,\lbrace\mathtt{accept, reject}\rbrace \rangle$ are the partial interpretations for goal, dead-end and incomplete traces. The $\mathtt{accept}$ and  $\mathtt{reject}$ atoms express whether a trace is accepted or rejected by the HRM; hence, goal traces must only be accepted, dead-end traces must only be rejected, and incomplete traces cannot be accepted or rejected. Note that the context of each example is the set of ASP facts $\asprepr(\trace)$ that represents the corresponding trace (see Definition~\ref{def:asp_trace_representation}).

\paragraph{Correctness of the Learning Task.} The following theorem captures the correctness of the HRM learning task.
\begin{theorem}
	Given an HRM learning task $\hrmlearningtask=\langle r, \rmstates, \propset,\machineset,\machinesetcall, u^0,u^A,u^R,\traceset,\maxdisjuncts\rangle$, an HRM $\hrm=\langle\machineset \cup \{\rmname_r\}, \rmname_r,\propset \rangle$ is a solution of $\hrmlearningtask$ if and only if $\asprepr_\varphi(\rmname_r)$ is an inductive solution of $\asprepr(\hrmlearningtask)=\langle \ilaspbk, \ilasphypspace, \langle \ilaspexamples^+,\emptyset\rangle\rangle$. 
	\label{theorem:learning_task_correctness}
\end{theorem}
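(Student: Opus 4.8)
The plan is to reduce the biconditional to the correctness of the ASP encoding established in Proposition~\ref{prop:asp_correctness}. The first, purely syntactic, observation is that adding the hypothesis $\asprepr_\varphi(\rmname_r)$ (the $\mathtt{call}$ facts and $\bar\varphi$ rules of the root) to the background knowledge $\ilaspbk=\ilaspbk_\rmstates\cup\ilaspbk_\machineset\cup\generalrules$ reconstructs exactly $\asprepr(\hrm)\cup\generalrules$: the root's state facts come from $\ilaspbk_\rmstates$, its transition rules from the hypothesis, and the full ASP representations of the lower RMs from $\ilaspbk_\machineset$. Since every positive example has the form $e=\langle\langle e^{inc},e^{exc}\rangle,\asprepr(\trace)\rangle$ with $\trace\in\traceset^*$, the program whose answer sets decide acceptance of $e$ is precisely $\aspprogram=\ilaspbk\cup\asprepr_\varphi(\rmname_r)\cup\asprepr(\trace)=\asprepr(\hrm)\cup\generalrules\cup\asprepr(\trace)$, i.e.\ the program of Proposition~\ref{prop:asp_correctness}. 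I would also record that $\asprepr_\varphi(\rmname_r)\subseteq\ilasphypspace$ by construction (given that the root respects the $\maxdisjuncts$ bound), so that it is a legitimate candidate hypothesis.

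Next I would invoke the key property that $\aspprogram$ is stratified, hence has a unique answer set $\answerset$ by Theorem~\ref{theorem:stratification_unique_as}, and that, reading off Figure~\ref{fig:answer_set_proof}, $\mathtt{accept}\in\answerset$ iff the traversal $\hrm(\trace)$ ends in the root's accepting state (i.e.\ $\hrm$ accepts $\trace$) and $\mathtt{reject}\in\answerset$ iff it ends in a rejecting state (i.e.\ $\hrm$ rejects $\trace$). Crucially, both the stratification and this membership characterization are independent of whether $\hrm$ is valid with respect to $\trace$; validity is used in Proposition~\ref{prop:asp_correctness} only to rewrite ``accepts/rejects'' as ``$*=G$/$*=D$.'' I would therefore use this validity-free strengthening throughout. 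Because the answer set is unique, ``$\aspprogram$ accepts $e$'' is equivalent to the single condition that this unique $\answerset$ satisfies $e^{inc}\subseteq\answerset$ and $e^{exc}\cap\answerset=\emptyset$, which removes the existential quantifier in the definition of acceptance and makes the equivalence clean.

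For the forward direction, assume $\hrm$ is a solution, i.e.\ valid with respect to every $\trace\in\traceset$. Fix an example $e=\langle e^*,\asprepr(\trace)\rangle$ and split on $*$. If $*=G$, validity gives that $\hrm$ accepts and does not reject $\trace$, so $\mathtt{accept}\in\answerset$ and $\mathtt{reject}\notin\answerset$, matching $e^{inc}=\{\mathtt{accept}\}$ and $e^{exc}=\{\mathtt{reject}\}$. The cases $*=D$ and $*=I$ are symmetric: for $D$ we get $\mathtt{reject}\in\answerset$, $\mathtt{accept}\notin\answerset$; for $I$ neither atom lies in $\answerset$, so $e^{exc}=\{\mathtt{accept},\mathtt{reject}\}$ is disjoint from $\answerset$. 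Hence every positive example is accepted and, since there are no negative examples, $\asprepr_\varphi(\rmname_r)$ is an inductive solution.

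For the converse I would argue by contraposition. Suppose $\hrm$ is not a solution, so it is invalid with respect to some $\trace\in\traceset^*$. Using the validity-free characterization, consider the three cases: if $*=G$ then $\hrm$ fails to accept $\trace$, so $\mathtt{accept}\notin\answerset$ and $e^{inc}=\{\mathtt{accept}\}\not\subseteq\answerset$; if $*=D$ then $\hrm$ fails to reject, so $\mathtt{reject}\notin\answerset$ and again $e^{inc}\not\subseteq\answerset$; if $*=I$ then $\hrm$ accepts or rejects $\trace$, so $\answerset$ contains $\mathtt{accept}$ or $\mathtt{reject}$ and meets $e^{exc}$. In each case the unique answer set violates the acceptance conditions, so $\aspprogram$ does not accept the corresponding example and $\asprepr_\varphi(\rmname_r)$ is not an inductive solution. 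The main obstacle I anticipate is exactly this backward direction: Proposition~\ref{prop:asp_correctness} is stated only for HRMs already known to be valid, so the argument hinges on extracting from its proof (via Figure~\ref{fig:answer_set_proof}) the stronger fact that the unique answer set tracks acceptance and rejection of $\trace$ for every well-formed HRM, valid or not; making that extraction explicit is the delicate part.
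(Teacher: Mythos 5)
Your proposal is correct and follows essentially the same route as the paper: both proofs rest on the syntactic identity $\ilaspbk \cup \asprepr_\varphi(\rmname_r) = \asprepr(\hrm)\cup\generalrules$, so that acceptance of each positive example $\langle e^\ast, \asprepr(\trace)\rangle$ is decided by exactly the program of Proposition~\ref{prop:asp_correctness}, and both then read the equivalence off that proposition. The one place where you go beyond the paper is the converse direction: the paper's proof is a chain of biconditionals whose second step cites Proposition~\ref{prop:asp_correctness} as if it were an ``if and only if'' characterization, even though the proposition is stated under the hypothesis that $\hrm$ is valid with respect to $\trace^\ast$. Your observation that the unique answer set exhibited in Figure~\ref{fig:answer_set_proof} tracks acceptance/rejection of the traversal $\hrm(\trace)$ for any well-formed HRM, valid or not, is precisely the validity-free strengthening needed to justify that step, and your contrapositive argument for the backward direction makes explicit what the paper leaves implicit. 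So the ``delicate part'' you flagged is a real (minor) gap in the paper's own write-up rather than in your argument.
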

\begin{proof}
	Assume $\hrm$ is a solution of $\hrmlearningtask$.
	
	$\iff$ $\hrm$ is valid with respect to all traces in $\traceset$ (i.e., $\hrm$ accepts all traces in $\traceset^G$, rejects all traces in $\traceset^D$ and does not accept nor reject any trace in $\traceset^I$).
	
	$\iff$ By Proposition~\ref{prop:asp_correctness}, for each trace $\trace^\ast\in\traceset^*$ where $\ast \in \lbrace G,D,I\rbrace$, $\asprepr(\hrm) \cup \generalrules \cup \asprepr(\trace^\ast)$ has a unique answer set $\answerset$ and (1) $\mathtt{accept}\in AS$ if and only if $\ast = G$, and (2) $\mathtt{reject} \in \answerset$ if and only if $\ast = D$.
	
	$\iff$ For each example $e \in \ilaspexamples^+$, $\generalrules\cup \asprepr(\hrm)$ accepts $e$.
	
	$\iff$ For each example $e \in \ilaspexamples^+$, $\ilaspbk\cup \asprepr_\varphi(\rmname_r)$ accepts $e$ (the two programs are identical).
	
	$\iff \asprepr_\varphi(\rmname_r)$ is an inductive solution of $\asprepr(\hrmlearningtask)$.
\end{proof}

\paragraph{Constraints.} 
We introduce several constraints encoding structural properties of the HRMs we want to learn. Some of these constraints are expressed in terms of facts $\mathtt{pos}(u,u',e,m,p)$ and $\mathtt{neg}(u,u',e,m,p)$, which indicate that proposition $p\in\propset$ appears positively (resp.~negatively) in edge $e$ from state $u$ to state $u'$ in RM $\rmname_m$. These facts are derived from $\bar{\varphi}$ rules in $\asprepr(\hrm)$ and injected in the ILASP tasks using meta-program injection~\citep{LawRB18}.

The following set of constraints ensures that the learned root RM is \emph{deterministic} using the \emph{saturation} technique~\citep{EiterG95}. The idea is to check determinism top-down by selecting two edges from a given state in the root, each associated with a set of literals. Initially, the set of literals is formed by those in the formula labeling the edges. If a selected edge calls a non-leaf RM, we select an edge from the initial state of the called RM, augment the set of literals with the associated formula, and repeat the process until a call to the leaf RM is reached. We then check if the literal sets are mutually exclusive. If there is a pair of edges from the root that are not mutually exclusive, the solution is discarded. The set of rules is shown below. The first rule states that we keep two saturation IDs, one for each of the edges we select next and for which mutual exclusivity is checked. The second rule chooses a state $\mathtt{X}$ in the root, whereas the third rule selects two edges from this state and assigns a saturation ID to each of them. The fourth rule indicates that if one of the edges we have selected so far calls a non-leaf RM, we select one of the edges from the initial state of the called RM and create a new edge with the same saturation ID. The fifth and sixth rules take the propositions for each set of edges (one per saturation ID). The next three rules indicate that if the edges are mutually exclusive (i.e., a proposition appears positively in one set and negatively in the other) or they are the same, then the answer set is saturated. The saturation itself is encoded in the following three rules: an answer set is saturated by adding every possible $\mathtt{ed\_mtx}$ and $\mathtt{root\_point}$ atoms to the answer set. Due to the minimality of answer sets in disjunctive answer set programming, this ``maximal'' interpretation can only be an answer set if there is no smaller answer set. This will be the case if and only if every choice of edges satisfies the condition (i.e. every choice of $\mathtt{ed\_mtx}$ and $\mathtt{root\_point}$ atoms results in saturation). The constraint encoded in the final rule then discards answer sets in which saturation did not occur, meaning that the remaining solutions must satisfy the condition.
\begin{align*}
	\small
	\begin{Bmatrix*}[l]
		\mathtt{sat\_id(1;2).}\\
		\mathtt{root\_point(X, M) : call(X, \_, \_, M, \_), M\mathord{=}}\rmname_r.\\
		\mathtt{ed\_mtx((X, Y, E, M, M2), SatID) : call(X, Y, E, M, M2) \codeif root\_point(X, M), sat\_id(SatID).}\\
		{
			\arraycolsep=1.4pt
			\begin{matrix*}[l]
				\mathtt{ed\_mtx((u0, Y2, E2, M2, M3), SatID) : call(u0, Y2, E2, M2, M3) \codeif}&\mathtt{ ed\_mtx((\_, \_, \_, \_, M2), SatID),}\\
				&\mathtt{M2!\mathord{=}}\leaf.
			\end{matrix*}
		}\\
		\mathtt{pos\_prop(P, ID) \codeif ed\_mtx((X, Y, E, M, \_), ID), pos(X, Y, E, M, P).}\\
		\mathtt{neg\_prop(P, ID) \codeif ed\_mtx((X, Y, E, M, \_), ID), neg(X, Y, E, M, P).}\\
		\mathtt{saturate \codeif pos\_prop(P, 1), neg\_prop(P, 2).}\\
		\mathtt{saturate \codeif pos\_prop(P, 2), neg\_prop(P, 1).}\\
		\mathtt{saturate \codeif ed\_mtx((X, Y, \_, M, M2), 1), ed\_mtx((X, Y, \_, M, M2), 2), root\_point(X, M).}\\
		\mathtt{root\_point(X, M) \codeif call(X, \_, \_, M, \_), saturate, M\mathord{=}}\rmname_r.\\
		\mathtt{ed\_mtx((X, Y, E, M, M2), SatID) \codeif call(X, Y, E, M, M2), M\mathord{=}}\rmname_r\mathtt{, sat\_id(SatID), saturate.}\\
		\mathtt{ed\_mtx((u0, Y, E, M, M2), SatID) \codeif call(u0, Y, E, M, M2), sat\_id(SatID), saturate.}\\
		\mathtt{\codeif not~saturate.}
	\end{Bmatrix*}
\end{align*}

Other required constraints to learn sensible HRMs are shown below. The first rule prevents an edge from being labeled with calls to two different RMs. The second rule prevents edges from being labeled with the same literal both positively and negatively.
\begin{align*}
	\begin{Bmatrix*}[l]
		\mathtt{\codeif call(X, Y, E, M, M2), call(X, Y, E, M, M3), M2!\mathord{=}M3.}\\
		\mathtt{\codeif pos(X, Y, E, M, P), neg(X, Y, E, M, P).}
	\end{Bmatrix*}
\end{align*}

The following constraints are used to speed up the learning of an HRM. First, we extend the \emph{symmetry breaking} method by \citet{FurelosBlancoLJBR21}, originally proposed for flat RMs, to our hierarchical setting. The main advantage of this method is that it accelerates learning without restricting the family of learnable HRMs. Other constraints analogous to those in previous work~\citep{FurelosBlancoLJBR21} that speed up the learning process further are enumerated below. For simplicity, some of these constraints use the auxiliary rule below to define the $\mathtt{ed(X, Y, E, M)}$ predicate, which is equivalent to the $\mathtt{call(X, Y, E, M, M2)}$ predicate but omitting the called RM:
\begin{align*}
	\mathtt{ed(X, Y, E, M) \codeif call(X, Y, E, M, \_).}
\end{align*}
The constraints are the following:
\begin{itemize}
	\item Rule out inductive solutions where an edge calling the leaf $\leaf$ is labeled by a formula formed only by negated propositions. The rule below enforces a proposition to occur positively whenever a proposition appears negatively in an edge calling $\leaf$.
	\begin{align*}
		\mathtt{\codeif neg(X, Y, E, M, \_), not~pos(X, Y, E, M, \_), call(X, Y, E, M,} \leaf).
	\end{align*}
	
	\item Rule out any inductive solution where an edge from $\mathtt{X}$ to $\mathtt{Y}$ with index $\mathtt{E}$ is not labeled by a positive or a negative literal. This rule only applies to calls to the leaf $\leaf$, thus avoiding unconditional transitions.
	\begin{align*}
		\mathtt{\codeif not~pos(X, Y, E, M, \_), not~neg(X, Y, E, M, \_), call(X, Y, E, M}, \leaf).
	\end{align*}
	
	\item Rule out inductive solutions containing states different from the accepting and rejecting states without outgoing edges. In general, these states are not interesting.
	\begin{align*}
		\begin{Bmatrix*}[l]
			\mathtt{has\_outgoing\_edges(X, M) \codeif ed(X, \_, \_, M).}\\
			\mathtt{\codeif state(X, M), not~has\_outgoing\_edges(X, M), X!}\mathord{=}u^A\mathtt{, X!}\mathord{=}u^R.
		\end{Bmatrix*}
	\end{align*}
	
	\item Rule out inductive solutions containing cycles; that is, solutions where two states can be reached from each other. The $\mathtt{path(X,Y,M)}$ predicate indicates there is a directed path (i.e., a sequence of directed edges) from $\mathtt{X}$ to $\mathtt{Y}$ in RM $\mathtt{M}$. The first rule states that there is a path from $\mathtt{X}$ to $\mathtt{Y}$ if there is an edge from $\mathtt{X}$ to $\mathtt{Y}$. The second rule indicates that there is a path from $\mathtt{X}$ to $\mathtt{Y}$ if there is an edge from $\mathtt{X}$ to an intermediate state $\mathtt{Z}$ from which there is a path to $\mathtt{Y}$. Finally, the third rule discards the solutions where $\mathtt{X}$ and $\mathtt{Y}$ can be reached from each other through directed edges.
	\begin{align*}
		\begin{Bmatrix*}[l]
			\mathtt{path(X, Y, M) \codeif ed(X, Y, \_, M).}\\
			\mathtt{path(X, Y, M) \codeif ed(X, Z, \_, M), path(Z, Y, M).}\\
			\mathtt{\codeif path(X, Y, M), path(Y, X, M).}
		\end{Bmatrix*}		
	\end{align*}
\end{itemize}

\subsubsection{ILASP Flags}
\label{app:ilasp_hyperparams}
We use ILASP2~\citep{ILASP_system} to learn the HRMs. For efficiency, the default calls to the underlying ASP solver are modified to be made with the flag \texttt{---opt-mode=ignore}, meaning that non-minimal solutions might be obtained (i.e., solutions involving more rules than needed), so the learned root might contain some unnecessary edges. In practice, the solutions produced by ILASP rarely contain such edges and, if they do, these edges eventually disappear by observing an appropriate counterexample.  We hypothesize that using this flag helps since no optimization is made every time ILASP is called. We highlight that this notion of minimality is not related to that of a minimal RM (i.e., an RM with the fewest number of states) described in Section~\ref{sec:hierarchy_learning}.

\subsection{Interleaving Algorithm}
\label{app:interleaving_algorithm}

	Akin to some methods for learning RMs~\citep{IcarteWKVCM19,FurelosBlancoLJBR21}, we compress label traces by merging consecutive equal labels into a single one; e.g., $\langle\{\},\{\Iron\},\{\Iron\},\{\},\{\},\{\Table\},\{\Cow\}\rangle$ becomes $\langle\{\},\{\Iron\},\{\},\{\Table\},\{\Cow\}\rangle$. Our method does not require traces to be compressed, but performance is enhanced since traces usually become shorter.

\section{Experimental Details}
\label{app:experimental_details}
In this section, we describe the details of the experiments introduced in Section~\ref{sec:experimental_results}. First, we discuss the implementation of the domains, the network architecture we used for each of them, and provide HRM examples for the different tasks~(Appendix~\ref{app:experimental_details_domains}). Second, we provide the list of hyperparameters used in the methods evaluated in this paper (Appendix~\ref{app:hyperparameters}). Finally, we provide all specific results summarized in Section~\ref{sec:experimental_results} (Appendix~\ref{app:extended_results}). All timed experiments ran on 3.40GHz Intel\textsuperscript{\textregistered} Core\texttrademark~i7-6700 processors, while non-timed experiments have also run on 2.90GHz Intel\textsuperscript{\textregistered} Core\texttrademark~i7-10700, 4.20GHz Intel\textsuperscript{\textregistered}Core\texttrademark~i7-7700K, and 3.20GHz Intel\textsuperscript{\textregistered} Core\texttrademark~i7-8700 processors.

\subsection{Domains}
\label{app:experimental_details_domains}

We here describe how the domains we use in our experiments are implemented, the architecture of the DQNs, and provide some example HRMs for the tasks we have considered.

\subsubsection{CraftWorld}
\textbf{Implementation.~} This domain is based on MiniGrid~\citep{Chevalier-Boisvert18}, thus inheriting many of its features. At each step, the agent observes a $W\times H\times 3$ tensor, where $W$ and $H$ are the width and height of the grid. The three channels contain the object IDs, the color IDs, and object state IDs (including the orientation of the agent) respectively. Each of the objects we define (except for the lava $\Lava$, which already existed in MiniGrid) has its own object and color IDs. Before providing the agent with the state, the content of all matrices is scaled between -1 and 1. Note that even though the agent gets a full view of the grid, it is still unaware of the completion degree of a task. Other works have previously used the full view of the grid~\citep{IglCLTZDH19,JiangGR21}.

The grids are randomly generated. In all settings (OP, OPL, FR, FRL), the agent and the objects are randomly assigned an unoccupied position. In the case of FR and FRL, no object occupies a position between rooms or its adjoining positions. There is a single object per object type (i.e., proposition) in OP and OPL, whereas there can be one or two per type in FR and FRL. Finally, there is a single lava location in OPL, which is randomly assigned (like the rest of the propositions), whereas in FRL there are four fixed lava locations placed in the intersections between doors as shown in Figure~\ref{fig:craftworld_frl}.

\begin{figure}
	\tikzset{digit/.style = { minimum height = 5mm, minimum width=5mm, anchor=center }}
	\newcommand{\setcell}[3]{\edef\x{#2 - 0.5}\edef\y{12.5 - #1}\node[digit,name={#1-#2}] at (\x, \y) {#3};}
	\centering
	\begin{tikzpicture}[scale=0.45]
		\draw[gray] (0, 0) grid (13, 13);
		
		\draw[black,fill=black] (0, 0) rectangle (1, 13);
		\draw[black,fill=black] (0, 0) rectangle (13, 1);
		\draw[black,fill=black] (12, 0) rectangle (13, 13);
		\draw[black,fill=black] (0, 12) rectangle (13, 13);
		
		\draw[black,fill=black] (6, 0) rectangle (7, 2);
		\draw[black,fill=black] (6, 3) rectangle (7, 9);
		\draw[black,fill=black] (6, 10) rectangle (7, 12);
		
		\draw[black,fill=black] (0, 6) rectangle (2, 7);
		\draw[black,fill=black] (3, 6) rectangle (6, 7);
		\draw[black,fill=black] (7, 5) rectangle (9, 6);
		\draw[black,fill=black] (10, 5) rectangle (13, 6);
		
		\setcell{2}{2}{\Rabbit}
		\setcell{2}{11}{\Workbench}
		\setcell{2}{12}{\Agent}
		
		\setcell{3}{3}{\Lava}
		\setcell{3}{5}{\Sugarcane}
		\setcell{3}{10}{\Lava}
		
		\setcell{4}{3}{\Squid}
		\setcell{4}{4}{\Redstone}
		\setcell{4}{12}{\Redstone}
		
		\setcell{5}{8}{\Chicken}
		
		\setcell{6}{8}{\Iron}
		\setcell{6}{12}{\Wheat}
		
		\setcell{8}{9}{\Table}
		
		\setcell{9}{3}{\Wheat}
		\setcell{9}{4}{\Table}
		\setcell{9}{10}{\Cow}
		\setcell{9}{12}{\Chicken}
		
		\setcell{10}{3}{\Lava}		
		\setcell{10}{4}{\Rabbit}
		\setcell{10}{10}{\Lava}
	\end{tikzpicture}
	\caption{An instance of the \cw grid in the FRL setting.}
	\label{fig:craftworld_frl}
\end{figure}

\textbf{Network Architecture.~}The DQNs for \cw consist of a 3-layer convolutional neural network (CNN) with 16, 32, and 32 filters respectively. All kernels are $2\times 2$ and use a stride of 1. In the FR and FRL settings, there is a max pooling layer with kernel size $2\times 2$ after the first convolutional layer. This part of the architecture is based on that by \citet{IglCLTZDH19} and \citet{JiangGR21}, who also work on MiniGrid using the full view of the grid. In DQNs associated with formulas, the CNN's output is fed to a 3-layer multilayer perceptron (MLP) where the hidden layer has 256 rectifier units and the output layer has a single output for each action. In the case of DQNs for RMs, the output of the CNN is extended with the encoding of the RM state and the context (as discussed in Appendix~\ref{app:option_hierarchy_selection}) before being fed to a 3-layer MLP where the hidden layer has 256 rectifier units and the output layer has a single output for each call in the RM.

\textbf{Examples of HRMs.~} Figure~\ref{fig:all_craftworld_automata} shows minimal root RMs for the \cw tasks listed in Table~\ref{tab:craftworld_tasks}. Note that (i)~since two or more propositions can never occur simultaneously, the mutual exclusivity between formulas could be enforced differently, and (ii)~these RMs correspond to the settings without dead-ends (thus, they do not include rejecting states).

\begin{figure}[h]
	\centering
	\begin{subfigure}[t]{0.245\textwidth}
		\centering
		\resizebox{0.28\linewidth}{!}{
			\begin{tikzpicture}[shorten >=1pt,node distance=2cm,on grid,auto,every initial by arrow/.style ={-Latex} ]
				\node[state] (u_0)   {$u^0_0$};
				\node[state] (u_1) [below =5.5em of u_0]  {$u^1_0$};
				\node[state,accepting] (u_acc) [below =5.5em of u_1]  {$u^A_0$};
				
				\path[-Latex] (u_0) edge node[pos=0.5] [in place] {$\leaf \mid \Iron$} (u_1);
				\path[-Latex] (u_1) edge node[pos=0.5] [in place] {$\leaf \mid \Table$} (u_acc);
			\end{tikzpicture}
		}
		\caption{$\rmname_0$ -- \textsc{Bucket}}
	\end{subfigure} \hfill
	\begin{subfigure}[t]{0.245\textwidth}
		\centering
		\resizebox{0.28\linewidth}{!}{
			\begin{tikzpicture}[shorten >=1pt,node distance=2cm,on grid,auto]
				\node[state] (u_0)   {$u^0_1$};
				\node[state] (u_1) [below =5.5em of u_0]  {$u^1_1$};
				\node[state,accepting] (u_acc) [below =5.5em of u_1]  {$u^A_1$};
				
				\path[-Latex] (u_0) edge node[pos=0.5] [in place] {$\leaf \mid \Sugarcane$} (u_1);
				\path[-Latex] (u_1) edge node[pos=0.5] [in place] {$\leaf \mid \Table$} (u_acc);
			\end{tikzpicture}
		}
		\caption{$\rmname_1$ -- \textsc{Sugar}}
	\end{subfigure}	\hfill
	\begin{subfigure}[t]{0.245\textwidth}
		\centering
		\resizebox{0.67\linewidth}{!}{
			\begin{tikzpicture}[shorten >=1pt,node distance=2cm,on grid,auto]
				\node[state] (u_0)   {$u^0_2$};
				\node[state] (u_1) [below left =7.01em of u_0]  {$u^1_2$};
				\node[state] (u_2) [below right =7.01em of u_0]  {$u^2_2$};
				\node[state] (u_3) [below right =7.01em of u_1]  {$u^3_2$};
				\node[state,accepting] (u_acc) [below =5.5em of u_3]  {$u^A_2$};
				
				\path[-Latex] (u_0) edge node[pos=0.5] [in place] {$\leaf\mid\Wheat$} (u_1);
				\path[-Latex] (u_1) edge node[pos=0.5] [in place] {$\leaf\mid\Chicken$} (u_3);
				
				\path[-Latex] (u_0) edge node[pos=0.5] [in place] {$\leaf\mid\Chicken \land \neg \Wheat$} (u_2);
				\path[-Latex] (u_2) edge node[pos=0.5] [in place] {$\leaf\mid\Wheat$} (u_3);
				
				\path[-Latex] (u_3) edge node[pos=0.5] [in place] {$\leaf\mid\Table$} (u_acc);
			\end{tikzpicture}
		}
		\caption{$\rmname_2$ -- \textsc{Batter}}
	\end{subfigure} \hfill
	\begin{subfigure}[t]{0.245\textwidth}
		\centering
		\resizebox{0.28\linewidth}{!}{
			\begin{tikzpicture}[shorten >=1pt,node distance=2cm,on grid,auto]
				\node[state] (u_0)   {$u^0_3$};
				\node[state] (u_1) [below =5.5em of u_0]  {$u^1_3$};
				\node[state,accepting] (u_acc) [below =5.5em of u_1]  {$u^A_3$};
				
				\path[-Latex] (u_0) edge node[pos=0.5] [in place] {$\leaf\mid\Sugarcane$} (u_1);
				\path[-Latex] (u_1) edge node[pos=0.5] [in place] {$\leaf\mid\Workbench$} (u_acc);
			\end{tikzpicture}
		}
		\caption{$\rmname_3$ -- \textsc{Paper}}
	\end{subfigure}
	
	\vspace{10pt}
	
	\begin{subfigure}[t]{0.245\textwidth}
		\centering
		\resizebox{0.67\linewidth}{!}{
			\begin{tikzpicture}[shorten >=1pt,node distance=2cm,on grid,auto]
				\node[state] (u_0)   {$u^0_4$};
				\node[state] (u_1) [below left =7.01em of u_0]  {$u^1_4$};
				\node[state] (u_2) [below right =7.01em of u_0]  {$u^2_4$};
				\node[state] (u_3) [below right =7.01em of u_1]  {$u^3_4$};
				\node[state,accepting] (u_acc) [below =5.5em of u_3]  {$u^A_4$};
				
				\path[-Latex] (u_0) edge node[pos=0.5] [in place] {$\leaf\mid\Iron$} (u_1);
				\path[-Latex] (u_1) edge node[pos=0.5] [in place] {$\leaf\mid\Redstone$} (u_3);
				
				\path[-Latex] (u_0) edge node[pos=0.5] [in place] {$\leaf\mid\Redstone \land \neg \Iron$} (u_2);
				\path[-Latex] (u_2) edge node[pos=0.5] [in place] {$\leaf\mid\Iron$} (u_3);
				
				\path[-Latex] (u_3) edge node[pos=0.5] [in place] {$\leaf\mid\Workbench$} (u_acc);
			\end{tikzpicture}
		}
		\caption{$\rmname_4$ -- \textsc{Compass}}
	\end{subfigure} \hfill
	\begin{subfigure}[t]{0.245\textwidth}
		\centering
		\resizebox{0.28\linewidth}{!}{
			\begin{tikzpicture}[shorten >=1pt,node distance=2cm,on grid,auto]
				\node[state] (u_0)   {$u^0_5$};
				\node[state] (u_1) [below =5.5em of u_0]  {$u^1_5$};
				\node[state,accepting] (u_acc) [below =5.5em of u_1]  {$u^A_5$};
				
				\path[-Latex] (u_0) edge node[pos=0.5] [in place] {$\leaf\mid\Rabbit$} (u_1);
				\path[-Latex] (u_1) edge node[pos=0.5] [in place] {$\leaf\mid\Workbench$} (u_acc);
			\end{tikzpicture}
		}
		\caption{$\rmname_5$ -- \textsc{Leather}}
	\end{subfigure} \hfill
	\begin{subfigure}[t]{0.245\textwidth}
		\centering
		\resizebox{0.67\linewidth}{!}{
			\begin{tikzpicture}[shorten >=1pt,node distance=2cm,on grid,auto]
				\node[state] (u_0)   {$u^0_6$};
				\node[state] (u_1) [below left =7.01em of u_0]  {$u^1_6$};
				\node[state] (u_2) [below right =7.01em of u_0]  {$u^2_6$};
				\node[state] (u_3) [below right =7.01em of u_1]  {$u^3_6$};
				\node[state,accepting] (u_acc) [below =5.5em of u_3]  {$u^A_6$};
				
				\path[-Latex] (u_0) edge node[pos=0.5] [in place] {$\leaf\mid\Squid$} (u_1);
				\path[-Latex] (u_1) edge node[pos=0.5] [in place] {$\leaf\mid\Chicken$} (u_3);
				\path[-Latex] (u_0) edge node[pos=0.5] [in place] {$\leaf\mid\Chicken\land \neg \Squid$} (u_2);
				\path[-Latex] (u_2) edge node[pos=0.5] [in place] {$\leaf\mid\Squid$} (u_3);
				\path[-Latex] (u_3) edge node[pos=0.5] [in place] {$\leaf\mid\Table$} (u_acc);
			\end{tikzpicture}
		}
		\caption{$\rmname_6$ -- \textsc{Quill}}
	\end{subfigure} \hfill
	\begin{subfigure}[t]{0.245\textwidth}
		\centering
		\resizebox{0.28\linewidth}{!}{
			\begin{tikzpicture}[shorten >=1pt,node distance=2cm,on grid,auto]
				\node[state] (u_0)   {$u^0_7$};
				\node[state] (u_1) [below =5.5em of u_0]  {$u^1_7$};
				\node[state,accepting] (u_acc) [below =5.5em of u_1]  {$u^A_7$};
				
				\path[-Latex] (u_0) edge node[pos=0.5] [in place] {$\rmname_0\mid\top$} (u_1);
				\path[-Latex] (u_1) edge node[pos=0.5] [in place] {$\leaf\mid\Cow$} (u_acc);
			\end{tikzpicture}
		}
		\caption{$\rmname_7$ -- \textsc{MilkBucket}}
	\end{subfigure}
	
	\vspace{10pt}
	
	\begin{subfigure}[t]{0.3\textwidth}
		\centering
		\resizebox{\linewidth}{!}{
			\begin{tikzpicture}[shorten >=1pt,node distance=2cm,on grid,auto]
				\node[state] (u_0)   {$u^0_8$};
				\node[state] (u_1) [below left =7.01em of u_0]  {$u^1_8$};
				\node[state] (u_2) [below right =7.01em of u_0]  {$u^2_8$};
				\node[state] (u_3) [below right =7.01em of u_1]  {$u^3_8$};
				\node[state,accepting] (u_acc) [right =7.2em of u_3]  {$u^A_8$};
				
				\path[-Latex] (u_0) edge node[pos=0.5] [in place] {$\rmname_3 \mid \top$} (u_1);
				\path[-Latex] (u_0) edge node[pos=0.5] [in place] {$\rmname_4 \mid \neg \Sugarcane$} (u_2);
				\path[-Latex] (u_1) edge node[pos=0.5] [in place] {$\rmname_4 \mid \top$} (u_3);
				\path[-Latex] (u_2) edge node[pos=0.5] [in place] {$\rmname_3 \mid \top$} (u_3);
				\path[-Latex] (u_3) edge node[pos=0.45] [in place] {$\leaf\mid\Table$} (u_acc);
			\end{tikzpicture}
		}
		\caption{$\rmname_8$ -- \textsc{Map}}
	\end{subfigure} \hfill
	\begin{subfigure}[t]{0.3\textwidth}
		\centering
		\resizebox{\linewidth}{!}{
			\begin{tikzpicture}[shorten >=1pt,node distance=2cm,on grid,auto]
				\node[state] (u_0)   {$u^0_9$};
				\node[state] (u_1) [below left =7.01em of u_0]  {$u^1_9$};
				\node[state] (u_2) [below right =7.01em of u_0]  {$u^2_9$};
				\node[state] (u_3) [below right =7.01em of u_1]  {$u^3_9$};
				\node[state,accepting] (u_acc) [right =7.2em of u_3]  {$u^A_9$};
				
				\path[-Latex] (u_0) edge node[pos=0.5] [in place] {$\rmname_3\mid\top$} (u_1);
				\path[-Latex] (u_1) edge node[pos=0.5] [in place] {$\rmname_{5}\mid\top$} (u_3);
				\path[-Latex] (u_0) edge node[pos=0.5] [in place] {$\rmname_5\mid\neg \Sugarcane$} (u_2);
				\path[-Latex] (u_2) edge node[pos=0.5] [in place] {$\rmname_{3}\mid\top$} (u_3);
				\path[-Latex] (u_3) edge node[pos=0.45] [in place] {$\leaf\mid\Table$} (u_acc);
			\end{tikzpicture}
		}
		\caption{$\rmname_9$ -- \textsc{Book}}
	\end{subfigure} \hfill
	\begin{subfigure}[t]{0.3\textwidth}
		\centering
		\resizebox{0.85\linewidth}{!}{
			\begin{tikzpicture}[shorten >=1pt,node distance=2cm,on grid,auto]
				\node[state] (u_0)   {$u^0_{10}$};
				\node[state] (u_1) [below left =7.01em of u_0]  {$u^1_{10}$};
				\node[state] (u_2) [below right =7.01em of u_0]  {$u^2_{10}$};
				\node[state,accepting] (u_acc) [below right =7.01em of u_1]  {$u^A_{10}$};
				
				\path[-Latex] (u_0) edge node[pos=0.5] [in place] {$\rmname_1\mid\top$} (u_1);
				\path[-Latex] (u_1) edge node[pos=0.5] [in place] {$\rmname_{7}\mid\top$} (u_acc);
				\path[-Latex] (u_0) edge node[pos=0.5] [in place] {$\rmname_7\mid\neg\Sugarcane$} (u_2);
				\path[-Latex] (u_2) edge node[pos=0.5] [in place] {$\rmname_{1}\mid\top$} (u_acc);
			\end{tikzpicture}
		}
		\caption{$\rmname_{10}$ -- \textsc{MilkB.Sugar}}
	\end{subfigure}
	
	\vspace{10pt}
	
	\begin{subfigure}[t]{0.33\textwidth}
		\centering
		\resizebox{0.85\linewidth}{!}{
			\begin{tikzpicture}[shorten >=1pt,node distance=2.25cm,on grid,auto]
				\node[state] (u_0)   {$u^0_{11}$};
				\node[state] (u_1) [below left =7.01em of u_0]  {$u^1_{11}$};
				\node[state] (u_2) [below right =7.01em of u_0]  {$u^2_{11}$};
				\node[state,accepting] (u_acc) [below right =7.01em of u_1]  {$u^A_{11}$};
				
				\path[-Latex] (u_0) edge node[pos=0.5] [in place] {$\rmname_6\mid\top$} (u_1);
				\path[-Latex] (u_1) edge node[pos=0.5] [in place] {$\rmname_{9}\mid\top$} (u_acc);
				\path[-Latex] (u_0) edge node[pos=0.5] [in place] {$\rmname_9\mid\neg \Chicken \land \neg \Squid$} (u_2);
				\path[-Latex] (u_2) edge node[pos=0.5] [in place] {$\rmname_{6}\mid\top$} (u_acc);
			\end{tikzpicture}
		}
		\caption{$\rmname_{11}$ -- \textsc{BookQuill}}
	\end{subfigure} \hfill
	\begin{subfigure}[t]{0.6\linewidth}
		\centering
		\resizebox{\linewidth}{!}{
			\begin{tikzpicture}[shorten >=1pt,node distance=2.5cm,on grid,auto]
				\node[state] (u_0)   {$u^0_{12}$};
				\node[state] (u_1) [right =7.8em of u_0]  {$u^1_{12}$};
				\node[state] (u_2) [right =7.8em of u_1]  {$u^2_{12}$};
				\node[state,accepting] (u_acc) [right =7.8em of u_2]  {$u^A_{12}$};
				
				\path[-Latex] (u_0) edge node[pos=0.45] [in place] {$\rmname_2\mid\top$} (u_1);
				\path[-Latex] (u_1) edge node[pos=0.45] [in place] {$\rmname_{10}\mid\top$} (u_2);
				\path[-Latex] (u_2) edge node[pos=0.45] [in place] {$\leaf\mid\Workbench$} (u_acc);
			\end{tikzpicture}
		}
		\caption{$\rmname_{12}$ -- \textsc{Cake}}
	\end{subfigure}
	\caption{Root reward machines for each of the \cw tasks.}
	\label{fig:all_craftworld_automata}
\end{figure}

\subsubsection{WaterWorld}
\textbf{Implementation.~} This domain (cf.~Figure~\ref{fig:ww_picture}) has a continuous state space. The states are vectors containing the absolute position and velocity of the agent, and the relative positions and velocities of the other balls. The agent does not know the color of each ball. In all settings (WOD and WD), a \ww instance is created by assigning a random position and direction to each ball. Like in \cw, the agent does not know the degree of completion of a task.

\begin{figure}
	\centering
	\includegraphics[scale=0.9]{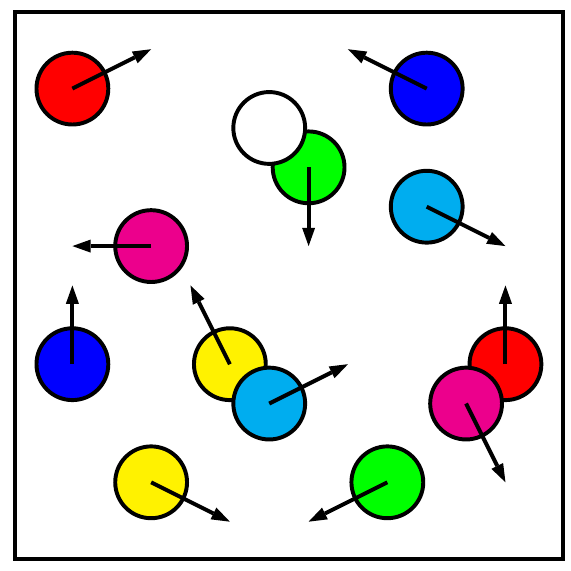}
	\caption{An instance of the \ww \citep{IcarteKVM18} in the WOD setting. Image taken from \cite{FurelosBlancoLJBR21}.}
	\label{fig:ww_picture}
\end{figure}

\textbf{Network Architecture.~}The architecture for \ww is a simple modification of the one introduced by \citet{IcarteKVM18}. The formula DQNs consist of a 5-layer MLP, where each of the 3 hidden layers has 512 rectifier units. The DQN for the RMs share the same architecture and, like in \cw, the state from the environment is extended with the state and context encodings.

\textbf{Examples of HRMs.~} Figure~\ref{fig:all_waterworld_automata} shows minimal root RMs for the tasks listed in Figure~\ref{fig:learning_hrm_and_policy} (right). Note that these RMs correspond to the settings without dead-ends; thus, they do not include rejecting states.

\begin{figure}[h]
	\centering
	\begin{subfigure}[t]{0.245\textwidth}
		\centering
		\resizebox{0.28\linewidth}{!}{
			\begin{tikzpicture}[shorten >=1pt,node distance=2cm,on grid,auto,every initial by arrow/.style ={-Latex} ]
				\node[state] (u_0)   {$u^0_0$};
				\node[state] (u_1) [below =5.5em of u_0]  {$u^1_0$};
				\node[state,accepting] (u_acc) [below =5.5em of u_1]  {$u^A_0$};
				
				\path[-Latex] (u_0) edge node[pos=0.5] [in place] {$\leaf \mid r$} (u_1);
				\path[-Latex] (u_1) edge node[pos=0.5] [in place] {$\leaf \mid g$} (u_acc);
			\end{tikzpicture}
		}
		\caption{$\rmname_0$ -- \textsc{rg}}
	\end{subfigure} \hfill
	\begin{subfigure}[t]{0.245\textwidth}
		\centering
		\resizebox{0.27\linewidth}{!}{
			\begin{tikzpicture}[shorten >=1pt,node distance=2cm,on grid,auto,every initial by arrow/.style ={-Latex} ]
				\node[state] (u_0)   {$u^0_1$};
				\node[state] (u_1) [below =5.5em of u_0]  {$u^1_1$};
				\node[state,accepting] (u_acc) [below =5.5em of u_1]  {$u^A_1$};
				
				\path[-Latex] (u_0) edge node[pos=0.5] [in place] {$\leaf \mid b$} (u_1);
				\path[-Latex] (u_1) edge node[pos=0.5] [in place] {$\leaf \mid c$} (u_acc);
			\end{tikzpicture}
		}
		\caption{$\rmname_1$ -- \textsc{bc}}
	\end{subfigure}	\hfill
	\begin{subfigure}[t]{0.245\textwidth}
		\centering
		\resizebox{0.3\linewidth}{!}{
			\begin{tikzpicture}[shorten >=1pt,node distance=2cm,on grid,auto,every initial by arrow/.style ={-Latex} ]
				\node[state] (u_0)   {$u^0_2$};
				\node[state] (u_1) [below =5.5em of u_0]  {$u^1_2$};
				\node[state,accepting] (u_acc) [below =5.5em of u_1]  {$u^A_2$};
				
				\path[-Latex] (u_0) edge node[pos=0.5] [in place] {$\leaf \mid m$} (u_1);
				\path[-Latex] (u_1) edge node[pos=0.5] [in place] {$\leaf \mid y$} (u_acc);
			\end{tikzpicture}
		}
		\caption{$\rmname_2$ -- \textsc{my}}
	\end{subfigure} \hfill
	\begin{subfigure}[t]{0.245\textwidth}
		\centering
		\resizebox{\linewidth}{!}{
			\begin{tikzpicture}[shorten >=1pt,node distance=2cm,on grid,auto,every initial by arrow/.style ={-Latex} ]
				\node[state] (u_0)   {$u^0_{3}$};
				\node[state] (u_1) [below left =7.01em of u_0]  {$u^1_{3}$};
				\node[state] (u_2) [below right =7.01em of u_0]  {$u^2_{3}$};
				\node[state,accepting] (u_acc) [below right =7.01em of u_1]  {$u^A_{3}$};
				
				\path[-Latex] (u_0) edge node[pos=0.5] [in place] {$\rmname_0\mid\neg b$} (u_1);
				\path[-Latex] (u_1) edge node[pos=0.5] [in place] {$\rmname_{1}\mid\top$} (u_acc);
				\path[-Latex] (u_0) edge node[pos=0.5] [in place] {$\rmname_1\mid\neg r$} (u_2);
				\path[-Latex] (u_2) edge node[pos=0.5] [in place] {$\rmname_{0}\mid\top$} (u_acc);
			\end{tikzpicture}
		}
		\caption{$\rmname_{3}$ -- \textsc{rg\&bc}}
	\end{subfigure}
	
	\vspace{10pt}
	
	\begin{subfigure}[t]{0.245\textwidth}
		\centering
		\resizebox{\linewidth}{!}{
			\begin{tikzpicture}[shorten >=1pt,node distance=2cm,on grid,auto,every initial by arrow/.style ={-Latex} ]
				\node[state] (u_0)   {$u^0_{4}$};
				\node[state] (u_1) [below left =7.01em of u_0]  {$u^1_{4}$};
				\node[state] (u_2) [below right =7.01em of u_0]  {$u^2_{4}$};
				\node[state,accepting] (u_acc) [below right =7.01em of u_1]  {$u^A_{4}$};
				
				\path[-Latex] (u_0) edge node[pos=0.5] [in place] {$\rmname_1\mid\neg m$} (u_1);
				\path[-Latex] (u_1) edge node[pos=0.5] [in place] {$\rmname_{2}\mid\top$} (u_acc);
				\path[-Latex] (u_0) edge node[pos=0.5] [in place] {$\rmname_2\mid\neg b$} (u_2);
				\path[-Latex] (u_2) edge node[pos=0.5] [in place] {$\rmname_{1}\mid\top$} (u_acc);
			\end{tikzpicture}
		}
		\caption{$\rmname_4$ -- \textsc{bc\&my}}
	\end{subfigure} \hfill
	\begin{subfigure}[t]{0.245\textwidth}
		\centering
		\resizebox{\linewidth}{!}{
			\begin{tikzpicture}[shorten >=1pt,node distance=2cm,on grid,auto,every initial by arrow/.style ={-Latex} ]
				\node[state] (u_0)   {$u^0_{5}$};
				\node[state] (u_1) [below left =7.01em of u_0]  {$u^1_{5}$};
				\node[state] (u_2) [below right =7.01em of u_0]  {$u^2_{5}$};
				\node[state,accepting] (u_acc) [below right =7.01em of u_1]  {$u^A_{5}$};
				
				\path[-Latex] (u_0) edge node[pos=0.5] [in place] {$\rmname_0\mid\neg m$} (u_1);
				\path[-Latex] (u_1) edge node[pos=0.5] [in place] {$\rmname_{2}\mid\top$} (u_acc);
				\path[-Latex] (u_0) edge node[pos=0.5] [in place] {$\rmname_2\mid\neg r$} (u_2);
				\path[-Latex] (u_2) edge node[pos=0.5] [in place] {$\rmname_{0}\mid\top$} (u_acc);
			\end{tikzpicture}
		}
		\caption{$\rmname_5$ -- \textsc{rg\&my}}
	\end{subfigure}	\hfill
	\begin{subfigure}[t]{0.245\textwidth}
		\centering
		\resizebox{0.3\linewidth}{!}{
			\begin{tikzpicture}[shorten >=1pt,node distance=2cm,on grid,auto,every initial by arrow/.style ={-Latex} ]
				\node[state] (u_0)   {$u^0_6$};
				\node[state] (u_1) [below =5.5em of u_0]  {$u^1_6$};
				\node[state,accepting] (u_acc) [below =5.5em of u_1]  {$u^A_6$};
				
				\path[-Latex] (u_0) edge node[pos=0.5] [in place] {$\rmname_0 \mid \top$} (u_1);
				\path[-Latex] (u_1) edge node[pos=0.5] [in place] {$\leaf \mid g$} (u_acc);
			\end{tikzpicture}
		}
		\caption{$\rmname_6$ -- \textsc{rgb}}
	\end{subfigure} \hfill
	\begin{subfigure}[t]{0.245\textwidth}
		\centering
		\resizebox{0.3\linewidth}{!}{
			\begin{tikzpicture}[shorten >=1pt,node distance=2cm,on grid,auto,every initial by arrow/.style ={-Latex} ]
				\node[state] (u_0)   {$u^0_7$};
				\node[state] (u_1) [below =5.5em of u_0]  {$u^1_7$};
				\node[state,accepting] (u_acc) [below =5.5em of u_1]  {$u^A_7$};
				
				\path[-Latex] (u_0) edge node[pos=0.5] [in place] {$\leaf \mid c$} (u_1);
				\path[-Latex] (u_1) edge node[pos=0.5] [in place] {$\rmname_2 \mid \top$} (u_acc);
			\end{tikzpicture}
		}
		\caption{$\rmname_7$ -- \textsc{cmy}}
	\end{subfigure}
	
	\vspace{10pt}
	
	\begin{subfigure}[t]{0.245\textwidth}
		\centering
		\resizebox{\linewidth}{!}{
			\begin{tikzpicture}[shorten >=1pt,node distance=2cm,on grid,auto,every initial by arrow/.style ={-Latex} ]
				\node[state] (u_0)   {$u^0_{8}$};
				\node[state] (u_1) [below left =7.01em of u_0]  {$u^1_{8}$};
				\node[state] (u_2) [below right =7.01em of u_0]  {$u^2_{8}$};
				\node[state,accepting] (u_acc) [below right =7.01em of u_1]  {$u^A_{8}$};
				
				\path[-Latex] (u_0) edge node[pos=0.5] [in place] {$\rmname_6\mid\neg c$} (u_1);
				\path[-Latex] (u_1) edge node[pos=0.5] [in place] {$\rmname_{7}\mid\top$} (u_acc);
				\path[-Latex] (u_0) edge node[pos=0.5] [in place] {$\rmname_7\mid\neg r$} (u_2);
				\path[-Latex] (u_2) edge node[pos=0.5] [in place] {$\rmname_{6}\mid\top$} (u_acc);
			\end{tikzpicture}
		}
		\caption{$\rmname_8$ -- \textsc{rgb\&cmy}}
	\end{subfigure}
	
	\caption{Root reward machines for each of the \ww tasks.}
	\label{fig:all_waterworld_automata}
\end{figure}

\subsection{Hyperparameters}
\label{app:hyperparameters}
\textbf{Policy and HRM Learning.}~Table~\ref{tab:hyperparameters} lists the hyperparameters used in the experiments with our approach. We also provide the hyperparameters used for CRM~\citep{IcarteKVM22} to learn policies in flat HRMs. The DQNs for CRM are like those associated with formulas in our approach.

\textbf{Flat HRM Learning Baselines.}~We briefly describe the methods used to learn flat HRMs in Section~\ref{sec:learning_flat_hrms}. Each run consists of \num{150000} episodes, and the set of instances is exactly the same across methods. The core difference with respect to learning non-flat HRMs is that there is a single task for which the HRM is learned. Our method, \learningmethod, is therefore not able to reuse previously learned HRMs for other tasks; however, it still uses the same hyperparameters (see Table~\ref{tab:hyperparameters}). In the case of DeepSynth~\citep{HasanbeigJAMK21}, JIRP~\citep{XuGAMNTW20} and LRM~\citep{IcarteWKVCM19}, we exclusively evaluate their RM learning components using traces collected through random walks.\footnote{The codebases for DeepSynth (\url{https://github.com/grockious/deepsynth}) and LRM (\url{https://bitbucket.org/RToroIcarte/lrm}) are linked in the papers, whereas the one for JIRP (\url{https://github.com/corazza/stochastic-reward-machines}) was referred to us by one of the authors through personal communication.} For a fair comparison against \learningmethod (both in the non-flat and flat learning cases), we (i)~compress the traces using the methodology described in Appendix~\ref{app:interleaving_algorithm}, and (ii)~use the OP and WOD settings of \cw and \ww respectively, where observing goal traces by randomly exploring the environment is relatively easy (especially for simple tasks such as \textsc{MilkBucket}). In these approaches, a different instance is selected at each episode following a cyclic order (i.e.,~$1$, $2$,\ldots, $I-1$, $I$, $1$, $2$, \ldots). The proposition set in these approaches includes a proposition covering the case where none of the original propositions are observed (if needed). In the case of LRM, one of the parameters is the maximum number of RM states, which we set to that of the minimal RM. Akin to other approaches, we modify DeepSynth to call the learner only when a counterexample trace is observed instead of calling it periodically, which repeatedly produced the same RM and resulted in avoidable timeouts.

\begin{table}[h]
	\caption{List of hyperparameters and their values.}
	\label{tab:hyperparameters}
	\centering
	\resizebox{0.9\linewidth}{!}{
		\begin{tabular}{lrr}
			\toprule
			Parameter                                & \cw                        & \ww \\
			\midrule
			\emph{General}\\
			
			~~Episodes                               &                            &  \\ 
			~~~~Without HRM learning                 & \num{100000} (OP, OPL); \num{200000} (FR, FRL)    & \num{100000} (WOD); \num{200000} (WD)   \\ 
			~~~~With HRM learning                    & \num{150000} (OP, OPL); \num{300000} (FR, FRL)    & \num{150000} (WOD); \num{300000} (WD)   \\
			~~Maximum episode length                 & \num{1000}                & \num{1000} \\
			~~Num. of instances $I$                  & \num{10}                  & \num{10} \\
			
			\midrule
			
			\emph{HRM policy learning (Section~\ref{sec:policy_learning}, Appendix~\ref{app:option_hierarchy_selection})} \\				
			
			~~Learning rate $\alpha$                 & \num{5e-4}                 & \num{1e-5} \\
			~~Learning rate (SMDP) $\alpha$          & \num{5e-4}                 & \num{1e-3} \\
			~~Optimizer                              & RMSprop~\citep{HintonSS12} & RMSprop~\citep{HintonSS12}\\
			~~Discount $\mdpdiscount$                & 0.9                        & 0.9 \\
			~~Discount (SMDP) $\mdpdiscount$         & 0.99                       & 0.99 \\
			~~Updated formula Q-functions per step   & 4                          & 4 \\
			
			~~Replay memory size                     & \num{500000}               & \num{500000} \\
			~~Replay start size                      & \num{100000}               & \num{100000}\\
			~~Target network update frequency        & \num{1500}                 & \num{1500} \\
			
			~~Replay memory size (SMDP)              & \num{10000}                & \num{10000} \\
			~~Replay start size (SMDP)               & \num{1000}                 & \num{1000}  \\
			~~Target network update frequency (SMDP) & 500                        & 500 \\
			
			~~Minibatch size                         & 32                         & 32 \\
			
			~~Initial exploration                    & 1.0                        & 1.0 \\
			~~Final exploration                      & 0.1                        & 0.1 \\
			~~Annealing steps                        & \num{2000000}              & \num{5000000}   \\
			~~Annealing steps (SMDP)                 & \num{10000}                & \num{10000}    \\
			
			\midrule
			
			\emph{HRM learning (Section~\ref{sec:hierarchy_learning}, Appendix~\ref{app:hierarchy_learning})} \\
			
			~~Curriculum weight $\beta$                             & 0.99                       & 0.99 \\
			~~Curriculum threshold                                  & 0.85                       & 0.75 \\
			~~Curriculum update frequency (\# episodes)             & 100                        & 100  \\
			~~ILASP time budget                                     & 2 hours                    & 2 hours \\
			~~Num. collected goal traces $\numgoaltracesexp$ (height 1)           & 25                         & 25 \\
			~~Num. collected goal traces $\numgoaltracesexp$ (height $\geq$2)  & 150                        & 150 \\
			~~Num. goal traces $\numshortestgoaltracesexp$ to learn first HRM          & 10                         & 10 \\
			
			\midrule
			
			\emph{CRM~\cite{IcarteKVM22}}\\
			
			~~Learning rate $\alpha$                 & \num{5e-4}                 & \num{1e-5} \\
			~~Optimizer                              & RMSprop~\citep{HintonSS12} & RMSprop~\citep{HintonSS12}\\
			~~Discount                               & 0.99                       & 0.99  \\
			~~Replay memory size                     &  \num{1000000}              & \num{1000000} \\
			~~Replay start size                      & \num{100000}               & \num{100000}\\
			~~Target network update frequency        & \num{1500}                 & \num{1500} \\
			~~Minibatch size                         & 32                         & 32 \\
			
			~~Initial exploration                    & 1.0                        & 1.0 \\
			~~Final exploration                      & 0.1                        & 0.1 \\
			~~Annealing steps           & \num{100000000}           & \num{2000000}  \\
			\bottomrule
	\end{tabular}}
\end{table}

\subsection{Extended Results}
\label{app:extended_results}
We here present the tables and figures on which the discussion in Section~\ref{sec:experimental_results} is based.

\subsubsection{Learning of Non-Flat HRMs}
\label{app:extended_results_non_flat_hrm_learning}
We present tables containing the results for the HRM learning component of \learningmethod. The content of the columns is the following left-to-right: (1)~task name; (2)~number of runs in which at least one goal trace was observed; (3)~number of runs in which at least one HRM was learned; (4)~time spent to learn the HRMs; (5)~number of calls made to ILASP to learn the HRMs; (6)~number of states of the final HRM; (7)~number of edges of the final HRM; (8)~number of episodes between the learning of the first HRM and the activation of the task's level; (9)~number of example traces of a given type (G = goal, D = dead-end, I = incomplete); and (10)~length of the example traces of a given type. In addition, the bottom of the tables contains the number of completed runs (i.e., the number of runs that have not timed out), the total time spent on learning the HRMs, and the total number of calls made to ILASP. In the case of \cw, Table~\ref{tab:non_flat_hrm_default_cw} shows the results for the default case (all lower level RMs are callable and options are used for exploration), Table~\ref{tab:non_flat_hrm_restricted_cw} shows the results when the set of callable RMs contains only those actually needed, and Table~\ref{tab:non_flat_hrm_action_explore_cw} shows the results using primitive actions for exploration instead of options. Analogous results are shown for \ww in Tables~\ref{tab:non_flat_hrm_default_ww}, \ref{tab:non_flat_hrm_restricted_ww} and \ref{tab:non_flat_hrm_action_explore_ww}.

The performance decay for \textsc{rgb\&cmy} observed in Figure~\ref{fig:learning_hrm_and_policy} is due to a new RM for \textsc{rg\&bc} being learned, which is indicated by a vertical line for the latter occurring exactly at the time of the decay. Following our curriculum method (see Section~\ref{sec:hierarchy_learning} and Appendix~\ref{app:curriculum_learning}), the average return for \textsc{rg\&bc} is reset to 0 and the current level is set to 2; hence, the agent stops performing \textsc{rgb\&cmy} (level~3), which causes the performance decay (the reward is 0 while a task is not active). When the average return for \textsc{rg\&bc} is again above the threshold, the agent continues learning \textsc{rgb\&cmy}.

\begin{table}
	\caption{Results of \learningmethod in \cw for the default case.}
	\label{tab:non_flat_hrm_default_cw}
	\begin{subtable}{\linewidth}
		\caption{OP}
		\centering
		\resizebox{0.9\linewidth}{!}{
			\begin{tabular}{lrrrrrrrrrrrr}
				\toprule[1.5pt]
				\multicolumn{1}{c}{Task} & \multicolumn{1}{c}{\# G} & \multicolumn{1}{c}{\# L} & \multicolumn{1}{c}{Time (s.)} & \multicolumn{1}{c}{Calls} & \multicolumn{1}{c}{States} & \multicolumn{1}{c}{Edges} & \multicolumn{1}{c}{Ep. First HRM} & \multicolumn{2}{c}{\# Examples} & \multicolumn{1}{c}{} & \multicolumn{2}{c}{Example Length} \\
				\cmidrule{9-10} \cmidrule{12-13}
				&&&&&&&\multicolumn{1}{c}{$(\times 10^2)$}
				&\multicolumn{1}{c}{G} & \multicolumn{1}{c}{I}&&\multicolumn{1}{c}{G} & \multicolumn{1}{c}{I}\\
				\midrule
				\textsc{Batter} & 5 & 5 & 11.1 (1.7) & 17.8 (1.9) & 5.0 (0.0) & 5.2 (0.2) & 1.8 (0.1) & 12.2 (0.7) & 11.6 (1.4) &  & 26.5 (2.1) & 24.2 (3.2)\\
				\textsc{Bucket} & 5 & 5 & 0.9 (0.0) & 3.6 (0.2) & 3.0 (0.0) & 2.0 (0.0) & 1.7 (0.1) & 10.0 (0.0) & 1.6 (0.2) &  & 19.4 (1.1) & 19.3 (5.7)\\
				\textsc{Compass} & 5 & 5 & 135.4 (73.3) & 18.6 (1.6) & 5.0 (0.0) & 5.2 (0.2) & 1.8 (0.2) & 11.8 (0.6) & 12.8 (1.4) &  & 28.7 (1.9) & 20.3 (2.8)\\
				\textsc{Leather} & 5 & 5 & 0.9 (0.0) & 3.8 (0.2) & 3.0 (0.0) & 2.0 (0.0) & 1.8 (0.1) & 10.0 (0.0) & 1.8 (0.2) &  & 16.7 (1.7) & 17.9 (4.4)\\
				\textsc{Paper} & 5 & 5 & 0.8 (0.1) & 3.4 (0.2) & 3.0 (0.0) & 2.0 (0.0) & 1.6 (0.1) & 10.0 (0.0) & 1.4 (0.2) &  & 19.8 (2.0) & 40.6 (27.0)\\
				\textsc{Quill} & 5 & 5 & 18.0 (3.5) & 19.8 (1.2) & 5.0 (0.0) & 5.2 (0.2) & 2.1 (0.1) & 13.2 (0.4) & 12.6 (1.1) &  & 29.6 (2.5) & 24.4 (3.2)\\
				\textsc{Sugar} & 5 & 5 & 0.8 (0.1) & 3.2 (0.2) & 3.0 (0.0) & 2.0 (0.0) & 1.7 (0.2) & 10.0 (0.0) & 1.2 (0.2) &  & 17.7 (1.6) & 17.5 (3.2)\\
				\textsc{Book} & 5 & 5 & 191.2 (36.4) & 22.8 (2.6) & 5.0 (0.0) & 5.8 (0.2) & 6.0 (0.2) & 11.4 (0.7) & 17.4 (2.2) &  & 20.5 (1.8) & 24.8 (1.5)\\
				\textsc{Map} & 5 & 5 & 549.4 (149.5) & 33.4 (3.2) & 5.0 (0.0) & 5.6 (0.2) & 6.0 (0.2) & 12.2 (0.6) & 27.2 (2.9) &  & 29.5 (3.2) & 28.7 (1.7)\\
				\textsc{MilkBucket} & 5 & 5 & 1.5 (0.2) & 4.6 (0.4) & 3.0 (0.0) & 2.0 (0.0) & 6.8 (0.5) & 10.0 (0.0) & 2.6 (0.4) &  & 11.6 (0.7) & 15.3 (4.3)\\
				\textsc{BookQuill} & 5 & 5 & 17.9 (1.4) & 19.6 (1.1) & 4.0 (0.0) & 4.0 (0.0) & 3.8 (0.1) & 10.0 (0.0) & 16.6 (1.1) &  & 27.2 (1.3) & 20.8 (1.4)\\
				\textsc{MilkB.Sugar} & 5 & 5 & 7.3 (1.2) & 12.4 (1.2) & 4.0 (0.0) & 4.0 (0.0) & 3.8 (0.1) & 10.2 (0.2) & 9.2 (1.2) &  & 16.9 (0.8) & 14.3 (1.7)\\
				\textsc{Cake} & 5 & 5 & 74.5 (25.7) & 26.4 (3.7) & 4.0 (0.0) & 3.2 (0.2) & 2.1 (0.1) & 10.2 (0.2) & 23.2 (3.6) &  & 38.4 (0.9) & 22.7 (1.6)\\
				\midrule[1.5pt]
				\multicolumn{13}{l}{\textbf{Completed Runs} = 5 \hfill \textbf{Total Time (s.)} = 1009.8 (122.3) \hfill \textbf{Total Calls} = 189.4 (4.1)} \\
				\bottomrule[1.5pt]
			\end{tabular}
		}
	\end{subtable}
	
	\vspace*{.1cm}
	
	\begin{subtable}{\linewidth}
		\caption{OPL}
		\centering
		\resizebox{0.9\linewidth}{!}{
			\begin{tabular}{lrrrrrrrrrrrrrr}
				\toprule[1.5pt]
				\multicolumn{1}{c}{Task} & \multicolumn{1}{c}{\# G} & \multicolumn{1}{c}{\# L} & \multicolumn{1}{c}{Time (s.)} & \multicolumn{1}{c}{Calls} & \multicolumn{1}{c}{States} & \multicolumn{1}{c}{Edges} & \multicolumn{1}{c}{Ep. First HRM} & \multicolumn{3}{c}{\# Examples} & \multicolumn{1}{c}{} & \multicolumn{3}{c}{Example Length} \\
				\cmidrule{9-11} \cmidrule{13-15}
				&&&&&&&\multicolumn{1}{c}{$(\times 10^2)$}
				&\multicolumn{1}{c}{G} & \multicolumn{1}{c}{D} & \multicolumn{1}{c}{I}&&\multicolumn{1}{c}{G} & \multicolumn{1}{c}{D} & \multicolumn{1}{c}{I}\\
				\midrule
				\textsc{Batter} & 5 & 5 & 13.7 (2.9) & 23.0 (3.0) & 6.0 (0.0) & 9.2 (0.2) & 12.0 (1.0) & 11.4 (0.4) & 7.0 (1.2) & 10.6 (1.6) &  & 20.4 (1.1) & 18.7 (1.6) & 12.1 (1.7)\\
				\textsc{Bucket} & 5 & 5 & 1.8 (0.2) & 7.2 (0.6) & 4.0 (0.0) & 4.0 (0.0) & 8.0 (0.5) & 10.2 (0.2) & 2.2 (0.2) & 2.8 (0.4) &  & 10.2 (0.5) & 13.4 (1.9) & 6.8 (1.7)\\
				\textsc{Compass} & 5 & 5 & 13.1 (1.7) & 22.0 (1.7) & 6.0 (0.0) & 9.2 (0.2) & 10.4 (1.4) & 11.0 (0.6) & 6.8 (1.0) & 10.2 (1.0) &  & 17.2 (1.6) & 20.9 (1.9) & 14.3 (0.8)\\
				\textsc{Leather} & 5 & 5 & 1.9 (0.2) & 7.0 (0.5) & 4.0 (0.0) & 4.0 (0.0) & 6.9 (0.5) & 10.0 (0.0) & 2.4 (0.2) & 2.6 (0.4) &  & 11.1 (0.9) & 16.9 (5.6) & 8.9 (3.3)\\
				\textsc{Paper} & 5 & 5 & 2.0 (0.2) & 7.6 (0.6) & 4.0 (0.0) & 4.0 (0.0) & 7.7 (1.1) & 10.0 (0.0) & 3.0 (0.3) & 2.6 (0.4) &  & 10.1 (0.9) & 18.9 (3.3) & 5.6 (0.8)\\
				\textsc{Quill} & 5 & 5 & 11.3 (1.2) & 22.0 (1.2) & 6.0 (0.0) & 9.2 (0.2) & 12.8 (1.5) & 10.6 (0.2) & 6.4 (0.7) & 11.0 (0.9) &  & 15.3 (1.3) & 13.5 (1.0) & 12.1 (1.4)\\
				\textsc{Sugar} & 5 & 5 & 1.7 (0.1) & 6.4 (0.4) & 4.0 (0.0) & 4.0 (0.0) & 6.5 (0.7) & 10.0 (0.0) & 2.4 (0.2) & 2.0 (0.3) &  & 9.6 (0.6) & 15.3 (3.6) & 16.6 (9.2)\\
				\textsc{Book} & 5 & 5 & 427.8 (201.6) & 32.6 (4.2) & 6.0 (0.0) & 6.6 (0.2) & 5.6 (0.2) & 12.0 (0.3) & 3.6 (0.7) & 23.0 (3.4) &  & 21.6 (1.5) & 25.9 (3.4) & 23.7 (1.3)\\
				\textsc{Map} & 5 & 5 & 647.9 (110.7) & 38.6 (3.6) & 6.0 (0.0) & 6.4 (0.2) & 5.6 (0.2) & 11.2 (0.4) & 3.8 (0.9) & 29.6 (3.5) &  & 23.1 (1.0) & 27.8 (4.6) & 26.1 (0.4)\\
				\textsc{MilkBucket} & 5 & 5 & 2.1 (0.2) & 5.4 (0.4) & 4.0 (0.0) & 3.0 (0.0) & 7.6 (0.5) & 10.0 (0.0) & 1.4 (0.4) & 2.0 (0.0) &  & 11.1 (0.5) & 26.3 (6.5) & 15.2 (5.8)\\
				\textsc{BookQuill} & 5 & 5 & 18.7 (2.3) & 16.6 (1.3) & 4.0 (0.0) & 4.0 (0.0) & 3.7 (0.2) & 10.0 (0.0) & 0.4 (0.2) & 13.2 (1.4) &  & 29.0 (1.1) & 6.2 (5.5) & 27.8 (1.4)\\
				\textsc{MilkB.Sugar} & 5 & 5 & 7.7 (0.7) & 12.2 (0.9) & 4.0 (0.0) & 4.0 (0.0) & 3.8 (0.2) & 10.0 (0.0) & 0.2 (0.2) & 9.0 (0.9) &  & 16.0 (0.9) & 1.6 (1.6) & 16.3 (1.3)\\
				\textsc{Cake} & 5 & 5 & 472.9 (216.6) & 36.0 (6.0) & 5.0 (0.0) & 4.6 (0.2) & 2.1 (0.0) & 10.0 (0.0) & 1.6 (0.4) & 31.4 (5.7) &  & 39.5 (1.2) & 41.5 (8.6) & 26.9 (0.8)\\
				\midrule[1.5pt]
				\multicolumn{15}{l}{\textbf{Completed Runs} = 5 \hfill \textbf{Total Time (s.)} = 1622.6 (328.7) \hfill \textbf{Total Calls} = 236.6 (9.3)} \\
				\bottomrule[1.5pt]
			\end{tabular}
		}
	\end{subtable}
	
	\vspace*{.1cm}
	
	\begin{subtable}{\linewidth}
		\caption{FR}
		\centering
		\resizebox{0.9\linewidth}{!}{
			\begin{tabular}{lrrrrrrrrrrrr}
				\toprule[1.5pt]
				\multicolumn{1}{c}{Task} & \multicolumn{1}{c}{\# G} & \multicolumn{1}{c}{\# L} & \multicolumn{1}{c}{Time (s.)} & \multicolumn{1}{c}{Calls} & \multicolumn{1}{c}{States} & \multicolumn{1}{c}{Edges} & \multicolumn{1}{c}{Ep. First HRM} & \multicolumn{2}{c}{\# Examples} & \multicolumn{1}{c}{} & \multicolumn{2}{c}{Example Length} \\
				\cmidrule{9-10} \cmidrule{12-13}
				&&&&&&&\multicolumn{1}{c}{$(\times 10^2)$}
				&\multicolumn{1}{c}{G} & \multicolumn{1}{c}{I}&&\multicolumn{1}{c}{G} & \multicolumn{1}{c}{I}\\
				\midrule
				\textsc{Batter} & 5 & 5 & 12.3 (1.7) & 17.6 (1.3) & 5.0 (0.0) & 5.4 (0.2) & 9.2 (1.2) & 11.6 (0.4) & 12.0 (1.2) &  & 30.3 (2.3) & 27.8 (2.0)\\
				\textsc{Bucket} & 5 & 5 & 1.2 (0.1) & 3.8 (0.2) & 3.0 (0.0) & 2.0 (0.0) & 6.7 (0.9) & 10.0 (0.0) & 1.8 (0.2) &  & 16.6 (2.5) & 28.5 (4.1)\\
				\textsc{Compass} & 5 & 5 & 14.1 (1.6) & 20.2 (1.7) & 5.0 (0.0) & 5.2 (0.2) & 9.8 (0.7) & 11.6 (0.6) & 14.6 (1.2) &  & 26.5 (0.8) & 26.5 (2.1)\\
				\textsc{Leather} & 5 & 5 & 1.1 (0.1) & 3.6 (0.2) & 3.0 (0.0) & 2.0 (0.0) & 4.5 (0.7) & 10.0 (0.0) & 1.6 (0.2) &  & 13.4 (1.3) & 16.7 (3.6)\\
				\textsc{Paper} & 5 & 5 & 1.2 (0.0) & 4.0 (0.0) & 3.0 (0.0) & 2.0 (0.0) & 4.9 (0.9) & 10.0 (0.0) & 2.0 (0.0) &  & 12.4 (1.1) & 10.9 (2.5)\\
				\textsc{Quill} & 5 & 5 & 8.9 (0.9) & 16.0 (0.8) & 5.0 (0.0) & 5.2 (0.2) & 9.4 (1.7) & 10.6 (0.2) & 11.4 (0.6) &  & 25.4 (0.3) & 25.5 (2.7)\\
				\textsc{Sugar} & 5 & 5 & 1.1 (0.1) & 3.8 (0.2) & 3.0 (0.0) & 2.0 (0.0) & 5.2 (0.3) & 10.0 (0.0) & 1.8 (0.2) &  & 15.3 (1.7) & 21.0 (10.1)\\
				\textsc{Book} & 5 & 5 & 220.2 (83.3) & 25.2 (3.4) & 5.0 (0.0) & 5.6 (0.2) & 6.1 (0.2) & 10.2 (0.2) & 21.0 (3.4) &  & 21.9 (1.0) & 18.4 (0.7)\\
				\textsc{Map} & 5 & 5 & 628.3 (85.4) & 37.8 (3.7) & 5.0 (0.0) & 5.6 (0.2) & 5.8 (0.1) & 10.0 (0.0) & 33.8 (3.7) &  & 26.4 (1.0) & 21.4 (0.7)\\
				\textsc{MilkBucket} & 5 & 5 & 1.9 (0.2) & 5.0 (0.3) & 3.0 (0.0) & 2.0 (0.0) & 9.8 (0.7) & 10.0 (0.0) & 3.0 (0.3) &  & 13.2 (0.7) & 12.8 (3.2)\\
				\textsc{BookQuill} & 5 & 5 & 12.9 (2.2) & 15.6 (1.7) & 4.0 (0.0) & 4.0 (0.0) & 3.9 (0.1) & 10.0 (0.0) & 12.6 (1.7) &  & 29.0 (1.5) & 13.3 (0.8)\\
				\textsc{MilkB.Sugar} & 5 & 5 & 7.2 (0.6) & 12.0 (0.7) & 4.0 (0.0) & 4.0 (0.0) & 3.9 (0.2) & 10.0 (0.0) & 9.0 (0.7) &  & 18.9 (0.9) & 10.1 (1.0)\\
				\textsc{Cake} & 5 & 5 & 121.1 (41.1) & 34.0 (4.8) & 4.0 (0.0) & 3.0 (0.0) & 2.2 (0.0) & 10.0 (0.0) & 31.0 (4.8) &  & 42.2 (1.7) & 16.2 (1.1)\\
				\midrule[1.5pt]
				\multicolumn{13}{l}{\textbf{Completed Runs} = 5 \hfill \textbf{Total Time (s.)} = 1031.6 (150.3) \hfill \textbf{Total Calls} = 198.6 (11.3)} \\
				\bottomrule[1.5pt]
			\end{tabular}
		}
	\end{subtable}
	
	\vspace*{.1cm}
	
	\begin{subtable}{\linewidth}
		\caption{FRL}
		\centering
		\resizebox{0.9\linewidth}{!}{
			\begin{tabular}{lrrrrrrrrrrrrrr}
				\toprule[1.5pt]
				\multicolumn{1}{c}{Task} & \multicolumn{1}{c}{\# G} & \multicolumn{1}{c}{\# L} & \multicolumn{1}{c}{Time (s.)} & \multicolumn{1}{c}{Calls} & \multicolumn{1}{c}{States} & \multicolumn{1}{c}{Edges} & \multicolumn{1}{c}{Ep. First HRM} & \multicolumn{3}{c}{\# Examples} & \multicolumn{1}{c}{} & \multicolumn{3}{c}{Example Length} \\
				\cmidrule{9-11} \cmidrule{13-15}
				&&&&&&&\multicolumn{1}{c}{$(\times 10^2)$}
				&\multicolumn{1}{c}{G} & \multicolumn{1}{c}{D} & \multicolumn{1}{c}{I}&&\multicolumn{1}{c}{G} & \multicolumn{1}{c}{D} & \multicolumn{1}{c}{I}\\
				\midrule
				\textsc{Batter} & 5 & 5 & 11.3 (1.4) & 23.4 (2.5) & 6.0 (0.0) & 9.2 (0.2) & 468.4 (121.9) & 10.4 (0.2) & 7.6 (0.9) & 11.4 (1.9) &  & 11.9 (0.6) & 10.1 (1.3) & 9.9 (0.4)\\
				\textsc{Bucket} & 5 & 5 & 2.3 (0.2) & 7.0 (0.3) & 4.0 (0.0) & 4.0 (0.0) & 129.5 (69.4) & 10.2 (0.2) & 2.8 (0.2) & 2.0 (0.3) &  & 7.8 (0.5) & 9.9 (1.7) & 6.4 (2.1)\\
				\textsc{Compass} & 5 & 5 & 13.0 (1.9) & 24.6 (2.2) & 6.0 (0.0) & 9.4 (0.2) & 550.8 (156.4) & 10.4 (0.2) & 7.8 (1.0) & 12.4 (1.2) &  & 12.5 (1.6) & 9.4 (1.0) & 8.4 (0.5)\\
				\textsc{Leather} & 5 & 5 & 2.5 (0.3) & 7.8 (0.7) & 4.0 (0.0) & 4.0 (0.0) & 89.0 (18.0) & 10.0 (0.0) & 3.2 (0.4) & 2.6 (0.4) &  & 7.3 (0.4) & 9.3 (1.7) & 3.7 (0.4)\\
				\textsc{Paper} & 5 & 5 & 2.2 (0.1) & 7.0 (0.3) & 4.0 (0.0) & 4.0 (0.0) & 82.7 (18.8) & 10.0 (0.0) & 3.0 (0.0) & 2.0 (0.3) &  & 6.9 (0.7) & 10.2 (1.8) & 4.7 (2.7)\\
				\textsc{Quill} & 5 & 5 & 11.6 (1.1) & 23.8 (1.5) & 6.0 (0.0) & 9.6 (0.2) & 458.9 (61.0) & 10.6 (0.2) & 8.0 (0.9) & 11.2 (1.2) &  & 11.9 (0.6) & 13.1 (2.7) & 9.2 (0.8)\\
				\textsc{Sugar} & 5 & 5 & 2.7 (0.2) & 8.4 (0.7) & 4.0 (0.0) & 4.0 (0.0) & 103.5 (39.5) & 10.0 (0.0) & 3.6 (0.4) & 2.8 (0.5) &  & 8.2 (0.7) & 10.1 (1.9) & 5.0 (1.1)\\
				\textsc{Book} & 5 & 5 & 301.7 (98.1) & 36.4 (1.9) & 6.0 (0.0) & 6.8 (0.2) & 5.3 (0.1) & 10.2 (0.2) & 5.0 (0.7) & 27.2 (1.9) &  & 21.7 (1.1) & 18.8 (2.2) & 16.1 (0.6)\\
				\textsc{Map} & 5 & 5 & 754.1 (158.2) & 44.6 (2.6) & 6.0 (0.0) & 7.0 (0.0) & 5.5 (0.2) & 10.2 (0.2) & 5.2 (0.4) & 35.2 (2.3) &  & 25.6 (0.5) & 20.4 (2.9) & 18.7 (0.6)\\
				\textsc{MilkBucket} & 5 & 5 & 2.8 (0.1) & 6.6 (0.2) & 4.0 (0.0) & 3.0 (0.0) & 6.9 (0.4) & 10.0 (0.0) & 2.0 (0.0) & 2.6 (0.2) &  & 12.5 (0.8) & 13.1 (3.7) & 7.4 (2.2)\\
				\textsc{BookQuill} & 5 & 5 & 19.8 (2.9) & 19.6 (1.6) & 4.0 (0.0) & 4.0 (0.0) & 4.3 (0.1) & 10.0 (0.0) & 0.8 (0.4) & 15.8 (1.2) &  & 28.4 (1.1) & 2.7 (1.3) & 13.5 (0.9)\\
				\textsc{MilkB.Sugar} & 5 & 5 & 8.8 (0.9) & 12.6 (1.0) & 4.0 (0.0) & 4.0 (0.0) & 4.0 (0.1) & 10.0 (0.0) & 1.2 (0.5) & 8.4 (0.7) &  & 19.3 (1.3) & 3.7 (2.0) & 10.7 (2.0)\\
				\textsc{Cake} & 5 & 5 & 344.0 (87.7) & 46.2 (4.9) & 5.0 (0.0) & 4.8 (0.2) & 2.8 (0.1) & 10.0 (0.0) & 2.8 (0.7) & 40.4 (4.5) &  & 44.5 (2.3) & 21.8 (2.2) & 17.3 (1.0)\\
				\midrule[1.5pt]
				\multicolumn{15}{l}{\textbf{Completed Runs} = 5 \hfill \textbf{Total Time (s.)} = 1476.8 (175.3) \hfill \textbf{Total Calls} = 268.0 (6.5)} \\
				\bottomrule[1.5pt]
			\end{tabular}
		}
	\end{subtable}
\end{table}

\begin{table}
	\caption{Results of \learningmethod in \cw with a restricted set of callable RMs.}
	\label{tab:non_flat_hrm_restricted_cw}
	\begin{subtable}{\linewidth}
		\caption{OP}
		\centering
		\resizebox{0.9\linewidth}{!}{
			\begin{tabular}{lrrrrrrrrrrrr}
				\toprule[1.5pt]
				\multicolumn{1}{c}{Task} & \multicolumn{1}{c}{\# G} & \multicolumn{1}{c}{\# L} & \multicolumn{1}{c}{Time (s.)} & \multicolumn{1}{c}{Calls} & \multicolumn{1}{c}{States} & \multicolumn{1}{c}{Edges} & \multicolumn{1}{c}{Ep. First HRM} & \multicolumn{2}{c}{\# Examples} & \multicolumn{1}{c}{} & \multicolumn{2}{c}{Example Length} \\
				\cmidrule{9-10} \cmidrule{12-13}
				&&&&&&&\multicolumn{1}{c}{$(\times 10^2)$}
				&\multicolumn{1}{c}{G} & \multicolumn{1}{c}{I}&&\multicolumn{1}{c}{G} & \multicolumn{1}{c}{I}\\
				\midrule
				\textsc{Batter} & 5 & 5 & 11.2 (1.6) & 17.8 (1.9) & 5.0 (0.0) & 5.2 (0.2) & 1.8 (0.1) & 12.2 (0.7) & 11.6 (1.4) &  & 26.5 (2.1) & 24.2 (3.2)\\
				\textsc{Bucket} & 5 & 5 & 0.9 (0.0) & 3.6 (0.2) & 3.0 (0.0) & 2.0 (0.0) & 1.7 (0.1) & 10.0 (0.0) & 1.6 (0.2) &  & 19.4 (1.1) & 19.3 (5.7)\\
				\textsc{Compass} & 5 & 5 & 15.5 (4.2) & 18.6 (1.6) & 5.0 (0.0) & 5.2 (0.2) & 1.8 (0.2) & 11.8 (0.6) & 12.8 (1.4) &  & 28.7 (1.9) & 20.3 (2.8)\\
				\textsc{Leather} & 5 & 5 & 0.9 (0.0) & 3.8 (0.2) & 3.0 (0.0) & 2.0 (0.0) & 1.8 (0.1) & 10.0 (0.0) & 1.8 (0.2) &  & 16.7 (1.7) & 17.9 (4.4)\\
				\textsc{Paper} & 5 & 5 & 0.9 (0.0) & 3.4 (0.2) & 3.0 (0.0) & 2.0 (0.0) & 1.6 (0.1) & 10.0 (0.0) & 1.4 (0.2) &  & 19.8 (2.0) & 40.6 (27.0)\\
				\textsc{Quill} & 5 & 5 & 18.2 (3.5) & 19.8 (1.2) & 5.0 (0.0) & 5.2 (0.2) & 2.1 (0.1) & 13.2 (0.4) & 12.6 (1.1) &  & 29.6 (2.5) & 24.4 (3.2)\\
				\textsc{Sugar} & 5 & 5 & 0.8 (0.0) & 3.2 (0.2) & 3.0 (0.0) & 2.0 (0.0) & 1.7 (0.2) & 10.0 (0.0) & 1.2 (0.2) &  & 17.7 (1.6) & 17.5 (3.2)\\
				\textsc{Book} & 5 & 5 & 45.8 (4.5) & 19.6 (0.9) & 5.0 (0.0) & 5.6 (0.2) & 6.0 (0.2) & 11.2 (1.0) & 14.4 (0.9) &  & 21.6 (1.8) & 21.0 (1.7)\\
				\textsc{Map} & 5 & 5 & 64.1 (10.6) & 22.0 (2.6) & 5.0 (0.0) & 5.2 (0.2) & 6.1 (0.2) & 10.8 (0.4) & 17.2 (2.7) &  & 22.5 (1.6) & 23.0 (1.2)\\
				\textsc{MilkBucket} & 5 & 5 & 1.2 (0.1) & 4.4 (0.4) & 3.0 (0.0) & 2.0 (0.0) & 6.8 (0.3) & 10.0 (0.0) & 2.4 (0.4) &  & 12.1 (0.7) & 15.3 (1.6)\\
				\textsc{BookQuill} & 5 & 5 & 4.5 (0.8) & 10.2 (1.4) & 4.0 (0.0) & 4.0 (0.0) & 3.9 (0.1) & 10.0 (0.0) & 7.2 (1.4) &  & 26.1 (0.8) & 22.4 (0.9)\\
				\textsc{MilkB.Sugar} & 5 & 5 & 3.5 (0.5) & 9.6 (1.3) & 4.0 (0.0) & 4.0 (0.0) & 3.9 (0.1) & 10.2 (0.2) & 6.4 (1.2) &  & 17.4 (0.5) & 12.5 (0.8)\\
				\textsc{Cake} & 5 & 5 & 9.1 (0.9) & 17.0 (0.9) & 4.0 (0.0) & 3.2 (0.2) & 2.1 (0.1) & 10.0 (0.0) & 14.0 (0.9) &  & 37.5 (1.9) & 18.0 (1.9)\\
				\midrule[1.5pt]
				\multicolumn{13}{l}{\textbf{Completed Runs} = 5 \hfill \textbf{Total Time (s.)} = 176.6 (13.1) \hfill \textbf{Total Calls} = 153.0 (3.6)} \\
				\bottomrule[1.5pt]
			\end{tabular}
		}
	\end{subtable}
	
	\vspace*{.1cm} 
	
	\begin{subtable}{\linewidth}
		\caption{OPL}
		\centering
		\resizebox{0.9\linewidth}{!}{
			\begin{tabular}{lrrrrrrrrrrrrrr}
				\toprule[1.5pt]
				\multicolumn{1}{c}{Task} & \multicolumn{1}{c}{\# G} & \multicolumn{1}{c}{\# L} & \multicolumn{1}{c}{Time (s.)} & \multicolumn{1}{c}{Calls} & \multicolumn{1}{c}{States} & \multicolumn{1}{c}{Edges} & \multicolumn{1}{c}{Ep. First HRM} & \multicolumn{3}{c}{\# Examples} & \multicolumn{1}{c}{} & \multicolumn{3}{c}{Example Length} \\
				\cmidrule{9-11} \cmidrule{13-15}
				&&&&&&&\multicolumn{1}{c}{$(\times 10^2)$}
				&\multicolumn{1}{c}{G} & \multicolumn{1}{c}{D} & \multicolumn{1}{c}{I}&&\multicolumn{1}{c}{G} & \multicolumn{1}{c}{D} & \multicolumn{1}{c}{I}\\
				\midrule
				\textsc{Batter} & 5 & 5 & 13.9 (3.0) & 23.0 (3.0) & 6.0 (0.0) & 9.2 (0.2) & 12.0 (1.0) & 11.4 (0.4) & 7.0 (1.2) & 10.6 (1.6) &  & 20.4 (1.1) & 18.7 (1.6) & 12.1 (1.7)\\
				\textsc{Bucket} & 5 & 5 & 1.8 (0.1) & 7.2 (0.6) & 4.0 (0.0) & 4.0 (0.0) & 8.0 (0.5) & 10.2 (0.2) & 2.2 (0.2) & 2.8 (0.4) &  & 10.2 (0.5) & 13.4 (1.9) & 6.8 (1.7)\\
				\textsc{Compass} & 5 & 5 & 13.2 (1.7) & 22.0 (1.7) & 6.0 (0.0) & 9.2 (0.2) & 10.4 (1.4) & 11.0 (0.6) & 6.8 (1.0) & 10.2 (1.0) &  & 17.2 (1.6) & 20.9 (1.9) & 14.3 (0.8)\\
				\textsc{Leather} & 5 & 5 & 1.9 (0.1) & 7.0 (0.5) & 4.0 (0.0) & 4.0 (0.0) & 6.9 (0.5) & 10.0 (0.0) & 2.4 (0.2) & 2.6 (0.4) &  & 11.1 (0.9) & 16.9 (5.6) & 8.9 (3.3)\\
				\textsc{Paper} & 5 & 5 & 2.0 (0.2) & 7.6 (0.6) & 4.0 (0.0) & 4.0 (0.0) & 7.7 (1.1) & 10.0 (0.0) & 3.0 (0.3) & 2.6 (0.4) &  & 10.1 (0.9) & 18.9 (3.3) & 5.6 (0.8)\\
				\textsc{Quill} & 5 & 5 & 11.5 (1.3) & 22.0 (1.2) & 6.0 (0.0) & 9.2 (0.2) & 12.8 (1.5) & 10.6 (0.2) & 6.4 (0.7) & 11.0 (0.9) &  & 15.3 (1.3) & 13.5 (1.0) & 12.1 (1.4)\\
				\textsc{Sugar} & 5 & 5 & 1.6 (0.1) & 6.4 (0.4) & 4.0 (0.0) & 4.0 (0.0) & 6.5 (0.7) & 10.0 (0.0) & 2.4 (0.2) & 2.0 (0.3) &  & 9.6 (0.6) & 15.3 (3.6) & 16.6 (9.2)\\
				\textsc{Book} & 5 & 5 & 69.0 (20.5) & 21.8 (2.2) & 6.0 (0.0) & 6.2 (0.2) & 5.5 (0.1) & 10.4 (0.2) & 5.2 (0.9) & 12.2 (1.9) &  & 20.4 (1.3) & 21.2 (2.0) & 20.8 (1.7)\\
				\textsc{Map} & 5 & 5 & 76.5 (6.0) & 24.2 (1.3) & 6.0 (0.0) & 6.4 (0.2) & 5.7 (0.3) & 11.6 (0.8) & 4.0 (0.3) & 14.6 (1.0) &  & 24.8 (3.0) & 21.4 (2.1) & 25.7 (0.8)\\
				\textsc{MilkBucket} & 5 & 5 & 1.7 (0.2) & 6.0 (0.6) & 4.0 (0.0) & 3.0 (0.0) & 7.5 (0.7) & 10.2 (0.2) & 1.4 (0.2) & 2.4 (0.2) &  & 11.7 (0.7) & 25.4 (6.4) & 14.2 (3.4)\\
				\textsc{BookQuill} & 5 & 5 & 5.3 (0.9) & 10.8 (1.4) & 4.0 (0.0) & 4.0 (0.0) & 3.7 (0.1) & 10.0 (0.0) & 1.0 (0.5) & 6.8 (0.9) &  & 27.7 (1.0) & 11.2 (5.4) & 21.1 (1.7)\\
				\textsc{MilkB.Sugar} & 5 & 5 & 4.0 (0.9) & 9.8 (1.7) & 4.0 (0.0) & 4.0 (0.0) & 3.8 (0.1) & 10.0 (0.0) & 1.6 (0.7) & 5.2 (1.2) &  & 18.4 (0.7) & 8.3 (2.9) & 15.6 (1.7)\\
				\textsc{Cake} & 5 & 5 & 16.2 (0.4) & 20.8 (0.2) & 5.0 (0.0) & 4.0 (0.0) & 2.1 (0.1) & 10.0 (0.0) & 3.2 (0.2) & 14.6 (0.2) &  & 38.1 (0.9) & 22.5 (3.3) & 25.8 (1.7)\\
				\midrule[1.5pt]
				\multicolumn{15}{l}{\textbf{Completed Runs} = 5 \hfill \textbf{Total Time (s.)} = 218.6 (21.1) \hfill \textbf{Total Calls} = 188.6 (5.4)} \\
				\bottomrule[1.5pt]
			\end{tabular}
		}
	\end{subtable}
	
	\vspace*{.1cm}
	
	\begin{subtable}{\linewidth}
		\caption{FR}
		\centering
		\resizebox{0.9\linewidth}{!}{
			\begin{tabular}{lrrrrrrrrrrrr}
				\toprule[1.5pt]
				\multicolumn{1}{c}{Task} & \multicolumn{1}{c}{\# G} & \multicolumn{1}{c}{\# L} & \multicolumn{1}{c}{Time (s.)} & \multicolumn{1}{c}{Calls} & \multicolumn{1}{c}{States} & \multicolumn{1}{c}{Edges} & \multicolumn{1}{c}{Ep. First HRM} & \multicolumn{2}{c}{\# Examples} & \multicolumn{1}{c}{} & \multicolumn{2}{c}{Example Length} \\
				\cmidrule{9-10} \cmidrule{12-13}
				&&&&&&&\multicolumn{1}{c}{$(\times 10^2)$}
				&\multicolumn{1}{c}{G} & \multicolumn{1}{c}{I}&&\multicolumn{1}{c}{G} & \multicolumn{1}{c}{I}\\
				\midrule
				\textsc{Batter} & 5 & 5 & 12.6 (1.8) & 17.6 (1.3) & 5.0 (0.0) & 5.4 (0.2) & 9.2 (1.2) & 11.6 (0.4) & 12.0 (1.2) &  & 30.3 (2.3) & 27.8 (2.0)\\
				\textsc{Bucket} & 5 & 5 & 1.2 (0.1) & 3.8 (0.2) & 3.0 (0.0) & 2.0 (0.0) & 6.7 (0.9) & 10.0 (0.0) & 1.8 (0.2) &  & 16.6 (2.5) & 28.5 (4.1)\\
				\textsc{Compass} & 5 & 5 & 14.1 (1.5) & 20.2 (1.7) & 5.0 (0.0) & 5.2 (0.2) & 9.8 (0.7) & 11.6 (0.6) & 14.6 (1.2) &  & 26.5 (0.8) & 26.5 (2.1)\\
				\textsc{Leather} & 5 & 5 & 1.1 (0.1) & 3.6 (0.2) & 3.0 (0.0) & 2.0 (0.0) & 4.5 (0.7) & 10.0 (0.0) & 1.6 (0.2) &  & 13.4 (1.3) & 16.7 (3.6)\\
				\textsc{Paper} & 5 & 5 & 1.2 (0.1) & 4.0 (0.0) & 3.0 (0.0) & 2.0 (0.0) & 4.9 (0.9) & 10.0 (0.0) & 2.0 (0.0) &  & 12.4 (1.1) & 10.9 (2.5)\\
				\textsc{Quill} & 5 & 5 & 9.3 (0.8) & 16.0 (0.8) & 5.0 (0.0) & 5.2 (0.2) & 9.4 (1.7) & 10.6 (0.2) & 11.4 (0.6) &  & 25.4 (0.3) & 25.5 (2.7)\\
				\textsc{Sugar} & 5 & 5 & 1.4 (0.2) & 3.8 (0.2) & 3.0 (0.0) & 2.0 (0.0) & 5.2 (0.3) & 10.0 (0.0) & 1.8 (0.2) &  & 15.3 (1.7) & 21.0 (10.1)\\
				\textsc{Book} & 5 & 5 & 43.8 (13.0) & 20.0 (1.9) & 5.0 (0.0) & 5.4 (0.2) & 6.0 (0.1) & 10.0 (0.0) & 16.0 (1.9) &  & 21.9 (1.0) & 14.7 (1.4)\\
				\textsc{Map} & 5 & 5 & 85.2 (13.4) & 22.2 (2.5) & 5.0 (0.0) & 5.2 (0.2) & 5.9 (0.1) & 10.2 (0.2) & 18.0 (2.6) &  & 26.5 (0.9) & 18.2 (1.2)\\
				\textsc{MilkBucket} & 5 & 5 & 1.4 (0.1) & 4.4 (0.2) & 3.0 (0.0) & 2.0 (0.0) & 10.2 (0.9) & 10.0 (0.0) & 2.4 (0.2) &  & 13.0 (0.8) & 12.2 (2.8)\\
				\textsc{BookQuill} & 5 & 5 & 6.3 (0.9) & 13.2 (1.7) & 4.0 (0.0) & 4.0 (0.0) & 3.8 (0.1) & 10.0 (0.0) & 10.2 (1.7) &  & 30.6 (2.0) & 11.9 (1.2)\\
				\textsc{MilkB.Sugar} & 5 & 5 & 4.8 (0.6) & 11.8 (1.3) & 4.0 (0.0) & 4.0 (0.0) & 3.8 (0.1) & 10.0 (0.0) & 8.8 (1.3) &  & 19.8 (0.7) & 8.6 (1.0)\\
				\textsc{Cake} & 5 & 5 & 12.5 (1.8) & 20.8 (2.5) & 4.0 (0.0) & 3.0 (0.0) & 2.3 (0.1) & 10.0 (0.0) & 17.8 (2.5) &  & 44.2 (2.6) & 13.1 (1.2)\\
				\midrule[1.5pt]
				\multicolumn{13}{l}{\textbf{Completed Runs} = 5 \hfill \textbf{Total Time (s.)} = 194.9 (17.6) \hfill \textbf{Total Calls} = 161.4 (7.0)} \\
				\bottomrule[1.5pt]
			\end{tabular}
		}
	\end{subtable}
	
	\vspace*{.1cm}
	
	\begin{subtable}{\linewidth}
		\caption{FRL}
		\centering
		\resizebox{0.9\linewidth}{!}{
			\begin{tabular}{lrrrrrrrrrrrrrr}
				\toprule[1.5pt]
				\multicolumn{1}{c}{Task} & \multicolumn{1}{c}{\# G} & \multicolumn{1}{c}{\# L} & \multicolumn{1}{c}{Time (s.)} & \multicolumn{1}{c}{Calls} & \multicolumn{1}{c}{States} & \multicolumn{1}{c}{Edges} & \multicolumn{1}{c}{Ep. First HRM} & \multicolumn{3}{c}{\# Examples} & \multicolumn{1}{c}{} & \multicolumn{3}{c}{Example Length} \\
				\cmidrule{9-11} \cmidrule{13-15}
				&&&&&&&\multicolumn{1}{c}{$(\times 10^2)$}
				&\multicolumn{1}{c}{G} & \multicolumn{1}{c}{D} & \multicolumn{1}{c}{I}&&\multicolumn{1}{c}{G} & \multicolumn{1}{c}{D} & \multicolumn{1}{c}{I}\\
				\midrule
				\textsc{Batter} & 5 & 5 & 11.2 (1.4) & 23.4 (2.5) & 6.0 (0.0) & 9.2 (0.2) & 468.4 (121.9) & 10.4 (0.2) & 7.6 (0.9) & 11.4 (1.9) &  & 11.9 (0.6) & 10.1 (1.3) & 9.9 (0.4)\\
				\textsc{Bucket} & 5 & 5 & 2.4 (0.1) & 7.0 (0.3) & 4.0 (0.0) & 4.0 (0.0) & 129.5 (69.4) & 10.2 (0.2) & 2.8 (0.2) & 2.0 (0.3) &  & 7.8 (0.5) & 9.9 (1.7) & 6.4 (2.1)\\
				\textsc{Compass} & 5 & 5 & 13.1 (1.9) & 24.6 (2.2) & 6.0 (0.0) & 9.4 (0.2) & 550.8 (156.4) & 10.4 (0.2) & 7.8 (1.0) & 12.4 (1.2) &  & 12.5 (1.6) & 9.4 (1.0) & 8.4 (0.5)\\
				\textsc{Leather} & 5 & 5 & 2.5 (0.4) & 7.8 (0.7) & 4.0 (0.0) & 4.0 (0.0) & 89.0 (18.0) & 10.0 (0.0) & 3.2 (0.4) & 2.6 (0.4) &  & 7.3 (0.4) & 9.3 (1.7) & 3.7 (0.4)\\
				\textsc{Paper} & 5 & 5 & 2.1 (0.1) & 7.0 (0.3) & 4.0 (0.0) & 4.0 (0.0) & 82.7 (18.8) & 10.0 (0.0) & 3.0 (0.0) & 2.0 (0.3) &  & 6.9 (0.7) & 10.2 (1.8) & 4.7 (2.7)\\
				\textsc{Quill} & 5 & 5 & 11.6 (1.2) & 23.8 (1.5) & 6.0 (0.0) & 9.6 (0.2) & 458.9 (61.0) & 10.6 (0.2) & 8.0 (0.9) & 11.2 (1.2) &  & 11.9 (0.6) & 13.1 (2.7) & 9.2 (0.8)\\
				\textsc{Sugar} & 5 & 5 & 2.6 (0.2) & 8.4 (0.7) & 4.0 (0.0) & 4.0 (0.0) & 103.5 (39.5) & 10.0 (0.0) & 3.6 (0.4) & 2.8 (0.5) &  & 8.2 (0.7) & 10.1 (1.9) & 5.0 (1.1)\\
				\textsc{Book} & 5 & 5 & 62.2 (13.2) & 27.4 (2.2) & 6.0 (0.0) & 6.6 (0.2) & 5.3 (0.1) & 10.2 (0.2) & 5.6 (0.6) & 17.6 (1.7) &  & 23.0 (1.0) & 16.5 (2.0) & 13.4 (1.0)\\
				\textsc{Map} & 5 & 5 & 131.3 (28.0) & 34.0 (3.0) & 6.0 (0.0) & 6.6 (0.2) & 5.5 (0.2) & 10.2 (0.2) & 6.8 (0.7) & 23.0 (2.4) &  & 26.2 (0.7) & 16.9 (1.6) & 14.5 (0.5)\\
				\textsc{MilkBucket} & 5 & 5 & 2.7 (0.7) & 6.6 (0.6) & 4.0 (0.0) & 3.0 (0.0) & 6.8 (0.3) & 10.0 (0.0) & 2.2 (0.2) & 2.4 (0.4) &  & 12.0 (0.8) & 9.4 (1.0) & 9.9 (2.3)\\
				\textsc{BookQuill} & 5 & 5 & 6.8 (0.6) & 12.6 (0.7) & 4.0 (0.0) & 4.0 (0.0) & 4.4 (0.2) & 10.0 (0.0) & 1.6 (0.5) & 8.0 (0.3) &  & 32.3 (2.3) & 4.9 (1.5) & 11.6 (1.1)\\
				\textsc{MilkB.Sugar} & 5 & 5 & 5.4 (0.6) & 12.2 (1.0) & 4.0 (0.0) & 4.0 (0.0) & 4.0 (0.1) & 10.2 (0.2) & 1.0 (0.4) & 8.0 (0.4) &  & 20.4 (1.7) & 2.9 (1.4) & 10.3 (1.2)\\
				\textsc{Cake} & 5 & 5 & 16.3 (1.2) & 21.2 (1.0) & 5.0 (0.0) & 4.0 (0.0) & 2.8 (0.0) & 10.0 (0.0) & 2.6 (0.2) & 15.6 (1.2) &  & 47.7 (1.9) & 15.0 (0.7) & 16.0 (0.9)\\
				\midrule[1.5pt]
				\multicolumn{15}{l}{\textbf{Completed Runs} = 5 \hfill \textbf{Total Time (s.)} = 270.1 (34.6) \hfill \textbf{Total Calls} = 216.0 (5.1)} \\
				\bottomrule[1.5pt]
			\end{tabular}
		}
	\end{subtable}
\end{table}

\begin{table}
	\caption{Results of \learningmethod in \cw without exploration using options.}
	\label{tab:non_flat_hrm_action_explore_cw}
	\begin{subtable}{\linewidth}
		\caption{OP}
		\resizebox{\linewidth}{!}{
			\begin{tabular}{lrrrrrrrrrrrr}
				\toprule[1.5pt]
				\multicolumn{1}{c}{Task} & \multicolumn{1}{c}{\# G} & \multicolumn{1}{c}{\# L} & \multicolumn{1}{c}{Time (s.)} & \multicolumn{1}{c}{Calls} & \multicolumn{1}{c}{States} & \multicolumn{1}{c}{Edges} & \multicolumn{1}{c}{Ep. First HRM} & \multicolumn{2}{c}{\# Examples} & \multicolumn{1}{c}{} & \multicolumn{2}{c}{Example Length} \\
				\cmidrule{9-10} \cmidrule{12-13}
				&&&&&&&\multicolumn{1}{c}{$(\times 10^2)$}
				&\multicolumn{1}{c}{G} & \multicolumn{1}{c}{I}&&\multicolumn{1}{c}{G} & \multicolumn{1}{c}{I}\\
				\midrule
				\textsc{Batter} & 5 & 5 & 11.2 (1.7) & 17.8 (1.9) & 5.0 (0.0) & 5.2 (0.2) & 1.8 (0.1) & 12.2 (0.7) & 11.6 (1.4) &  & 26.5 (2.1) & 24.2 (3.2)\\
				\textsc{Bucket} & 5 & 5 & 0.9 (0.0) & 3.6 (0.2) & 3.0 (0.0) & 2.0 (0.0) & 1.7 (0.1) & 10.0 (0.0) & 1.6 (0.2) &  & 19.4 (1.1) & 19.3 (5.7)\\
				\textsc{Compass} & 5 & 5 & 15.6 (4.1) & 18.6 (1.6) & 5.0 (0.0) & 5.2 (0.2) & 1.8 (0.2) & 11.8 (0.6) & 12.8 (1.4) &  & 28.7 (1.9) & 20.3 (2.8)\\
				\textsc{Leather} & 5 & 5 & 0.9 (0.1) & 3.8 (0.2) & 3.0 (0.0) & 2.0 (0.0) & 1.8 (0.1) & 10.0 (0.0) & 1.8 (0.2) &  & 16.7 (1.7) & 17.9 (4.4)\\
				\textsc{Paper} & 5 & 5 & 0.9 (0.1) & 3.4 (0.2) & 3.0 (0.0) & 2.0 (0.0) & 1.6 (0.1) & 10.0 (0.0) & 1.4 (0.2) &  & 19.8 (2.0) & 40.6 (27.0)\\
				\textsc{Quill} & 5 & 5 & 18.3 (3.6) & 19.8 (1.2) & 5.0 (0.0) & 5.2 (0.2) & 2.1 (0.1) & 13.2 (0.4) & 12.6 (1.1) &  & 29.6 (2.5) & 24.4 (3.2)\\
				\textsc{Sugar} & 5 & 5 & 0.9 (0.0) & 3.2 (0.2) & 3.0 (0.0) & 2.0 (0.0) & 1.7 (0.2) & 10.0 (0.0) & 1.2 (0.2) &  & 17.7 (1.6) & 17.5 (3.2)\\
				\textsc{Book} & 5 & 5 & 529.0 (164.2) & 21.2 (1.4) & 5.0 (0.0) & 5.8 (0.2) & 6.8 (0.2) & 10.2 (0.2) & 17.0 (1.5) &  & 33.0 (2.6) & 23.7 (1.3)\\
				\textsc{Map} & 5 & 5 & 1924.2 (443.5) & 28.0 (3.8) & 5.0 (0.0) & 5.4 (0.2) & 7.8 (0.4) & 10.4 (0.2) & 23.6 (3.7) &  & 40.1 (1.0) & 29.4 (1.3)\\
				\textsc{MilkBucket} & 5 & 5 & 1.6 (0.2) & 4.4 (0.4) & 3.0 (0.0) & 2.0 (0.0) & 6.1 (0.3) & 10.0 (0.0) & 2.4 (0.4) &  & 16.0 (1.0) & 14.2 (1.3)\\
				\textsc{BookQuill} & 5 & 5 & 42.7 (10.1) & 24.6 (3.9) & 4.0 (0.0) & 4.0 (0.0) & 6.8 (0.2) & 10.0 (0.0) & 21.6 (3.9) &  & 55.8 (2.7) & 21.2 (1.1)\\
				\textsc{MilkB.Sugar} & 5 & 5 & 8.1 (0.8) & 11.8 (1.0) & 4.0 (0.0) & 4.0 (0.0) & 4.9 (0.1) & 10.2 (0.2) & 8.6 (1.2) &  & 31.1 (0.7) & 13.1 (0.8)\\
				\textsc{Cake} & 5 & 5 & 198.3 (47.5) & 43.0 (5.3) & 4.0 (0.0) & 3.8 (0.2) & 5.5 (0.2) & 10.0 (0.0) & 40.0 (5.3) &  & 65.0 (0.9) & 22.0 (0.9)\\
				\midrule[1.5pt]
				\multicolumn{13}{l}{\textbf{Completed Runs} = 5 \hfill \textbf{Total Time (s.)} = 2752.8 (503.2) \hfill \textbf{Total Calls} = 203.2 (11.8)} \\
				\bottomrule[1.5pt]
			\end{tabular}
		}
	\end{subtable}
	
	\vspace*{.1cm}
	
	\begin{subtable}{\linewidth}
		\caption{OPL}
		\resizebox{\linewidth}{!}{
			\begin{tabular}{lrrrrrrrrrrrrrr}
				\toprule[1.5pt]
				\multicolumn{1}{c}{Task} & \multicolumn{1}{c}{\# G} & \multicolumn{1}{c}{\# L} & \multicolumn{1}{c}{Time (s.)} & \multicolumn{1}{c}{Calls} & \multicolumn{1}{c}{States} & \multicolumn{1}{c}{Edges} & \multicolumn{1}{c}{Ep. First HRM} & \multicolumn{3}{c}{\# Examples} & \multicolumn{1}{c}{} & \multicolumn{3}{c}{Example Length} \\
				\cmidrule{9-11} \cmidrule{13-15}
				&&&&&&&\multicolumn{1}{c}{$(\times 10^2)$}
				&\multicolumn{1}{c}{G} & \multicolumn{1}{c}{D} & \multicolumn{1}{c}{I}&&\multicolumn{1}{c}{G} & \multicolumn{1}{c}{D} & \multicolumn{1}{c}{I}\\
				\midrule
				\textsc{Batter} & 5 & 5 & 14.1 (3.2) & 23.0 (3.0) & 6.0 (0.0) & 9.2 (0.2) & 12.0 (1.0) & 11.4 (0.4) & 7.0 (1.2) & 10.6 (1.6) &  & 20.4 (1.1) & 18.7 (1.6) & 12.1 (1.7)\\
				\textsc{Bucket} & 5 & 5 & 1.8 (0.1) & 7.2 (0.6) & 4.0 (0.0) & 4.0 (0.0) & 8.0 (0.5) & 10.2 (0.2) & 2.2 (0.2) & 2.8 (0.4) &  & 10.2 (0.5) & 13.4 (1.9) & 6.8 (1.7)\\
				\textsc{Compass} & 5 & 5 & 13.5 (1.8) & 22.0 (1.7) & 6.0 (0.0) & 9.2 (0.2) & 10.4 (1.4) & 11.0 (0.6) & 6.8 (1.0) & 10.2 (1.0) &  & 17.2 (1.6) & 20.9 (1.9) & 14.3 (0.8)\\
				\textsc{Leather} & 5 & 5 & 1.8 (0.1) & 7.0 (0.5) & 4.0 (0.0) & 4.0 (0.0) & 6.9 (0.5) & 10.0 (0.0) & 2.4 (0.2) & 2.6 (0.4) &  & 11.1 (0.9) & 16.9 (5.6) & 8.9 (3.3)\\
				\textsc{Paper} & 5 & 5 & 2.0 (0.2) & 7.6 (0.6) & 4.0 (0.0) & 4.0 (0.0) & 7.7 (1.1) & 10.0 (0.0) & 3.0 (0.3) & 2.6 (0.4) &  & 10.1 (0.9) & 18.9 (3.3) & 5.6 (0.8)\\
				\textsc{Quill} & 5 & 5 & 11.8 (1.3) & 22.0 (1.2) & 6.0 (0.0) & 9.2 (0.2) & 12.8 (1.5) & 10.6 (0.2) & 6.4 (0.7) & 11.0 (0.9) &  & 15.3 (1.3) & 13.5 (1.0) & 12.1 (1.4)\\
				\textsc{Sugar} & 5 & 5 & 1.6 (0.1) & 6.4 (0.4) & 4.0 (0.0) & 4.0 (0.0) & 6.5 (0.7) & 10.0 (0.0) & 2.4 (0.2) & 2.0 (0.3) &  & 9.6 (0.6) & 15.3 (3.6) & 16.6 (9.2)\\
				\textsc{Book} & 5 & 5 & 224.8 (71.6) & 27.0 (1.9) & 6.0 (0.0) & 6.4 (0.2) & 139.7 (21.8) & 11.6 (0.4) & 3.2 (0.4) & 18.2 (1.4) &  & 22.0 (1.6) & 24.7 (6.5) & 23.5 (1.2)\\
				\textsc{Map} & 5 & 5 & 339.9 (33.6) & 33.0 (2.8) & 6.0 (0.0) & 6.4 (0.2) & 204.8 (27.1) & 10.6 (0.2) & 2.8 (0.5) & 25.6 (2.5) &  & 25.4 (0.8) & 21.8 (3.1) & 25.2 (1.1)\\
				\textsc{MilkBucket} & 5 & 5 & 3.5 (0.3) & 8.2 (0.6) & 4.0 (0.0) & 3.0 (0.0) & 47.6 (3.7) & 10.2 (0.2) & 2.6 (0.4) & 3.4 (0.4) &  & 10.3 (0.7) & 16.2 (1.7) & 14.2 (1.8)\\
				\textsc{BookQuill} & 5 & 5 & 19.0 (2.2) & 15.4 (1.5) & 4.0 (0.0) & 4.0 (0.0) & 383.4 (83.7) & 10.0 (0.0) & 1.0 (0.3) & 11.4 (1.3) &  & 38.2 (1.6) & 14.1 (4.9) & 23.8 (1.0)\\
				\textsc{MilkB.Sugar} & 5 & 5 & 11.4 (2.1) & 14.4 (1.7) & 4.0 (0.0) & 4.0 (0.0) & 87.4 (8.9) & 10.4 (0.2) & 1.0 (0.4) & 10.0 (1.3) &  & 19.7 (1.2) & 8.7 (4.9) & 17.6 (1.2)\\
				\textsc{Cake} & 4 & 1 & 277.4 (0.0) & 33.0 (0.0) & 5.0 (0.0) & 4.0 (0.0) & 264.1 (0.0) & 10.0 (0.0) & 2.0 (0.0) & 28.0 (0.0) &  & 46.7 (0.0) & 36.0 (0.0) & 22.9 (0.0)\\
				\midrule[1.5pt]
				\multicolumn{15}{l}{\textbf{Completed Runs} = 5 \hfill \textbf{Total Time (s.)} = 701.0 (111.2) \hfill \textbf{Total Calls} = 199.8 (6.9)} \\
				\bottomrule[1.5pt]
			\end{tabular}
		}
	\end{subtable}
	
	\vspace*{.1cm}
	
	\begin{subtable}{\linewidth}
		\caption{FRL}
		\resizebox{\linewidth}{!}{
			\begin{tabular}{lrrrrrrrrrrrrrr}
				\toprule[1.5pt]
				\multicolumn{1}{c}{Task} & \multicolumn{1}{c}{\# G} & \multicolumn{1}{c}{\# L} & \multicolumn{1}{c}{Time (s.)} & \multicolumn{1}{c}{Calls} & \multicolumn{1}{c}{States} & \multicolumn{1}{c}{Edges} & \multicolumn{1}{c}{Ep. First HRM} & \multicolumn{3}{c}{\# Examples} & \multicolumn{1}{c}{} & \multicolumn{3}{c}{Example Length} \\
				\cmidrule{9-11} \cmidrule{13-15}
				&&&&&&&\multicolumn{1}{c}{$(\times 10^2)$}
				&\multicolumn{1}{c}{G} & \multicolumn{1}{c}{D} & \multicolumn{1}{c}{I}&&\multicolumn{1}{c}{G} & \multicolumn{1}{c}{D} & \multicolumn{1}{c}{I}\\
				\midrule
				\textsc{Batter} & 5 & 5 & 11.1 (1.4) & 23.4 (2.5) & 6.0 (0.0) & 9.2 (0.2) & 468.4 (121.9) & 10.4 (0.2) & 7.6 (0.9) & 11.4 (1.9) &  & 11.9 (0.6) & 10.1 (1.3) & 9.9 (0.4)\\
				\textsc{Bucket} & 5 & 5 & 2.2 (0.1) & 7.0 (0.3) & 4.0 (0.0) & 4.0 (0.0) & 129.5 (69.4) & 10.2 (0.2) & 2.8 (0.2) & 2.0 (0.3) &  & 7.8 (0.5) & 9.9 (1.7) & 6.4 (2.1)\\
				\textsc{Compass} & 5 & 5 & 12.9 (1.9) & 24.6 (2.2) & 6.0 (0.0) & 9.4 (0.2) & 550.8 (156.4) & 10.4 (0.2) & 7.8 (1.0) & 12.4 (1.2) &  & 12.5 (1.6) & 9.4 (1.0) & 8.4 (0.5)\\
				\textsc{Leather} & 5 & 5 & 2.8 (0.4) & 7.8 (0.7) & 4.0 (0.0) & 4.0 (0.0) & 89.0 (18.0) & 10.0 (0.0) & 3.2 (0.4) & 2.6 (0.4) &  & 7.3 (0.4) & 9.3 (1.7) & 3.7 (0.4)\\
				\textsc{Paper} & 5 & 5 & 2.1 (0.1) & 7.0 (0.3) & 4.0 (0.0) & 4.0 (0.0) & 82.7 (18.8) & 10.0 (0.0) & 3.0 (0.0) & 2.0 (0.3) &  & 6.9 (0.7) & 10.2 (1.8) & 4.7 (2.7)\\
				\textsc{Quill} & 5 & 5 & 11.6 (1.1) & 23.8 (1.5) & 6.0 (0.0) & 9.6 (0.2) & 458.9 (61.0) & 10.6 (0.2) & 8.0 (0.9) & 11.2 (1.2) &  & 11.9 (0.6) & 13.1 (2.7) & 9.2 (0.8)\\
				\textsc{Sugar} & 5 & 5 & 2.6 (0.3) & 8.4 (0.7) & 4.0 (0.0) & 4.0 (0.0) & 103.5 (39.5) & 10.0 (0.0) & 3.6 (0.4) & 2.8 (0.5) &  & 8.2 (0.7) & 10.1 (1.9) & 5.0 (1.1)\\
				\textsc{Book} & 5 & 0 & 0.0 (0.0) & 0.0 (0.0) & 0.0 (0.0) & 0.0 (0.0) & 0.0 (0.0) & 0.0 (0.0) & 0.0 (0.0) & 0.0 (0.0) &  & 0.0 (0.0) & 0.0 (0.0) & 0.0 (0.0)\\
				\textsc{Map} & 3 & 0 & 0.0 (0.0) & 0.0 (0.0) & 0.0 (0.0) & 0.0 (0.0) & 0.0 (0.0) & 0.0 (0.0) & 0.0 (0.0) & 0.0 (0.0) &  & 0.0 (0.0) & 0.0 (0.0) & 0.0 (0.0)\\
				\textsc{MilkBucket} & 5 & 2 & 4.7 (0.5) & 11.0 (1.0) & 4.0 (0.0) & 3.0 (0.0) & 885.6 (142.3) & 10.0 (0.0) & 2.0 (0.0) & 7.0 (1.0) &  & 8.2 (0.4) & 10.2 (1.7) & 8.8 (0.2)\\
				\textsc{BookQuill} & 0 & 0 & 0.0 (0.0) & 0.0 (0.0) & 0.0 (0.0) & 0.0 (0.0) & 0.0 (0.0) & 0.0 (0.0) & 0.0 (0.0) & 0.0 (0.0) &  & 0.0 (0.0) & 0.0 (0.0) & 0.0 (0.0)\\
				\textsc{MilkB.Sugar} & 0 & 0 & 0.0 (0.0) & 0.0 (0.0) & 0.0 (0.0) & 0.0 (0.0) & 0.0 (0.0) & 0.0 (0.0) & 0.0 (0.0) & 0.0 (0.0) &  & 0.0 (0.0) & 0.0 (0.0) & 0.0 (0.0)\\
				\textsc{Cake} & 0 & 0 & 0.0 (0.0) & 0.0 (0.0) & 0.0 (0.0) & 0.0 (0.0) & 0.0 (0.0) & 0.0 (0.0) & 0.0 (0.0) & 0.0 (0.0) &  & 0.0 (0.0) & 0.0 (0.0) & 0.0 (0.0)\\
				\midrule[1.5pt]
				\multicolumn{15}{l}{\textbf{Completed Runs} = 5 \hfill \textbf{Total Time (s.)} = 47.1 (0.9) \hfill \textbf{Total Calls} = 106.4 (2.9)} \\
				\bottomrule[1.5pt]
			\end{tabular}
		}
	\end{subtable}
\end{table}

\begin{table}
	\caption{Results of \learningmethod in \ww for the default case.}
	\label{tab:non_flat_hrm_default_ww}
	\begin{subtable}{\linewidth}
		\caption{WOD}
		\resizebox{\linewidth}{!}{
			\begin{tabular}{lrrrrrrrrrrrr}
				\toprule[1.5pt]
				\multicolumn{1}{c}{Task} & \multicolumn{1}{c}{\# G} & \multicolumn{1}{c}{\# L} & \multicolumn{1}{c}{Time (s.)} & \multicolumn{1}{c}{Calls} & \multicolumn{1}{c}{States} & \multicolumn{1}{c}{Edges} & \multicolumn{1}{c}{Ep. First HRM} & \multicolumn{2}{c}{\# Examples} & \multicolumn{1}{c}{} & \multicolumn{2}{c}{Example Length} \\
				\cmidrule{9-10} \cmidrule{12-13}
				&&&&&&&\multicolumn{1}{c}{$(\times 10^2)$}
				&\multicolumn{1}{c}{G} & \multicolumn{1}{c}{I}&&\multicolumn{1}{c}{G} & \multicolumn{1}{c}{I}\\
				\midrule
				\textsc{rg} & 5 & 5 & 0.9 (0.0) & 4.0 (0.0) & 3.0 (0.0) & 2.0 (0.0) & 0.9 (0.1) & 10.0 (0.0) & 2.0 (0.0) &  & 11.2 (1.0) & 5.8 (1.1)\\
				\textsc{bc} & 5 & 5 & 0.9 (0.1) & 3.8 (0.2) & 3.0 (0.0) & 2.0 (0.0) & 0.8 (0.1) & 10.0 (0.0) & 1.8 (0.2) &  & 10.8 (0.8) & 11.9 (3.4)\\
				\textsc{my} & 5 & 5 & 0.9 (0.0) & 3.6 (0.2) & 3.0 (0.0) & 2.0 (0.0) & 0.7 (0.0) & 10.0 (0.0) & 1.6 (0.2) &  & 8.7 (0.8) & 6.6 (1.9)\\
				\textsc{rg\&bc} & 5 & 5 & 4.5 (0.3) & 13.4 (0.4) & 4.0 (0.0) & 4.0 (0.0) & 8.8 (0.3) & 11.8 (0.6) & 8.6 (0.7) &  & 12.2 (0.9) & 14.8 (1.2)\\
				\textsc{bc\&my} & 5 & 5 & 5.8 (1.0) & 15.6 (2.1) & 4.0 (0.0) & 4.0 (0.0) & 8.1 (0.2) & 12.8 (1.3) & 9.8 (1.5) &  & 13.2 (1.7) & 17.1 (1.6)\\
				\textsc{rg\&my} & 5 & 5 & 4.7 (0.5) & 13.2 (1.0) & 4.0 (0.0) & 4.0 (0.0) & 8.5 (0.2) & 10.8 (0.2) & 9.4 (0.9) &  & 12.2 (0.7) & 18.6 (1.2)\\
				\textsc{rgb} & 5 & 5 & 1.2 (0.1) & 4.8 (0.5) & 3.0 (0.0) & 2.0 (0.0) & 8.6 (0.2) & 10.0 (0.0) & 2.8 (0.5) &  & 7.8 (0.2) & 7.0 (1.4)\\
				\textsc{cmy} & 5 & 5 & 1.4 (0.2) & 5.4 (0.7) & 3.0 (0.0) & 2.0 (0.0) & 8.8 (0.5) & 10.0 (0.0) & 3.4 (0.7) &  & 8.0 (0.3) & 10.2 (1.3)\\
				\textsc{rgb\&cmy} & 5 & 5 & 15.1 (1.7) & 21.6 (1.7) & 4.0 (0.0) & 4.0 (0.0) & 2.3 (0.0) & 11.0 (0.4) & 17.6 (1.7) &  & 17.3 (0.4) & 22.6 (1.6)\\
				\midrule[1.5pt]
				\multicolumn{13}{l}{\textbf{Completed Runs} = 5 \hfill \textbf{Total Time (s.)} = 35.4 (2.0) \hfill \textbf{Total Calls} = 85.4 (3.1)} \\
				\bottomrule[1.5pt]
			\end{tabular}
		}
	\end{subtable}
	
	\vspace*{.1cm}
	
	\begin{subtable}{\linewidth}
		\caption{WD}
		\resizebox{\linewidth}{!}{
			\begin{tabular}{lrrrrrrrrrrrrrr}
				\toprule[1.5pt]
				\multicolumn{1}{c}{Task} & \multicolumn{1}{c}{\# G} & \multicolumn{1}{c}{\# L} & \multicolumn{1}{c}{Time (s.)} & \multicolumn{1}{c}{Calls} & \multicolumn{1}{c}{States} & \multicolumn{1}{c}{Edges} & \multicolumn{1}{c}{Ep. First HRM} & \multicolumn{3}{c}{\# Examples} & \multicolumn{1}{c}{} & \multicolumn{3}{c}{Example Length} \\
				\cmidrule{9-11} \cmidrule{13-15}
				&&&&&&&\multicolumn{1}{c}{$(\times 10^2)$}
				&\multicolumn{1}{c}{G} & \multicolumn{1}{c}{D} & \multicolumn{1}{c}{I}&&\multicolumn{1}{c}{G} & \multicolumn{1}{c}{D} & \multicolumn{1}{c}{I}\\
				\midrule
				\textsc{rg} & 5 & 5 & 1.9 (0.2) & 7.8 (1.1) & 4.0 (0.0) & 4.0 (0.0) & 3.0 (0.3) & 10.0 (0.0) & 3.0 (0.3) & 2.8 (0.9) &  & 7.0 (0.7) & 10.3 (1.6) & 4.2 (0.8)\\
				\textsc{bc} & 5 & 5 & 1.8 (0.3) & 7.2 (1.0) & 4.0 (0.0) & 4.0 (0.0) & 2.7 (0.3) & 10.2 (0.2) & 2.4 (0.2) & 2.6 (0.7) &  & 8.4 (0.5) & 6.9 (1.5) & 6.3 (1.7)\\
				\textsc{my} & 5 & 5 & 1.4 (0.1) & 5.8 (0.4) & 4.0 (0.0) & 4.0 (0.0) & 2.9 (0.3) & 10.0 (0.0) & 2.4 (0.2) & 1.4 (0.2) &  & 6.9 (0.4) & 5.8 (1.5) & 4.6 (1.6)\\
				\textsc{rg\&bc} & 5 & 5 & 11.7 (2.5) & 24.0 (3.8) & 4.8 (0.2) & 4.8 (0.2) & 12.0 (0.4) & 13.0 (0.7) & 6.2 (1.6) & 11.8 (1.9) &  & 11.7 (0.8) & 6.9 (1.5) & 11.8 (0.8)\\
				\textsc{bc\&my} & 5 & 5 & 9.5 (1.5) & 20.8 (2.4) & 4.8 (0.2) & 4.8 (0.2) & 11.5 (0.4) & 11.2 (0.6) & 5.2 (0.7) & 11.4 (1.6) &  & 10.6 (0.9) & 8.8 (1.4) & 13.2 (0.9)\\
				\textsc{rg\&my} & 5 & 5 & 5.4 (0.5) & 14.0 (1.3) & 4.2 (0.2) & 4.2 (0.2) & 11.8 (0.6) & 10.6 (0.2) & 3.2 (0.7) & 7.2 (1.1) &  & 9.8 (0.3) & 6.2 (1.9) & 11.6 (0.7)\\
				\textsc{rgb} & 5 & 5 & 2.5 (0.3) & 8.2 (0.7) & 4.0 (0.0) & 3.0 (0.0) & 11.9 (0.6) & 10.2 (0.2) & 3.0 (0.3) & 3.0 (0.5) &  & 7.9 (0.4) & 14.0 (1.8) & 10.6 (2.0)\\
				\textsc{cmy} & 5 & 5 & 3.6 (0.4) & 11.4 (1.2) & 4.0 (0.0) & 3.0 (0.0) & 10.8 (0.3) & 10.0 (0.0) & 4.2 (0.7) & 5.2 (0.6) &  & 7.9 (0.2) & 8.8 (1.6) & 10.5 (1.1)\\
				\textsc{rgb\&cmy} & 5 & 5 & 29.0 (4.1) & 31.4 (2.8) & 4.4 (0.2) & 4.4 (0.2) & 4.9 (0.3) & 11.2 (0.4) & 5.4 (1.6) & 21.8 (1.3) &  & 16.6 (0.8) & 7.4 (0.9) & 17.2 (0.6)\\
				\midrule[1.5pt]
				\multicolumn{15}{l}{\textbf{Completed Runs} = 5 \hfill \textbf{Total Time (s.)} = 67.0 (6.2) \hfill \textbf{Total Calls} = 130.6 (6.0)} \\
				\bottomrule[1.5pt]
			\end{tabular}
		}
	\end{subtable}
\end{table}

\begin{table}
	\caption{Results of \learningmethod in \ww with a restricted set of callable RMs.}
	\label{tab:non_flat_hrm_restricted_ww}
	\begin{subtable}{\linewidth}
		\caption{WOD}
		\resizebox{\linewidth}{!}{
			\begin{tabular}{lrrrrrrrrrrrr}
				\toprule[1.5pt]
				\multicolumn{1}{c}{Task} & \multicolumn{1}{c}{\# G} & \multicolumn{1}{c}{\# L} & \multicolumn{1}{c}{Time (s.)} & \multicolumn{1}{c}{Calls} & \multicolumn{1}{c}{States} & \multicolumn{1}{c}{Edges} & \multicolumn{1}{c}{Ep. First HRM} & \multicolumn{2}{c}{\# Examples} & \multicolumn{1}{c}{} & \multicolumn{2}{c}{Example Length} \\
				\cmidrule{9-10} \cmidrule{12-13}
				&&&&&&&\multicolumn{1}{c}{$(\times 10^2)$}
				&\multicolumn{1}{c}{G} & \multicolumn{1}{c}{I}&&\multicolumn{1}{c}{G} & \multicolumn{1}{c}{I}\\
				\midrule
				\textsc{rg} & 5 & 5 & 0.9 (0.0) & 4.0 (0.0) & 3.0 (0.0) & 2.0 (0.0) & 0.9 (0.1) & 10.0 (0.0) & 2.0 (0.0) &  & 11.2 (1.0) & 5.8 (1.1)\\
				\textsc{bc} & 5 & 5 & 0.9 (0.1) & 3.8 (0.2) & 3.0 (0.0) & 2.0 (0.0) & 0.8 (0.1) & 10.0 (0.0) & 1.8 (0.2) &  & 10.8 (0.8) & 11.9 (3.4)\\
				\textsc{my} & 5 & 5 & 0.9 (0.0) & 3.6 (0.2) & 3.0 (0.0) & 2.0 (0.0) & 0.7 (0.0) & 10.0 (0.0) & 1.6 (0.2) &  & 8.7 (0.8) & 6.6 (1.9)\\
				\textsc{rg\&bc} & 5 & 5 & 5.3 (0.4) & 15.2 (0.9) & 4.0 (0.0) & 4.0 (0.0) & 8.6 (0.3) & 12.4 (0.2) & 9.8 (0.8) &  & 14.7 (1.3) & 16.0 (0.8)\\
				\textsc{bc\&my} & 5 & 5 & 3.9 (0.1) & 12.4 (0.2) & 4.0 (0.0) & 4.0 (0.0) & 8.3 (0.4) & 11.8 (0.7) & 7.6 (0.7) &  & 11.2 (0.8) & 13.2 (1.0)\\
				\textsc{rg\&my} & 5 & 5 & 4.6 (0.3) & 13.8 (0.9) & 4.0 (0.0) & 4.0 (0.0) & 8.5 (0.2) & 10.2 (0.2) & 10.6 (0.9) &  & 10.7 (0.5) & 15.8 (1.6)\\
				\textsc{rgb} & 5 & 5 & 1.2 (0.1) & 4.8 (0.7) & 3.0 (0.0) & 2.0 (0.0) & 8.7 (0.2) & 10.2 (0.2) & 2.6 (0.7) &  & 8.3 (0.5) & 16.2 (3.8)\\
				\textsc{cmy} & 5 & 5 & 1.6 (0.2) & 6.2 (0.7) & 3.0 (0.0) & 2.0 (0.0) & 8.6 (0.5) & 10.0 (0.0) & 4.2 (0.7) &  & 8.0 (0.3) & 10.8 (1.2)\\
				\textsc{rgb\&cmy} & 5 & 5 & 5.7 (0.8) & 15.0 (1.6) & 4.0 (0.0) & 4.0 (0.0) & 2.6 (0.1) & 10.4 (0.4) & 11.6 (1.6) &  & 17.0 (1.1) & 15.9 (1.3)\\
				\midrule[1.5pt]
				\multicolumn{13}{l}{\textbf{Completed Runs} = 5 \hfill \textbf{Total Time (s.)} = 24.9 (0.9) \hfill \textbf{Total Calls} = 78.8 (2.7)} \\
				\bottomrule[1.5pt]
			\end{tabular}
		}
	\end{subtable}
	
	\vspace*{.1cm}
	
	\begin{subtable}{\linewidth}
		\caption{WD}
		\resizebox{\linewidth}{!}{
			\begin{tabular}{lrrrrrrrrrrrrrr}
				\toprule[1.5pt]
				\multicolumn{1}{c}{Task} & \multicolumn{1}{c}{\# G} & \multicolumn{1}{c}{\# L} & \multicolumn{1}{c}{Time (s.)} & \multicolumn{1}{c}{Calls} & \multicolumn{1}{c}{States} & \multicolumn{1}{c}{Edges} & \multicolumn{1}{c}{Ep. First HRM} & \multicolumn{3}{c}{\# Examples} & \multicolumn{1}{c}{} & \multicolumn{3}{c}{Example Length} \\
				\cmidrule{9-11} \cmidrule{13-15}
				&&&&&&&\multicolumn{1}{c}{$(\times 10^2)$}
				&\multicolumn{1}{c}{G} & \multicolumn{1}{c}{D} & \multicolumn{1}{c}{I}&&\multicolumn{1}{c}{G} & \multicolumn{1}{c}{D} & \multicolumn{1}{c}{I}\\
				\midrule
				\textsc{rg} & 5 & 5 & 1.9 (0.3) & 7.8 (1.1) & 4.0 (0.0) & 4.0 (0.0) & 3.0 (0.3) & 10.0 (0.0) & 3.0 (0.3) & 2.8 (0.9) &  & 7.0 (0.7) & 10.3 (1.6) & 4.2 (0.8)\\
				\textsc{bc} & 5 & 5 & 1.8 (0.3) & 7.2 (1.0) & 4.0 (0.0) & 4.0 (0.0) & 2.7 (0.3) & 10.2 (0.2) & 2.4 (0.2) & 2.6 (0.7) &  & 8.4 (0.5) & 6.9 (1.5) & 6.3 (1.7)\\
				\textsc{my} & 5 & 5 & 1.4 (0.1) & 5.8 (0.4) & 4.0 (0.0) & 4.0 (0.0) & 2.9 (0.3) & 10.0 (0.0) & 2.4 (0.2) & 1.4 (0.2) &  & 6.9 (0.4) & 5.8 (1.5) & 4.6 (1.6)\\
				\textsc{rg\&bc} & 5 & 5 & 6.9 (0.7) & 17.6 (1.5) & 4.6 (0.2) & 4.6 (0.2) & 12.0 (0.4) & 10.8 (0.2) & 4.6 (0.5) & 9.2 (1.0) &  & 10.4 (0.5) & 9.4 (1.8) & 12.6 (0.7)\\
				\textsc{bc\&my} & 5 & 5 & 9.3 (1.8) & 21.4 (2.9) & 4.8 (0.2) & 4.8 (0.2) & 11.7 (0.5) & 12.2 (1.0) & 5.8 (1.1) & 10.4 (1.8) &  & 11.4 (0.7) & 6.9 (1.3) & 12.1 (0.7)\\
				\textsc{rg\&my} & 5 & 5 & 7.8 (1.1) & 18.8 (1.9) & 4.8 (0.2) & 4.8 (0.2) & 11.8 (0.5) & 11.0 (0.3) & 4.8 (0.4) & 10.0 (2.0) &  & 9.8 (0.2) & 8.6 (0.8) & 13.0 (0.8)\\
				\textsc{rgb} & 5 & 5 & 2.1 (0.1) & 7.6 (0.2) & 4.0 (0.0) & 3.0 (0.0) & 11.9 (0.5) & 10.0 (0.0) & 2.6 (0.2) & 3.0 (0.0) &  & 7.6 (0.5) & 11.8 (1.7) & 10.7 (1.7)\\
				\textsc{cmy} & 5 & 5 & 2.3 (0.2) & 8.2 (0.8) & 4.0 (0.0) & 3.0 (0.0) & 10.7 (0.2) & 10.0 (0.0) & 2.2 (0.4) & 4.0 (0.5) &  & 7.8 (0.2) & 9.9 (1.6) & 8.9 (0.5)\\
				\textsc{rgb\&cmy} & 5 & 5 & 9.6 (1.5) & 20.6 (2.6) & 5.0 (0.0) & 5.0 (0.0) & 5.0 (0.6) & 10.2 (0.2) & 6.4 (0.9) & 11.0 (1.8) &  & 15.0 (0.5) & 12.3 (0.8) & 14.2 (1.2)\\
				\midrule[1.5pt]
				\multicolumn{15}{l}{\textbf{Completed Runs} = 5 \hfill \textbf{Total Time (s.)} = 42.9 (3.7) \hfill \textbf{Total Calls} = 115.0 (7.5)} \\
				\bottomrule[1.5pt]
			\end{tabular}
		}
	\end{subtable}
\end{table}

\begin{table}
	\caption{Results of \learningmethod in \ww without exploration using options.}
	\label{tab:non_flat_hrm_action_explore_ww}
	\begin{subtable}{\linewidth}
		\caption{WOD}
		\resizebox{\linewidth}{!}{
			\begin{tabular}{lrrrrrrrrrrrr}
				\toprule[1.5pt]
				\multicolumn{1}{c}{Task} & \multicolumn{1}{c}{\# G} & \multicolumn{1}{c}{\# L} & \multicolumn{1}{c}{Time (s.)} & \multicolumn{1}{c}{Calls} & \multicolumn{1}{c}{States} & \multicolumn{1}{c}{Edges} & \multicolumn{1}{c}{Ep. First HRM} & \multicolumn{2}{c}{\# Examples} & \multicolumn{1}{c}{} & \multicolumn{2}{c}{Example Length} \\
				\cmidrule{9-10} \cmidrule{12-13}
				&&&&&&&\multicolumn{1}{c}{$(\times 10^2)$}
				&\multicolumn{1}{c}{G} & \multicolumn{1}{c}{I}&&\multicolumn{1}{c}{G} & \multicolumn{1}{c}{I}\\
				\midrule
				\textsc{rg} & 5 & 5 & 0.9 (0.0) & 4.0 (0.0) & 3.0 (0.0) & 2.0 (0.0) & 0.9 (0.1) & 10.0 (0.0) & 2.0 (0.0) &  & 11.2 (1.0) & 5.8 (1.1)\\
				\textsc{bc} & 5 & 5 & 0.9 (0.1) & 3.8 (0.2) & 3.0 (0.0) & 2.0 (0.0) & 0.8 (0.1) & 10.0 (0.0) & 1.8 (0.2) &  & 10.8 (0.8) & 11.9 (3.4)\\
				\textsc{my} & 5 & 5 & 0.9 (0.0) & 3.6 (0.2) & 3.0 (0.0) & 2.0 (0.0) & 0.7 (0.0) & 10.0 (0.0) & 1.6 (0.2) &  & 8.7 (0.8) & 6.6 (1.9)\\
				\textsc{rg\&bc} & 5 & 5 & 4.2 (0.4) & 12.2 (0.9) & 4.0 (0.0) & 4.0 (0.0) & 9.5 (0.3) & 10.6 (0.2) & 8.6 (1.1) &  & 13.8 (0.2) & 15.3 (1.6)\\
				\textsc{bc\&my} & 5 & 5 & 4.3 (0.3) & 11.8 (0.7) & 4.0 (0.0) & 4.0 (0.0) & 9.8 (0.1) & 11.6 (0.2) & 7.2 (0.6) &  & 15.3 (0.9) & 16.7 (1.7)\\
				\textsc{rg\&my} & 5 & 5 & 4.6 (0.3) & 12.6 (0.7) & 4.0 (0.0) & 4.0 (0.0) & 9.5 (0.1) & 11.2 (0.4) & 8.4 (0.7) &  & 14.2 (0.9) & 14.8 (0.8)\\
				\textsc{rgb} & 5 & 5 & 1.2 (0.2) & 4.6 (0.7) & 3.0 (0.0) & 2.0 (0.0) & 9.0 (0.1) & 10.0 (0.0) & 2.6 (0.7) &  & 9.4 (0.4) & 8.7 (1.7)\\
				\textsc{cmy} & 5 & 5 & 1.4 (0.1) & 5.0 (0.5) & 3.0 (0.0) & 2.0 (0.0) & 8.8 (0.2) & 10.0 (0.0) & 3.0 (0.5) &  & 8.8 (0.2) & 10.6 (1.5)\\
				\textsc{rgb\&cmy} & 5 & 5 & 16.1 (1.1) & 19.8 (1.1) & 4.0 (0.0) & 4.0 (0.0) & 4.1 (0.1) & 11.2 (0.6) & 15.6 (1.3) &  & 26.0 (1.2) & 21.6 (0.9)\\
				\midrule[1.5pt]
				\multicolumn{13}{l}{\textbf{Completed Runs} = 5 \hfill \textbf{Total Time (s.)} = 34.4 (1.4) \hfill \textbf{Total Calls} = 77.4 (2.0)} \\
				\bottomrule[1.5pt]
			\end{tabular}
		}
	\end{subtable}
	
	\vspace*{.1cm}
	
	\begin{subtable}{\linewidth}
		\caption{WD}
		\resizebox{\linewidth}{!}{
			\begin{tabular}{lrrrrrrrrrrrrrr}
				\toprule[1.5pt]
				\multicolumn{1}{c}{Task} & \multicolumn{1}{c}{\# G} & \multicolumn{1}{c}{\# L} & \multicolumn{1}{c}{Time (s.)} & \multicolumn{1}{c}{Calls} & \multicolumn{1}{c}{States} & \multicolumn{1}{c}{Edges} & \multicolumn{1}{c}{Ep. First HRM} & \multicolumn{3}{c}{\# Examples} & \multicolumn{1}{c}{} & \multicolumn{3}{c}{Example Length} \\
				\cmidrule{9-11} \cmidrule{13-15}
				&&&&&&&\multicolumn{1}{c}{$(\times 10^2)$}
				&\multicolumn{1}{c}{G} & \multicolumn{1}{c}{D} & \multicolumn{1}{c}{I}&&\multicolumn{1}{c}{G} & \multicolumn{1}{c}{D} & \multicolumn{1}{c}{I}\\
				\midrule
				\textsc{rg} & 5 & 5 & 1.9 (0.3) & 7.8 (1.1) & 4.0 (0.0) & 4.0 (0.0) & 3.0 (0.3) & 10.0 (0.0) & 3.0 (0.3) & 2.8 (0.9) &  & 7.0 (0.7) & 10.3 (1.6) & 4.2 (0.8)\\
				\textsc{bc} & 5 & 5 & 1.8 (0.2) & 7.2 (1.0) & 4.0 (0.0) & 4.0 (0.0) & 2.7 (0.3) & 10.2 (0.2) & 2.4 (0.2) & 2.6 (0.7) &  & 8.4 (0.5) & 6.9 (1.5) & 6.3 (1.7)\\
				\textsc{my} & 5 & 5 & 1.4 (0.1) & 5.8 (0.4) & 4.0 (0.0) & 4.0 (0.0) & 2.9 (0.3) & 10.0 (0.0) & 2.4 (0.2) & 1.4 (0.2) &  & 6.9 (0.4) & 5.8 (1.5) & 4.6 (1.6)\\
				\textsc{rg\&bc} & 5 & 5 & 8.1 (1.4) & 18.2 (2.2) & 4.6 (0.2) & 4.6 (0.2) & 97.4 (4.2) & 10.8 (0.4) & 5.2 (0.6) & 9.2 (1.4) &  & 10.5 (0.5) & 10.3 (1.7) & 13.7 (1.5)\\
				\textsc{bc\&my} & 5 & 5 & 6.2 (0.5) & 15.6 (0.7) & 4.6 (0.2) & 4.6 (0.2) & 91.5 (5.8) & 10.6 (0.2) & 4.6 (0.6) & 7.4 (0.7) &  & 9.7 (0.2) & 7.0 (1.2) & 11.3 (1.0)\\
				\textsc{rg\&my} & 5 & 5 & 8.6 (1.8) & 19.2 (2.7) & 4.4 (0.2) & 4.4 (0.2) & 90.3 (5.3) & 11.2 (0.8) & 5.6 (0.9) & 9.4 (1.3) &  & 10.4 (0.7) & 7.7 (0.7) & 13.3 (0.9)\\
				\textsc{rgb} & 5 & 5 & 2.3 (0.1) & 7.6 (0.2) & 4.0 (0.0) & 3.0 (0.0) & 65.3 (1.6) & 10.2 (0.2) & 2.6 (0.4) & 2.8 (0.4) &  & 7.6 (0.3) & 11.1 (3.0) & 8.8 (1.7)\\
				\textsc{cmy} & 5 & 5 & 4.4 (0.6) & 13.2 (1.5) & 3.8 (0.2) & 3.0 (0.0) & 59.2 (2.9) & 10.2 (0.2) & 3.8 (0.6) & 7.2 (1.0) &  & 6.9 (0.5) & 8.2 (1.0) & 7.6 (0.8)\\
				\textsc{rgb\&cmy} & 5 & 5 & 32.1 (5.0) & 31.4 (3.3) & 4.4 (0.2) & 4.4 (0.2) & 125.7 (9.9) & 11.4 (0.5) & 5.8 (1.2) & 21.2 (2.2) &  & 17.1 (0.6) & 8.4 (1.6) & 17.8 (1.0)\\
				\midrule[1.5pt]
				\multicolumn{15}{l}{\textbf{Completed Runs} = 5 \hfill \textbf{Total Time (s.)} = 66.7 (6.6) \hfill \textbf{Total Calls} = 126.0 (6.3)} \\
				\bottomrule[1.5pt]
			\end{tabular}
		}
	\end{subtable}
\end{table}

\subsubsection{Learning of Flat HRMs}
\label{app:extended_results_flat_hrm_learning}
Table~\ref{tab:flat_hrms_table} shows the results of learning a non-flat HRM using \learningmethod, and the results of learning a flat HRM using several approaches (\learningmethod, DeepSynth, JIRP and LRM). An extended discussion of these results can be found in Section~\ref{sec:learning_flat_hrms}.

\begin{sidewaystable}
	\caption{Results of learning non-flat and flat HRMs using different methods. The columns are the following: the number of completed runs without timing out, the amount of time needed to learn the HRMs or RMs, and the number of states and edges of the RM.}
	\label{tab:flat_hrms_table}
	\resizebox{\linewidth}{!}{
		\begin{tabular}{lrrrrrrrrrrrrrrrrrrrrrrrr}
			\toprule[1.5pt]
			Task & \multicolumn{4}{c}{\learningmethod (Non-Flat)} & & \multicolumn{4}{c}{\learningmethod (Flat)} & & \multicolumn{4}{c}{DeepSynth} & & \multicolumn{4}{c}{JIRP} & & \multicolumn{4}{c}{LRM}\\
			\cmidrule{2-5} \cmidrule{7-10} \cmidrule{12-15} \cmidrule{17-20} \cmidrule{22-25}
			& \multicolumn{1}{c}{C} & \multicolumn{1}{c}{Time (s.)} & \multicolumn{1}{c}{States} & \multicolumn{1}{c}{Edges} & & \multicolumn{1}{c}{C} & \multicolumn{1}{c}{Time (s.)} & \multicolumn{1}{c}{States} & \multicolumn{1}{c}{Edges} & & \multicolumn{1}{c}{C} & \multicolumn{1}{c}{Time (s.)} & \multicolumn{1}{c}{States} & \multicolumn{1}{c}{Edges} & & \multicolumn{1}{c}{C} & \multicolumn{1}{c}{Time (s.)} & \multicolumn{1}{c}{States} & \multicolumn{1}{c}{Edges} & & \multicolumn{1}{c}{C} & \multicolumn{1}{c}{Time (s.)} & \multicolumn{1}{c}{States} & \multicolumn{1}{c}{Edges}\\
			\midrule
			\textsc{MilkBucket} & \textbf{5} & \textbf{1.5 (0.2)} & \textbf{3.0 (0.0)} & \textbf{2.0 (0.0)} & & \textbf{5} & 3.2 (0.6) & 4.0 (0.0) & 3.6 (0.2) & & \textbf{5} & 325.6 (29.7) & 13.4 (0.4) & 93.2 (1.7) & & \textbf{5} & 17.1 (5.5) & 4.0 (0.0) & 3.0 (0.0) & & \textbf{5} & 347.5 (64.5) & 4.0 (0.0) & 14.0 (1.0)\\
			\textsc{Book} & \textbf{5} & \textbf{191.2 (36.4)} & \textbf{5.0 (0.0)} & \textbf{5.8 (0.2)} & & 0 & - & - & - & & 5 & 288.9 (31.7) & 16.6 (3.1) & 119.0 (19.4) & & 0 & - & - & - & &  5 & 2261.0 (552.2)
			& 8.0 (0.0) & 31.2 (2.0) \\
			\textsc{BookQuill} & \textbf{5} & \textbf{17.9 (1.4)} & \textbf{4.0 (0.0)} & \textbf{4.0 (0.0)} & & 0 & - & - & - & & 5 & 308.6 (52.6) & 12.8 (0.5) & 92.8 (2.3) & & 0 & - & - & - & & 0 & - & - & - \\
			\textsc{Cake} & \textbf{5} & \textbf{74.5 (25.7)} & \textbf{4.0 (0.0)} & \textbf{3.2 (0.2)} & & 0 & - & - & - & & 4 & 290.6 (36.4) & 17.2 (2.5) & 110.2 (11.6) & & 0 & - & - & - & & 0 & - & - & - \\
			\midrule
			\textsc{rg} & \textbf{5} & \textbf{0.9 (0.0)} & \textbf{3.0 (0.0)} & \textbf{2.0 (0.0)} & & \textbf{5} &\textbf{ 0.9 (0.0)} & \textbf{3.0 (0.0)} & \textbf{2.0 (0.0)} & & 0 & - & - & - & & \textbf{5} & 32.3 (7.9) & 3.8 (0.2) & 82.4 (9.1) & & 0 & - & - & - \\
			\textsc{rg\&bc} & \textbf{5} & \textbf{4.5 (0.3)} & \textbf{4.0 (0.0)} & \textbf{4.0 (0.0)} & & 0 & - & - & -  & & 0 & - & - & - & & 0 & - & - & - & & 0 & - & - & -\\
			\textsc{rgb\&cmy} & \textbf{5} & \textbf{15.1 (1.7)} & \textbf{4.0 (0.0)} & \textbf{4.0 (0.0)} & &  0 & - & - & -  & & 0 & - & - & - & & 0 & - & - & - & & 0 & - & - & -\\
			\bottomrule[1.5pt]
		\end{tabular}
	}
\end{sidewaystable}

\subsubsection{Policy Learning in Handcrafted HRMs}
\label{app:extended_results_policy_learning}
Figure~\ref{fig:flat_vs_hrm_policy_rest} shows the plots for the settings omitted in the main paper (the remaining \cw setting, FRL, is shown in Figure~\ref{fig:flat_vs_hrm_policy}). As mentioned in Section~\ref{sec:experimental_results_policy_learning_handcrafted} and shown in the figure, the efficacy of non-flat HRMs is less evident in two scenarios. First, when the task's goal is reachable regardless of the chosen options (e.g.,~if there are no rejecting states, like in OP and FR) because the policies over options become irrelevant. Second, when the reward is not sparse, like in OPL (the grid is small) or \ww (the balls easily get near the agent, so even a poor agent can achieve the goal if the number of steps per episode is large). In addition, Figure~\ref{fig:flat_vs_hrm_policy_rest_cw_opl} shows a case where the convergence in the non-flat case is delayed with respect to the flat one. Remember that in DQNs~\cite{MnihKSRVBGRFOPB15}, learning does not start until the buffers contain a certain number of experiences. In our approach, as described in Section~\ref{sec:policy_learning}, there is a DQN and a replay buffer for each RM; thus, in the flat case, there is a single DQN and buffer, while in the non-flat case there are several. Filling the buffers in the non-flat case is slower since there are higher-level options (i.e., call options) that do not occur as often as others (i.e., formula options), hence explaining the convergence delay. Nevertheless, we emphasize that in more complex scenarios this does not occur, as shown in Figure~\ref{fig:flat_vs_hrm_policy}.
	
We observe CRM performs closely to the HRL methods when the reward is not very sparse; for instance, when the \cw grid is small (OP, OPL). However, as the reward becomes sparser, CRM struggles more to converge since it does not decompose tasks into independently solvable subtasks and instead relies on a single non-zero reward signal (i.e., the one coming from the transitions to the accepting state).

\begin{figure}
	\centering
	\begin{subfigure}{\linewidth}
		\centering
		\includegraphics[width=0.25\linewidth]{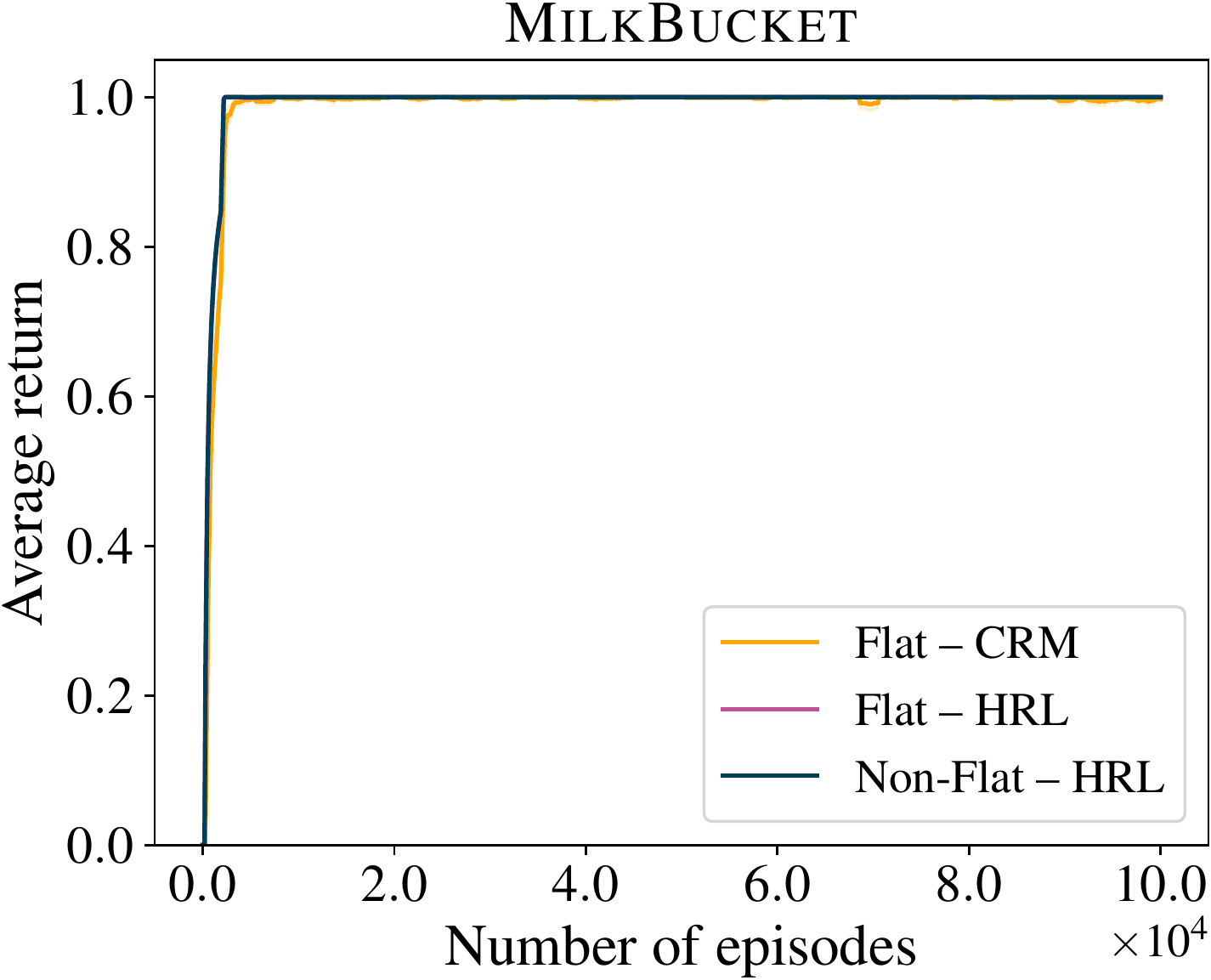}
		\includegraphics[width=0.25\linewidth]{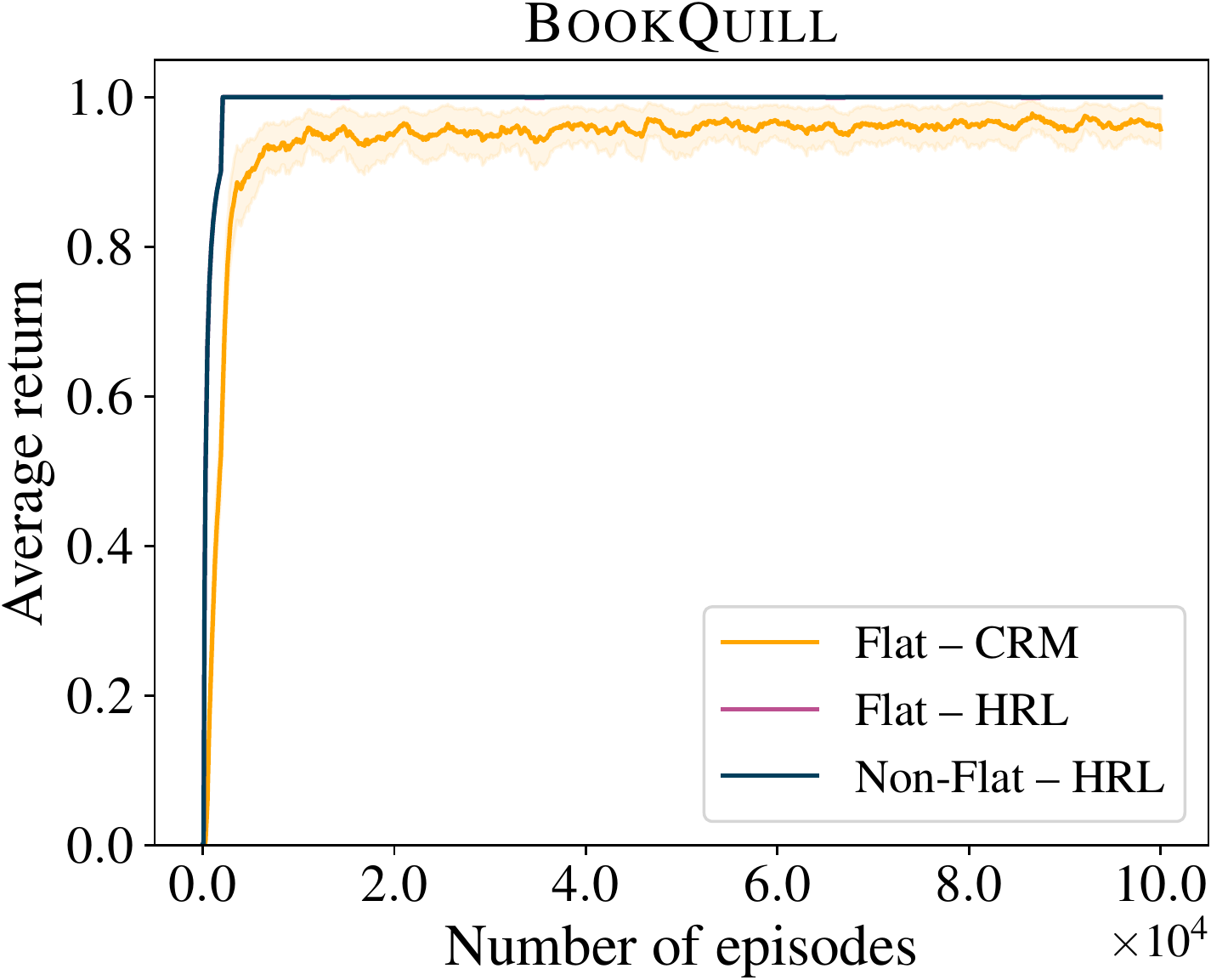}
		\includegraphics[width=0.25\linewidth]{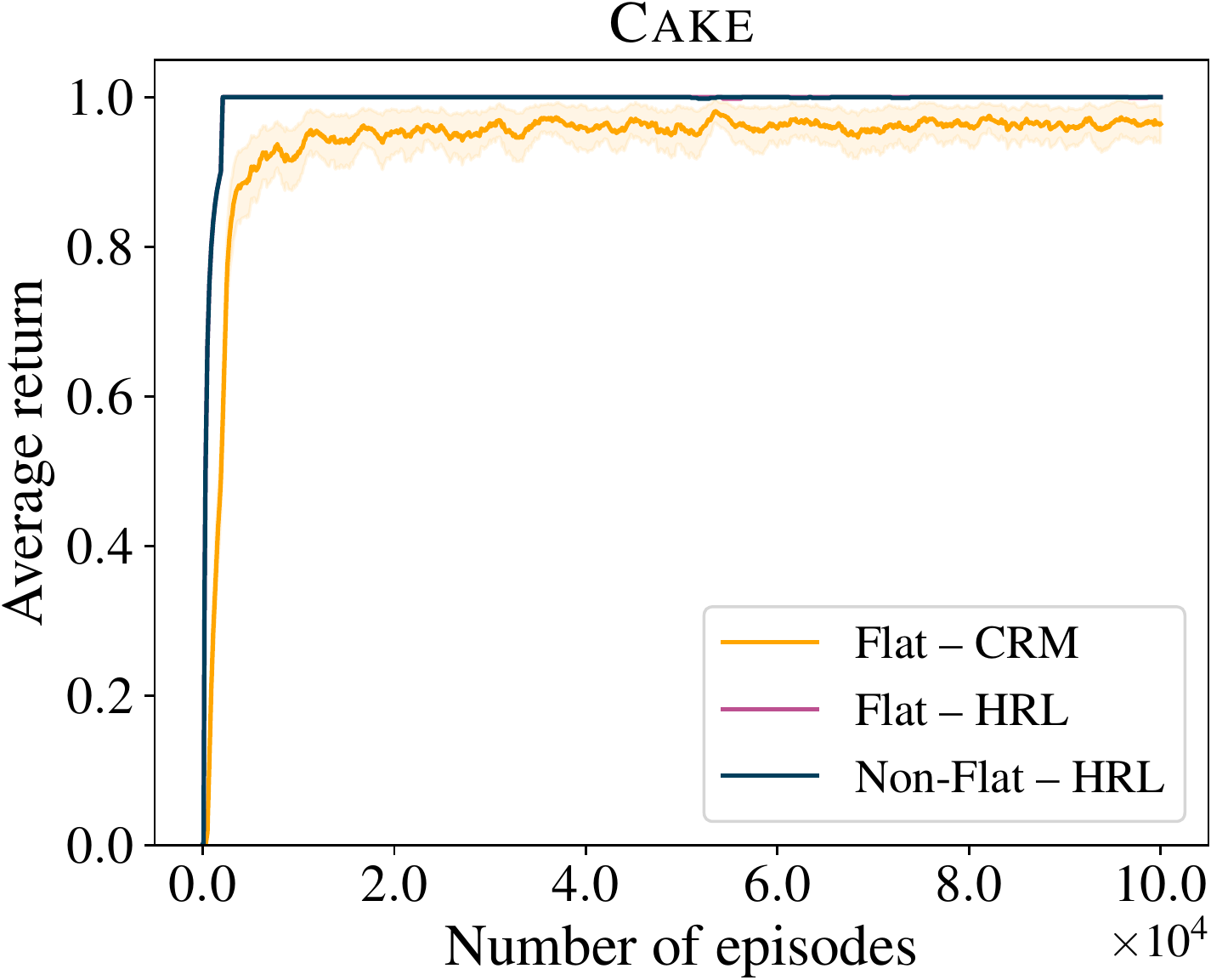}
		\caption{{\cw} -- OP}
	\end{subfigure}
	\begin{subfigure}{\linewidth}
		\centering
		\includegraphics[width=0.25\linewidth]{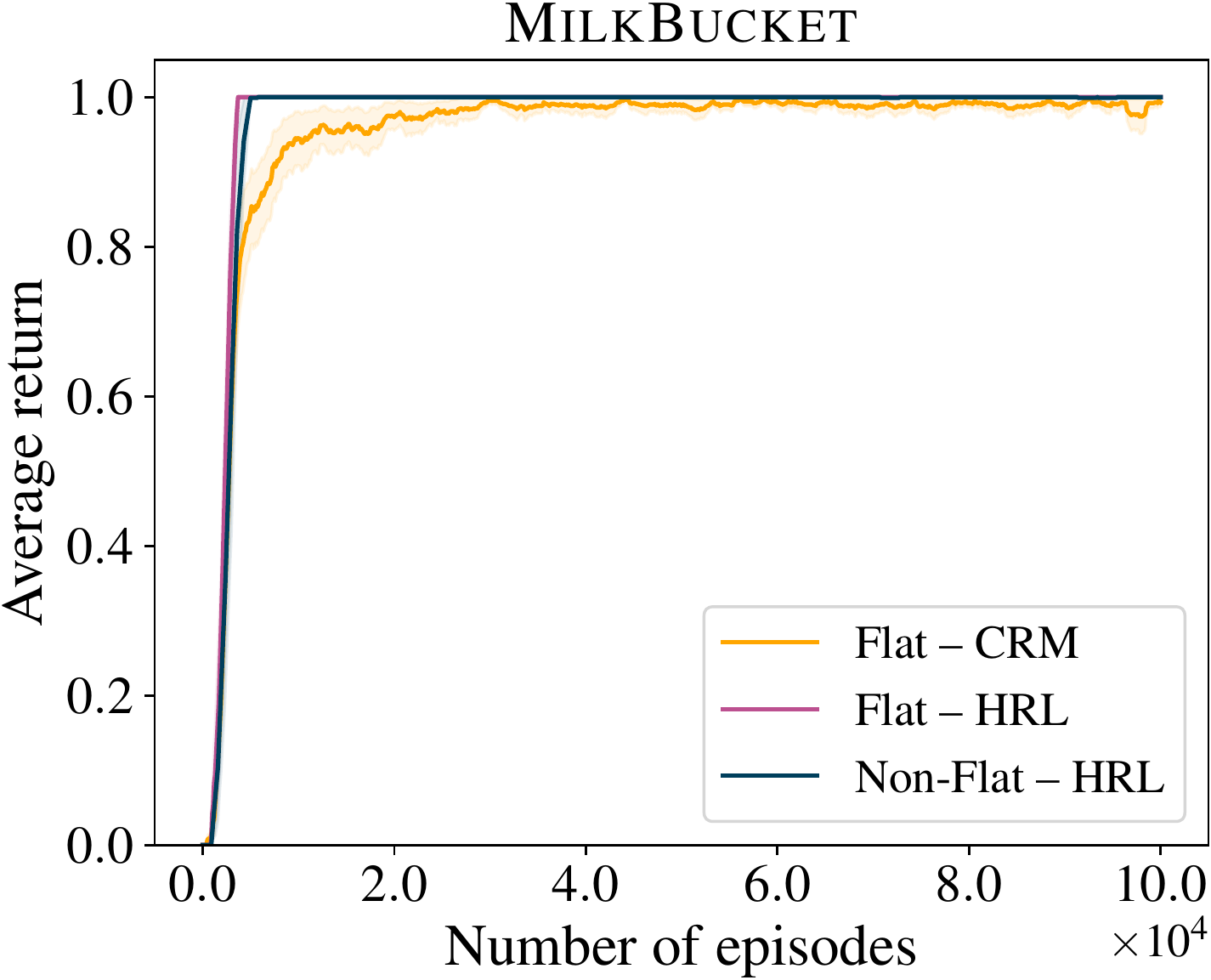}
		\includegraphics[width=0.25\linewidth]{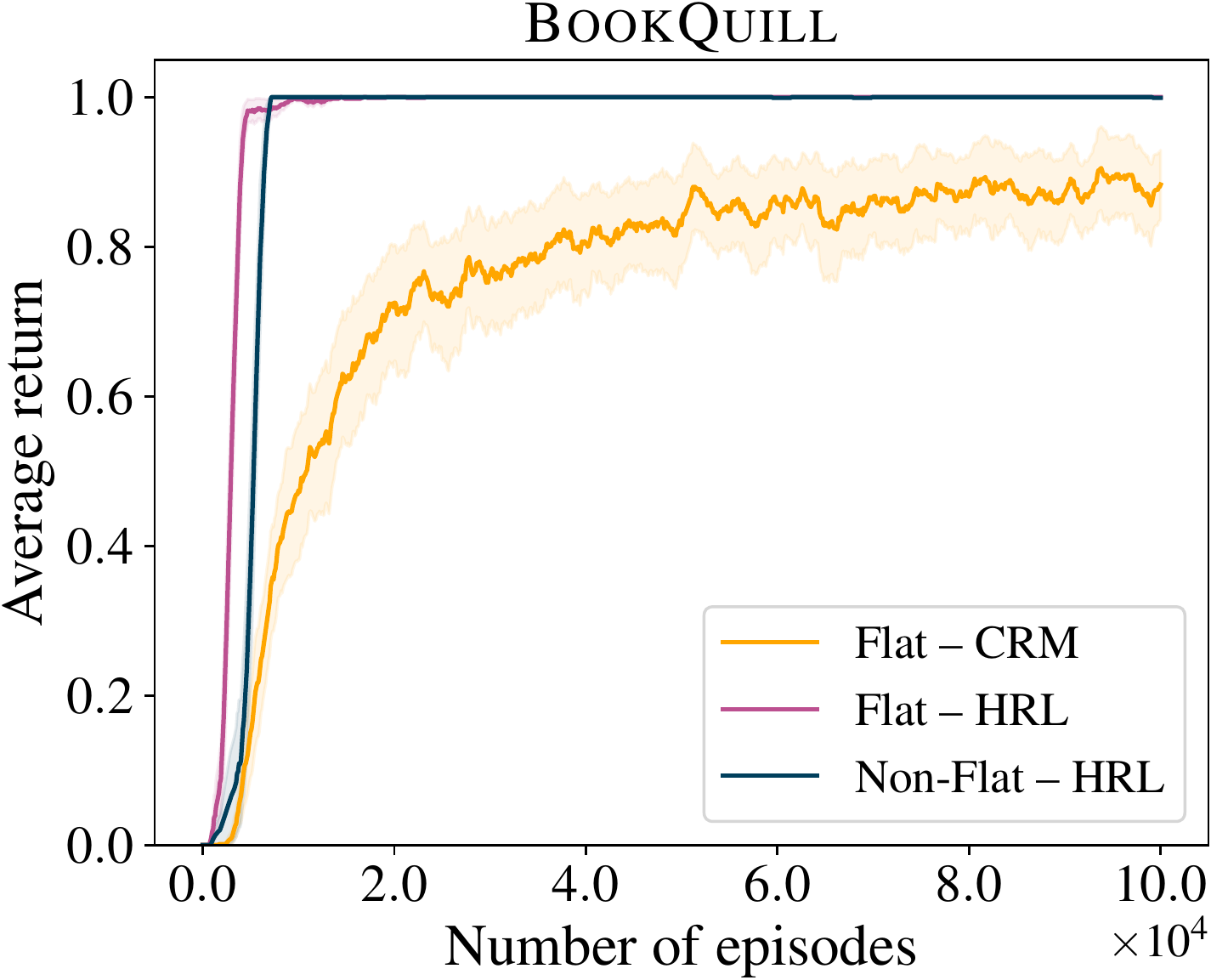}
		\includegraphics[width=0.25\linewidth]{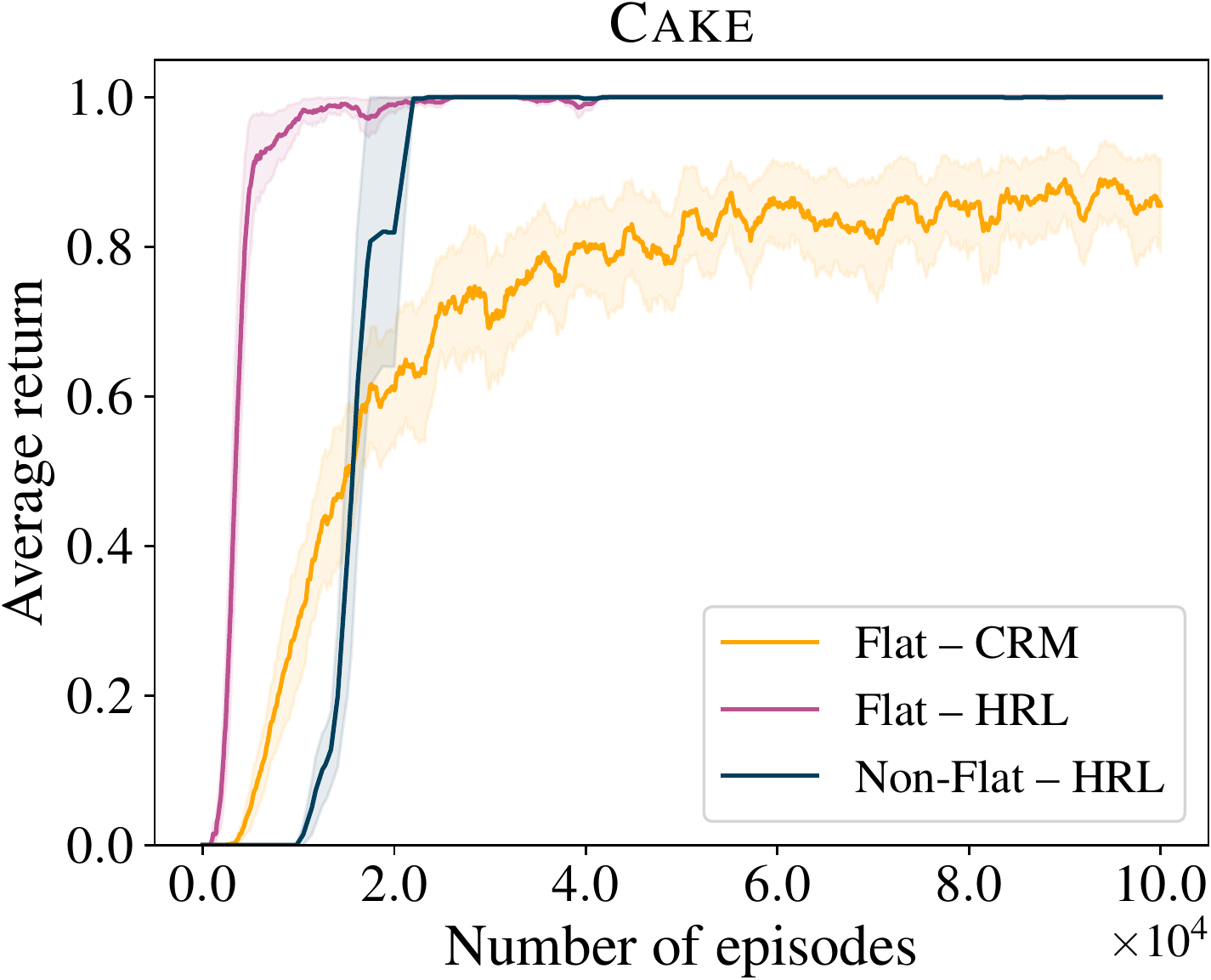}
		\caption{{\cw} -- OPL}
		\label{fig:flat_vs_hrm_policy_rest_cw_opl}
	\end{subfigure}	
	\begin{subfigure}{\linewidth}
		\centering
		\includegraphics[width=0.25\linewidth]{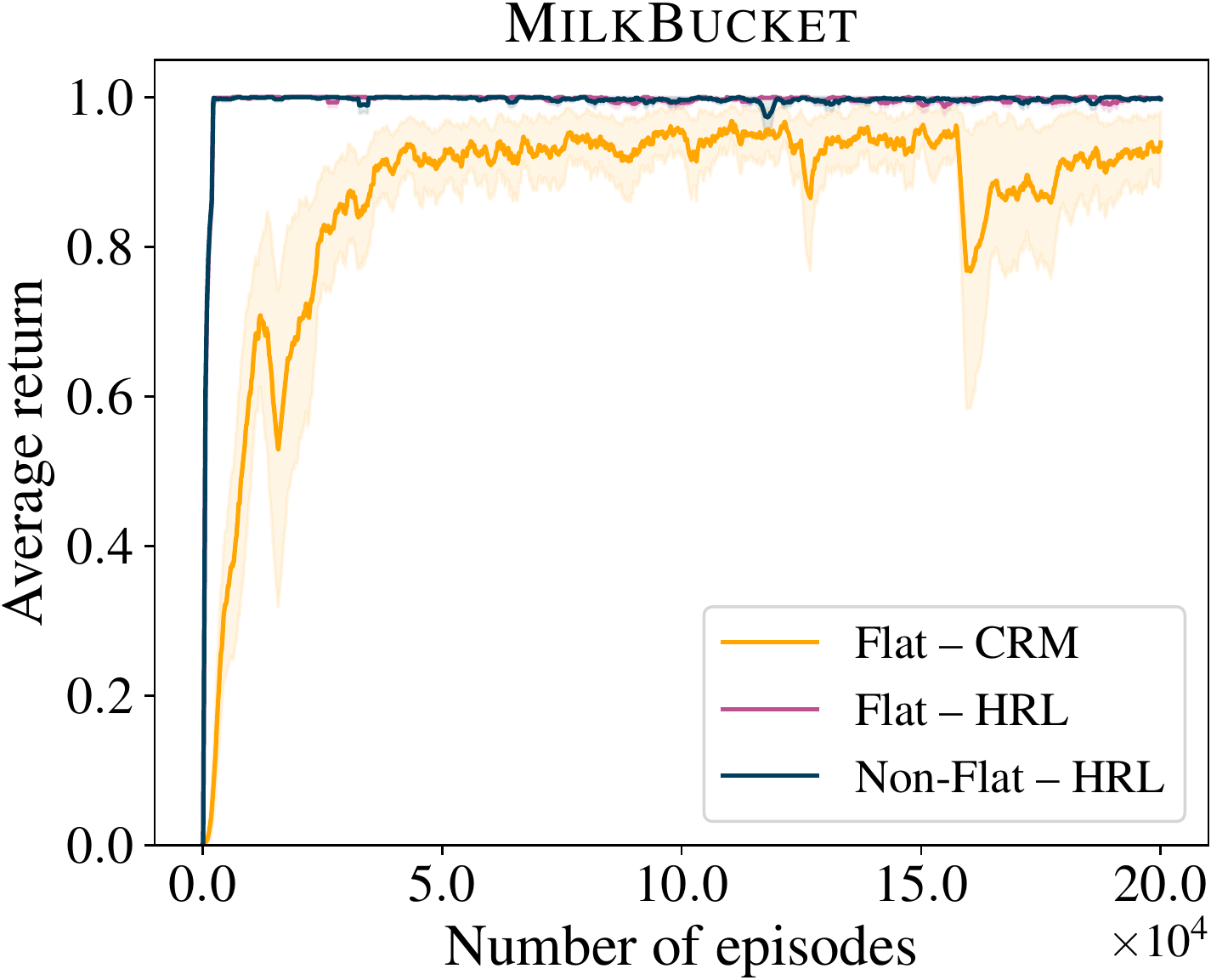}
		\includegraphics[width=0.25\linewidth]{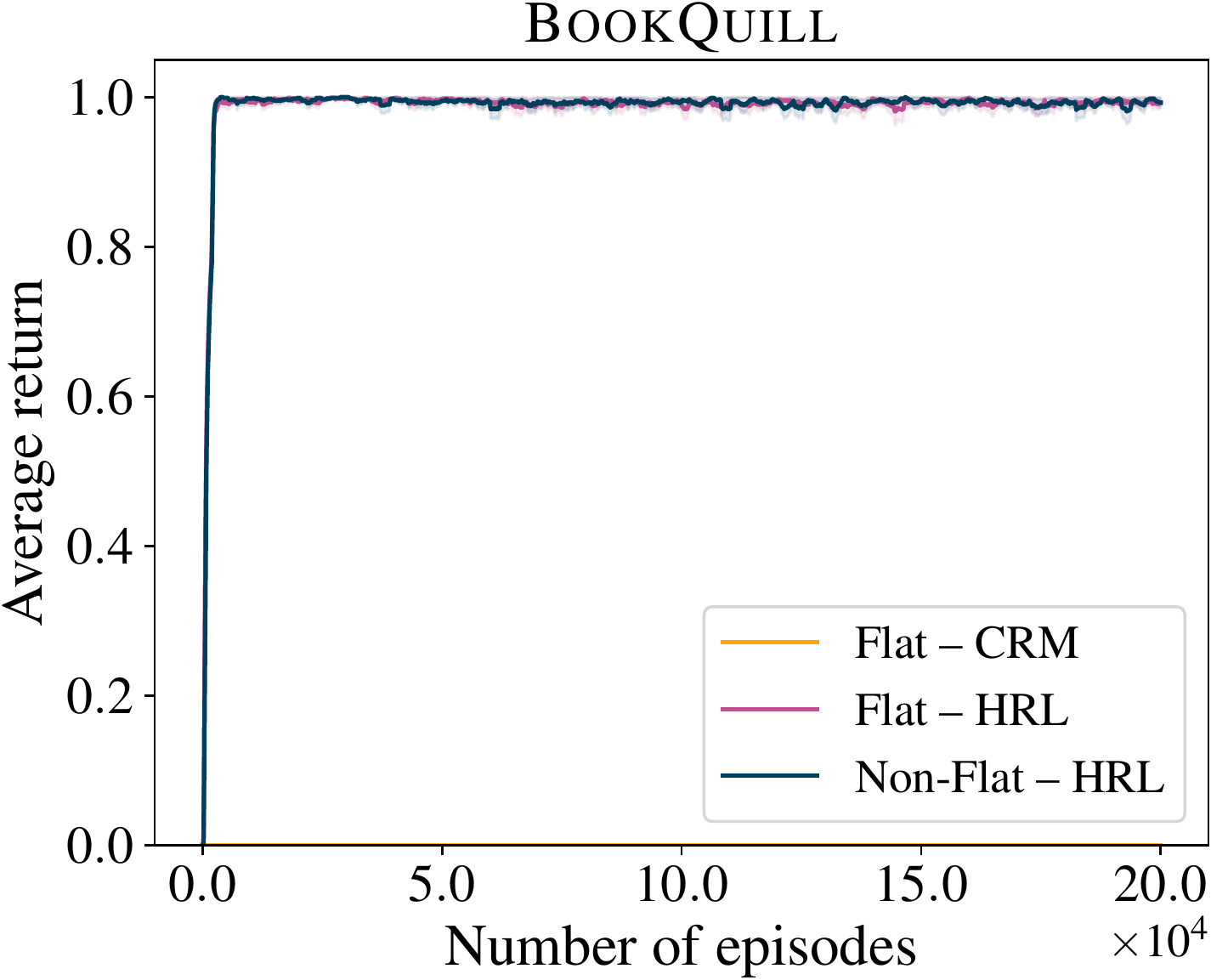}
		\includegraphics[width=0.25\linewidth]{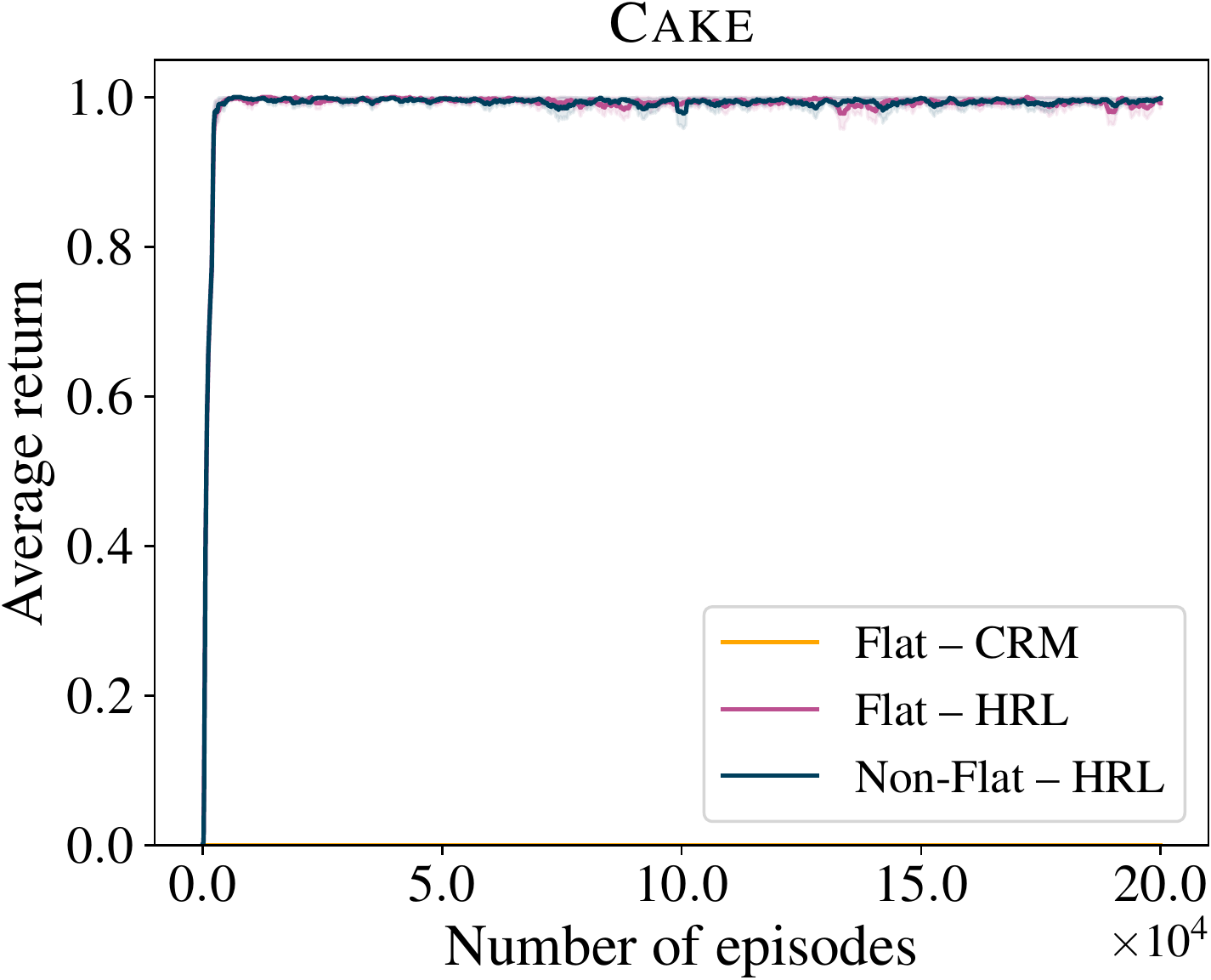}
		\caption{{\cw} -- FR}
	\end{subfigure}
	\begin{subfigure}{\linewidth}
		\centering
		\includegraphics[width=0.25\linewidth]{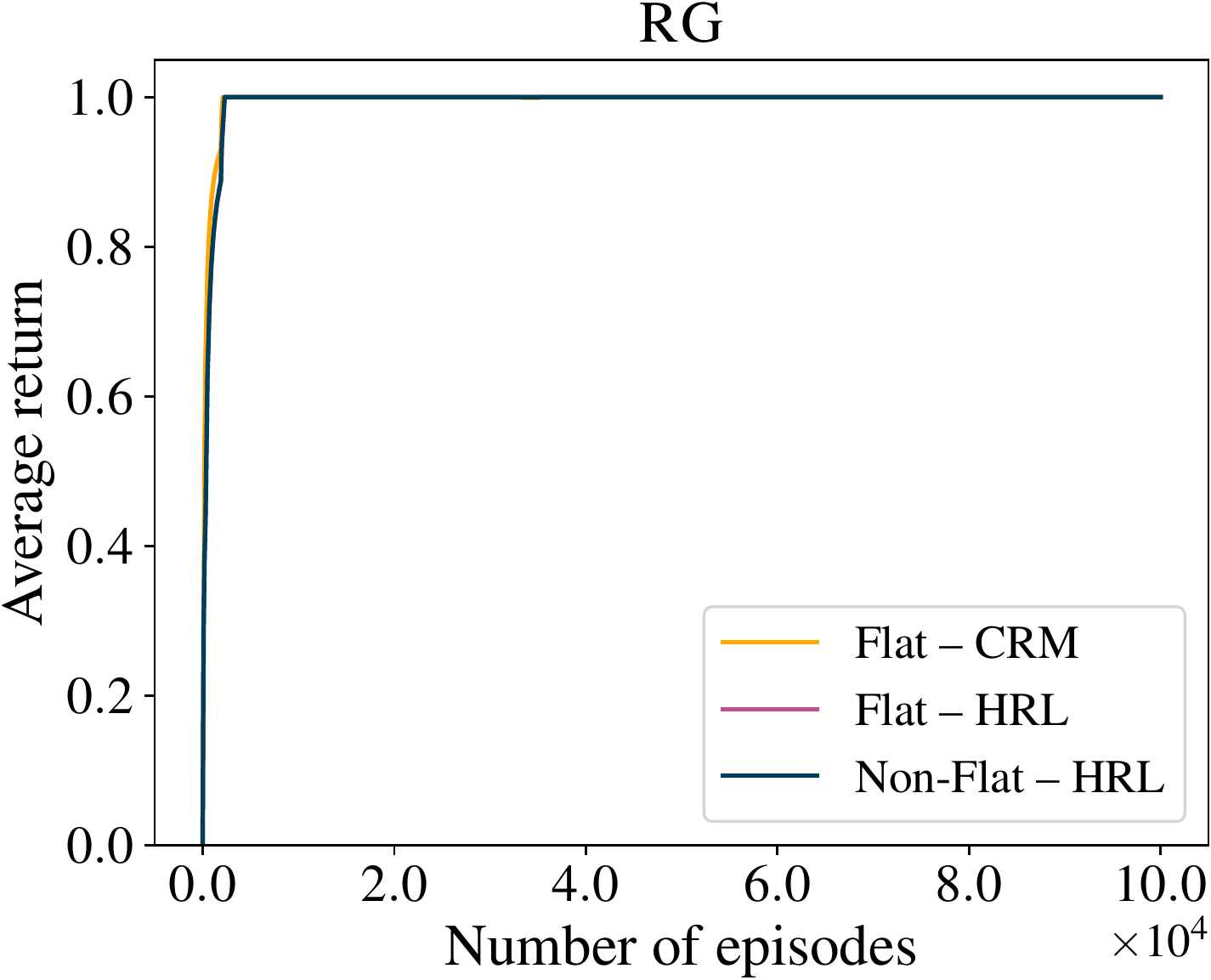}
		\includegraphics[width=0.25\linewidth]{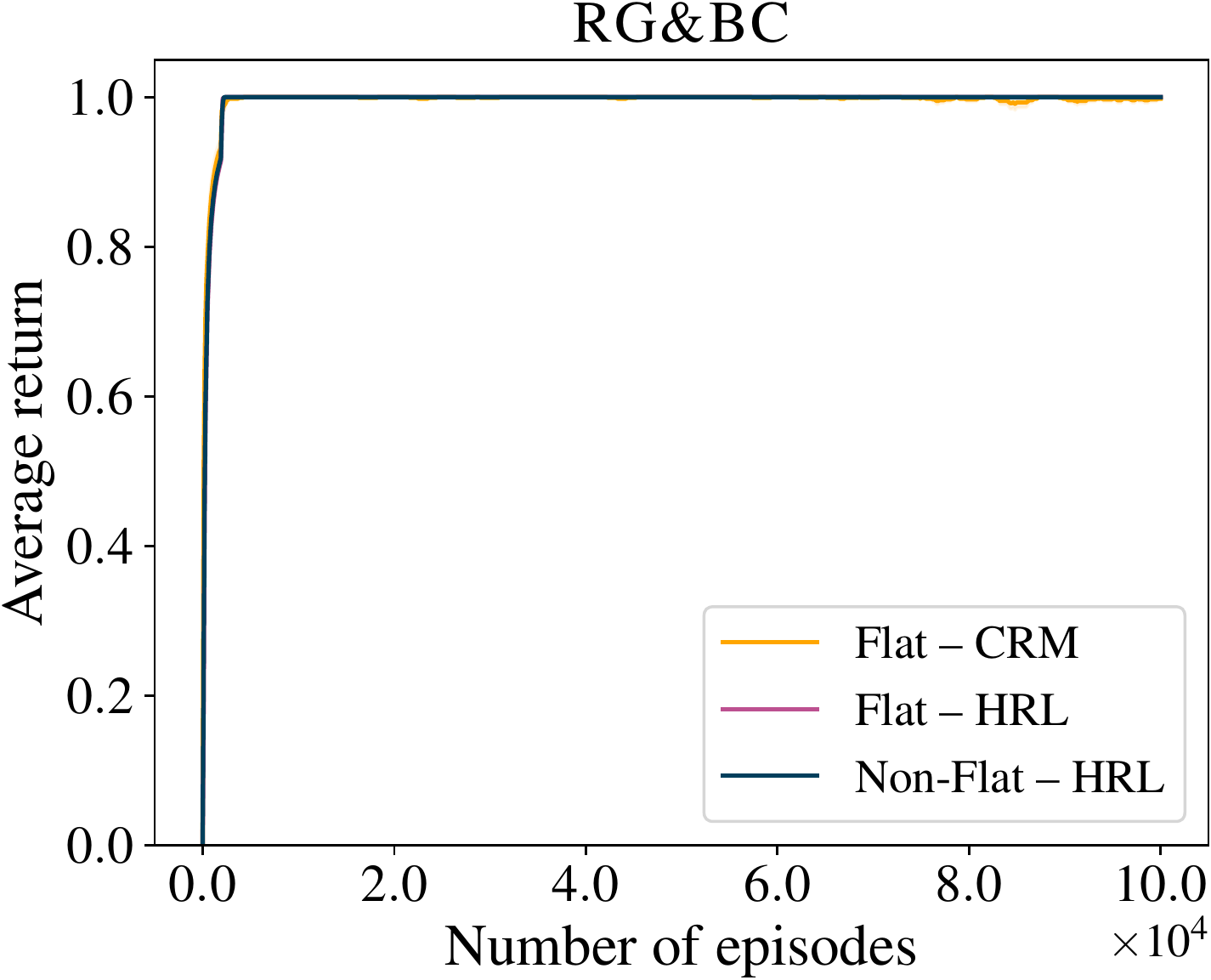}
		\includegraphics[width=0.25\linewidth]{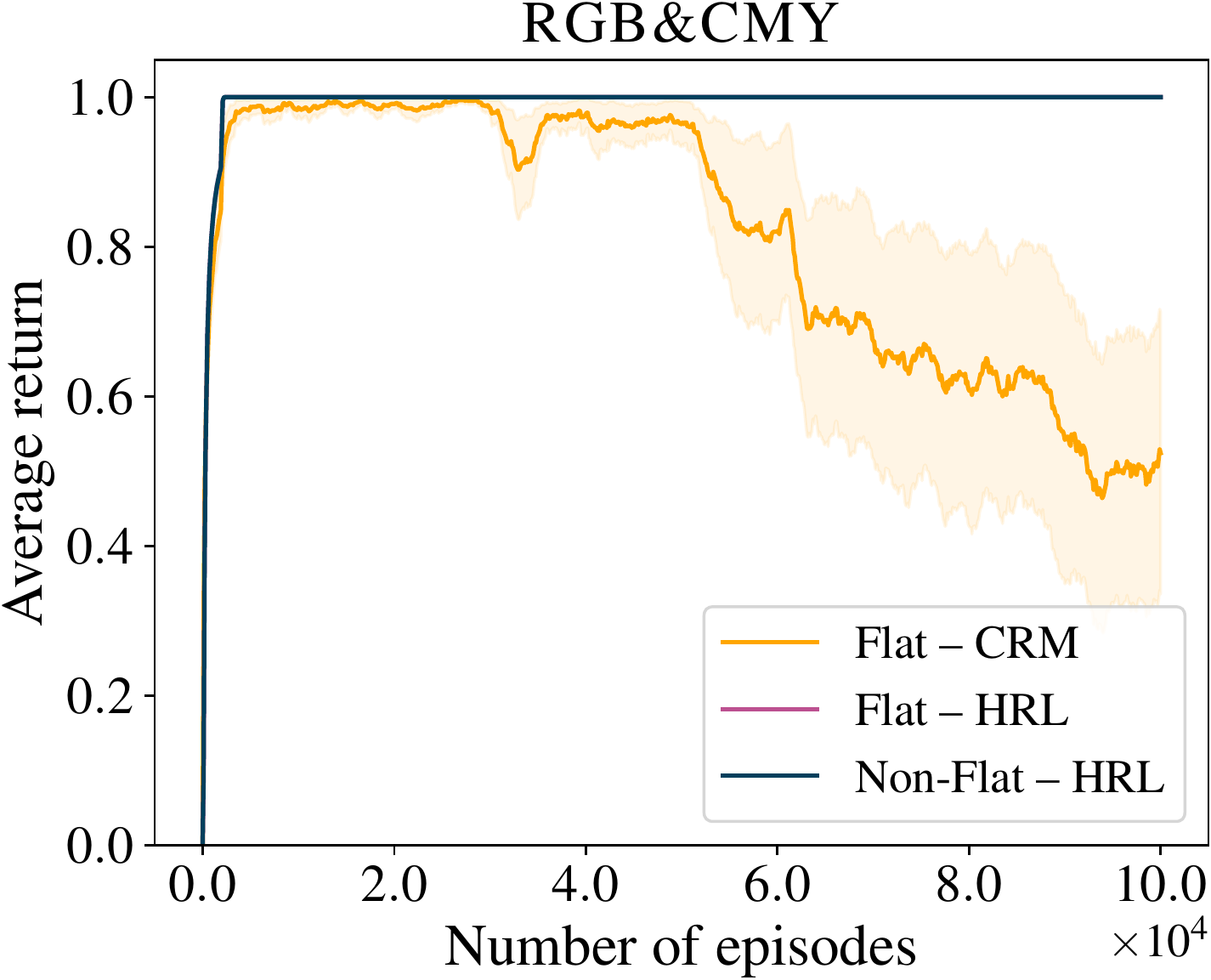}
		\caption{{\ww} -- WOD}
	\end{subfigure}
	\begin{subfigure}{\linewidth}
		\centering
		\includegraphics[width=0.25\linewidth]{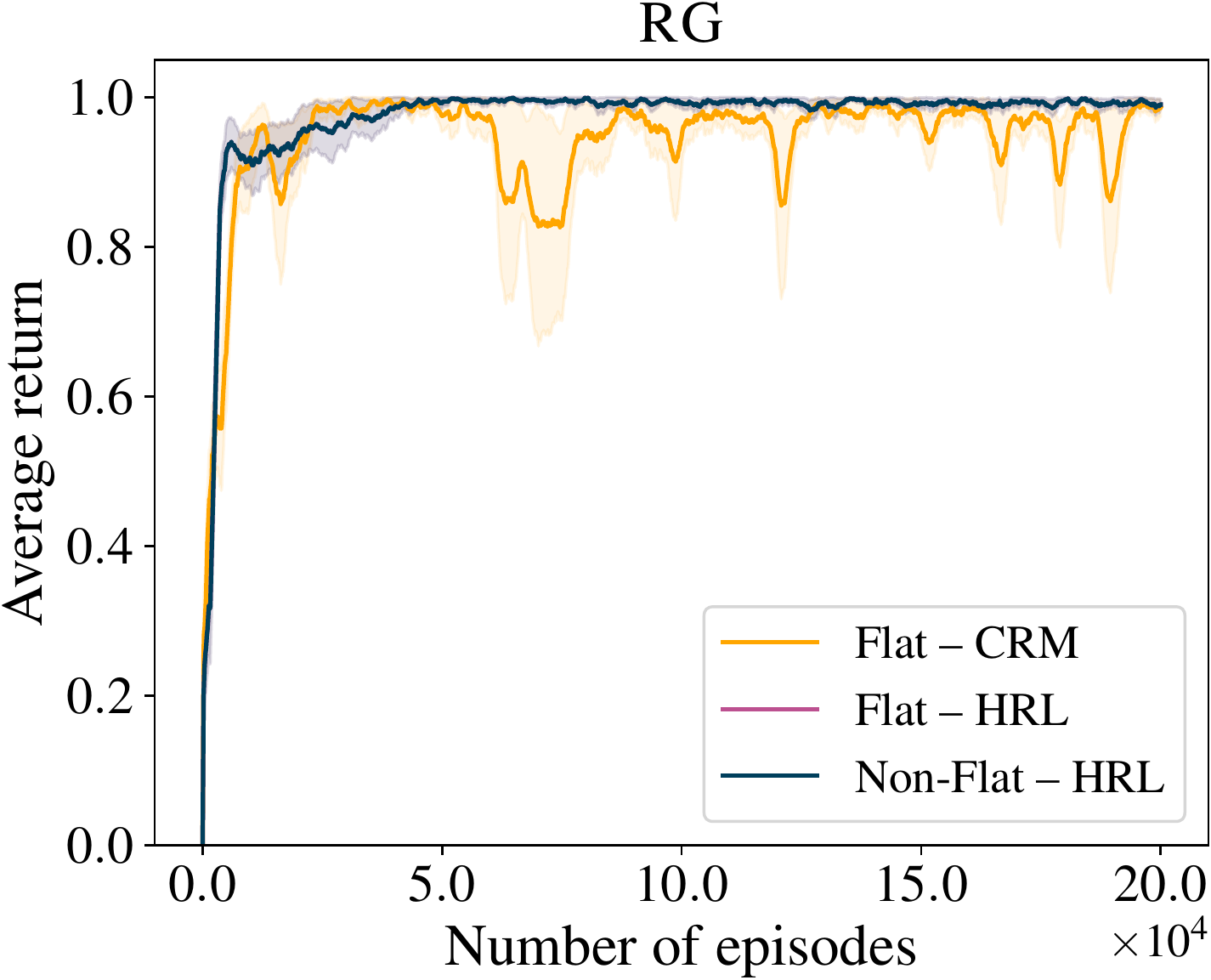}
		\includegraphics[width=0.25\linewidth]{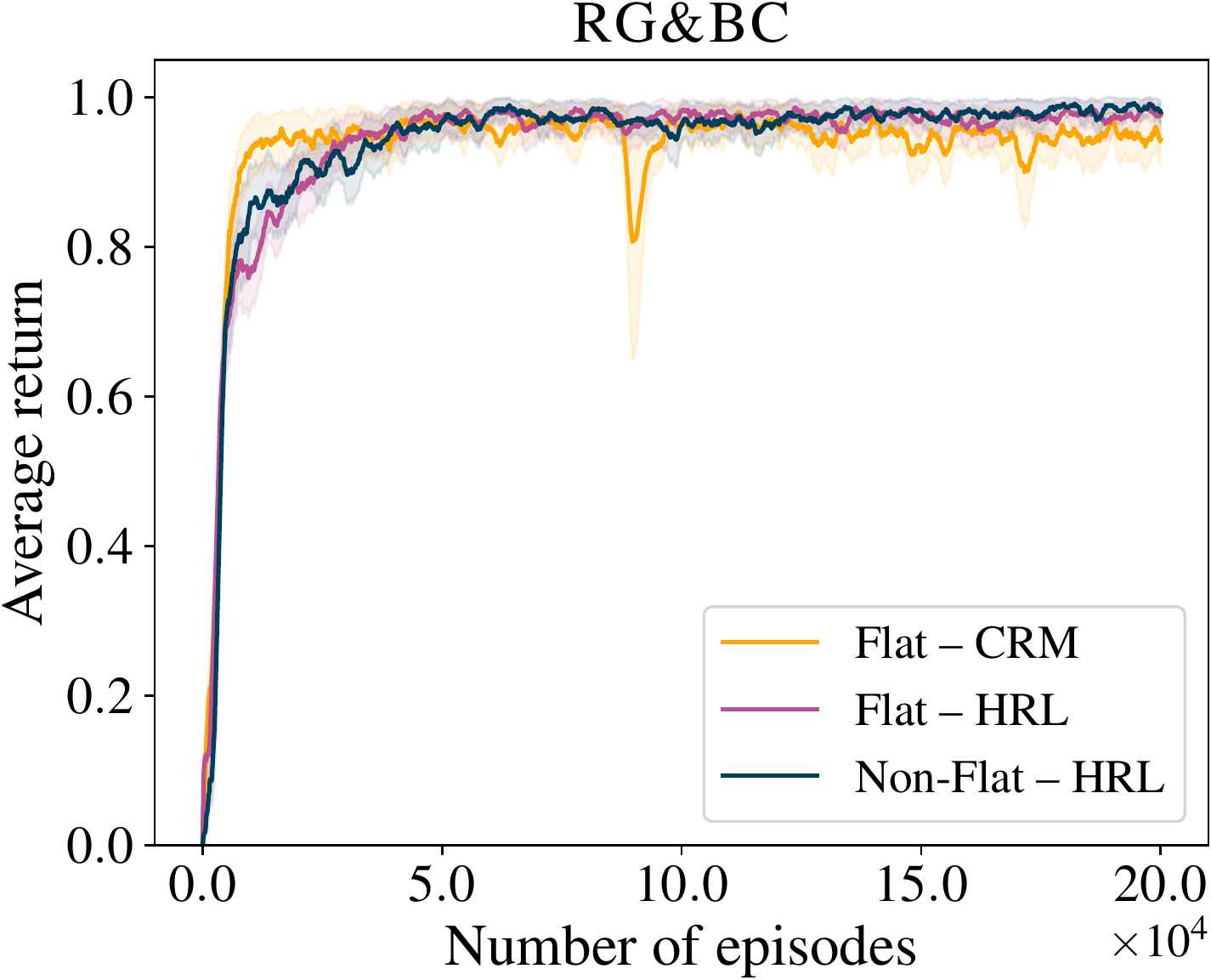}
		\includegraphics[width=0.25\linewidth]{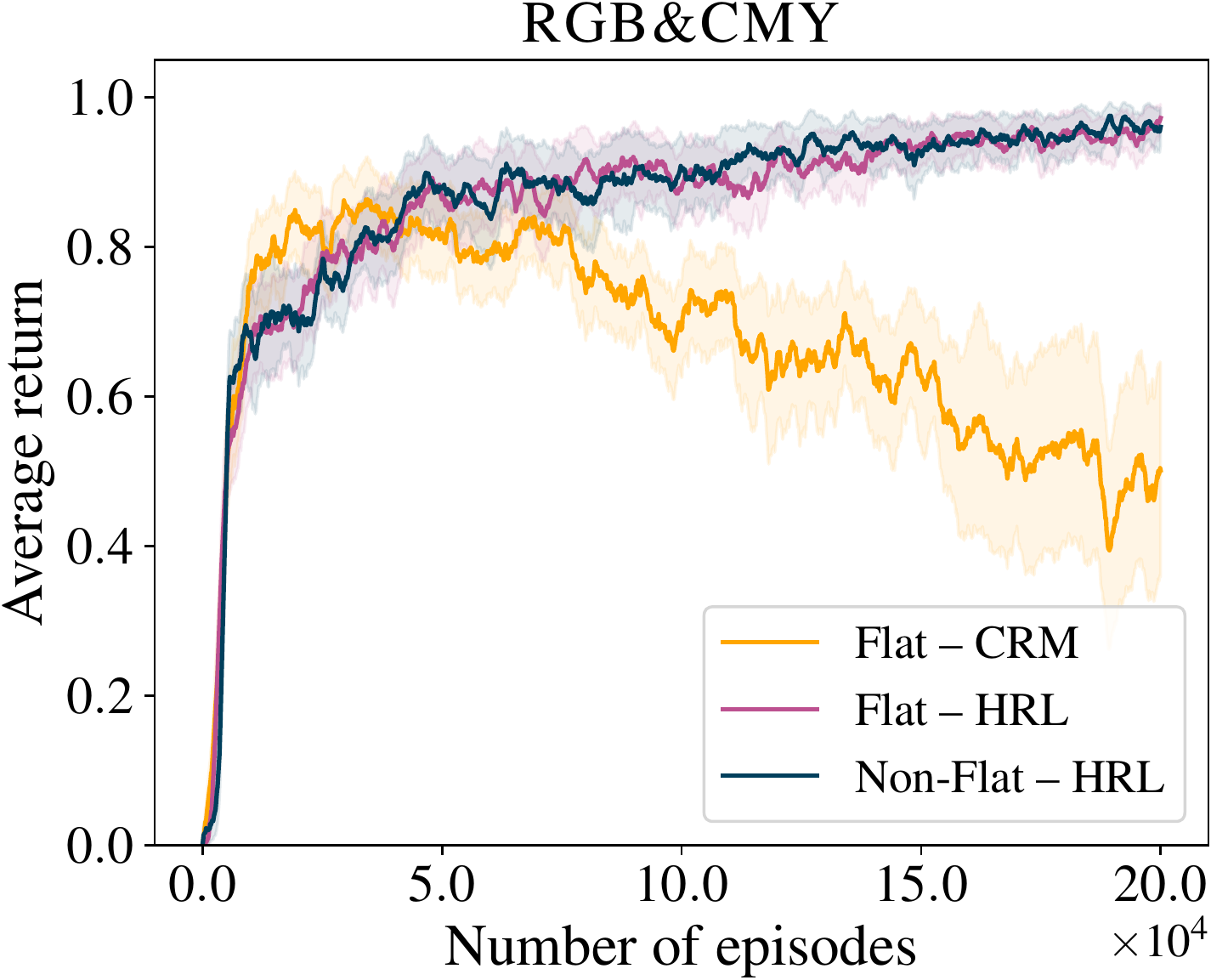}
		\caption{{\ww} -- WD}
	\end{subfigure}
	\caption{Learning curves comparing the performance of policy learning algorithms exploiting handcrafted non-flat and flat HRMs.}
	\label{fig:flat_vs_hrm_policy_rest}
\end{figure}

\end{document}